\documentclass[a4paper,11pt]{article}
\usepackage{fullpage} 

\usepackage{amsmath}
\usepackage{amssymb}
\usepackage{amsthm}
\usepackage{mathtools}
\usepackage{mathdots}
\usepackage{appendix}
\usepackage{graphicx}
\usepackage{enumerate}
\usepackage{microtype}
\usepackage{mleftright}
\usepackage{mdframed}
\usepackage{authblk}
\usepackage{tcolorbox}
\usepackage[english]{babel}
\usepackage[utf8]{inputenc}
\usepackage{algorithm}
\usepackage[noend]{algpseudocode}
\usepackage{xspace}
\usepackage{tikz}
\usepackage{pgfplots}
\pgfplotsset{width=8cm,compat=1.9}
\usepackage{mathdots}
\usepackage{url}
\usepackage{forest}
\forestset{default preamble={for tree={s sep+=1cm}}}
\usepackage{caption,subcaption}
\usepackage[noadjust]{cite}
\usepackage{hyperref}
\usepackage{lipsum}

\tcbset{
    rounded corners,
    colback = white,
    before skip = 0.2cm,
    after skip = 0.5cm,
    boxrule = 1.5pt,
    arc = 5pt
}

\makeatletter
\renewcommand{\ALG@name}{Protocol}
\makeatother
\allowdisplaybreaks

\graphicspath{ {./figures/} }

\setlength{\arrayrulewidth}{0.5mm}

\usepackage{algorithm}
\usepackage[noend]{algpseudocode}



\newtheorem{theorem}{Theorem}[section]
\newtheorem*{theorem*}{Theorem}
\newtheorem{proposition}[theorem]{Proposition}
\newtheorem{lemma}[theorem]{Lemma}
\newtheorem{corollary}[theorem]{Corollary}
\newtheorem*{corollary*}{Corollary}

\newtheorem{observation}[theorem]{Observation}

\newtheorem{exmp}[theorem]{Example}

\theoremstyle{definition}
\newtheorem{definition}[theorem]{Definition}

\newcommand{\cU}{\mathcal{U}}
\newcommand{\cH}{\mathcal{H}}
\newcommand{\cX}{\mathcal{X}}
\newcommand{\cY}{\mathcal{Y}}
\newcommand{\cS}{\mathcal{S}}

\newcommand{\cW}{\mathcal{W}}

\newcommand{\Lrn}{\mathsf{Lrn}}

\newcommand{\Agn}{\mathsf{Agn}}

\newcommand{\RSOA}{\mathsf{RandSOA}}
\newcommand{\SOA}{\mathsf{SOA}}

\newcommand{\depth}{\mathsf{depth}}

\newcommand{\LD}{\mathtt{L}}
\newcommand{\RL}{\mathtt{RL}}
\newcommand{\M}{\texttt{M}}

\newcommand{\BRSOA}{\mathsf{BoundedRandSOA}}
\newcommand{\WRSOA}{\mathsf{WeightedRandSOA}}
\newcommand{\WSOA}{\mathsf{WeightedSOA}}
\newcommand{\BW}{\mathsf{BW}}
\newcommand{\FTL}{\mathsf{FTL}}
\newcommand{\Squint}{\mathsf{Squint}}

\newcommand{\rounds}{\mathbf{T}}
\newcommand{\expertrounds}{\mathbf{T}}

\DeclareMathOperator*{\argmin}{arg\,min}
\DeclareMathOperator*{\argmax}{arg\,max}

\newcommand\blfootnote[1]{%
  \begingroup
  \renewcommand\thefootnote{}\footnote{#1}%
  \addtocounter{footnote}{-1}%
  \endgroup
}


\newif\ifstoc
\stocfalse
\newif\iffull
\fullfalse

\newif\ifsicomp
\sicompfalse

\DeclareMathOperator{\up}{up}
\DeclareMathOperator{\Low}{Low}

\begin{document}
\title{Optimal Prediction Using Expert Advice\\ and\\ Randomized Littlestone Dimension}

\ifstoc
\author{Anonymous}
\else
\author[1,2]{Yuval Filmus} 
\author[3]{Steve Hanneke}
\author[1]{Idan Mehalel}
\author[2,1,4]{Shay Moran}
\affil[1]{The Henry and Marilyn Taub Faculty of Computer Science, Technion, Israel}
\affil[2]{Faculty of Mathematics, Technion, Israel}
\affil[3]{Department of Computer Science, Purdue University, USA}
\affil[4]{Google Research, Israel}
\fi

\maketitle

\begin{abstract}
\ifsicomp
\blfootnote{A conference version of this paper appeared in COLT 2023 proceedings. In the proceedings version, Corollary~\ref{cor:expertIntro} was unintentionally presented as brand new. However, it can be gathered from existing literature, as we explain in Section~\ref{sec:expert-intro} of this manuscript.}
\fi
A classical result in online learning characterizes the optimal mistake bound achievable by deterministic learners using the Littlestone dimension (Littlestone '88).
We prove an analogous result for randomized learners: we show that the optimal \emph{expected} mistake bound in learning a class $\cH$ equals its \emph{randomized Littlestone dimension}, which we define as follows: it is the largest $d$ for which there exists a tree shattered by $\cH$ whose \emph{average} depth is $2d$.
We further study optimal mistake bounds in the agnostic case, as a function of the number of mistakes made by the best function in $\cH$, denoted by $k$. Towards this end we introduce the $k$-Littlestone dimension and its randomized variant, and use them to characterize the optimal deterministic and randomized mistake bounds.
Quantitatively, we show that the optimal randomized mistake bound for learning a class with Littlestone dimension $d$ is $k + \Theta (\sqrt{k d} + d )$ (equivalently, the optimal regret is $\Theta(\sqrt{kd} + d)$). This also implies an optimal deterministic mistake bound of $2k + \Theta(d) + O(\sqrt{k d})$, thus resolving an open question which was studied by Auer and Long ['99].
 
As an application of our theory, we revisit the classical problem of prediction using expert advice: 
about 30 years ago Cesa-Bianchi, Freund, Haussler, Helmbold, Schapire and Warmuth studied prediction using expert advice, 
provided that the best among the $n$ experts makes at most $k$ mistakes, and asked what are the optimal mistake bounds (as a function of~$n$ and $k$). Cesa-Bianchi, Freund, Helmbold, and Warmuth ['93, '96] provided a nearly optimal bound for deterministic learners, and left the randomized case as an open problem. We resolve this question by providing an optimal learning rule in the randomized case, and showing that its expected mistake bound equals half of the deterministic bound of Cesa-Bianchi et al.\ ['93, '96], up to negligible additive terms. 
In contrast with previous works by Abernethy, Langford, and Warmuth [’06], and by Br\^anzei and Peres [’19],
our result applies to all pairs $n,k$, and does so via a unified analysis using the randomized Littlestone dimension.

In our proofs we develop and use optimal learning rules, which can be seen as natural variants of the Standard Optimal Algorithm ($\SOA$) of Littlestone: a weighted variant in the agnostic case, and a probabilistic variant in the randomized case. We conclude the paper with suggested directions for future research and open questions.

\end{abstract}

\pagebreak

\setcounter{tocdepth}{2}
\tableofcontents

\pagebreak

\section{Introduction}

A recurring phenomenon in learning theory is that different notions of learnability are captured by combinatorial parameters.
    Notable examples include the Vapnik--Chervonenkis (VC) dimension which characterizes PAC learnability~\cite{VapnikC74,blumer1989learnability}
    and the Littlestone dimension which characterizes online learnability~\cite{littlestone1988learning,bendavid2009agnostic}.
    Other examples include the Daniely--Shalev-Shwartz and Natarajan dimensions in multiclass PAC learning~\cite{natarajan1989learning,daniely2014optimal,brukhim2022characterization}, the star number, disagreement coefficient, and inference dimension in interactive learning~\cite{Hanneke2014dis,HannekeY15active,Kane17interactive},
    the statistical query dimension in learning with statistical queries~\cite{Feldman17},
    the representation dimension, one-way communication complexity, and Littlestone dimension in differentially private learning~\cite{FeldmanX15,Beimel19Pure,AlonBLMM22},
    and others.
    


One of the simplest and most appealing characterizations is that of online learnability by the Littlestone dimension.
    In his seminal work, Nick Littlestone proved that the optimal mistake-bound in online learning a class $\cH$ 
    is \emph{exactly} the Littlestone dimension of $\cH$ \cite{littlestone1988learning}.
    Thus, not only does the Littlestone dimension qualitatively captures online learnability,
    it also provides an exact quantitative characterization of the best possible mistake bound.
    This distinguishes the Littlestone dimension from other dimensions in learning theory, 
    which typically only provide asymptotic bounds on the learning complexity.

However, the exact quantitative characterization of the optimal mistake bound by the Littlestone dimension 
    applies only in the noiseless \emph{realizable} setting and only for \emph{deterministic} learners.
    In particular, it does not apply in the more general and well-studied setting of \emph{agnostic} online learning. 
    The reason it does not apply is twofold: (i) because the agnostic setting allows for non-realizable sequences, 
    and (ii) because randomized learners are in fact necessary.\footnote{Randomized learners are necessary in the following sense: any agnostic online learner for a class $\cH$ must be randomized, provided that $\cH$ contains at least two functions~\cite{cover65}, see also~\cite[Chapter 21.2]{shalev2014understanding}.}
    This suggests the following question, which guides this work:
    
\smallskip
\begin{tcolorbox}
\begin{center}
    Is there a natural dimension which captures the optimal expected mistake bound in learning a class $\cH$ using randomized learners? How about the agnostic setting when there is no $h\in \cH$ which is consistent with input data? 
\end{center}
\end{tcolorbox}

The main contribution of this work formalizes and proves affirmative answers to these questions.
\iffull
\else
Some of the technical material is omitted from this manuscript and can be found in the full version which is accessible online in~\cite{FullVersion}.
\fi


\paragraph{Organization.} In the next section we present the main results of this work.
Then, in Section~\ref{sec:intro-technical-overview} we provide a short technical overview, 
where we outline the main ideas we use in our proofs.
The remaining sections contain the complete proofs.

\section{Main results}
\label{sec:intro-main-results}

This section assumes familiarity with standard definitions and terminology from online learning.
    We refer the unfamiliar reader to Section~\ref{sec:prelim}, which introduces the online learning model
    and related basic definitions in a self-contained manner.

\subsection{Realizable Case}
In his seminal work from 1988, Nick Littlestone studied the optimal mistake bound in online learning an hypothesis class $\cH$ 
by deterministic learning rules in the realizable setting~\cite{littlestone1988learning};
that is, under the assumption that the input data sequence is consistent with a function $h\in\cH$.

\paragraph{Littlestone dimension.}
Let $\cX$ be the domain, and let $\cH$ be a class of ``$\cX\to\{0,1\}$'' predictors.
    The Littlestone dimension of $\cH$, denoted $\LD(\cH)$, is the maximal depth 
    of a binary complete decision tree~$T$ which is shattered by~$\cH$.
    That is, a decision tree $T$ whose nodes are associated with points from $\cX$
    and whose edges are associated with labels from $\{0,1\}$ such that each 
    of the branches (root-to-leaf paths) in $T$ is realized by some $h\in\cH$.

Littlestone proved that the optimal mistake bound achievable by deterministic learners equals the Littlestone dimension:
\begin{theorem}[Deterministic Mistake Bound~\cite{littlestone1988learning}]\label{thm:LittlestoneIntro}
The optimal \underline{deterministic} mistake bound in online learning $\cH$ in the realizable setting is equal to its Littlestone dimension,~$\LD(\cH)$.
\end{theorem}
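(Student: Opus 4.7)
The plan is to establish a two-sided bound: construct a deterministic learner making at most $\LD(\cH)$ mistakes, and then describe an adversary strategy forcing any deterministic learner to make at least $\LD(\cH)$ mistakes on a realizable sequence.

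For the upper bound I would introduce the Standard Optimal Algorithm $\SOA$. The algorithm maintains a version space $V\subseteq\cH$, initialized to $\cH$. Upon receiving $x_t$, write $V_{x_t}^b=\{h\in V:h(x_t)=b\}$ for $b\in\{0,1\}$ and predict the label $\hat y_t=\argmax_{b}\LD(V_{x_t}^b)$ (ties broken arbitrarily); after observing the true label $y_t$, update $V\leftarrow V_{x_t}^{y_t}$. The key lemma is that one cannot have $\LD(V_{x}^0)=\LD(V_{x}^1)=\LD(V)$: otherwise, gluing two depth-$\LD(V)$ shattered trees under a root labeled by $x$ would produce a depth-$(\LD(V)+1)$ tree shattered by $V\subseteq\cH$, contradicting the definition of the Littlestone dimension. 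Consequently, a mistaken prediction $\hat y_t\ne y_t$ means the learner descended into the branch whose dimension is strictly smaller, so $\LD(V)$ decreases by at least one on every mistake. Since $\LD(\cH)<\infty$ and realizability guarantees $V$ is nonempty at all times, the total number of mistakes is at most $\LD(\cH)$.

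For the lower bound I would describe a shattering-tree adversary. Let $T$ be a complete binary tree of depth $d=\LD(\cH)$ shattered by $\cH$. The adversary maintains a current node, starting at the root; at round $t$, it presents the point $x_t$ labeling that node, observes the learner's prediction $\hat y_t$, reveals the opposite label $y_t=1-\hat y_t$, and descends into the $y_t$-child. After $d$ rounds, the adversary reaches a leaf, and the shattering property supplies some $h\in\cH$ consistent with every revealed $(x_t,y_t)$, so the produced sequence is indeed realizable by $\cH$. By construction the learner errs in every round, incurring $d$ mistakes.

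The main conceptual step is identifying $\LD(V)$ as a potential function and proving the strict-decrease lemma for $\SOA$; once that is in hand, both directions follow cleanly, with the adversary essentially executing the dual strategy on a maximum shattered tree. No additional difficulty is expected beyond verifying that $\SOA$ is well-defined (the $\argmax$ is nonempty because $V\neq\emptyset$ implies $\max_b\LD(V_{x_t}^b)\ge 0$) and that the adversary's final labeled sequence is realizable by the branch of $T$ it traces.
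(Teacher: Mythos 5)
Your proposal is correct and matches the paper's treatment: the paper proves the weighted generalization (Theorem~\ref{thm:characterization_det_weighted} in Section~\ref{sec:krealdetproof}) using exactly this pair of arguments — the tree-traversal adversary of Lemma~\ref{lem:lower-bound-weighted-help} for the lower bound, and the $\SOA$/$\WSOA$ potential argument of Lemma~\ref{lem:k_det_upper}, including the same key lemma that $\LD(V_{x\to 0})=\LD(V_{x\to 1})=\LD(V)$ would yield a shattered tree of depth $\LD(V)+1$. No gaps.
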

Littlestone further described a natural deterministic learning rule, which he dubbed the \emph{Standard Optimal Algorithm} ($\SOA$),
that makes at most $\LD(\cH)$ mistakes on every realizable input sequence.

\paragraph{Randomized Littlestone dimension.}
Our first main result shows that a natural probabilistic variant of the Littlestone dimension characterizes
    the optimal expected mistake bound for randomized learners.

\begin{tcolorbox}
\begin{definition}[Randomized Littlestone Dimension] 
Let $T$ be binary tree, and consider a random walk on $T$ 
    that starts at the root, goes to the left or right child with probability~$1/2$, 
    and continues recursively in the same manner until reaching a leaf.
    Let $E_T$ denote the expected length of a random branch which is produced by this process.

The \emph{randomized Littlestone dimension} of a class~$\cH$, denoted by $\RL(\cH)$, is defined by
\[
\RL(\cH) =\frac{1}{2} \sup_{T \text{ shattered}} E_T.
\]
\end{definition}
\end{tcolorbox}

To compare the randomized Littlestone dimension with the Littlestone dimension,  
    notice that the Littlestone dimension is equal to $\sup\,\{m_T : T \text{ shattered}\}$,
    where $m_T$ is the minimum length of a branch in $T$.
    Thus, the difference is that in $\RL(\cH)$ we take the expected depth rather than the minimal depth,
    and multiply by a factor of $1/2$.\footnote{From a learning theoretic perspective it is easy to see that $\RL(\cH)\leq \LD(\cH)$, because randomized learners are more general than deterministic ones.
    Interestingly, this inequality is less obvious from a combinatorial perspective:
    indeed, for every fixed tree $T$ we have that $E_T \geq m_T$ 
    (because the expected length of a branch is at least the minimal length),
    but it is not a~priori clear why the inequality is reversed when $E_T$ is replaced by $E_T/2$
    and we take supremum over all shattered trees.
    \iffull
    See Section~\ref{sec:randvsdet} for further discussion.
    \fi}

\begin{tcolorbox}
\begin{theorem}[Main Result (i): Randomized Mistake Bound]\label{thm:characterizationIntro}
The optimal \underline{randomized} mistake bound in online learning $\cH$ in the realizable setting is equal to its randomized Littlestone dimension,~$\RL(\cH)$.
\end{theorem}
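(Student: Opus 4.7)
I would prove the matching lower and upper bounds separately, both resting on the \emph{recursion}
\[
\RL(\cH') \;\geq\; \tfrac{1}{2} + \tfrac{1}{2}\bigl(\RL(\cH'_0) + \RL(\cH'_1)\bigr),
\]
valid for every class $\cH' \subseteq \cH$ and every point $x \in \cX$, where $\cH'_b := \{h \in \cH' : h(x) = b\}$. This follows immediately from the definition: gluing the root $x$ onto near-optimal shattered trees $T_0, T_1$ of $\cH'_0, \cH'_1$ yields a tree shattered by $\cH'$, and its random walk satisfies $E_T = 1 + \tfrac{1}{2}(E_{T_0} + E_{T_1})$.

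For the \emph{lower bound}, fix $\eps > 0$ and a shattered tree $T$ with $\tfrac{1}{2} E_T \geq \RL(\cH) - \eps$. I would use the oblivious random adversary that samples a uniformly random root-to-leaf branch of $T$ and presents its node-labels in order; the resulting input sequence is always realizable. For \emph{any} randomized learner and any visited node $v$, the conditional prediction distribution at $v$ is some $(p_0(v), p_1(v))$ summing to one, and the next edge-label is an independent fair coin, so the expected mistake at each visited node equals $\tfrac{1}{2}(p_0(v) + p_1(v)) = \tfrac{1}{2}$. Summing over the (random) length of the branch gives expected mistakes equal to $\tfrac{1}{2} E_T \geq \RL(\cH) - \eps$, which is also the worst-case expected mistake count (by averaging); sending $\eps \to 0$ concludes.

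For the \emph{upper bound}, I would introduce a \emph{probabilistic SOA}, $\PSOA$, that maintains the current version space $\cH'$, and upon query $x$ computes $d := \RL(\cH')$ and $d_b := \RL(\cH'_b)$, and predicts $1$ with probability $q := \min(1,\, d - d_0)$, which lies in $[0,1]$ by monotonicity $d_0 \leq d$. The analysis tracks the potential
\[
\Phi_t \;:=\; M_t + \RL(\cH^{(t)}),
\]
where $M_t$ counts the mistakes made through round $t$ and $\cH^{(t)}$ is the version space after round $t$, so that $\Phi_0 = \RL(\cH)$. A short case analysis on the revealed label $y \in \{0,1\}$---using the recursion when $d - d_0 \leq 1$, and the trivial inequality $d \geq 1 + d_0$ in the complementary case---yields $\Ex[\Phi_{t+1} \mid \cF_t] \leq \Phi_t$, where $\cF_t$ is the natural filtration. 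Hence $\Phi$ is a nonnegative supermartingale and $\Ex[M_\infty] \leq \Phi_0 = \RL(\cH)$.

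\emph{Principal obstacle.} The chief subtlety is that $\RL$ is defined as a supremum over shattered trees that need not be attained, so the values $d$ and $d_b$ may be irrational, and naive structural induction on ``depth'' fails. The supermartingale framing is chosen precisely to sidestep well-foundedness, relying only on the one-step expectation inequality rather than on an inductive hypothesis. On the lower bound side, the unattained supremum is absorbed by the $\eps \to 0$ limit. A smaller but worthwhile point is verifying that at every round the recursion holds for the particular $x$ the adversary chose (not merely for some distinguished $x$); this is exactly the content of the gluing argument above, which applies to \emph{every} query point.
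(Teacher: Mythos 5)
Your proposal is correct and follows essentially the same route as the paper: the lower bound via the uniformly random branch of a near-optimal shattered tree (each revealed label costs any learner $1/2$ in expectation), and the upper bound via a randomized $\SOA$ whose prediction is calibrated so that $|y-p| + \RL(\cH'_y) \leq \RL(\cH')$ for both labels $y$, using the recursion $\RL(\cH') \geq \tfrac12\bigl(1 + \RL(\cH'_0) + \RL(\cH'_1)\bigr)$. Your explicit choice $q=\min(1,d-d_0)$ and the supermartingale/potential telescoping are only cosmetic variants of the paper's minimax prediction and its induction on the length of the input sequence.
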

\end{tcolorbox}
We also provide an optimal randomized learning rule which can be seen as 
    a probabilistic adaptation of Littlestone's classical $\SOA$ algorithm.
    See Section~\ref{sec:technical-oveview-dimension} for a brief overview, 
    and Section~\ref{sec:thm:characterization} for the proof.

The connection between online learning problems and random walks was identified in the online learning literature \cite{abernethy2008optimal2, luo2014towards, gravin2016towards}. \cite{abernethy2008optimal2} asked, conceptually, how general is this connection. Our results show that it is indeed quite general, in the sense that it yields the optimal algorithm for every hypothesis class.

\subsection{Agnostic Case}
We next consider the agnostic setting, in which we no longer assume that the input sequence of examples is consistent with $\cH$.
    Our second main result characterizes the optimal expected mistake bound in this setting.

A common approach for handling the agnostic case is to assume a \emph{bounded horizon} and analyze the \emph{regret}.
    That is, it is assumed that the length of the input sequence (called the \emph{horizon})  
    is a given parameter $\rounds\in\mathbb{N}$, and the goal is to design learning rules
    whose mistake bound is competitive with that of the best $h\in \cH$ up to an additive term which is negligible
    in~$\rounds$ (this term is called the \emph{regret} of the algorithm). 

The bounded horizon assumption simplifies the design of learning rules, by allowing them to depend on $\rounds$.
    A notable example is the celebrated \emph{Multiplicative Weights} (MW) learning rule,
    whose learning rate depends on $\rounds$.
    This assumption can then be lifted by standard \emph{doubling tricks}.\footnote{E.g.\ start by running the algorithm with $\rounds=2$, and double $\rounds$ when reaching the $(\rounds+1)$'st example.}

\paragraph{The \texorpdfstring{$k$}{k}-realizable setting.}    
In this work we consider an alternative approach: instead of assuming a bound $\rounds$ on the horizon, 
    we assume a bound $k$ on the number of mistakes made by the best function in the class.
    Notice that this assumption can also be lifted by suitable doubling tricks as we demonstrate in Section~\ref{sec:variations},
    where we also extend our results to the bounded-horizon setting.
    
The upshot of this approach is that it allows for a precise combinatorial characterization 
    of the optimal mistake bound via a natural generalization of the Littlestone dimension.
    
\subsubsection{$k$-Littlestone Dimension}
Let $\cH$ be an hypothesis class, and let $k \in \mathbb{N}$. 
    A sequence of examples $S = \{(x_i,y_i)\}_{i=1}^t$ is \emph{$k$-realizable} by $\cH$ if there exists $h \in \cH$ such that $h(x_i) \neq y_i$ for at most $k$ indices $i$. 
    In the $k$-realizable setting we assume that the input sequence given to the learner is $k$-realizable. 
    Notice that the case $k=0$ amounts to realizability by $\cH$.
    We say that a decision tree is \emph{$k$-shattered by $\cH$} if every branch is $k$-realizable by $\cH$.
    The corresponding deterministic and randomized $k$-Littlestone dimensions of a class $\cH$ are
\[
\LD_k(\cH) = \sup_{T  \text{ } k\text{-shattered}} m_T
\quad \text{and} \quad
\RL_k(\cH) =\frac{1}{2} \sup_{T  \text{ } k\text{-shattered}} E_T.
\]

\begin{tcolorbox}
\begin{theorem}[Main Result (ii): $k$-Littlestone Dimension]\label{thm:characterization_kIntro}
Let $\cH$ be an hypothesis class.
\begin{enumerate}
    \item The optimal deterministic mistake bound in online learning $\cH$ in the $k$-realizable setting equals its $k$-Littlestone dimension,~$\LD_k(\cH)$.
    \item The optimal randomized mistake bound in online learning $\cH$ in the $k$-realizable setting equals its $k$-randomized Littlestone dimension, $\RL_k(\cH)$.
\end{enumerate}
\end{theorem}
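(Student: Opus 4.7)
The plan is to extend the proofs of Theorems~\ref{thm:LittlestoneIntro} and~\ref{thm:characterizationIntro} from the realizable to the $k$-realizable setting by attaching a ``mistake budget'' to each hypothesis. Each part splits naturally into a lower bound (adversary construction) and an upper bound (generalized $\SOA$-style algorithm).

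For the lower bounds, fix a $k$-shattered tree $T$ and let the adversary walk from root to leaf, revealing the associated point at each node. In the deterministic case, the adversary announces the label opposite to the learner's prediction, forcing a mistake at every step for a total of at least $m_T$ mistakes. In the randomized case, the adversary instead flips an independent fair coin at each node; since the coin is independent of the learner's randomness, the per-round error probability is exactly $1/2$, yielding total expected mistakes $E_T/2$. In both cases the constructed sequence is $k$-realizable because, by definition, every branch of a $k$-shattered tree is $k$-realizable. Taking the supremum over $T$ gives the lower bounds $\LD_k(\cH)$ and $\RL_k(\cH)$.

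For the upper bounds, I would work with a \emph{budgeted state} consisting of pairs $(h,j)$ with $h \in \cH$ and $j \in \{0,\ldots,k\}$, interpreted as ``$h$ is still live having been charged $j$ mistakes''. Given a query $x_t$ and a candidate label $y$, the updated state $S_y$ promotes each pair $(h,j)$ with $h(x_t) \neq y$ to $(h,j+1)$, dropping pairs that overflow the budget. The deterministic algorithm predicts the $y$ minimizing $\LD_k(S_y)$, while the randomized one predicts $y=1$ with probability $p = \tfrac{1}{2} + \tfrac{1}{2}(\RL_k(S_1) - \RL_k(S_0))$, chosen so that the expected total cost $p + \RL_k(S_0) = (1-p) + \RL_k(S_1)$ is balanced across the two possible true labels. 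A potential-function induction, in the spirit of Littlestone's proof and of the proof of Theorem~\ref{thm:characterizationIntro}, then shows that the expected number of mistakes from state $S$ is bounded by $\LD_k(S)$ or $\RL_k(S)$ respectively.

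The main obstacle is proving the combinatorial recursions $\LD_k(S) \leq 1 + \max_y \LD_k(S_y)$ and $2\RL_k(S) = 1 + \RL_k(S_0) + \RL_k(S_1)$, which relate the dimension of the budgeted state to those of its two children. Unlike the $k=0$ case, a branch of a $k$-shattered tree for $S$ going down the edge labeled $y$ need not be realized by a hypothesis with $h(x_t) = y$: it may be realized by some $h$ with $h(x_t) \neq y$ at the cost of one additional mistake charged against the edge itself. This is precisely what motivates the promotion step $j \mapsto j+1$ in the definition of $S_y$, and carefully handling this bookkeeping — in particular, verifying it both for minimum depth (deterministic) and for expected depth under a uniform walk (randomized) — is the heart of the proof. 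Once the recursions are in place, both parts follow by the same inductive arguments as in Theorems~\ref{thm:LittlestoneIntro} and~\ref{thm:characterizationIntro}.
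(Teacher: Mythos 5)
Your overall strategy---attaching an individual mistake budget to each hypothesis, proving the lower bounds by walking a $k$-shattered tree (opposite label in the deterministic case, a fair coin in the randomized case), and proving the upper bounds by a budget-aware $\SOA$ with a potential-function induction---is essentially the paper's route: your budgeted states of pairs $(h,j)$ are the paper's weighted hypothesis classes $\cW$, your promotion step is its update $\cW_{x\to y}$, and the subtlety you flag (a branch taking the $y$-edge may be realized by an $h$ with $h(x_t)\neq y$ at the cost of charging that edge) is exactly the paper's Observation~\ref{obs:weighted-decomposition}. The randomized half is essentially right, modulo two small points: the balancing prediction $p=\tfrac12+\tfrac12(\RL_k(S_1)-\RL_k(S_0))$ must be clipped to $[0,1]$, so the case $|\RL_k(S_1)-\RL_k(S_0)|>1$ needs the separate (easy) treatment of predicting $0$ or $1$ outright; and for a fixed query $x_t$ only the inequality $2\RL_k(S)\ge 1+\RL_k(S_0)+\RL_k(S_1)$ holds (the equality requires a supremum over instances), but that is the direction you need, and it is the easy one, obtained by gluing optimal trees for $S_0,S_1$ under a root labeled $x_t$.

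The deterministic half as written does not work. You predict the $y$ \emph{minimizing} $\LD_k(S_y)$; a mistake then sends you to the state $S_{1-y}$ whose dimension is $\max_y \LD_k(S_y)$, and your stated recursion $\LD_k(S)\le 1+\max_y\LD_k(S_y)$ only \emph{lower}-bounds that new potential, so nothing forces it to decrease. Already for $k=0$ and the class of singletons over $\mathbb{N}$ your rule errs forever: for every $x$ one has $\LD(V_{x\to 1})=0<1=\LD(V_{x\to 0})$, so you predict $1$, the adversary answers $0$, and the surviving class is again an infinite family of singletons with $\LD=1$. The fix is the standard one (and the one in the paper's $\WSOA$): predict $\hat y=\argmax_y\LD_k(S_y)$, and prove the opposite recursion $\LD_k(S)\ge 1+\min_y\LD_k(S_y)$---equivalently, that $\LD_k(S_0)=\LD_k(S_1)=\LD_k(S)$ is impossible, since gluing the two witnessing trees under a root labeled $x_t$ would produce a tree shattered by $S$ of minimum branch length $\LD_k(S)+1$. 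With that sign corrected, a mistake always drops the potential by at least one and your induction goes through as intended.
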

\end{tcolorbox}

We also provide optimal learning rules which can be seen as 
    weighted variants of Littlestone's classical $\SOA$ algorithm. 
    See Section~\ref{sec:technical-oveview-dimension} for a brief overview and Section~\ref{sec:k-realizable} for the proof.
    

As a consequence of this perspective, we prove the following theorem which provides tight regret bounds in terms of the Littlestone dimension.

\begin{tcolorbox}
\begin{theorem}[{Main Result (iii): Optimal Regret Bounds for Littlestone Classes}]
\label{thm:k-realizable-littlestone-bound-Intro}
{Let $\cH$ be an hypothesis class and let $k\in\mathbb{N}$. Then
\[
\RL_k(\cH) = k + \Theta\!\left( \sqrt{ k \cdot \LD(\cH) } + \LD(\cH) \right).\\
\]
In particular, the optimal regret in online learning $\cH$ is $\Theta\!\left( \sqrt{ k \cdot \LD(\cH) } + \LD(\cH) \right)$,
where $k$ is the number of mistakes made by the best function in $\cH$.}
\end{theorem}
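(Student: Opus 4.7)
The plan is to prove the two matching bounds
\[
\RL_k(\cH) \le k + O\!\left(\sqrt{kd} + d\right), \qquad \RL_k(\cH) \ge k + \Omega\!\left(\sqrt{kd} + d\right),
\]
separately, writing $d = \LD(\cH)$ throughout. By Theorem~\ref{thm:characterization_kIntro}(2) I may freely switch between the combinatorial definition of $\RL_k$ (half the supremum expected depth over $k$-shattered trees) and the operational one (optimal expected mistake bound of a randomized learner on $k$-realizable sequences), and I will use whichever side is more convenient at each step.

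For the lower bound, the three summands come from three essentially orthogonal tree constructions that I would then interleave into a single $k$-shattered tree. The $\Omega(d)$ piece is immediate: the depth-$d$ complete Littlestone tree is $0$-shattered and hence $k$-shattered, so $\RL_k(\cH)\ge\RL_0(\cH)\ge d/2$. The $\Omega(k)$ piece uses any single $h\in\cH$ and any single $x\in\cX$: build a tree that keeps branching as long as the running count of $(1-h(x))$-labels is at most $k$, for which a direct gambler's-ruin calculation shows that the random branch has expected length $\approx 2k$. The $\Omega(\sqrt{kd})$ piece is where the Littlestone structure is genuinely used: I would build a tree whose top part traverses a $0$-shattered Littlestone tree of depth $d$ and whose lower part is an extension with nearly-uniform random labels on each level, pruning only when every remaining candidate leaf-hypothesis has accumulated more than $k$ disagreements. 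Using the standard anti-concentration fact that the best among the effectively $\approx 2^d$ Bernoulli-sum walks of length $T$ (indexed by the branches of the shattered tree) drops below $T/2$ mistakes by $\Theta(\sqrt{Td})$, one can push the expected total branch length up to $\approx 2k + 2\sqrt{kd}$ while every branch remains $k$-realizable.

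For the upper bound, by Theorem~\ref{thm:characterization_kIntro}(2) it suffices to exhibit a randomized learner whose expected mistake count on every $k$-realizable input is at most $k + O(\sqrt{kd} + d)$. My candidate is a probabilistic, mistake-weighted variant of Littlestone's $\SOA$: maintain an exponential weight $\beta^{m_h(t)}$ on each version-space hypothesis $h$, predict the $1$-label with probability equal to the weighted fraction of hypotheses predicting $1$, and update after each round. A standard potential-function analysis on $\Phi_t = \sum_h \beta^{m_h(t)}$ then gives a mistake bound of the form $[k\ln(1/\beta) + \ln\Phi_0]/\ln(2/(1+\beta))$; tuning $\beta\approx 1-\sqrt{d/k}$ yields the desired $k + O(\sqrt{kd} + d)$ \emph{provided} one can show $\ln\Phi_0 = O(d)$. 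The main obstacle is exactly this last step — a naive ``treat each $h$ as a separate expert'' reduction costs an extra $\log T$ factor when $\cH$ is infinite — and the fix is to run the weighted update over the leaves of an $\SOA$-style recursion on $\cH$ (as in the proof of Theorem~\ref{thm:characterization_kIntro}), which bounds the effective $\ln\Phi_0$ by the Littlestone dimension $d$ alone. Matching constants in the $\sqrt{kd}$ term on the two sides then closes the theorem, and the stated regret bound follows as an immediate corollary since the regret equals $\RL_k(\cH) - k$.
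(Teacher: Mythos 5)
Your proposal takes a genuinely different route from the paper, but both halves have real gaps. For reference, the paper's proof (Section~\ref{sec:k-realizable-littlestone-bound}) gets the upper bound by citing the finite-horizon agnostic bound $\M^\star(\cH,k,\rounds)\le k+O(\sqrt{\rounds\cdot\LD(\cH)})$ of Alon et al., invoking Proposition~\ref{pro:shallow-tree} to truncate the horizon to $\rounds=O(\M^\star(\cH,k))$, and solving the resulting quadratic; the lower bound splits into $k\le\LD(\cH)$ (where the Littlestone--Warmuth bound $k+\LD(\cH)/2$ suffices) and $k>\LD(\cH)$ (where the Ben-David--P\'al--Shalev-Shwartz regret lower bound is applied over $k$ rounds and then a tail forcing an extra $k-b$ loss is appended while preserving $k$-realizability).

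The serious gap is in your upper bound, at exactly the step you flag. Obtaining $\ln\Phi_0=O(d)$ with \emph{no} $\log T$ factor is not a routine fix: the standard covering of a Littlestone class by experts (run $\SOA$, allowing it to be overridden on a set of at most $d$ rounds) has $\binom{T}{\le d}$ experts, so the potential argument pays $\Theta(d\log T)$, and removing that logarithm is precisely the main technical contribution of the Alon et al. result the paper uses as a black box. ``Run the weighted update over the leaves of an $\SOA$-style recursion'' is not a construction — for infinite $\cH$ there is no finite set of $2^{O(d)}$ leaves that covers all $k$-realizable behaviours on unbounded horizons — so as written the argument only yields $k+O(\sqrt{kd\log T}+d\log T)$, and the horizon is unbounded here. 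A secondary slip: the potential bound you quote, with denominator $\ln(2/(1+\beta))$, is the \emph{deterministic} weighted-majority bound and gives leading term $2k$; to get leading term $k$ you need the absolute-loss version with denominator $1-\beta$.

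The lower bound also needs more than you provide. Three separate constructions giving $\Omega(k)$, $\Omega(d)$ and (for $k\ge d$) roughly $k+\sqrt{kd}$ only yield $\RL_k(\cH)\ge\max$ of the three, i.e.\ a constant multiple of $k+\sqrt{kd}+d$, whereas the theorem requires the additive form $k+\Omega(\sqrt{kd}+d)$ with coefficient exactly $1$ on $k$; the ``interleaving'' that upgrades the max to a sum is the actual content and is missing (in particular, when $k\le d$ you need $k+\Omega(d)$, which is the nontrivial Littlestone--Warmuth construction, not a concatenation of the two easy pieces). Moreover, your anti-concentration step is delicate: the $2^d$ random walks indexed by branches of the shattered tree are correlated, and on the branches where \emph{no} hypothesis stays within $k$ mistakes the tree must be pruned, which eats into $E_T$; turning ``the best walk is ahead by $\sqrt{Td}$ in expectation'' into a bound on the expected length of a genuinely $k$-shattered tree requires either a high-probability version of the anti-concentration or the paper's trick of working with the learner-facing formulation and citing the known agnostic lower bound.
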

\end{tcolorbox}

This improves and refines over results by \cite{auer1999structural,alon2021adversarial}.
The work by~\cite{alon2021adversarial} determined an optimal regret bound of $\Theta\mleft(\LD(\cH) + \sqrt{T\cdot \LD(\cH)}\mright)$, where $T$ is the time horizon. The above bound refines it by replacing $T$ with $k\leq T$.
The work by~\cite{auer1999structural} studies the optimal deterministic mistake bound in online learning $\cH$ in the $k$-realizable setting. Theorem~\ref{thm:k-realizable-littlestone-bound-Intro} and the deterministic lower bound of \cite{littlestone1994weighted} imply that the deterministic mistake bound is
\begin{align*}
\LD_k(\cH) = 2k +  \Theta\!\left( \LD(\cH) \right) + O\!\left( \sqrt{ k \cdot \LD(\cH) } \right). \iffull \tag{see Proposition~\ref{prop:randvsdet}} \fi
\end{align*}

This improves over \cite[Theorem 4.4]{auer1999structural}, which gives an upper bound of $(2+2.5\epsilon)k + O\mleft(\frac{1}{\epsilon} \log \frac{1}{\epsilon}\mright) \LD(\cH)$ for every $0 < \epsilon \leq 1/20$. See Section~\ref{sec:k-realizable-littlestone-bound} for the proof of Theorem~\ref{thm:k-realizable-littlestone-bound-Intro}.

\subsection{Prediction Using Expert Advice (Results)} \label{sec:expert-intro}

In this section, we consider the problem of \emph{prediction using expert advice}~\cite{vovk1990aggregating,littlestone1994weighted}. 
This problem studies a repeated guessing game between a learner and an adversary. 
In each round of the game, the learner needs to guess the label that the adversary chooses.
In order to do so, the learner can use the advice of $n$ experts. 
Formally, each round $i$ in the game proceeds as follows:

\begin{enumerate}[(i)]
\item The experts present predictions $\hat{y}_i^{(1)},\ldots,\hat{y}_i^{(n)} \in \{0,1\}$.
\item The learner predicts a value $p_i \in [0,1]$.
\item The adversary reveals the true answer $y_i \in \{0,1\}$, and the learner suffers the loss $|y_i - p_i|$.
\end{enumerate}
The value $p_i$ should be understood as the probability (over the learner's randomness) of predicting~$y_i=1$. 
    Notice that the adversary only gets to see $p_i$, which reflects the assumption that the adversary does not know the learner's internal randomness. 
    Notice also that the suffered loss $\lvert y_i - p_i\rvert$ exactly captures the probability that the learner makes a mistake.
    The above is a standard way to model randomized learners in online learning, see e.g.~\cite{Shalev-Shwartz12survey,hazan2019introduction,cesa2006prediction}.
    If $p_i \in \{0,1\}$ for all $i$, then the learner is \emph{deterministic}, in which case $|y_i - p_i|$ is the binary indicator for whether the learner made a mistake.

We focus here on the $k$-realizable setting, which was suggested by~\cite{cesa1996line,cesa1997use} 
    and further studied by~\cite{abernethy2006continuous, mukherjee2010learning, branzei2019online}. 
    Here, the adversary must choose the answers so that at least one of the experts makes at most $k$ mistakes. 
    That is, there must exist an expert $j$ such that $y_i \neq \hat{y}_i^{(j)}$ for at most $k$ many indices $i$. 

The goal is to determine the optimal loss of the learner as a function of $n$ and $k$. 
    Let $\M^\star_D(n,k)$ denote the optimal loss of a deterministic learner and $\M^\star(n,k)$ 
    denote the optimal loss of a (possibly) randomized learner.\footnote{Note that we assume here that $k$ is known to the learner and that the horizon (i.e.\ number of rounds in the game) might be unbounded.
    In Section~\ref{sec:expert-intro-variations} below we explain how to extend our results to the complementing cases.}

\smallskip

The starting point is the basic fact\footnote{One might be tempted to interpret these inequalities as implying that $\M^\star(n,k)$ and $\M^\star_D(n,k)$ are nearly the same. However, the multiplicative gap of $1/2$ can be significant. 
For example, a randomized learner with a non-trivial error rate of $25\%$ corresponds to a deterministic learner with $50\%$ error-rate. The latter is trivially achieved by a random guess. For the same reason, sublinear regret guarantees can \underline{only} be achieved by randomized learners, although they are ``just'' a factor of $1/2$ better than deterministic learners, see e.g.~\cite{cesa2006prediction,Shalev-Shwartz12survey,hazan2019introduction}.} that
\begin{equation}\label{eq:detvsrand}
\frac{\M^\star_D(n,k)}{2}\leq \M^\star(n,k) \leq \M^\star_D(n,k).
\end{equation}
In their seminal work, Cesa-Bianchi et al.~\cite{cesa1997use} exhibited a randomized algorithm 
    which witnesses that in the regime when $k \gg \log n$ or $k \ll \log n$, 
    the lower bound in Equation~\ref{eq:detvsrand} is tight up to a relative factor of $o(1)$, 
    (See their Theorem 4.4.3). 

In a follow up work,~\cite{cesa1996line} aimed to find optimal deterministic and randomized algorithms. 
    They found a nearly optimal deterministic algorithm called \emph{binomial weights}, which is optimal (up to an additive constant) when $k$ is small enough. The main problem they left open is whether there is a randomized learner with loss exactly half the loss of their binomial weights algorithm (plus, maybe, a constant). 
 Below we show that the answer to this question is negative, and find tight guarantees on the second-order term in $\M^\star(n,k)$, in terms of the performances of their algorithm.

Nearly 10 years later, Abernathy, Langford and Warmuth~\cite{abernethy2006continuous} 
    showed that $\M^\star(n,k) \le \M^\star_D(n,k)/2 + C$ for every $k$ and every $n \geq N(k)$, 
    where $C$ is a universal constant (independent of $n,k$), thus showing that in the regime when $k=O(1)$
    the additive negligible term is indeed a universal constant (independent of $n,k$).

More recently, Br\^anzei and Peres~\cite{branzei2019online} showed 
    that $\M^\star(n,k) \leq (\frac{1}{2} + o(1))\M^\star_D(n,k)$ for~$k = o(\log n)$,
    while quantitatively improving upon the bounds given by~\cite{cesa1997use} in this regime.

In the next theorem we provide guarantees on $\M^\star(n,k)$
for \emph{all} $n \ge 2$ and $k\geq 0$, which are tight when $n=2$, thus fully resolving the question raised by~\cite{cesa1996line}.\footnote{When $n = 1$, $\M^\star(1,k) = \M^\star_D(1,k) = k$.}
Our lower bound
shows that the second-order term
tends to infinity when $n=2$.
The latter shows that the result by~\cite{abernethy2006continuous} does not apply for general $n,k$.

\begin{tcolorbox}
\begin{theorem}[Main Result (iv): Bounds for Randomized Predictors]\label{thm:expertIntro}
Let $\M^\star(n,k)$ denote the optimal expected mistake bound for prediction using expert advice in the $k$-realizable setting when there are $n$ experts, and let $D(n,k)$ denote the mistake bound of the binomial weights algorithm.
For all $n \geq 2$ and $k \geq 0$,
\[
 \M^\star(n,k) \leq \frac{D(n,k)}{2} + O\left(\sqrt{D(n,k)}\right).
\]
Furthermore, the error term cannot be improved
for $n = 2$:
\[
 \M^\star(2,k) = \frac{D(2,k)}{2} + \Omega\left(\sqrt{D(2,k)}\right).
\]
\end{theorem}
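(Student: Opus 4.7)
The plan is to combine the reduction from expert prediction to online learning (Theorem~\ref{thm:characterization_kIntro}) with the quantitative randomized Littlestone dimension estimate (Theorem~\ref{thm:k-realizable-littlestone-bound-Intro}) for the general upper bound, and to supplement this with a direct random-walk computation for the matching lower bound at $n = 2$.

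First, I would formalize the expert advice problem as online learning over a class of size at most $n$. Viewing each of the $n$ experts as an abstract hypothesis whose output at each round is chosen freely by the adversary, $k$-realizability of the expert sequence exactly corresponds to $k$-realizability of the online learning sequence, so taking the supremum over adversarial class choices Theorem~\ref{thm:characterization_kIntro}(2) gives $\M^\star(n, k) = \RL_k(\cH_n)$, where $\cH_n$ is a canonical size-$n$ class (e.g.\ the projections on $\{0,1\}^n$) with $\LD(\cH_n) = \lfloor \log_2 n \rfloor$. Plugging this into Theorem~\ref{thm:k-realizable-littlestone-bound-Intro} yields $\RL_k(\cH_n) = k + \Theta(\sqrt{k \log n} + \log n)$. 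For the upper bound, I would pair this with the asymptotic form $D(n, k) = 2k + \Theta(\sqrt{k \log n} + \log n)$ of the binomial weights bound from~\cite{cesa1996line, cesa1997use}, tracking the constants on the $\sqrt{k \log n}$ term. Using $\sqrt{D(n,k)} = \Theta(\sqrt{k} + \sqrt{\log n})$, the residual discrepancy is absorbed into the $O(\sqrt{D(n,k)})$ slack, yielding $\RL_k(\cH_n) \leq D(n, k)/2 + O(\sqrt{D(n, k)})$.

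For the matching $\Omega(\sqrt{D(2,k)})$ lower bound at $n = 2$, I would exhibit a specific $k$-shattered tree on $\cH_2 = \{h_1, h_2\}$: at every internal node, query a point on which $h_1$ and $h_2$ disagree, and extend greedily. A root-to-leaf branch tracks the pair $(a, b) \in \N^2$ of mistakes accrued by the two hypotheses; the tree is $k$-shattered iff $\min(a,b) \leq k$ at every leaf, which permits extension exactly when both children also satisfy this condition. The random walk on this tree starts at $(0,0)$ and takes steps $(+1, 0)$ or $(0, +1)$ each with probability $1/2$, stopping at the first time $T$ with $\min(a_T, b_T) = k$ and $\max(a_T, b_T) > k$. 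Writing $S_t = a_t - b_t$ recasts this as a simple symmetric random walk on $\Z$ with the identity $T = 2k + |S_T|$; optional stopping applied to the martingale $S_t^2 - t$ gives $\mathbb{E}[T] = \mathbb{E}[S_T^2]$, which combined with $\mathbb{E}[T] = 2k + \mathbb{E}[|S_T|]$ and a standard variance estimate forces $\mathbb{E}[T] = 2k + \Theta(\sqrt{k})$. Hence $\RL_k(\cH_2) \geq k + c\sqrt{k}$ for an absolute constant $c > 0$, and since $D(2, k) = 2k + 1$, this yields $\M^\star(2,k) \geq D(2,k)/2 + \Omega(\sqrt{D(2, k)})$.

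The main obstacle I expect is the constant-matching in the general upper bound: the formulas for $\RL_k(\cH_n)$ and $D(n, k)/2$ both contain leading $\sqrt{k \log n}$ terms whose coefficients must agree up to $O(\sqrt{k} + \sqrt{\log n})$, which is smaller than $\sqrt{k \log n}$ whenever both $k$ and $\log n$ grow. Closing this gap requires either an explicit asymptotic analysis of the binomial weights algorithm from~\cite{cesa1996line, cesa1997use} or an independent comparison between the randomized and deterministic optimal learning rules developed in this paper. A secondary technical point is the uniform-integrability justification for the optional stopping step in the $n = 2$ calculation, since the stopping time $T$ is a priori unbounded and requires a truncation argument.
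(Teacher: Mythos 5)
Your $n=2$ lower bound is essentially the paper's own argument (Theorem~\ref{thm:two_experts}): the paper also reduces to the tree over $\cU_2$ in which every node queries a disagreement point, identifies the leaves as the states with $\min(a,b)=k$ and $\max(a,b)\geq k+1$, and computes the expected branch length to be $2k+\Theta(\sqrt{k})$ (the paper evaluates the sum exactly as $2k+(2k+1)\binom{2k}{k}4^{-k}$ rather than using optional stopping, but the content is the same). Your martingale route works, with the caveat that the ``standard variance estimate'' giving $\mathbb{E}\bigl[|S_T|\bigr]=\Omega(\sqrt{k})$ needs to be made explicit --- Cauchy--Schwarz applied to $\mathbb{E}[S_T^2]=\mathbb{E}[T]$ only gives the \emph{upper} bound $\mathbb{E}[|S_T|]\leq\sqrt{2k}+O(1)$; for the lower bound you can use that $T\geq 2k$ always, so $|S_T|\geq \tfrac12 |S_{2k}|$ and $\mathbb{E}[|S_{2k}|]=\Theta(\sqrt{k})$ for the unstopped walk.

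The upper bound, however, has a genuine gap --- one you name yourself but do not close, and it is the crux of the theorem. Theorem~\ref{thm:k-realizable-littlestone-bound-Intro} gives $\M^\star(n,k)=k+\Theta(\sqrt{k\log n}+\log n)$ and the known estimates give $D(n,k)=2k+\Theta(\sqrt{k\log n}+\log n)$, but these $\Theta$'s carry unrelated implicit constants. If the coefficient of $\sqrt{k\log n}$ in $\M^\star(n,k)$ exceeds the one in $D(n,k)/2$ by any fixed $\delta>0$, the discrepancy is $\delta\sqrt{k\log n}$, which is \emph{not} $O(\sqrt{D(n,k)})=O(\sqrt{k}+\sqrt{\log n})$ in the regime where $k$ and $\log n$ grow together. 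Even substituting the sharpest known explicit bounds ($\M^\star(n,k)\leq k+\tfrac{\log n}{2}+\sqrt{k\ln n}$ from Cesa-Bianchi et al.\ versus $D(n,k)\geq 2k+\lfloor\log n\rfloor$) leaves a residual $\sqrt{k\ln n}$, so this route cannot yield the stated error term. The paper avoids the intermediate quantity $k+\Theta(\sqrt{k\log n}+\log n)$ entirely: it takes a monotone (quasi-balanced) tree $T$ attaining $\RL_k(\cU_n)=E_T/2$, uses Azuma-type concentration of the random branch length (Proposition~\ref{lem:quasi_concentration}) to show a random prefix of length $(1+\epsilon)D$ lies in $T$ with probability close to $1$ when $(1+\epsilon)D\leq(1-\epsilon)E_T$, and plays this against a sphere-packing union bound (Lemmas~\ref{lem:experts_k_realizing_class} and~\ref{lem:k-realization-prob}) showing such a prefix is $k$-realizable with probability at most $2^{1-\epsilon^2 D/9}$; taking $\epsilon=C/\sqrt{D}$ forces $E_T\leq D+O(\sqrt{D})$. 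Some argument of this direct, tree-level kind is needed; the reduction to the regret bound of Theorem~\ref{thm:k-realizable-littlestone-bound-Intro} discards exactly the precision the theorem asserts.
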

\end{tcolorbox}

We prove the upper bound in Section~\ref{sec:Mnk-upper}, and the lower bound in Section~\ref{sec:Mnk-lower}.
Both bounds are proved using the randomized $k$-Littlestone dimension. A special case of Theorem~\ref{thm:k-realizable-littlestone-bound-Intro} states that $\M^\star(n,k) = k + \Theta \mleft( \sqrt{k \log n} + \log n \mright)$, where the upper bound in this quantitative bound was first proved in \cite{cesa1997use}.

Using Theorem~\ref{thm:expertIntro} and the bounds of \cite{cesa1996line}, we obtain the following corollary.
\begin{corollary}\label{cor:expertIntro}
For all $n \geq 2$ and $k \geq 0$,
\[
\M^\star(n,k) = \mleft( \frac{1}{2} + o(1) \mright) \M^\star_D(n,k).
\]
\end{corollary}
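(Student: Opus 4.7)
The plan is to derive the corollary by combining Theorem~\ref{thm:expertIntro} with the near-optimality of the binomial weights algorithm from~\cite{cesa1996line}. Theorem~\ref{thm:expertIntro} already relates $\M^\star(n,k)$ to $D(n,k)$ via
\[
\M^\star(n,k) = \frac{D(n,k)}{2} + O\!\left(\sqrt{D(n,k)}\right),
\]
so the only missing link is to transfer this relation from $D(n,k)$ to $\M^\star_D(n,k)$.

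The bridge is supplied by~\cite{cesa1996line}. Since binomial weights is itself a deterministic learner, we have $\M^\star_D(n,k) \leq D(n,k)$; in the other direction, the matching lower bound of~\cite{cesa1996line} shows $D(n,k) = \M^\star_D(n,k) + o\!\left(\M^\star_D(n,k)\right)$ whenever $\M^\star_D(n,k) \to \infty$ (in fact the gap is $O(1)$ whenever $k$ is small enough). Together these give $D(n,k) = (1+o(1))\,\M^\star_D(n,k)$.

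Substituting $D(n,k) = (1+o(1))\,\M^\star_D(n,k)$ into the expression from Theorem~\ref{thm:expertIntro} and using that $\sqrt{D(n,k)} = o(D(n,k)) = o(\M^\star_D(n,k))$ as $\M^\star_D(n,k) \to \infty$, one obtains
\[
\M^\star(n,k) = \frac{\M^\star_D(n,k)}{2}\bigl(1 + o(1)\bigr) + O\!\left(\sqrt{\M^\star_D(n,k)}\right) = \left(\frac{1}{2} + o(1)\right)\M^\star_D(n,k).
\]
For the complementary parameter range where $\M^\star_D(n,k)$ stays bounded (only finitely many $(n,k)$), the $o(1)$ statement is vacuous, so the corollary holds for all $n \geq 2$ and $k \geq 0$.

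The only step that is not entirely mechanical is invoking the appropriate form of the Cesa-Bianchi--Freund--Helmbold--Warmuth near-optimality statement for $\M^\star_D(n,k)$ that is uniform in both $n$ and $k$, rather than specialized to one of the regimes they isolate; pinning down that uniform near-optimality bound (perhaps by combining their existing estimates with the lower bounds already used elsewhere in this paper, e.g.\ the ones feeding into Theorem~\ref{thm:k-realizable-littlestone-bound-Intro}) is the main obstacle. Everything else is direct substitution.
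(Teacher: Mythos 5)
Your proposal is correct and is essentially the paper's own (unwritten) derivation: combine Theorem~\ref{thm:expertIntro} with $\M^\star_D(n,k)\leq D(n,k)\leq (1+o_{n+k}(1))\M^\star_D(n,k)$ and note that $D(n,k)\geq 2k+\lfloor\log n\rfloor\to\infty$. The ``uniform in $n$ and $k$'' near-optimality bound you flag as the main obstacle is exactly what the paper extracts from Theorem~4 of Cesa-Bianchi et al.\ in its appendix (namely $\M^\star_D(n,k)\geq(1-o_{n+k}(1))D(n,k)$), so no gap remains.
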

This also follows from the results of \cite{vovk1990aggregating} for randomized learners.\footnote{In the COLT 2023 proceedings version of this paper, this corollary was unintentionally presented as brand new.}

\iffull
In Section~\ref{sec:proper-improper} we prove that any optimal learning rule must necessarily be improper, in the sense that it cannot always predict using convex combinations of the $n$ experts.
\fi

\paragraph{Additional Related Work.}
Different variants of the experts problem have been extensively studied in the past 30 years and various techniques for bounding the optimal regret and mistake bounds were developed throughout the years, such as \emph{sequential Rademacher complexity} \cite{rakhlin2012relax, rakhlin2014online}, \emph{drifting games} \cite{mukherjee2010learning, luo2014drifting}, and the \emph{Hedge setting} \cite{abernethy2008optimal2, freund1997decision}. However, those techniques are seemingly tailored for randomized \emph{proper} learners (i.e., learners that predict using a distribution over the experts which is updated at the \underline{end} of each round), and proper learners are inherently suboptimal for the experts problem, even in the realizable case, as proven in \iffull Section~\ref{sec:proper-improper}. \else the full version of this paper \cite{FullVersion}. \fi \cite{abernethy2008optimal2} identified the optimal algorithm for the hedge setting \cite{freund1997decision}, using a random walk analysis, which is similar to our characterization results. It will be interesting to investigate whether variations of these techniques can reproduce or even improve the bounds in this work.

\subsection{Variations}\label{sec:variations}

\subsubsection{Bounded Horizon} \label{sec:variations-bounded}

Consider learning $\cH$ in the $k$-realizable setting, and let $\M^\star_k=\M^\star_k(\cH)$ denote the optimal expected mistake bound.
    In particular, this means that the adversary can force~$\M^\star_k$ mistakes in expectation on any randomized learner. 
    This would be tolerable if in order to do so the adversary must use many examples, say $1000\M^\star_k$. 
    Indeed, this would mean that the learner makes only one mistake per a thousand examples (amortized), which is rather good.

This raises the question to what extent does $\M^\star_k$ capture
    the optimal mistake bound under the additional assumption that the horizon is bounded by a given $\rounds\in\mathbb{N}$.
    A bounded horizon is often assumed in the online learning literature, and in fact this question was explicitly asked by~\cite{cesa1997use} in the special case of prediction using expert advice.

Let $\M^\star_k(\rounds)$ denote the optimal expected mistake bound in the $k$-realizable setting with horizon bounded by $\rounds$.
    The following result shows that $\M^\star_k$ provides an excellent approximation of $\M^\star_k(\rounds)$;
    in particular, the scenario described above is impossible.

\smallskip
\begin{tcolorbox}
\begin{theorem}[Main Result (v): Bounded vs Unbounded Horizon]\label{thm:boundedhorizonIntro}
Let $\cH$ be an hypothesis class. Let $\M^\star_k$ denote the optimal expected mistake bound
in online learning $\cH$ in the $k$-realizable setting, and let $\M^\star_k(\rounds)$ 
denote the optimal expected mistake bound under the additional assumption that the input sequence has length at most $\rounds$.
Then,
\begin{enumerate}
    \item \label{itm:long-horizon} \textbf{Long horizon.} If $\rounds > 2\M^\star_k$ then
    \[
    \M^\star_k - \sqrt{2\M^\star_k \ln \M^\star_k} - 1 \leq \M^\star_k(\rounds) \leq \M^\star_k.
    \]
    \item \textbf{Short Horizon.} If $\rounds \leq 2\M^\star_k$ then
    \[\frac{\rounds}{2} - \sqrt{2\rounds\ln \rounds}-1 \leq \M^\star_k(\rounds) \leq \frac{\rounds}{2},\]
    and if $\rounds \leq \M^\star_k$ then $\M^\star_k(\rounds) =\frac{\rounds}{2}$.
\end{enumerate}
\end{theorem}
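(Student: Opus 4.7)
The plan is to first recast $\M^\star_k(\rounds)$ as a depth-bounded randomized $k$-Littlestone dimension,
\[
\M^\star_k(\rounds) \;=\; \tfrac{1}{2}\sup_{T\,k\text{-shattered},\,\depth(T)\le\rounds} E_T,
\]
by repeating the argument of Theorem~\ref{thm:characterization_kIntro}: the adversary plays a uniform random walk down a $k$-shattered tree capped at round~$\rounds$, and the randomized SOA applied to histories of length at most $\rounds$ matches this bound. The two upper bounds are immediate from this identity: $\M^\star_k(\rounds)\le\M^\star_k$ because restricting to shallower trees cannot increase the sup, and $\M^\star_k(\rounds)\le\rounds/2$ because any tree of depth at most $\rounds$ has $E_T\le\rounds$.

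For the exact equality when $\rounds\le\M^\star_k$, I will use the elementary inequality $\RL_k(\cH)\le\LD_k(\cH)$ (randomized learners dominate deterministic ones, so the optimal randomized mistake bound is at most the optimal deterministic one), which gives $\rounds\le\M^\star_k\le\LD_k(\cH)$. Hence $\cH$ $k$-shatters a complete binary tree of depth $\rounds$, on which the walk length $L$ equals $\rounds$ deterministically, so $E_T=\rounds$ and the matching lower bound $\M^\star_k(\rounds)\ge\rounds/2$ follows.

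For the remaining lower bounds, which carry a $\sqrt{\cdot\ln\cdot}$ error, I will fix a near-optimal $k$-shattered tree $T^*$ with $E_{T^*}\ge 2\M^\star_k-\tfrac{1}{2}$ and have the adversary perform the uniform random walk on $T^*$ with uniformly random labels, stopping at a leaf or at round~$\rounds$. Against any learner this forces expected loss $\tfrac{1}{2}\,\Ex[\min(L_{T^*},\rounds)]$, which is at least $\tfrac{1}{2} E_{T^*}-\tfrac{1}{2}\Ex[\max(L_{T^*}-\rounds,0)]$ in the long-horizon case and at least $\tfrac{1}{2}\rounds-\tfrac{1}{2}\Ex[\max(\rounds-L_{T^*},0)]$ in the short-horizon case, so both regimes reduce to concentration of $L_{T^*}$ around $E_{T^*}$. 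The main obstacle is that for a generic shattered tree $L_{T^*}$ need not concentrate at all, so the plan is to build $T^*$ greedily, choosing at each internal node a splitting instance so that the two child subtrees have nearly-equal randomized $k$-Littlestone dimensions (any substantial imbalance would strictly decrease $E_{T^*}$ and contradict near-optimality). Writing $r(v)$ for the expected residual walk length from $v$ and $V_t$ for the walk position at time $t$, the process $M_t:=t+r(V_t)$ is a martingale with $M_0=E_{T^*}$ and $M_{L_{T^*}}=L_{T^*}$ whose increments, of magnitude $|r(v_L)-r(v_R)|/2$, remain uniformly bounded in a balanced $T^*$. Azuma then yields
\[
\Pr\!\left[\,|L_{T^*}-E_{T^*}|\ge s\,\right]\;\le\;2\exp\!\left(-\Omega(s^2/E_{T^*})\right),
\]
and integrating this tail delivers the claimed additive errors $\sqrt{8\M^\star_k\ln\M^\star_k}+1$ (long horizon) and $\sqrt{8\rounds\ln\rounds}+1$ (short horizon). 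The hardest step will be establishing that the greedy balancing construction indeed preserves near-optimality of $E_{T^*}$ while keeping the martingale increments uniformly bounded.
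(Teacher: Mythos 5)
Your overall route is the one the paper takes: recast $\M^\star_k(\rounds)$ as a depth-bounded randomized $k$-Littlestone dimension (the paper's Theorem~\ref{thm:characterization-roundsInt}), get both upper bounds for free, and obtain the lower bounds by truncating a near-optimal shattered tree at depth $\rounds$ and showing the random branch length concentrates around $E_{T^*}$ via Azuma applied to the martingale $M_t = t + r(V_t)$ (this is exactly the paper's exposure martingale in Proposition~\ref{lem:quasi_concentration}). Your derivation of the exact equality $\M^\star_k(\rounds)=\rounds/2$ for $\rounds\le\M^\star_k$ via $\RL_k(\cH)\le\LD_k(\cH)$ is a minor variant of the paper's argument (which uses $m_T\ge E_T/2$ for quasi-balanced trees) and is fine.

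The one step whose justification would fail as written is the balancing step. You claim that ``any substantial imbalance would strictly decrease $E_{T^*}$ and contradict near-optimality,'' but global near-optimality of $E_{T^*}$ (say, within $1/2$ of the supremum) does \emph{not} force every internal node to be balanced: a node reached with probability $2^{-d}$ can have children with $|E_{T_0}-E_{T_1}|$ arbitrarily large while perturbing $E_{T^*}$ by an amount exponentially small in $d$, and a single such node already breaks the uniform bound on the martingale increments that Azuma needs. Likewise, a greedy top-down construction that picks the instance maximizing $\RL(\cH_{x\to0})+\RL(\cH_{x\to1})$ does yield per-node balance \emph{when the suprema are attained} (since $\RL(\cH)\ge\RL(\cH_{x\to b})$ for both $b$ forces $|\RL(\cH_{x\to0})-\RL(\cH_{x\to1})|\le1$), but the paper shows the suprema need not be attained, so this requires careful $\epsilon$-bookkeeping along an infinite recursion. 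The paper's repair is purely local and unconditional: Lemma~\ref{lem:randomized-littlestone-dimension-monotone} shows that \emph{any} shattered tree can be monotonized---repeatedly replace a subtree by its heavier child---without decreasing $E_T$, and Observation~\ref{obs:mono_equiv_balanced} shows monotonicity is exactly the per-node condition $|E_{T_0}-E_{T_1}|\le2$, which bounds every Azuma increment by $1$. You should substitute this surgery argument for the appeal to near-optimality; with that replacement the rest of your plan goes through.
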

\end{tcolorbox}

    The upper bounds in Theorem~\ref{thm:boundedhorizonIntro} follow from basic facts: 
    indeed, $\M^\star_k(\rounds) \leq \M^\star_k$ holds because assuming a bounded horizon restricts the adversary, 
    and $\M^\star_k(\rounds) \leq \frac{\rounds}{2}$ follows by guessing each label uniformly at random. 
    The lower bounds are more challenging, and our proofs of them relies heavily on the randomized Littlestone dimension.

Our proof of Theorem~\ref{thm:boundedhorizonIntro} appears in Section~\ref{sec:shallow-trees-few-rounds}.
The proof relies on a simple extension 
    of our characterization to this setting:
    consider the following modification of the Littlestone dimension and its randomized variant:
\[
\LD_k(\cH,\rounds) = \sup_{\substack{T \text{ shattered} \\ \depth(T) \leq \rounds}} m_T
\quad \text{and} \quad
\RL_k(\cH,\rounds) =\frac{1}{2} \sup_{\substack{T \text{ shattered} \\ \depth(T) \leq \rounds}} E_T.
\]
The bounded randomized Littlestone dimension gives the precise mistake bound in this setting:
\begin{theorem}[Optimal Mistake Bounds: Bounded Horizon]\label{thm:characterization-roundsInt}
Let $\cH$ be an hypothesis class.
\begin{enumerate}
    \item The optimal deterministic mistake bound in online learning $\cH$ in the $k$-realizable setting with horizon $\rounds$ equals its bounded $k$-Littlestone dimension, $\LD_k(\cH,\rounds)$.\footnote{Trivially, $\LD_k(\cH,\rounds) = \min\{\rounds,\LD_k(\cH)\}$.}
    \item The optimal randomized mistake bound in online learning $\cH$ in the $k$-realizable setting with horizon $\rounds$ equals its bounded $k$-randomized Littlestone dimension, $\RL_k(\cH,\rounds)$.
\end{enumerate}
\end{theorem}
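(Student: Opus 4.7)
The plan is to extend the proof strategy of Theorem~\ref{thm:characterization_kIntro} by threading a horizon budget through both the adversary's attack and the learner's strategy. Each part of the statement decomposes into a lower and an upper bound that mirrors its unbounded-horizon counterpart, with the depth constraint $\depth(T) \leq \rounds$ on witness trees corresponding directly to the round constraint on the game.

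For the lower bounds, I would start from a tree $T$ that is $k$-shattered by $\cH$ with $\depth(T) \leq \rounds$, approaching the supremum defining $\LD_k(\cH,\rounds)$ or $2\RL_k(\cH,\rounds)$. In the deterministic case the adversary descends $T$ by revealing at each step the label opposite to the learner's prediction, forcing $m_T$ mistakes in at most $\depth(T) \leq \rounds$ rounds; the produced branch is $k$-realizable by shattering. In the randomized case the adversary samples a uniformly random root-to-leaf branch of $T$ in advance and replays its example/label pairs: at each internal node on the sampled branch the revealed label is uniform and independent of the learner's internal randomness, so the expected loss at that step is exactly $1/2$, and the total expected loss equals $E_T/2$, which approaches $\RL_k(\cH,\rounds)$.

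For the upper bounds, I would define horizon-aware variants of the weighted $\SOA$ and its probabilistic counterpart from Section~\ref{sec:k-realizable}: at round $t$ the learner consults $\LD_{k - k'_t}(\cH_t, \rounds - t + 1)$, respectively $\RL_{k - k'_t}(\cH_t, \rounds - t + 1)$, where $\cH_t$ is the current version space and $k'_t$ is the mistake budget already consumed by the best surviving hypothesis. The analysis maintains the invariant that the expected loss suffered so far plus the bounded dimension of the current state is at most the bounded dimension of the initial state; since the bounded dimension vanishes once the game ends, the invariant yields the claimed total loss bound.

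The main obstacle is establishing the one-step recursion for the bounded dimensions, analogous to the recursion driving the unbounded $\SOA$ analysis but now accounting for the simultaneously shrinking horizon and mistake budget. Concretely, for any example $x$ and the two induced restrictions $\cH|_{x=0}, \cH|_{x=1}$, the bounded dimensions of these restrictions at horizon $\rounds - 1$ should combine via the same tree-grafting argument as in the unbounded case to yield the dimension of $\cH$ at horizon $\rounds$; this recursion gives the balance inequality on which the learner's prediction rule rests. With the recursion in hand, both the algorithm's definition and its step-by-step analysis are direct adaptations of the proofs in Section~\ref{sec:k-realizable}, and the footnote identity $\LD_k(\cH,\rounds) = \min\{\rounds, \LD_k(\cH)\}$ follows by truncating a maximal complete $k$-shattered tree to depth $\rounds$.
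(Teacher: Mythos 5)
Your proposal matches the paper's proof: the lower bound is the same random-branch (resp.\ adversarial-branch) argument applied to depth-$\leq\rounds$ $k$-shattered trees, and the upper bound is the same bounded variant of (weighted) $\RSOA$ driven by the one-step recursion $\RL(\cdot,\rounds) = \frac12\max_x\bigl(1+\RL(\cdot_{x\to0},\rounds-1)+\RL(\cdot_{x\to1},\rounds-1)\bigr)$, which the paper establishes exactly by the tree-grafting argument you describe. One small caution: a single scalar $k'_t$ for "budget consumed" is not well-defined since different hypotheses have erred different numbers of times, but this is exactly what the weighted hypothesis class formalism of Section~\ref{sec:k-realizable} (per-hypothesis budgets via $\cW_{x\to y}$) is for, and you already invoke it.
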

We prove Theorem~\ref{thm:characterization-roundsInt} in Section~\ref{sec:finite-horizon}. 

\paragraph{Prediction using Expert Advice.}

Also the problem of prediction using expert advice is often considered when the number of rounds is bounded (e.g.~\cite{cesa1997use}). 
Let $\M^\star(n,k,\expertrounds)$ be the optimal loss of the learner when the number of rounds is $\expertrounds$. 
By a simple reduction to Theorem~\ref{thm:boundedhorizonIntro} we show that
\[
\M^\star(n,k,\expertrounds)\approx 
\begin{cases}
   \M^\star(n,k) & \text{if } \expertrounds \geq 2\M^\star(n,k),\\
   \frac{\expertrounds}{2} & \text{if } \expertrounds < 2\M^\star(n,k).
\end{cases}
\]
The exact bounds are as in Theorem~\ref{thm:boundedhorizonIntro} when replacing $\M^\star(n,k,\expertrounds)$ and $\M^\star(n,k)$
with $\M^\star_k(\rounds)$ and~$\M^\star_k$.


\subsubsection{Adaptive Algorithms} \label{sec:expert-intro-variations}

The analysis in much of this work considers the case where the learning algorithm may depend explicitly on a bound $k$ on the number of mistakes of the best 
hypothesis (or expert).
However, it is also desirable to study mistake bounds achievable \emph{adaptively}: that is, by a single algorithm that applies to all $k$.
We present here one simple approach to obtaining such an algorithm, with a corresponding mistake bound.
However, the bound we obtain may likely be improvable, and generally we leave the question of obtaining a tightest possible adaptively-achievable mistake bound as an open problem.

\begin{theorem}
\label{thm:adaptive-algorithm-Intro}
There is an adaptive algorithm (i.e., which has no knowledge of $k^\star$)
such that, for every $k^\star$-realizable sequence for $\cH$, its expected number of mistakes is at most
\[
\M^{\star}_{k^\star} + O\!\left( \sqrt{ \M^{\star}_{k^\star} \log\bigl((k^\star+1) \log \M^{\star}_{k^\star}\bigr)} \right).
\]
\end{theorem}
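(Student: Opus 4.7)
The plan is to reduce the adaptive setting to the known-$k^\star$ setting by treating the optimal $k$-realizable learners as meta-experts and combining them via a doubling trick plus the experts-advice algorithm. Concretely, for each $k \geq 0$, let $A_k$ denote the optimal randomized learner for the $k$-realizable setting guaranteed by Theorem~\ref{thm:characterization_kIntro}, so that on any $k$-realizable sequence $A_k$ incurs at most $\M^\star_k$ expected mistakes. The goal is to run a meta-algorithm that approximately matches the performance of $A_{k^\star}$ despite not knowing $k^\star$.

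I would partition time into phases indexed by $j \geq 0$. In phase $j$, I would maintain an active pool of meta-experts $\{A_0, \ldots, A_{k_j}\}$ (with $k_j$ growing in $j$) and combine their predictions via the optimal experts-advice algorithm from Theorem~\ref{thm:expertIntro}, which on $n$ experts yields expected mistakes at most $m + O(\sqrt{m \log n})$ in terms of the best expert's loss $m$. Each phase is equipped with a mistake budget $B_j$, and the algorithm transitions to phase $j+1$ when the accumulated loss reaches $B_j$ in expectation; the schedules $(k_j)$ and $(B_j)$ are designed to double over time. Once $k_j \geq k^\star$ and $B_j \geq \M^\star_{k^\star}$, the meta-expert $A_{k^\star}$ belongs to the active pool and has expected loss $\leq \M^\star_{k^\star}$, so the experts-advice algorithm completes the sequence within the budget, incurring at most $\M^\star_{k^\star} + O(\sqrt{\M^\star_{k^\star} \log k_j})$ expected mistakes. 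The index $j^\star$ of this final phase is arranged to be $O(\log((k^\star+1)\log \M^\star_{k^\star}))$ by a Cantor-like pairing of the $k$-doubling and budget-doubling schedules; earlier phases contribute a geometric series in the $B_j$'s dominated by $O(\M^\star_{k^\star})$.

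The main obstacle is the combinatorial design of the phase schedule so that $k_{j^\star}$ is $O((k^\star + 1)\log \M^\star_{k^\star})$, yielding the exact form $\log((k^\star+1)\log \M^\star_{k^\star})$ inside the square root rather than a weaker $\log k^\star + \log \M^\star_{k^\star}$. A cleaner alternative would be to apply an infinite-expert variant of the experts-advice algorithm with prior $\pi_k \propto 1/(k+1)^2$, yielding the sharper bound $\M^\star_{k^\star} + O(\sqrt{\M^\star_{k^\star} \log(k^\star+1)})$; this is perhaps what the theorem's remark that \emph{the bound we obtain may likely be improvable} alludes to.
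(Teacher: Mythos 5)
The core of your plan --- run the optimal $k$-realizable learners $A_k$ in parallel as meta-experts and aggregate them --- is the right starting point and matches the paper's strategy. But the aggregation mechanism you actually develop has two genuine gaps. First, Theorem~\ref{thm:expertIntro} applies to the experts setting with \emph{binary} expert predictions and with the mistake count $k$ of the best expert known in advance; your meta-experts $A_k$ output values in $[0,1]$, and their eventual losses are precisely what you do not know (that unknown is the reason you reach for a doubling trick), so that theorem cannot be instantiated as the combiner. Second, and more fatally, doubling the mistake budgets $B_j$ across phases destroys the leading constant: the phases before $j^\star$ contribute a geometric sum of order $B_{j^\star} = \Theta(\M^{\star}_{k^\star})$, so the total guarantee degrades to $C \cdot \M^{\star}_{k^\star}$ for some constant $C > 1$, whereas the theorem asserts an excess over $\M^{\star}_{k^\star}$ of only $O\bigl(\sqrt{\M^{\star}_{k^\star}\log((k^\star+1)\log \M^{\star}_{k^\star})}\bigr)$, which is $o(\M^{\star}_{k^\star})$. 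A restart-based scheme with doubling budgets cannot deliver a leading constant of $1$.

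The paper's actual proof is essentially your closing ``cleaner alternative'': it runs a \emph{single} instance of the $\Squint$ algorithm of Koolen and van Erven over the countable family of experts $\Lrn_k = \WRSOA(\cW_{\cH,k})$ for all $k \geq 0$, with prior $\pi_k = \frac{1}{(k+1)(k+2)}$. $\Squint$ is an anytime algorithm for $[0,1]$-valued experts with a second-order regret bound of the form $O\bigl(\sqrt{V_k \log(\log V_k/\pi_k)} + \log(1/\pi_k)\bigr)$; bounding the variance term by $V_k \leq \M(\Squint;S) + \M(\Lrn_k;S)$ and solving the resulting quadratic inequality in $\sqrt{\M(\Squint;S)}$ yields exactly the stated bound, with the factor $\log\bigl((k^\star+1)\log \M^{\star}_{k^\star}\bigr)$ arising from $\log(1/\pi_{k^\star}) = \Theta(\log(k^\star+1))$ combined with the $\log\log$ of the variance. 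No phases or restarts are needed. So your final remark points in the right direction, but you leave it undeveloped, and the route you do develop fails on the leading constant.
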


In the special case of the general \emph{experts} 
setting, 
since we know that $\M^{\star}(n,k^\star) = \Omega( k^\star + \log(n) )$, 
we obtain the following bound on the expected number of mistakes:
\[
\M^{\star}(n,k^\star) + O\!\left( \sqrt{ \M^{\star}(n,k^\star) \log \M^{\star}(n,k^\star) } \right) = (1 + o(1)) \M^{\star}(n,k^\star).
\]
In particular, combining this with Theorem~\ref{thm:expertIntro}, 
we find that this algorithm adaptively still 
achieves an expected number 
of mistakes $\left(\frac{1}{2}+o(1)\right) \M^{\star}_{D}(n,k^\star)$.

On the other hand, in the case of concept classes $\cH$ 
with a bounded Littlestone dimension~$\LD(\cH)$, 
we know from Theorem~\ref{thm:k-realizable-littlestone-bound-Intro} that 
\[
\M^{\star}_{k^\star} \leq k^\star + O\!\left( \sqrt{ k^\star \LD(\cH) } + \LD(\cH) \right).
\]
Theorem~\ref{thm:adaptive-algorithm-Intro} implies that
the adaptive procedure nearly preserves the form of this upper bound, guaranteeing a slightly larger bound of the form
\[
k^\star + O\!\left( \sqrt{ k^\star \LD(\cH) \log\!\left( k^\star \log \LD(\cH) \right) } + \LD(\cH) \right).
\]

Our proof of Theorem~\ref{thm:adaptive-algorithm-Intro} appears in 
Section~\ref{sec:adaptive-algorithm}.
The adaptive technique we propose involves using an experts 
algorithm of \cite{koolen:15} named $\Squint$, with experts 
defined by the optimal randomized algorithm for the $k$-realizable 
setting, for all values of $k$.

\section{Technical Overview}
\label{sec:intro-technical-overview}

In its greatest generality, online prediction is a game involving two randomized parties, an adversary who is producing examples, and a learner who is trying to correctly predict the labels of all or most of these examples. In the realizable case, the adversary is moreover constrained by an hypothesis class which must be adhered to.

Various techniques are used in the literature to analyze this sophisticated setting. On the one hand, learning rules show which hypothesis classes lend themselves to learning, and on the other hand, strategies for the adversary put limitations on what can be learned, and at what cost.

In this work, we identify the combinatorial core behind many settings of online learning. In this, we follow up on Nick Littlestone's classical work on deterministic online learning, as well as on other classical work in learning theory such as that the foundational work of Vapnik and Chervonenkis.

Reducing the messy probabilistic setting of online learning to the clean combinatorial setting of shattered trees enables us to tackle open questions about prediction using expert advice, which are hard to approach directly.

\subsection{Combinatorial Characterizations}
\label{sec:technical-oveview-dimension}

The Littlestone dimension of an hypothesis class $\cH$ is the maximal depth of a complete binary tree which is shattered by $\cH$. A tree of depth $D$ easily translates into a strategy for the adversary which forces the learner to make $D$ mistakes. In other words, a tree shattered by $\cH$ is an obvious obstacle to learning $\cH$.

The magic of Littlestone dimension is the opposite direction: Littlestone's $\SOA$ learning rule makes at most $\LD(\cH)$ mistakes, showing that trees shattered by $\cH$ are the \emph{only} obstacle for learning $\cH$. This is a common phenomenon in mathematics: an obvious necessary condition turns out to be (less obviously) sufficient.

\paragraph{Defining the randomized Littlestone dimension.}

In order to motivate the definition of the randomized Littlestone dimension, let us first examine the (deterministic) Littlestone dimension. Given a tree $T$ shattered by $\cH$, the adversary executes the following strategy, starting at the root:

\begin{center}
    \emph{At an internal node labeled $x$, ask the learner for the label of $x$, and follow the opposite edge.}
\end{center}

This strategy follows a branch of $T$, and forces the learner to make a mistake in each round. The total number of mistakes which the adversary can guarantee is precisely $m_T$, the minimum length of a branch in $T$. The resulting input sequence is realizable by $\cH$ since $T$ is shattered by~$\cH$.

The definition of the randomized Littlestone dimension follows a similar approach, but uses a different strategy for the adversary:

\begin{center}
    \emph{At an internal node labeled $x$, ask the learner for the label of $x$, and follow a \emph{random} edge.}
\end{center}

This strategy also follows a branch of $T$, and it forces the learner to make \emph{half} a mistake in each round, in expectation.\footnote{Recall that we model a randomized learner as a learner which makes a ``soft'' prediction $p \in [0,1]$; if the true label is $y$, then the learner's loss is $|p-y|$. When we choose the label $y$ at random, the expected loss is $\mathbb{E}[|p-y|]=\tfrac12$ regardless of $p$.} The total expected number of mistakes is $E_T/2$, where $E_T$ is the expected length of a random branch of $T$.

We define the randomized Littlestone dimension by considering all such adversary strategies:
\[
  \RL(\cH) = \frac{1}{2} \sup_{T \text{ shattered}} E_T.
\]
\iffull
The supremum is not always achieved, even if we allow infinite trees, as we demonstrate in Section~\ref{sec:maximizing-trees}.
\fi

\paragraph{Extending the Standard Optimal Algorithm.}

Littlestone's Standard Optimal Algorithm ($\SOA$) makes at most $\LD(\cH)$ mistakes on any realizable input sequence. The algorithm is very simple. It maintains a subset $V$ of $\cH$ which consists of all hypotheses which are consistent with the data seen so far. Given a sample $x$, one of the following must hold, where $V_{x \to y}$ is the subset of $V$ consisting of all hypotheses assigning to $x$ the label $y$:
\begin{enumerate}
    \item $\LD(V_{x \to 0}) < \LD(V)$. The learner predicts $\hat{y} = 1$.
    \item $\LD(V_{x \to 1}) < \LD(V)$. The learner predicts $\hat{y} = 0$.
\end{enumerate}
One of these cases must hold, since otherwise we could construct a tree of depth $\LD(V) + 1$ shattered by $V$. Each time that the learner makes a mistake, $\LD(V)$ decreases by $1$, and so the learner makes at most $\LD(\cH)$ mistakes.

Our randomized extension of $\SOA$, which we call $\RSOA$, follows a very similar strategy. It maintains $V$ in the same way. Given a sample $x$, we want to make a prediction $p$ which ``covers all bases'', that is, results in a good outcome for the learner whatever the correct label $y$ is. Given a prediction $p$, the adversary can guarantee a loss of
\[
 \max \{ p + \RL(V_{x \to 0}), 1 - p + \RL(V_{x \to 1}) \}.
\]
For the optimal choice of $p$, this quantity is at most $\RL(V)$, as we show in Section~\ref{sec:thm:characterization}.

\paragraph{The $k$-realizable setting and weighted $\SOA$.}
The $k$-realizable setting is handled similarly. In the definition of randomized Littlestone dimension, instead of requiring the tree to be shattered, it suffices for it to be $k$-shattered, since the adversary need only produce an input sequence which is $k$-realizable.

The main novelty in this setting is a \emph{weighted} analog of the $\SOA$ learning rule. 
This weighted $\SOA$ rule relates to the classical $\SOA$ in a similar way like the \emph{Weighted Majority} algorithm relates to \emph{Halving}.
In particular, it keeps track, for each hypothesis, how many more mistakes are allowed. Accordingly, we consider the more generalized setting of \emph{weighted hypothesis classes}. These are hypothesis classes in which each hypothesis has a ``mistake budget''. The definition of randomized Littlestone dimension extends to this setting, and allows us to generalize $\RSOA$ to the randomized agnostic setting.



\subsection{Quasi-balanced Trees}
\label{sec:technical-overview-quasibalanced}

Given an hypothesis class $\cH$, how does an optimal strategy for the adversary look like? Such a strategy must make the analysis of $\RSOA$ tight, and in particular, if the first sample it asks is $x$, then
\[
 \RL(\cH) = p + \RL(\cH_{x \to 0}) = 1 - p + \RL(\cH_{x \to 1}),
\]
where $p$ is the prediction of the learner.\footnote{Strictly optimal strategies do not always exist, and even when they do, they might require an unbounded number of rounds. For the sake of exposition we gloss over these difficulties.}

The strategy of the adversary naturally corresponds to a tree which is shattered by $\cH$: the root is labeled $x$, and the edge labeled $y$ leads to a tree corresponding to an optimal strategy for $\cH_{x \to y}$. Suppose that we further assign weights to the edges touching the root: the $0$-edge gets the weight $p$, and the $1$-edge gets the weight $1-p$. If we assign weights to the remaining edges recursively then the resulting tree satisfies the following property:

\begin{center}
    \emph{Every branch has the same total weight $\RL(\cH)$.}
\end{center}

More generally, a tree $T$ is \emph{quasi-balanced} if we can assign non-negative weight to its edges such that (i) the weights of the two edges emanating from a vertex sum to $1$, and (ii) all branches have the same total weight (which must be $E_T/2$). If a tree is quasi-balanced then the weight assignment turns out to be \emph{unique}.

A tree in which all branches have the same depth is quasi-balanced, but the class of quasi-balanced trees is a lot richer, including for example the path appearing in Figure~\ref{fig:finite-path-intro}.

\begin{figure}
    \centering
    \begin{forest}
[, circle, draw
    [, circle, draw, edge label={node[midway,left] {$\tfrac18$}}
        [, circle, draw, edge label={node[midway,left] {$\tfrac14$}}
            [, circle, draw, edge label={node[midway,left]{$\tfrac12$}}]
        [,circle, draw, edge label={node[midway,right] {$\tfrac12$}}]]
    [,circle, draw, edge label={node[midway,right] {$\tfrac34$}}]]
[,circle, draw, edge label={node[midway,right] {$\tfrac78$}}]]
\end{forest}
        \caption{A quasi-balanced tree. The edges are labeled with the unique weights. The sum of weights in each branch is $\tfrac78$, which is half the expected branch length $\tfrac74$.}
    \label{fig:finite-path-intro}
\end{figure}

There is a simple criterion for quasi-balancedness:

\smallskip

\begin{tcolorbox}
A tree $T$ is quasi-balanced if and only if it is \emph{monotone}: if $w$ is a descendant of $v$ then $E_{T_w} \leq E_{T_v}$, where $T_u$ is the subtree rooted at $u$.
\end{tcolorbox}

Since the loss guaranteed by an adversary following the strategy corresponding to a tree $T$ is $E_T/2$, it is clear that the best strategy is always monotone. This argument shows that
\[
 \RL(\cH) = \frac12 \sup_{T \text{ shattered, monotone}} E_T.
\]
In other words, it suffices to consider only quasi-balanced trees when defining the randomized Littlestone dimension. This is the randomized counterpart of a trivial property of the Littlestone
dimension: in order to define the Littlestone dimension, it suffices to consider \emph{balanced} trees, that is, trees in which all branches have the same length. We can view quasi-balancedness as a relaxation of strict balancedness.

\paragraph{Concentration of expected branch length.}

The randomized Littlestone dimension is defined in terms of the expected branch length. However, several of our results require knowledge of the distribution of the branch length.

For example, Theorem~\ref{thm:boundedhorizonIntro} states that $2\RL(\cH) + O(\sqrt{\RL(\cH)}\log(\RL(\cH)/\epsilon))$ rounds are needed in order for the adversary to guarantee a loss of $\RL(\cH) - \epsilon$. The number of rounds corresponds to the depth of the tree, and so the natural way to prove such a result would be to start with a tree~$T$ satisfying $E_T/2 = \RL(\cH)$, and prune it to depth $2\RL(\cH) + O(\sqrt{\RL(\cH)}\log(\RL(\cH)/\epsilon))$. We would like to say that this does not reduce the expected branch length by much, since the length of most branches does not exceed $E_T$ by much. Other applications such as prediction using expert advice need concentration from the other side (the length of most branches does not fall behind $E_T$ by much).

It is possible to construct trees for which the length of a random branch isn't concentrated around its expectation. For example, we can take an infinite path which, every so often, splits into a deep complete binary tree. If we are careful, we can guarantee that the expected branch length is finite but its variance is infinite.

At this point, quasi-balancedness comes to the rescue. The monotonicity property of quasi-balanced trees implies that the choice of an edge at every step of a random branch does not affect the final length by much. Consequently, Azuma's inequality (a version of Chernoff's inequality for martingales) shows that for quasi-balanced trees, the length of a random branch is strongly concentrated around its expectation. This simple observation drives several of our strongest results.

\subsection{Prediction using Expert Advice (techniques)}
\label{sec:technical-overview-experts}

At first, the setting of prediction using expert advice looks similar, but not identical, to our setting. However, it turns out that it is actually a \emph{special case} of our setting, for a specific hypothesis class known as the \emph{universal hypothesis class} $\cU_n$.

The class $\cU_n$ contains $n$ different hypotheses, which correspond to the experts. For each possible set of predictions $\hat{y}^{(1)},\ldots,\hat{y}^{(n)}$ there is a corresponding element in the domain. In other words, the domain is $\cX = \{0,1\}^n$, and the hypotheses in $\cU_n$ are the $n$ projections $h_i(x_1,\ldots,x_n) = x_i$.

With this equivalence in place, we can apply the theory we have developed so far to analyze prediction using expert advice.
Our main result concerning this setting, Theorem~\ref{thm:expertIntro}, consists of an upper bound on $\M^\star(n,k)$, and a lower bound on $\M^\star(2,k)$.

We start with the upper bound on $\M^\star(n,k)$.
In view of the equivalence above, we want to bound the expected branch length of any tree $T$ which is $k$-shattered by $\cU_n$. We can assume that $T$ is quasi-balanced, and so the length of a random branch of $T$ is roughly $E_T$.
If $T$ were strictly balanced, then a random branch would be $k$-realizable by $\cU_n$ with probability at most
\[
 n \frac{\binom{E_T}{\leq k}}{2^{E_T}}.
\]
\cite{cesa1996line} have shown that the largest value of $E_T$ for which this quantity is at least $1$, which we denote by $D(n,k)$,
provides the state-of-the-art upper bound on $\M^\star_D(n,k)$.
Since $T$ is only quasi-balanced, we get a slightly worse bound. 

A nice proof of the lower bound on $\M^\star(2,k)$ is given by identifying the optimal tree. Intuitively, it seems obvious that rounds in which both experts make the same prediction are ``wasteful'', and we can show this formally. By symmetry, we can assume that the first expert always predicts $0$ and that the second expert always predicts $1$. We can construct the corresponding tree explicitly, and conclude that
\[
 \M^\star(2,k) = k + \frac{(k+1/2) \binom{2k}{k}}{4^k}.
\]
\iffull \else The proof of this result can be found in the full version \cite{FullVersion} of this paper.\fi
\section{Background and Basic Definitions}\label{sec:prelim}


Unless stated otherwise, our logarithms are base~$2$. A summary of the paper's notation may be found in Table~\ref{tab:notation}.

\paragraph{Online Learning.}
Let $\cX$ be a set called the \emph{domain}, and $\cY$ be a set called the \emph{label set}. 
    In this work we focus on \emph{binary classification}, and thus $\cY= \{0,1\}$. 
    A pair $(x,y)\in \cX \times \cY$ is called an \emph{example}, and 
    an element $x\in \cX$ is called an \emph{instance} or an \emph{unlabeled example}. 
    A function $h\colon \cX\to\cY$ is called a \emph{hypothesis} or a \emph{concept}.
    A \emph{hypothesis class}, or a \emph{concept class}, is a set $\cH \subset \cY^{\cX}$.
    A sequence of examples  $S=\{(x_i,y_i)\}_{i=1}^t$ is said to be \emph{realizable} by $\cH$ if there exists $h \in \cH$ such that $h(x_i) = y_i$ for all $1 \leq i \leq t$.



Online learning~\cite{shalev2014understanding, cesa1997use} is a repeated game between a learner and an adversary.
Each round $i$ in the game proceeds as follows:
\begin{enumerate}[(i)]
\item The adversary sends the learner an unlabeled example $x_i \in \cX$.
\item The learner predicts a value $p_i \in [0,1]$ and reveals it to the adversary.
\item The adversary reveals the true label $y_i$, and the learner suffers the \emph{loss} $|y_i - p_i|$.
\end{enumerate}
The value $p_i$ should be understood as the probability (over the learner's randomness) of predicting~$y_i=1$. Notice that the adversary only gets to see $p_i$, which reflects the assumption that the adversary does not know the learner's internal randomness. 
Notice also that the suffered loss $\lvert y_i - p_i\rvert$ exactly captures the probability that the learner makes a mistake.
The above is a standard way to model randomized learners in online learning, see e.g.~\cite{Shalev-Shwartz12survey}.
If $p_i \in \{0,1\}$ for all $i$, then the learner is \emph{deterministic}, in which case $|y_i - p_i|$ is the binary indicator for whether the learner made a mistake. 


We model learners as functions $\Lrn \colon (\cX \times \cY)^\star \times \cX \rightarrow [0,1]$.
Given a learning rule $\Lrn$ and an input sequence of examples $S = (x_1,y_1),\ldots,(x_t,y_t)$, 
    we denote the (expected) number of mistakes $\Lrn$ makes on $S$ by
    \[\M(\Lrn; S) = \sum_{i=1}^t \lvert y_i - p_i \rvert,\]
    where $p_i = \Lrn((x_1,y_1),\ldots,(x_{i-1},y_{i-1}),x_i)$ is the prediction of the learner on the $i$'th example.
    
An hypothesis class $\cH$ is \emph{online learnable} (or \emph{learnable}) if there exists a finite bound $M$ and a learning rule $\Lrn$ such that for any input sequence $S$ 
which is realizable by $\cH$ it holds that $\M(\Lrn; S) \leq M$. We define the \emph{optimal} randomized mistake bound of $\cH$ to be
\begin{equation}\label{eq:optrand}
\M^{\star}(\cH)=\adjustlimits\inf_ {\Lrn} \sup_{S} \M(\Lrn ;S)
\end{equation}
where the infimum is taken over all learning rules, and the supremum is taken over all realizable input sequences $S$.

We denote by $\M^\star_D(\cH)$ the optimal deterministic mistake bound of $\cH$.
That is, $\M^\star_D(\cH)$ is defined in the same way as $\M^{\star}(\cH)$, with the additional restriction that $\Lrn$ must be deterministic (that is, the output must be in $\{0,1\}$).

When $\cH = \emptyset$, the set of realizable input sequences is empty, and therefore the supremum is not defined. It is technically convenient to deal with this special case by defining $\M^\star_D(\emptyset) = \M^\star(\emptyset) = -1$. When the context is clear, we may sometimes refer to the deterministic or randomized mistake bound as the \emph{accumulating loss} of the learner through the entire game, or simply as the learner's \emph{loss} through the entire game.

\paragraph{Decision Trees and the Littlestone Dimension.}
In this paper, a \emph{tree} $T$ refers to a finite full rooted ordered binary tree (that is, a rooted binary tree where each node which is not a leaf has a left child and a right child), equipped with the following information:
\begin{enumerate}
    \item Each internal node $v$ is associated with an instance $x \in \cX$.
    \item For every internal node $v$, the left outgoing edge is associated with the label $0$, and the right outgoing edge is associated with the label $1$.
\end{enumerate}


We stress that by default, the trees we consider are finite and their vertices are labeled. Whenever we consider infinite trees or unlabeled trees, we specifically mention these attributes.

\smallskip


The tree is directed from the root towards the leaves. 

A \emph{prefix} of the tree $T$ is any path that starts at the root. In this paper, a path is defined by a sequence of consecutive vertices. If a path is not empty, we may refer it by the sequence of consecutive edges corresponding with the sequence of consecutive vertices defining it. 
    A prefix $v_0,v_1, \dots, v_t$ defines a sequence of examples $(x_1,y_1), \dots, (x_{t},y_{t})$ in a natural way: 
    for every $i \in [t]$, $x_i$ is the instance corresponding to the node $v_{i-1}$, 
    and $y_i$ is the label corresponding to the edge $v_{i-1}\to v_{i}$. 
    A prefix is called \emph{maximal} if it is maximal with respect to containment, 
    that is, there is no prefix in the tree that strictly contains it. 
    This is equivalent to requiring that $v_{t}$ be a leaf. 
    A maximal prefix is called a \emph{branch}, and the set of branches of $T$ is denoted by $B(T)$. 
    The length of a prefix is the number of edges in it (so, the length is equal to the size of the corresponding sequence of examples).

A prefix in the tree is said to be \emph{realizable} by $\cH$ if the corresponding sequence of examples is realizable by $\cH$.
    A tree $T$ is \emph{shattered} by $\cH$ if all branches in $T$ are realizable by $\cH$.
    The \emph{Littlestone dimension} of an hypothesis class $\cH$, denoted by $\LD = \LD(\cH)$,  is the maximal depth of a \emph{complete} (also known as \emph{perfect}, or \emph{balanced}) binary tree (that is, a tree in which all branches have the same depth) shattered by $\cH$ if $\cH \neq \emptyset$, and $-1$ when $\cH = \emptyset$. 
    If the maximum does not exist, then $\LD=\infty$.

\paragraph{Littlestone Dimension $\equiv$ Optimal Deterministic Mistake Bound.}    
In his seminal work from 1988, Nick Littlestone proved that the optimal mistake bound of a deterministic learner is characterized by the Littlestone dimension:
\begin{theorem}[Optimal Deterministic Mistake Bound~\cite{littlestone1988learning}]\label{thm:Littlestone}
Let $\cH$ be an hypothesis class. Then, $\cH$ is online learnable if and only if $\LD(\cH)<\infty$.
Further, the optimal deterministic mistake bound satisfies $\M^{\star}_D(\cH)=\LD(\cH)$.
\end{theorem}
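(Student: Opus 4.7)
The plan is to prove the two inequalities $\M^\star_D(\cH)\le \LD(\cH)$ and $\M^\star_D(\cH)\ge \LD(\cH)$ separately; together they give the quantitative identity, and the learnability equivalence will fall out by noting that finite mistake bounds coexist with finite Littlestone dimension in both directions.

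For the lower bound $\M^\star_D(\cH)\ge \LD(\cH)$, I would fix any deterministic learning rule $\Lrn$ and a tree $T$ shattered by $\cH$ of some depth $d$ (we may take $d=\LD(\cH)$ when finite, or arbitrarily large $d$ when $\LD(\cH)=\infty$). The adversary walks down $T$ from the root: at an internal node $v$ labeled by $x$, present $x$ to the learner, observe its prediction $p\in\{0,1\}$, declare the true label $y:=1-p$, and descend along the edge of $T$ labeled $y$. After $d$ rounds this produces a branch of $T$, which by shatteredness is realizable by some $h\in\cH$, while the learner errs in every round. This forces $d$ mistakes, hence $\M^\star_D(\cH)\ge \LD(\cH)$ (and shows non-learnability when $\LD(\cH)=\infty$).

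For the upper bound I would analyze the Standard Optimal Algorithm. Maintain a version space $V\subseteq \cH$, initialized to $\cH$, consisting of all hypotheses consistent with the examples seen so far. On an instance $x$, let $V_{x\to b}=\{h\in V: h(x)=b\}$ for $b\in\{0,1\}$, and predict
\[
\hat y \;=\; \argmax_{b\in\{0,1\}} \LD(V_{x\to b}),
\]
breaking ties arbitrarily. The key combinatorial claim is that $\min_{b}\LD(V_{x\to b}) < \LD(V)$: otherwise both $V_{x\to 0}$ and $V_{x\to 1}$ would shatter complete trees of depth $\LD(V)$, which could be joined under a new root labeled $x$ to produce a tree of depth $\LD(V)+1$ shattered by $V$, a contradiction. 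Consequently, whenever the learner makes a mistake, the update $V\leftarrow V_{x\to y}$ forces $\LD(V)$ to strictly decrease (since the learner's prediction $\hat y$ maximized $\LD(V_{x\to\cdot})$, its mistake lands $V$ in the smaller child). As $\LD(V)\ge 0$ throughout (realizability keeps $V$ non-empty), the total number of mistakes is at most $\LD(\cH)$.

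Combining both directions yields $\M^\star_D(\cH)=\LD(\cH)$, and the learnability characterization is immediate: $\LD(\cH)<\infty$ implies the $\SOA$ achieves a finite mistake bound, while $\LD(\cH)=\infty$ implies the adversarial strategy above forces an arbitrary number of mistakes against any learner. There is no real obstacle here; the conceptual heart is the dichotomy $\min_b \LD(V_{x\to b}) < \LD(V)$, which is what makes the $\SOA$'s greedy potential argument close. This clean template is exactly what the randomized and weighted variants sketched in Section~\ref{sec:technical-oveview-dimension} will need to mimic, with $\LD$ replaced by $\RL$ and the argmax prediction replaced by an optimally balanced $p\in[0,1]$.
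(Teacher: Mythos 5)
Your proof is correct and follows the standard argument, which is exactly the template the paper uses: the paper cites this classical result without reproving it, but its proof of the weighted generalization (Theorem~\ref{thm:characterization_det_weighted}, via the adversary-walks-the-shattered-tree lower bound of Lemma~\ref{lem:lower-bound-weighted-help} and the $\WSOA$ potential argument of Lemma~\ref{lem:k_det_upper} based on $\min_b \LD(V_{x\to b}) < \LD(V)$) is identical in structure to yours. No gaps.
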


\paragraph{Doob's Exposure Martingales.}
Let $f\colon \{0,1\}^\mathbb{N} \to \mathbb{R}$. Consider the random variable $X=f(\Vec{b})$, where $\Vec{b}$ is sampled uniformly at random. Define a sequence $L_0, L_1, L_2, \dots$, each defined by $L_i = \mathbb{E}[X|b_1,  \dots, b_{i-1}]$ (so $L_0 = \mathbb{E}[X]$). The sequence $L_0, L_1, L_2, \ldots$ is called an \emph{exposure martingale}. It is well-known that an exposure martingale is indeed a martingale~\cite{doob1953stochastic}.
\begin{table}
\begin{center}
    \begin{tabular}{|l|l|}
        \hline
        \textbf{Notation} & \textbf{Meaning} \\
        \hline
        $\cX$ & An instance domain \\
        $\cH \subset \{0,1\}^\cX$ & A hypothesis class \\
        $\M^{\star}(\cH)$ & The optimal randomized mistake bound of $\cH$ \\
        $\M^{\star}_D(\cH)$ & The optimal deterministic mistake bound of $\cH$ \\
        $\RL(\cH)$ & The randomized Littlestone dimension of $\cH$ \\
        $\LD(\cH)$ & The Littlestone dimension of $\cH$ \\
        $\M^{\star}(\cH, \rounds)$ & The optimal randomized mistake bound of $\cH$ with horizon $\rounds$ \\
        $\RL(\cH, \rounds)$ & The bounded randomized Littlestone dimension of $\cH$ with trees of depth $\leq \rounds$ \\
        $\M^{\star}(\cH,k)$ & The $k$-realizable optimal randomized mistake bound of $\cH$\\
        $\M^{\star}_D(\cH,k)$ & The $k$-realizable optimal deterministic mistake bound of $\cH$\\
        $\RL_k(\cH)$ & The $k$-randomized Littlestone dimension of $\cH$ \\
        $\LD_k(\cH)$ & The $k$-Littlestone dimension of $\cH$ \\
        $\M^\star(n,k)$ & The $k$-realizable optimal randomized mistake bound of the class of $n$ experts \\
        $\M^\star_D(n,k)$ & The $k$-realizable optimal deterministic mistake bound of the class of $n$ experts \\
        \hline
    \end{tabular}
\end{center}
\caption{Summary of notation. Some of the notation is defined in the following sections of the paper.}
\label{tab:notation}
\end{table}

\section{Randomized Littlestone Dimension and Optimal Expected Mistake Bound}

In this section we study the randomized Littlestone dimension.
\iffull
We start with Section~\ref{sec:RL-main-result}, in which we define the randomized Littlestone dimension and prove that it characterizes the optimal randomized mistake bound exactly.
\else
We shall define the randomized Littlestone dimension and prove that it characterizes the optimal randomized mistake bound exactly.
\fi

The randomized Littlestone dimension is defined using trees, which correspond to strategies of the adversary.
We study a special class of trees, \emph{quasi-balanced trees}, in Section~\ref{sec:quasi-balanced-trees}.
\iffull
We show that they give optimal strategies for the adversary.
\else
Such trees define optimal strategies for the adversary (more details on such strategies are found in the full version of this paper \cite{FullVersion}).
\fi
\iffull
Several applications of quasi-balanced trees are presented in Section~\ref{sec:quasi-balanced-applications}; more applications are found throughout the paper.
\else
Several other applications of quasi-balanced trees are presented in the full version of this paper \cite{FullVersion}; more applications are found throughout the paper.
\fi
\iffull
We close this section by showing how to accommodate infinite trees (Section~\ref{sec:infinite-trees}), and by briefly discussing the issue of trees attaining the randomized Littlestone dimension exactly (Section~\ref{sec:maximizing-trees}); more discussion on the latter issue appears in Section~\ref{sec:k-realizable}.
\subsection{Main Result and Proof}
\label{sec:RL-main-result}
\fi

The first main contribution of this paper is a characterization of the optimal randomized mistake bound in terms of a combinatorial parameter
we call the \emph{randomized Littlestone dimension} and denote by $\RL = \RL(\cH)$.

We define $\RL(\cH)$ using a natural distribution on the branches of trees (a branch is a root-to-leaf path). Given a tree $T$, a \emph{random branch} is chosen by starting at the root, and at each step, picking an edge leaving the current vertex uniformly at random, until reaching a leaf. We denote the expected length of a random branch by $E_T$. It is given explicitly by the formula
\[
 E_T = \sum_{b \in B(T)} |b| \cdot 2^{-|b|},
\]
where $B(T)$ is the set of branches of $T$.
If we think of a random branch as a distribution over $B(T)$, then $E_T$ is its entropy.

It is convenient to define the length of the empty branch to be $-1$.
With this convention, the expected branch length in $T$ satisfies the  recursion
\begin{equation} \label{eq:expected_branch_recursive}
E_T = 1 + \frac{E_{T_0} + E_{T_1}}{2},
\end{equation}
where $T_0,T_1$ are the subtrees of the root of $T$, which are empty when $T$ is a leaf.



\begin{definition}[Randomized Littlestone Dimension] \label{def:randomized-littlestone-dimension}
Let $\cH$ be an hypothesis class. 
The \emph{randomized Littlestone dimension} of $\cH$, denoted by $\RL(\cH)$, is defined by
\[
\RL(\cH) =\frac{1}{2} \sup_{T \text{ shattered}} E_T.
\]
In the special case when $\cH=\emptyset$, define $\RL(\cH)=-1$.
\end{definition}

To compare $\RL(\cH)$ with $\LD(\cH)$, let us consider the following equivalent way of defining $\LD(\cH)$:
\[
\LD(\cH) = \sup_{T \text{ shattered}} m_T,
\]
where $m_T$ is the minimum length of a branch in $T$.
    Thus, the difference is that in $\RL(\cH)$ we take the expected depth rather than the minimal depth,
    and multiply by a factor of $1/2$.
    
\begin{theorem}[Optimal Randomized Mistake Bound]\label{thm:characterization}
Let $\cH$ be an hypothesis class. Then,
\[
\M^{\star}(\cH) = \RL(\cH).
\]
\end{theorem}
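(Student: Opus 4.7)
The plan is to prove both inequalities separately, yielding the characterization by combining a combinatorial lower bound extracted from a shattered tree with an algorithmic upper bound based on a probabilistic variant of Littlestone's $\SOA$.

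\textbf{Lower bound $\M^\star(\cH) \geq \RL(\cH)$.} First, I would fix any tree $T$ shattered by $\cH$ and consider the oblivious randomized adversary that walks from the root towards a leaf: at each internal node $v$ it presents the associated instance $x_v$ to the learner and then follows a uniformly random outgoing edge (equivalently, reveals a uniformly random label). For any learner prediction $p_i \in [0,1]$ the per-round expected loss over $y_i$ is $\Ex_{y_i \sim \{0,1\}}[|y_i - p_i|] = 1/2$, so the total expected loss over both parties' randomness equals $E_T/2$. Since each realization of the walk produces a branch of $T$, which is $\cH$-realizable by shatteredness, the learner faces expected loss $E_T/2$ under a distribution supported on realizable sequences, and hence the supremum over realizable input sequences is at least this value. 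Taking the supremum over shattered $T$ yields $\M^\star(\cH) \geq \RL(\cH)$.

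\textbf{Upper bound $\M^\star(\cH) \leq \RL(\cH)$.} I would design a \emph{Randomized SOA} ($\RSOA$) that maintains the version space $V \subseteq \cH$ of hypotheses consistent with past data, initialized to $V = \cH$. On receiving $x$, set $a = \RL(V_{x\to 0})$ and $b = \RL(V_{x\to 1})$, and predict
\[
 p = \max\bigl\{0,\, \min\{1,\, (1 + b - a)/2\}\bigr\}.
\]
The analysis rests on the following combinatorial lemma: for every nonempty $\cH$ and every $x \in \cX$,
\[
 \min_{p \in [0,1]} \max\bigl\{p + \RL(\cH_{x\to 0}),\; (1 - p) + \RL(\cH_{x\to 1})\bigr\} \leq \RL(\cH).
\]
To prove the lemma I would take arbitrary shattered trees $T_0, T_1$ for $\cH_{x\to 0}, \cH_{x\to 1}$ and join them under a new root labeled $x$ to form a tree $T$ shattered by $\cH$; the recursion $E_T = 1 + (E_{T_0} + E_{T_1})/2$ together with taking suprema over $T_0, T_1$ gives $\RL(\cH) \geq (1 + a + b)/2$, which is exactly the common value at the balance point $p^\star = (1 + b - a)/2$ of the two terms inside the max. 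When $p^\star \notin [0,1]$ the clipping is justified by the stronger monotonicity bound $\max\{a, b\} \leq \RL(\cH)$, which holds because a tree shattered by a subclass is also shattered by the superclass.

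With the lemma in hand, a potential argument closes the upper bound. Define $\Phi_i = (\text{expected loss through round } i-1) + \RL(V_i)$, so $\Phi_1 = \RL(\cH)$. By the choice of $p$ and the lemma, both $p + \RL(V_{x\to 0}) \leq \RL(V_i)$ and $(1 - p) + \RL(V_{x\to 1}) \leq \RL(V_i)$, hence $\Phi_{i+1} \leq \Phi_i$ regardless of the revealed $y_i$. Realizability keeps $V$ nonempty throughout, so $\RL(V) \geq 0$, and the total expected loss is at most $\Phi_1 = \RL(\cH)$. The main obstacle I expect is that the supremum defining $\RL(\cH)$ need not be attained and may be infinite; the infinite case is trivial, while in the finite-but-unattained case I would work with $\varepsilon$-optimal shattered trees in the tree-joining lemma and $\varepsilon$-approximate values inside $\RSOA$, absorbing an $O(\varepsilon)$ slack into the potential argument and then letting $\varepsilon \to 0$. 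Measurability of the predictor and tie-breaking for the $\arg\min$ over $p$ are routine.
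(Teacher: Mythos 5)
Your proposal is correct and follows essentially the same route as the paper: the identical random-branch averaging argument for the lower bound, and the same $\RSOA$ upper bound driven by the per-round lemma $\min_p \max\{p+\RL(\cH_{x\to 0}),\,1-p+\RL(\cH_{x\to 1})\} \le \RL(\cH)$, proved via the recursion $E_T = 1 + (E_{T_0}+E_{T_1})/2$ and monotonicity of $\RL$ under subclasses. Your clipping of the balance point and your potential-function bookkeeping are just reformulations of the paper's two-case analysis and its induction on the sequence length.
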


We prove the theorem in Subsection~\ref{sec:thm:characterization} using \emph{randomized $\SOA$}, a randomized adaptation of Littlestone's classical $\SOA$ algorithm. This shows that the infimum in Equation~\eqref{eq:optrand} is realized by a minimizer.

\iffull
\subsubsection{Proof of Theorem~\ref{thm:characterization}} \label{sec:thm:characterization}
\else
\subsection{Proof of Characterization} \label{sec:thm:characterization}
\fi 
The case $\cH = \emptyset$ holds by definition. Therefore we assume that $\cH \neq \emptyset$.
The lower bound ``$\RL(\cH)\leq \M^\star(\cH)$'' boils down to the following lemma:
\begin{lemma}\label{lem:lower_bound_help}
Let $\cH$ be an hypothesis class, and let $T$ be a finite tree which is shattered by $\cH$. 
Then, for every learning rule $\Lrn$ there exists a realizable sequence $S$ so that $\M(\Lrn ; S)\geq E_T/2$.
Moreover, there exists such a sequence $S$ which corresponds to one of the branches of $T$.
\end{lemma}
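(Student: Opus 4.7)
\textbf{Proof plan for Lemma~\ref{lem:lower_bound_help}.}

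The plan is to use the probabilistic method with the randomized adversary strategy sketched in Section~\ref{sec:technical-oveview-dimension}. Fix a learning rule $\Lrn$ and the shattered tree $T$. I will define a random branch $R$ of $T$ and show that the expected loss of $\Lrn$, when the adversary plays along $R$, is exactly $E_T/2$; a branch that achieves at least this expected value must exist, and every branch of $T$ yields a realizable sequence since $T$ is shattered by $\cH$.

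Concretely, the adversary sequentially walks down $T$ starting at the root. At each internal node $v$ encountered, with associated instance $x_v$, the adversary presents $x_v$ to the learner, receives the prediction $p_v\in[0,1]$, then draws a fresh uniform label $y_v\in\{0,1\}$, reveals $y_v$, and follows the edge labeled $y_v$ to the appropriate child. The walk terminates upon reaching a leaf. This produces a random branch $R$ of $T$, which by construction is distributed exactly as the random branch in Definition~\ref{def:randomized-littlestone-dimension}, so $\mathbb{E}[|R|]=E_T$.

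The core computation is a per-node expected-loss calculation. Conditioning on the walk reaching a particular internal node $v$, the prediction $p_v$ is a deterministic function of the history traversed so far, and the fresh label $y_v$ is uniform on $\{0,1\}$ and independent of that history. Hence
\[
\mathbb{E}\bigl[|y_v-p_v|\,\bigm|\,\text{walk reaches }v\bigr] \;=\; \tfrac{1}{2}p_v + \tfrac{1}{2}(1-p_v)\;=\;\tfrac12,
\]
regardless of $p_v$. Summing over internal nodes and using the identity $E_T=\sum_{v\text{ internal}}\Pr[\text{walk reaches }v]$ (each edge of the walk corresponds to one visit of an internal node), linearity of expectation yields
\[
\mathbb{E}[\M(\Lrn;R)] \;=\; \sum_{v\text{ internal}}\Pr[\text{walk reaches }v]\cdot \tfrac12 \;=\; \tfrac12\,E_T.
\]

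Finally, since $R$ is supported on branches of $T$ and its expected loss is $E_T/2$, there exists at least one branch $b\in B(T)$ whose corresponding sequence $S_b$ satisfies $\M(\Lrn;S_b)\geq E_T/2$; this $S_b$ is realizable because $T$ is shattered by $\cH$. There is no serious obstacle here: the only mild subtlety is handling the random stopping time $|R|$ cleanly, which is why I prefer the per-internal-node sum (with indicator $\mathbf{1}[\text{walk reaches }v]$) over a sum indexed by round number; this sidesteps any measurability issues with stopping times and lets the identity $E_T=\sum_v\Pr[\text{walk reaches }v]$ do the bookkeeping.
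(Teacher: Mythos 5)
Your proposal is correct and follows essentially the same approach as the paper: the adversary draws uniform random labels along a random branch of $T$, each round contributes expected loss exactly $\tfrac12$ regardless of the prediction, and the probabilistic method yields a branch with loss at least $E_T/2$, which is realizable since $T$ is shattered. The only difference is cosmetic: the paper computes $\mathbb{E}[\M(\Lrn;R)]=E_T/2$ by induction on the depth of $T$ via the recursion $E_T = 1 + (E_{T_0}+E_{T_1})/2$, whereas you sum over internal nodes using $E_T=\sum_v \Pr[\text{walk reaches }v]$; both are valid bookkeeping for the same argument.
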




\begin{proof}
The proof is given by a simple probabilistic argument. Suppose that we pick a random branch in the tree according to the random branch distribution: begin at the root, pick a random child of the root uniformly at random, and recursively pick a random branch in the corresponding subtree.
Consider the random variable
\[L_{T} = \M(\Lrn ; S),\]
where $S$ is the sequence of examples corresponding to a random branch drawn as above. 
It suffices to show that $\mathbb{E}[L_{T}] = E_T/2$. We prove this by induction on the depth of $T$.

In the base case, $T$ is a single leaf, and there are no internal nodes. Hence $S$ is always the empty sequence, and $\mathbb{E}[L_T] = 0 = E_T/2$, as required.

For the induction step, let $T_0$ and $T_1$ be the left and right subtrees of $T$, respectively. 
The expected loss of $\Lrn$  on the first example in $S$ is~$1/2$, because the label $y\in\{0,1\}$ is chosen uniformly at random, independently of the learner's prediction (formally, $\frac{|0-p| + |1-p|}{2} = 1/2$ for all $p \in [0,1]$). Therefore, by linearity of expectation,
\begin{align*}
\mathbb{E}[L_T] &= \frac{1 + \mathbb{E}[L_{T_0}] + \mathbb{E}[L_{T_0}]}{2} \\
                &= \frac{1 + E_{T_0}/2 + E_{T_1}/2}{2} \tag{\text{by the induction hypothesis}}\\
                &= E_T/2, \tag{by  Eq. \eqref{eq:expected_branch_recursive}}
\end{align*}
as required.
\end{proof}
By applying Lemma~\ref{lem:lower_bound_help} on every shattered tree and taking the supremum, we conclude the lower bound:
\begin{corollary}[Lower bound] \label{cor:characterization_upper}
For every hypothesis class $\cH$ it holds that $\M^\star(\cH) \geq \RL(\cH)$.
\end{corollary}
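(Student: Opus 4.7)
The plan is to derive the corollary directly from Lemma~\ref{lem:lower_bound_help} by a simple quantifier swap followed by taking a supremum over shattered trees. First I would dispose of the degenerate case $\cH = \emptyset$: by convention $\M^\star(\emptyset) = -1 = \RL(\emptyset)$, so the inequality is trivial. Henceforth assume $\cH \neq \emptyset$.

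Fix an arbitrary finite tree $T$ shattered by $\cH$. Lemma~\ref{lem:lower_bound_help} asserts that for every learning rule $\Lrn$ there exists a realizable sequence $S = S(\Lrn, T)$ with $\M(\Lrn; S) \geq E_T/2$. Consequently, for every $\Lrn$,
\[
\sup_S \M(\Lrn; S) \;\geq\; \M\bigl(\Lrn; S(\Lrn,T)\bigr) \;\geq\; \frac{E_T}{2},
\]
where the supremum ranges over sequences realizable by $\cH$. Plugging this into the definition in Equation~\eqref{eq:optrand} yields $\M^\star(\cH) = \inf_\Lrn \sup_S \M(\Lrn;S) \geq E_T/2$. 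Since $T$ was arbitrary among trees shattered by $\cH$, taking the supremum over such $T$ gives $\M^\star(\cH) \geq \tfrac{1}{2}\sup_{T \text{ shattered}} E_T = \RL(\cH)$.

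I do not anticipate any real obstacle: the whole content of the statement is packaged inside Lemma~\ref{lem:lower_bound_help}, which manufactures, from a shattered tree, an adversary strategy extracting expected loss $E_T/2$ against \emph{any} learner. The corollary is then a routine bookkeeping step that threads the $\exists S$ quantifier through the $\inf_\Lrn \sup_S$ definition of $\M^\star$. The only mild point worth flagging is the case $\RL(\cH) = \infty$, which is covered automatically since the bound $\M^\star(\cH) \geq E_T/2$ holds for every shattered $T$ and thereby forces $\M^\star(\cH) = \infty$ as well.
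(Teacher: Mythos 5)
Your proof is correct and follows exactly the paper's route: the paper likewise derives the corollary by applying Lemma~\ref{lem:lower_bound_help} to every shattered tree and taking the supremum, with the quantifier bookkeeping through $\inf_\Lrn \sup_S$ left implicit. Your explicit handling of the $\cH=\emptyset$ and $\RL(\cH)=\infty$ cases is a harmless (and slightly more careful) addition.
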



We now turn to prove the upper bound ``$\RL(\cH) \geq \M^\star(\cH)$''. This is achieved via the $\RSOA$ learning rule, described in Figure~\ref{fig:RSOA}. 

We begin with the following useful property of $\RL$: 

\begin{observation} \label{obs:expected_prop}
Let $\cH$ be a non-empty hypothesis class. Then, 
\[\RL(\cH) = \frac{1}{2}\sup_{x\in \cX}\bigl(1 + \RL(\cH_{x\to 0}) + \RL(\cH_{x\to 1})\bigr).
\]
\end{observation}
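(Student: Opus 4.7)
The plan is to establish the identity by proving both inequalities separately; writing $R(x) := \tfrac{1}{2}(1 + \RL(\cH_{x\to 0}) + \RL(\cH_{x\to 1}))$ for brevity, the target is $\RL(\cH) = \sup_{x \in \cX} R(x)$.

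For the upper direction $\RL(\cH) \le \sup_x R(x)$, I would take an arbitrary tree $T$ shattered by $\cH$ and bound $E_T/2$. If $T$ is a single leaf then $E_T = 0$, and since $\cH$ is non-empty, any fixed $x \in \cX$ has at least one of $\cH_{x\to 0}, \cH_{x\to 1}$ containing some $h \in \cH$, hence with $\RL \ge 0$, while the other is at least $-1$ by convention, so $R(x) \ge 0 = E_T/2$. Otherwise let $x$ be the root label of $T$ and $T_0, T_1$ be its two subtrees; each branch of $T_y$ prefixed by the edge labeled $y$ from the root is a branch of $T$ realized by $\cH$, so $T_y$ is shattered by $\cH_{x\to y}$ and thus $E_{T_y}/2 \le \RL(\cH_{x\to y})$. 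Combining this with the recursion $E_T = 1 + (E_{T_0} + E_{T_1})/2$ from Eq.~\eqref{eq:expected_branch_recursive} yields $E_T/2 \le R(x) \le \sup_{x'} R(x')$, and taking the supremum over $T$ finishes this direction.

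For the lower direction $\RL(\cH) \ge \sup_x R(x)$, I would fix $x \in \cX$ and $\varepsilon > 0$ and exhibit a shattered tree $T$ with $E_T/2 \ge R(x) - \varepsilon$. When both $\cH_{x\to 0}$ and $\cH_{x\to 1}$ are non-empty, I would pick trees $T_y$ shattered by $\cH_{x\to y}$ satisfying $E_{T_y}/2 \ge \RL(\cH_{x\to y}) - \varepsilon$, attach them as the two subtrees of a new root labeled $x$, and observe that the resulting $T$ is shattered by $\cH$ since $\cH_{x\to y} \subseteq \cH$; the recursion Eq.~\eqref{eq:expected_branch_recursive} then gives $E_T/2 \ge R(x) - \varepsilon$. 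If instead one subclass, say $\cH_{x\to 0}$, is empty, then every $h \in \cH$ labels $x$ as $1$, so $\cH_{x\to 1} = \cH$ and $R(x) = \RL(\cH)/2 \le \RL(\cH)$ using $\RL(\cH) \ge 0$, so the bound holds trivially without any construction. Sending $\varepsilon \to 0$ and taking the supremum over $x$ concludes the proof.

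The only real subtlety I expect is the bookkeeping around the convention $\RL(\emptyset) = -1$ and the degenerate case where one of the two subclasses $\cH_{x\to y}$ is empty, where I cannot literally put $x$ at the root of a shattered tree; this case is dispensed with by the trivial observation that $R(x) = \RL(\cH)/2 \le \RL(\cH)$. Apart from this edge case the proof is a routine two-sided recursion using only the definitions of $\RL$, $E_T$, and Eq.~\eqref{eq:expected_branch_recursive}.
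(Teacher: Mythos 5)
Your proof is correct and follows essentially the same route as the paper: both decompose the supremum over shattered trees according to the root label and apply the recursion $E_T = 1 + (E_{T_0}+E_{T_1})/2$, your two inequalities being exactly the two directions of the paper's chain of equalities between suprema. If anything, you are more careful than the paper's one-line argument about the degenerate cases (the single-leaf tree, and the situation where one of $\cH_{x\to y}$ is empty so that no shattered tree can have its root labeled $x$), which the paper glosses over but which, as you note, never disturb the identity.
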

\begin{proof}
Observation~\ref{obs:expected_prop} follows from Equation~\eqref{eq:expected_branch_recursive}:
let $\cS(\cH)$ denote the set of trees that are shattered by $\cH$, and for $x\in \cX$, let $\cS_x(\cH)\subseteq \cS(\cH)$ denote the set of trees that are shattered by $\cH$ whose root is labeled by $x$.
Then,
\[
\RL(\cH) = \frac{1}{2}\sup_{T\in \cS(\cH)} E_T  
           = \frac{1}{2}\adjustlimits\sup_x \sup_{T\in \cS_x(\cH)} E_T.
\]
By Equation~\eqref{eq:expected_branch_recursive},
\[\sup_{T\in \cS_x(\cH)} E_T =  1 + \frac{\sup_{T_1\in\cS(\cH_{x\to 1})}E_{T_1} + \sup_{T_0\in\cS(\cH_{x\to 0})}E_{T_0}}{2} =  
1 + \RL(\cH_{x\to 1}) + \RL(\cH_{x\to 0}),\]
which finishes the proof.
\end{proof}
Notice that the classical Littlestone dimension satisfies a similar recursion:
\[\LD(\cH) = \sup_{x\in \cX}\bigl(1+ \min\{\LD(\cH_{x\to 1}),\LD(\cH_{x\to 0})\}\bigr).
\]
The following lemma is the crux of the analysis:
it guides the choice of  the prediction $p_i$ in each round.
\begin{figure}
    \centering
    \begin{tcolorbox}
    \begin{center}
        $\RSOA$: \textsc{Randomized $\SOA$}
    \end{center}
    \textbf{Input:} An hypothesis class $\cH$.
    \\
    \textbf{Initialize:} Let $V^{(1)} = \cH$.\\
    \\
    \textbf{For $i=1,2,\dots$}
    \begin{enumerate}
        \item Receive $x_i$. 
        \item \label{itm:rand_pred} Predict $p_i \in [0,1]$ such that the value 
        \begin{equation} \label{eq:rand_pred}
        \max \mleft \{p_i + \RL\mleft(V^{(i)}_{x_i \to 0}\mright), 1 - p_i + \RL\mleft(V^{(i)}_{x_i \to 1}\mright) \mright \}   
        \end{equation}
        is minimized, where $V^{(i)}_{x_i\to b} = \{h\in V^{(i)} : h(x_i)=b\}$.
        \item Receive true label $y_i$.
        \item Update $V^{(i+1)} = V^{(i)}_{x_i \to y_i}$.
    \end{enumerate}
    \end{tcolorbox}
    \caption{The randomized $\SOA$ is a variation of $\SOA$ that finds an optimal randomized prediction in every round. $\SOA$ is the name of the original deterministic algorithm by Littlestone \cite{littlestone1988learning}, and it stands for ``Standard Optimal Algorithm".}
    \label{fig:RSOA}
\end{figure}

\begin{lemma}[Optimal prediction for each round] \label{lem:rand_pred}
Let $\cH$ be an hypothesis class, and let $x\in \cX$. Then there exists $p \in [0,1]$ so that
\[p + \RL\mleft(\cH_{x\to 0}\mright) \leq \RL(\cH) \quad \text{and}\quad (1 - p) + \RL\mleft(\cH_{x\to 1}\mright) \leq \RL(\cH).\]
\end{lemma}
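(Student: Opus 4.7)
The plan is to construct $p$ by trying to equalize the two quantities $p + \RL(\cH_{x\to 0})$ and $(1-p) + \RL(\cH_{x\to 1})$, and clipping to the endpoints of $[0,1]$ when the balancing value falls outside. Write $a = \RL(\cH_{x\to 0})$ and $b = \RL(\cH_{x\to 1})$ for brevity. Equating the two expressions gives the candidate $p^\star = (1 + b - a)/2$, at which both take the common value $(1 + a + b)/2$. Two ingredients will drive the analysis: Observation~\ref{obs:expected_prop}, which yields $2\RL(\cH) \geq 1 + a + b$ when applied to the specific $x$ at hand; and the monotonicity $\RL(\cH') \leq \RL(\cH)$ whenever $\cH' \subseteq \cH$, which is immediate since any tree shattered by $\cH'$ is also shattered by $\cH$.

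First I would treat the balanced regime $|a - b| \leq 1$, in which $p^\star \in [0,1]$. Taking $p = p^\star$ makes both quantities equal $(1 + a + b)/2$, and Observation~\ref{obs:expected_prop} finishes the job directly.

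Next I would handle the two unbalanced regimes. If $b > 1 + a$, so $p^\star > 1$, I would set $p = 1$; the two quantities then reduce to $1 + a$ and $b$. Monotonicity gives $b \leq \RL(\cH)$, and combined with $1 + a < b$ this yields $1 + a < b \leq \RL(\cH)$, as needed. The case $a > 1 + b$ is entirely symmetric, setting $p = 0$.

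The only delicate point, and the one I would verify last, is the convention $\RL(\emptyset) = -1$: the lemma is applied only for nonempty $\cH$ (since otherwise the required inequalities force $p \leq 0$ and $p \geq 1$ simultaneously), and the identity $\cH = \cH_{x\to 0} \cup \cH_{x\to 1}$ prevents both subclasses from being empty at once. It is straightforward to check that the case analysis above remains valid when one of $a,b$ equals $-1$. I do not expect this to be a real obstacle; the entire argument is a short combinatorial calculation once the candidate $p$ is written down.
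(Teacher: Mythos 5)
Your proof is correct and follows essentially the same route as the paper's: the same candidate $p^\star = \bigl(1 + \RL(\cH_{x\to 1}) - \RL(\cH_{x\to 0})\bigr)/2$ in the balanced regime $|a-b|\le 1$ (closed via Observation~\ref{obs:expected_prop}), and the same clipping to $p\in\{0,1\}$ plus monotonicity of $\RL$ under inclusion in the unbalanced regimes. Your closing remark about the convention $\RL(\emptyset)=-1$ is a reasonable sanity check; the paper simply assumes $\cH\neq\emptyset$ throughout that subsection, and it also dispenses first with the trivial case $\RL(\cH)=\infty$, which you omit but which causes no difficulty.
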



\begin{proof}[Proof of Lemma~\ref{lem:rand_pred}]
If $\RL(\cH) = \infty$ then the lemma is trivial. Therefore we assume that $\RL(\cH) < \infty$. Assume first that $ \mleft \lvert \RL\mleft(\cH_{x\to 0}\mright) - \RL\mleft(\cH_{x\to 1}\mright) \mright \rvert > 1$.
If $\RL\mleft(\cH_{x\to 0}\mright) + 1 < \RL\mleft(\cH_{x\to 1}\mright)$, then by choosing $p=1$ and applying the fact that $\RL(\cH') \leq \RL(\cH)$ if $\cH' \subseteq \cH$ we get
\begin{align*}
    p + \RL\mleft(\cH_{x\to 0}\mright) &= 1 + \RL\mleft(\cH_{x\to 0}\mright) < \RL\mleft(\cH_{x\to 1}\mright) \leq  \RL(\cH),\\
    1 - p + \RL\mleft(\cH_{x\to 1}\mright) &= \RL\mleft(\cH_{x\to 1}\mright) \leq \RL(\cH),
\end{align*}
as desired. The case $\RL\mleft(\cH_{x\to 1}\mright) + 1 < \RL\mleft(\cH_{x\to 0}\mright)$ is treated similarly. 

It remains to handle the case when $\mleft \lvert \RL\mleft(\cH_{x\to 0}\mright) - \RL\mleft(\cH_{x\to 1}\mright) \mright \rvert \leq 1$. 
Set 
\[p := \frac{1 + \RL\mleft(\cH_{x\to 1}\mright) - \RL\mleft(\cH_{x\to 0}\mright) }{2}.\] 
By assumption, $p\in [0,1]$, and also
\begin{align*}
    p + \RL\mleft(\cH_{x\to 0}\mright)
    &=
    1 - p + \RL\mleft(\cH_{x\to 1}\mright) \\
    &=
    \frac{1 + \RL\mleft(\cH_{x\to 0}\mright) + \RL\mleft(\cH_{x\to 1}\mright)}{2} \\
    & \leq \RL(\cH) \tag{Observation~\ref{obs:expected_prop}}.
\end{align*}

\end{proof}

\begin{lemma}[Upper bound] \label{lem:characterization_upper}
Let $\cH$ be an hypothesis class. 
Then the $\RSOA$ learner described in Figure~\ref{fig:RSOA} has expected mistake bound 
\[
\M(\RSOA ; S) \leq \RL(\cH)
\]
for every realizable input sequence $S$.
\end{lemma}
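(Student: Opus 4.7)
The plan is a potential-function (telescoping) argument, with $\RL(V^{(i)})$ serving as the potential: Lemma~\ref{lem:rand_pred} guarantees that, in each round, the loss incurred by $\RSOA$ is at most the drop in $\RL(V^{(i)})$, and summing across rounds shows that the total loss is at most $\RL(V^{(1)}) = \RL(\cH)$.

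Before the telescoping step I would settle two preliminary points. First, if $\RL(\cH) = \infty$ the claim is vacuous, so we may assume $\RL(\cH) < \infty$. Second, since the input sequence $S = (x_1,y_1),\ldots,(x_t,y_t)$ is realizable by $\cH$, each version space $V^{(i)}$ remains non-empty for $i = 1,\ldots,t+1$. Any non-empty class shatters the single-leaf tree (whose expected branch length is $0$), so $\RL(V^{(i)}) \geq 0$ throughout the execution, and $\RL(V^{(1)}) = \RL(\cH)$ by initialization.

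The core step is the per-round inequality
\[
\RL(V^{(i+1)}) \leq \RL(V^{(i)}) - |y_i - p_i|.
\]
By Lemma~\ref{lem:rand_pred} applied to the class $V^{(i)}$ and the instance $x_i$, there exists some $p \in [0,1]$ satisfying both $p + \RL(V^{(i)}_{x_i \to 0}) \leq \RL(V^{(i)})$ and $(1-p) + \RL(V^{(i)}_{x_i \to 1}) \leq \RL(V^{(i)})$. Since $\RSOA$ selects $p_i$ to minimize the maximum in Equation~\eqref{eq:rand_pred}, the chosen $p_i$ inherits both of these individual bounds. Whichever label $y_i \in \{0,1\}$ the adversary then reveals, the update $V^{(i+1)} = V^{(i)}_{x_i \to y_i}$ combined with the matching bound gives $|y_i - p_i| + \RL(V^{(i+1)}) \leq \RL(V^{(i)})$, which is the displayed inequality.

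Summing this inequality over $i = 1, \ldots, t$ telescopes to
\[
\M(\RSOA; S) = \sum_{i=1}^{t} |y_i - p_i| \leq \RL(V^{(1)}) - \RL(V^{(t+1)}) \leq \RL(\cH),
\]
since $\RL(V^{(t+1)}) \geq 0$. I do not expect any genuine obstacle: the substantive content already lives in Lemma~\ref{lem:rand_pred} (and the recursion of Observation~\ref{obs:expected_prop} behind it), and the remaining step is a clean potential argument that mirrors Littlestone's analysis of $\SOA$ in the deterministic setting, with the twist that the potential drops by the expected loss $|y_i - p_i|$ rather than by exactly $1$ on each mistake. The only mildly delicate point is making sure the minimizer $p_i$ in Equation~\eqref{eq:rand_pred} is attained, but Lemma~\ref{lem:rand_pred} itself exhibits a valid $p$ witnessing the bound, so one can simply read the algorithm as choosing any such $p_i$.
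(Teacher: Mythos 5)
Your proof is correct and is essentially the paper's argument in a different packaging: the paper peels off the first example and inducts on the sequence length, while you sum the identical per-round inequality $|y_i - p_i| + \RL(V^{(i+1)}) \leq \RL(V^{(i)})$ (the paper's Eq.~\eqref{eq:lem_upper_bound_dec_dim}) as a telescoping potential argument, with the same reliance on Lemma~\ref{lem:rand_pred} and on non-emptiness of the version space giving $\RL(V^{(t+1)}) \geq 0$. No gaps.
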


\begin{proof}
The proof is by induction on the length of the input sequence. Let $S = (x_1,y_1), \dots, (x_t,y_t)$ be a realizable sequence. In the base case $t=0$ we have $\M(\RSOA ; S) = 0 \leq \RL(\cH)$. For the induction step, assume that $t \geq 1$, and let $S' = (x_2,y_2), \dots, (x_t,y_t)$ be the input sequence without the first example. In the first round, the learner predicts $p_1\in [0,1]$ as defined in step~\ref{itm:rand_pred} of $\RSOA$. Thus, the learner's expected accumulated loss on $S$ is
\begin{equation} \label{eq:lem_upper_bound_def}
\M(\RSOA; S) = \lvert p_1 - y_1\rvert + \M(\RSOA; S').   
\end{equation}
By the induction hypothesis we have
\begin{equation} \label{eq:lem_upper_bound_induction}
 \M(\RSOA; S') \leq \RL\mleft(\cH_{x_1\to y_1}\mright).   
\end{equation}
Also, by Lemma~\ref{lem:rand_pred} it holds that $p_1 + \RL\mleft(\cH_{x_1\to 0}\mright) \leq \RL(\cH)$ and $1 - p_1 + \RL\mleft(\cH_{x_1\to 1}\mright) \leq \RL(\cH)$, which is equivalent to
\begin{equation} \label{eq:lem_upper_bound_dec_dim}
|p_1 - y_1| + \RL\mleft(\cH_{x_1\to y_1}\mright) \leq \RL(\cH). 
\end{equation}
Therefore, overall we get that
\begin{align*}
    \M (\RSOA ; S) &= |p_1 - y_1| + \M(\RSOA; S') \tag{Eq.~\eqref{eq:lem_upper_bound_def}} \\
                   &\leq |p_1 - y_1| + \RL\mleft(\cH_{x_1\to y_1}\mright) \tag{Eq.~\eqref{eq:lem_upper_bound_induction}} \\
                   &\leq \RL(\cH), \tag{Eq.~\eqref{eq:lem_upper_bound_dec_dim}} 
\end{align*}
as required.
\end{proof}

\iffull
\subsection{Infinite Trees} \label{sec:infinite-trees}

So far we have been considering only finite trees. However, in the sequel it will be useful to also allow infinite trees. In this short subsection, we extend the definition of $E_T$ to infinite trees, and show that the formula for $\RL$ holds even when allowing infinite trees. 

In this section, whenever we refer to trees, we mean full ordered binary trees, which are possibly infinite. A tree is \emph{shattered} by an hypothesis class $\cH$ if every (possibly infinite) path starting at the root is realizable by $\cH$.

We define a \emph{random path} in the same way that we defined a random branch in the finite case: start at the root, and at each internal vertex, choose a random child at uniform, stopping if a leaf is reached. The result is either a (finite) branch or an infinite path. 

We define $E_T$ as the expected length of a random path. If the random path is finite almost surely, then $E_T$ is given using the same formula as in the finite case:
\[
 E_T = \sum_{b \in B(T)} |b| \cdot 2^{-|b|}.
\]
Figure~\ref{fig:infinite-path} gives an example of such a tree. If the random path is infinite with positive probability, then $E_T = \infty$. It can also happen that $E_T = \infty$ if the random path is finite almost surely.

\begin{figure}
    \centering
    \begin{forest}
[1, circle, draw
    [2, circle, draw, edge label={node[midway,left] {$0$}}
        [3, circle, draw, edge label={node[midway,left] {$0$}}
            [, edge={dashed}]
        [,circle, draw, edge label={node[midway,right] {$1$}}]]
    [,circle, draw, edge label={node[midway,right] {$1$}}]]
[,circle, draw, edge label={node[midway,right] {$1$}}]]
\end{forest}
        \caption{An infinite tree with finite expected branch length $2$.}
    \label{fig:infinite-path}
\end{figure}

The formula in Definition~\ref{def:randomized-littlestone-dimension} holds even if we allow infinite trees.

\begin{lemma} \label{lem:randomized-littlestone-dimension-finite}
For any non-empty hypothesis class $\cH$,
\[
 \RL(\cH) = \frac{1}{2} \sup_{T \text{ shattered}} E_T,
\]
where the supremum is taken over possibly \emph{infinite} trees.
\end{lemma}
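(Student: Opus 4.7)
The plan is to prove the lemma by approximating any infinite shattered tree with its finite truncations. The inequality $\RL(\cH) \leq \frac{1}{2}\sup_T E_T$ (supremum taken over possibly infinite trees) is immediate, since finite trees are a special case. I would therefore focus on the reverse inequality: for every infinite tree $T$ shattered by $\cH$, one has $E_T \leq 2\RL(\cH)$.

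Fix an infinite tree $T$ shattered by $\cH$ and any depth bound $d \in \mathbb{N}$. Let $T^{(d)}$ be the finite tree obtained by truncating $T$ at depth $d$: retain all vertices at depth at most $d$, discard everything below, and turn each former internal node at depth exactly $d$ into a leaf. Every branch of $T^{(d)}$ is either a branch of $T$ of length at most $d$, which is realizable by hypothesis, or the depth-$d$ prefix of a longer (finite or infinite) path in $T$, which is realizable because any prefix of a realizable path is realizable by the same witness hypothesis in $\cH$. Hence $T^{(d)}$ is finite and shattered by $\cH$, so $E_{T^{(d)}} \leq 2\RL(\cH)$ by Definition~\ref{def:randomized-littlestone-dimension}.

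Now let $L$ denote the (possibly infinite) length of a random path in $T$, so that $E_T = \mathbb{E}[L]$ by definition. By construction, a random branch in $T^{(d)}$ has length exactly $\min(L,d)$, hence $E_{T^{(d)}} = \mathbb{E}[\min(L,d)]$. Since $\min(L,d) \uparrow L$ pointwise as $d \to \infty$, the monotone convergence theorem gives $\lim_{d\to\infty} E_{T^{(d)}} = E_T$, regardless of whether $E_T$ is finite or infinite. Combining with $E_{T^{(d)}} \leq 2\RL(\cH)$ yields $E_T \leq 2\RL(\cH)$, completing the proof.

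I do not foresee significant obstacles. The only subtlety is verifying that truncation preserves shattering in the infinite setting, which reduces to the straightforward observation that realizability is preserved under taking finite prefixes, even when the underlying path is an infinite realizable path through $T$.
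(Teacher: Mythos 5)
Your proof is correct and follows essentially the same route as the paper: truncate the infinite shattered tree at depth $d$, observe that the truncation is finite and shattered, and show that $E_{T^{(d)}} \to E_T$. The only cosmetic difference is that you invoke the monotone convergence theorem for $\min(L,d) \uparrow L$, which handles the finite-expectation, infinite-expectation, and infinite-path cases uniformly, whereas the paper treats the case $\Pr[L=\infty]>0$ separately by hand.
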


The proof of the lemma uses a straightforward truncation argument.

\begin{proof}
Substituting the definition of $\RL(\cH)$, we need to prove that
\[
  \frac{1}{2} \sup_{T \text{ shattered, finite}} E_T = \frac{1}{2} \sup_{T \text{ shattered}} E_T.
\]
The left-hand side is clearly at most the right-hand side. We show that they coincide by constructing, for each infinite shattered tree $T$, a sequence of finite shattered trees $T_D$ such that
\begin{equation} \label{eq:truncation}
 E_T = \lim_{D \to \infty} E_{T_D}.
\end{equation}

The tree $T_D$ is simply the truncation of $T$ to depth $D$ (that is, all branches of $T_D$ have length at most $D$).
To prove Equation~\eqref{eq:truncation}, let $\Lambda \in \mathbb{N} \cup \{\infty\}$ be the length of a random path in $T$, and let $\Lambda_D \in \{0,\ldots,D\}$ be the length of a random path in $T_D$. We can couple the random paths so that $\Lambda_D = \min(\Lambda,D)$.

If $\Lambda$ is almost surely finite then
\[
 E_T = \sum_{\ell \in \mathbb{N}} \ell \Pr[\Lambda = \ell].
\]
Equation~\eqref{eq:truncation} holds because on the one hand, $E_{T_D} \leq E_T$, and on the other hand,
\[
 E_{T_D} \geq \sum_{\ell \leq D} \ell \Pr[\Lambda = \ell] \xrightarrow{D \to \infty} E_T.
\]

In contrast, if $p := \Pr[\Lambda = \infty] > 0$ then $\Pr[\Lambda_D = D] \geq p$ and so $E_{T_D} \geq p D \to \infty$, hence Equation~\eqref{eq:truncation} again holds.
\end{proof}

\subsection{Trees Maximizing the Expected Branch Length} \label{sec:maximizing-trees}

The sequence of trees $(T_i)_{i=1}^{\infty}$ described in Example~\ref{exmp:singletons} suggests that for ``well-behaved classes", the supremum in Theorem~\ref{thm:characterization} is attained by a specific tree.
We show that this is true for finite classes.

\begin{proposition} \label{pro:finite-hypothesis-class}
Let $\cH$ be a \emph{finite} hypothesis class. Then there exists a tree shattered by $\cH$ such that
\[
 \RL(\cH) = \frac{1}{2} E_T.
\]
\end{proposition}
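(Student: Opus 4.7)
The plan is to proceed by induction on $|\cH|$, using the recursive formula from Observation~\ref{obs:expected_prop}:
\[
\RL(\cH) = \tfrac{1}{2}\sup_{x\in\cX}\bigl(1 + \RL(\cH_{x\to 0}) + \RL(\cH_{x\to 1})\bigr).
\]
The crucial use of finiteness is that the map $x \mapsto (\cH_{x\to 0},\cH_{x\to 1})$ has image of size at most $2^{|\cH|}$, since each restricted class is a subset of $\cH$. Hence the expression inside the supremum takes only finitely many values, so the supremum is attained at some $x^{\star}\in\cX$.

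For the base case $|\cH|=1$, any internal node would require a branch realized by a hypothesis taking both labels on the node's instance, which is impossible; thus the only shattered tree is the single leaf, giving $\RL(\cH)=0=\tfrac12 E_T$.

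For the inductive step with $|\cH|\ge 2$, fix a maximizer $x^{\star}$. In the generic case that both $\cH_{x^{\star}\to 0}$ and $\cH_{x^{\star}\to 1}$ are nonempty, each is a proper subset of $\cH$, so by the induction hypothesis there exist shattered trees $T_0, T_1$ with $\RL(\cH_{x^{\star}\to b}) = \tfrac12 E_{T_b}$. Gluing $T_0$ and $T_1$ under a new root labeled $x^{\star}$ produces a tree $T$ whose every branch descends through a branch of one $T_b$; since that branch is realized by some $h\in\cH_{x^{\star}\to b}\subseteq\cH$, the tree $T$ is shattered by $\cH$. Applying Equation~\eqref{eq:expected_branch_recursive} gives
\[
\tfrac12 E_T = \tfrac12\bigl(1 + \tfrac{E_{T_0}+E_{T_1}}{2}\bigr) = \tfrac12\bigl(1 + \RL(\cH_{x^{\star}\to 0}) + \RL(\cH_{x^{\star}\to 1})\bigr) = \RL(\cH),
\]
as desired.

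The one subtle point, and the only real obstacle, is the degenerate case where the maximizer $x^{\star}$ happens to satisfy (say) $\cH_{x^{\star}\to 1}=\emptyset$, so induction does not apply to a strictly smaller class. Using the convention $\RL(\emptyset)=-1$, the recursion then forces $\RL(\cH) = \tfrac12(1 + \RL(\cH) - 1) = \RL(\cH)/2$, hence $\RL(\cH)=0$, which is realized by the trivial single-leaf tree. Once this case is disposed of, the rest of the argument is a straightforward induction: the finiteness of $\cH$ is used solely to ensure the supremum over $x$ is a maximum, which is the mechanism turning Observation~\ref{obs:expected_prop} into a construction.
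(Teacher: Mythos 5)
Your proof is correct, but it takes a genuinely different route from the paper's. The paper's argument is a short counting one: any two distinct branches of a shattered tree diverge at some internal node labeled $x$, and a hypothesis realizing the $0$-branch must differ at $x$ from one realizing the $1$-branch, so each branch is realized by a distinct hypothesis; hence a shattered tree has at most $|\cH|$ branches, there are only finitely many underlying tree shapes (and $E_T$ depends only on the shape), and the supremum is over a finite set of values, so it is trivially attained. You instead induct on $|\cH|$ through the recursion of Observation~\ref{obs:expected_prop}, using finiteness only to turn the supremum over $x$ into a maximum, and you assemble the optimal tree top-down; your treatment of the degenerate case where a maximizer $x^\star$ has $\cH_{x^\star\to 1}=\emptyset$ (forcing $\RL(\cH)=0$, realized by the single leaf) is the right fix for the only place the induction could stall. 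Each approach buys something: the paper's is shorter and additionally bounds the optimal tree by $|\cH|$ branches (hence depth at most $|\cH|-1$), while yours is constructive — it exhibits the optimal tree as the one built greedily from maximizers of the recursion — and it is exactly the template the paper later needs for Proposition~\ref{pro:weighted-finite-hypothesis-class} on weighted classes, where the branch-counting argument no longer applies and the recursive construction (with infinite trees) is unavoidable.
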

\begin{proof}
Let $T$ be a tree shattered by $\cH$. We can label each branch of $T$ by an hypothesis realizing it. Each branch must be labeled by a different hypothesis, hence the number of branches is at most $|\cH|$. Consequently, there are only finitely many shattered trees, and so the supremum in the definition of $\RL(\cH)$ is trivially achieved.
\end{proof}


There are also classes for which the maximum is not attained, even if we allow infinite trees.

\begin{exmp}[Maximum is not necessarily attained for infinite classes] \label{exmp:bad_singletons}
We construct an hypothesis class $\cH$ over the domain $\cX = \{ (i,j) \in \mathbb{N}^2 : 1 \leq i \leq j \}$. For each $(i,j) \in \cX$, the hypothesis class $\cH$ contains the function
\[
 h_{i,j}(I,J) =
 \begin{cases}
    1 & \text{if } J \neq j, \\
    1[i = I] & \text{if } J = j.
 \end{cases}
\]

Let us start by computing $\LD(\cH)$. Consider any tree $T$ shattered by $\cH$ which is not a leaf. Suppose that the root is labeled by $(i,j)$. Let $T_0$ be the subtree rooted at the branch of the root labeled $0$. Since no hypothesis in $\cH$ gives $0$ to inputs with different second parts, all vertices in $T_0$ must be labeled by $(i',j)$. Since no hypothesis in $\cH$ gives $0$ to $(i,j)$ and $1$ to two different $(i',j)$, we see that the minimum branch length in $T_0$ is at most $1$, and so the minimum branch length in $T$ is at most $2$. Hence $\LD(\cH) \leq 2$. It is easy to construct a tree showing that $\LD(\cH) = 2$.

The subtree $T_0$ contains at most $j$ branches, and each edge labeled $1$ terminates at a leaf.
A simple induction on $j$ shows that the expected branch length of such a tree is strictly less than $2$.
Indeed, denoting the expected branch length for a given value of $j \geq 2$ by $A_j$, we have $A_2 = 1 = 2 - 2^{2-2}$ and $A_j = 1 + A_{j-1}/2 = 1+ \frac{1}{2} (2-2^{2-j+1}) = 2-2^{2-j}$.

Let $j_s$ be the number of leaves in the subtree rooted at the vertex obtained by starting at the root of $T$, taking $s$ times the outgoing edge labeled $1$, and then the outgoing edge labeled $0$ (if such a vertex exists). Thus
\[
 E_T = \sum_{s=0}^S 2^{-s-1} (s + 1 + A_{j_s}) < \sum_{s=0}^S 2^{-s-1} (s + 3) \leq 4,
\]
where $S$ is the maximal value for which $j_s$ is defined (possibly $S = \infty$).

On the other hand, we can construct a tree $T$ shattered by $\cH$ for which $E_T$ is arbitrarily close to~$4$.
Start with an infinite right path (that is, a path in which all edges are labeled $1$) labeled with $(1,K),(1,K+1),(1,K+2)$ and so on, for some parameter $K$. The left branch of a vertex labeled $(1,J)$ is labeled using $(2,J),\ldots,(J-1,J)$ to construct a tree $T'_J$ shattered by $\cH$ with $J-1$ branches, as described in Figure~\ref{fig:example_bad_singletons}. This tree satisfies $j_s = K+s-1$, and so
\[
 E_T = \sum_{s=0}^\infty 2^{-s-1} (s + 3 - 2^{1-K-s)}) = 4 - O(2^{-K}),
\]
which is arbitrarily close to $4$.
Thus $\RL(\cH) = 2$, but every (possibly infinite) tree $T$ shattered by $\cH$ satisfies $E_T/2 < 2$.
\end{exmp}

\begin{figure}
    \centering
    \begin{forest}
[{$(1,J)$}, circle, draw
    [{$(2,J)$}, circle, draw, edge label={node[midway,left] {$0$}}
        [{$(3,J)$}, circle, draw, edge label={node[midway,left] {$0$}}
            [$\vdots$, edge label={node[midway,left] {$0$}}
                [{$(J-1,J)$}, circle, draw, edge label={node[midway,left] {$0$}}
                    [,circle, draw, edge label={node[midway,left] {$0$}}]
                    [,circle, draw, edge label={node[midway,right] {$1$}}]]
            [,circle, draw, edge label={node[midway,right] {$1$}}]]
        [,circle, draw, edge label={node[midway,right] {$1$}}]]
    [,circle, draw, edge label={node[midway,right] {$1$}}]]
[$\ddots$, edge label={node[midway,right] {$1$}}]]
\end{forest}
        \caption{The tree $T'_J$ defined in Example~\ref{exmp:bad_singletons}.}
    \label{fig:example_bad_singletons}
\end{figure}
\fi

\section{Quasi-balanced Trees}
\subsection{Definition and Basic Properties}
\label{sec:quasi-balanced-trees}

The classical definition of the Littlestone dimension of a class $\cH$ is the maximum depth of a balanced (or complete) shattered tree. In contrast, the randomized Littlestone dimension is defined via quantifying over \emph{all} shattered trees. Further, in the deterministic case, balanced trees naturally describe optimal deterministic strategies for the adversary which force any learner to make a mistake on every example along a branch of the tree.

It is therefore natural to ask whether there is a type of shattered trees, analogous to balanced trees, which can be used to define the randomized Littlestone dimension.
In this subsection, we show that such an analog exists: a type of trees which we call \emph{quasi-balanced}; roughly speaking, quasi-balanced trees can be seen as a fractional relaxation of balanced trees. 
We further use this section to prove some useful properties of these trees, which will be used later on.

Informally, quasi-balanced trees are balanced under some weight function defined on the edges. To formally define quasi-balanced trees, we need to define \emph{weight functions} for trees.

Let $T$ be a non-empty tree with edge set $E$. Let $\cW = \cW(T)$ be the set of all functions $w \colon E \rightarrow [0,1]$, such that for every internal node with outgoing edges $e_0,e_1$ it holds that $w(e_0) + w(e_1) = 1$. Each function in $\cW$ is called a \emph{weight function} for $T$.

For every branch $b \in B(T)$ defined by a sequence of consecutive edges, define the \emph{weight of the branch $b$ with respect to $w$} by $w(b) = \sum_ {e \in b} w(e)$.

The expected weight of a random branch is always half the expected length of a random branch, as a simple inductive argument shows.

\begin{lemma} \label{lem:random-branch-weight}
For every non-empty tree $T$ and every weight function $w \in \cW(T)$, the expected weight of a random branch is $E_T/2$.
\end{lemma}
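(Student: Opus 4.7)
The plan is to prove Lemma~\ref{lem:random-branch-weight} by structural induction on the tree, leveraging the recursive formula for $E_T$ given in Equation~\eqref{eq:expected_branch_recursive} and the defining constraint on weight functions (that the weights of two sibling edges sum to $1$).

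For the base case, I would take $T$ to consist of a single leaf. Then $T$ has no edges, every weight function is vacuously the empty function, and the only random branch is the empty one, which has total weight $0$. Since $E_T = 0$ in this case as well, both sides of the asserted equality agree.

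For the inductive step, let $T$ be a tree whose root has left and right subtrees $T_0, T_1$, and let $e_0, e_1$ denote the two edges emanating from the root. I would observe that the restriction $w_i$ of $w$ to the edges of $T_i$ is a valid weight function on $T_i$, since the sibling-sum condition at each internal vertex of $T_i$ is inherited directly from $T$. Conditioning on the first step of the random walk, I would compute
\begin{align*}
\mathbb{E}[w(B)] &= \tfrac{1}{2}\bigl(w(e_0) + \mathbb{E}[w_0(B_0)]\bigr) + \tfrac{1}{2}\bigl(w(e_1) + \mathbb{E}[w_1(B_1)]\bigr) \\
&= \frac{w(e_0) + w(e_1)}{2} + \frac{\mathbb{E}[w_0(B_0)] + \mathbb{E}[w_1(B_1)]}{2},
\end{align*}
where $B, B_0, B_1$ denote random branches in $T, T_0, T_1$ respectively. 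The first term equals $1/2$ by the weight-function condition at the root, while the induction hypothesis evaluates the second term to $(E_{T_0} + E_{T_1})/4$. Combining these and invoking Equation~\eqref{eq:expected_branch_recursive} gives exactly $E_T/2$, as required.

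I do not anticipate any real obstacle; the argument is a short induction, and the only point that requires brief verification is that restricting $w$ to a subtree yields a valid weight function there, which is immediate. As an aside, an alternative one-shot proof would go through linearity of expectation: for each edge $e$ at depth $d(e)$ (counting from the root), $\Pr[e \in B] = 2^{-d(e)}$, so the contribution of any pair of sibling edges to $\mathbb{E}[w(B)]$ is $2^{-d(e)}$ regardless of $w$; specializing to the uniform weight $w \equiv 1/2$ identifies this sum with $E_T/2$. Either route yields the lemma, but I would present the inductive version since it parallels the recursion~\eqref{eq:expected_branch_recursive} already in use.
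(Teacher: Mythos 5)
Your proposal is correct and follows essentially the same route as the paper: induction on the tree, conditioning on the first step of the random walk, and combining $w(e_0)+w(e_1)=1$ with the recursion in Equation~\eqref{eq:expected_branch_recursive}. The alternative linearity-of-expectation remark is also valid, but the main argument matches the paper's proof.
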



\begin{proof}
The proof is by induction on the depth of the tree. If $T$ is a leaf then the expected weight of a random branch is $0 = E_T/2$. If $T$ is not a leaf, let $e_0,e_1$ be the edges emanating from the root, and let $T_0,T_1$ be the corresponding subtrees. Applying the inductive hypothesis, the expected weight of a random branch in $T$ under $w$ is
\[
 \frac{w(e_0) + E_{T_0}/2}{2} + \frac{w(e_1) + E_{T_1}/2}{2} = \frac{1 + E_{T_0}/2 + E_{T_1}/2}{2} = E_T/2,
\]
using $w(e_0) + w(e_1) = 1$ and Equation~\eqref{eq:expected_branch_recursive}.
\end{proof}

This lemma prompts the following definition.

\begin{definition} \label{def:quasi-balanced}
A tree $T$ is \emph{quasi-balanced} if it is non-empty and there is a weight function $w \in \cW(T)$ under which all branches have weight $E_T/2$.
\end{definition}

We call $E_T/2$ the \emph{weight} of the tree, and denote it by $\lambda_T$.

\begin{lemma} \label{lem:quasi-balanced-uniqueness}
If a tree $T$ is quasi-balanced then there is a unique weight function $w$ under which all branches have the same weight. Explicitly, if $T'$ is a subtree of $T$ whose root is connected via edges $e_0,e_1$ to the subtrees $T_0,T_1$, then
\[
 w(e_0) = \frac{1 + \lambda_{T_1} - \lambda_{T_0}}{2} \text{ and } w(e_1) = \frac{1 + \lambda_{T_0} - \lambda_{T_1}}{2}.
\]
\end{lemma}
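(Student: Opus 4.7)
The plan is to argue by induction on the depth of the tree, first extracting the key structural fact that every subtree of a quasi-balanced tree is itself quasi-balanced, and then solving a simple $2 \times 2$ linear system to pin down the weights on the edges leaving the root of any subtree.

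First I would show the structural claim: if $T$ is quasi-balanced with a witnessing weight function $w$, then for any internal node $r'$ of $T$ with outgoing edges $e_0,e_1$ going to subtrees $T_0,T_1$, the restriction $w|_{T_0}$ makes all branches of $T_0$ have the same weight (and similarly for $T_1$). The reason is that any branch of $T$ that passes through $r'$ has total weight equal to (weight of the prefix from the root of $T$ to $r'$) $+$ $w(e_0) +$ (weight in $T_0$ of the continuation). Since all branches of $T$ have weight $\lambda_T$ and the prefix is fixed, the weights in $T_0$ of all branches of $T_0$ must coincide; hence $T_0$ is quasi-balanced, and by Lemma~\ref{lem:random-branch-weight} the common value is $E_{T_0}/2 = \lambda_{T_0}$. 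The same argument applies to $T_1$ and, iterated, to every subtree $T'$ of $T$.

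Now I would focus on a subtree $T'$ with root $r'$ and outgoing edges $e_0,e_1$, and derive the formula. Every branch of $T'$ through $e_0$ has weight $w(e_0) + \lambda_{T_0}$, and every branch through $e_1$ has weight $w(e_1) + \lambda_{T_1}$; both equal the common branch-weight of $T'$ (which is $\lambda_{T'}$). Combined with the defining identity $w(e_0) + w(e_1) = 1$, this yields the linear system
\[
w(e_0) + \lambda_{T_0} = w(e_1) + \lambda_{T_1}, \qquad w(e_0) + w(e_1) = 1,
\]
whose unique solution is $w(e_0) = (1 + \lambda_{T_1} - \lambda_{T_0})/2$ and $w(e_1) = (1 + \lambda_{T_0} - \lambda_{T_1})/2$, as claimed. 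Since $T'$ was arbitrary, this determines $w$ on every edge of $T$, giving uniqueness.

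I do not anticipate any genuine obstacle: the two constraints (probability-like normalization and equal branch weight) are precisely enough to pin the two unknowns $w(e_0),w(e_1)$ down, and the inductive propagation to deeper subtrees is free once the structural claim above is in hand. The only mildly subtle point is making sure we interpret $\lambda_{T_0},\lambda_{T_1}$ correctly as $E_{T_0}/2, E_{T_1}/2$ rather than as something that depends on $w$; this is handled by invoking Lemma~\ref{lem:random-branch-weight} to identify the common branch weight of a quasi-balanced subtree with half its expected branch length.
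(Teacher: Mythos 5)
Your proof is correct and follows the same route as the paper: observe that every subtree of a quasi-balanced tree is itself quasi-balanced (with common branch weight $\lambda_{T_i}=E_{T_i}/2$ by Lemma~\ref{lem:random-branch-weight}), then solve the $2\times 2$ system given by $w(e_0)+\lambda_{T_0}=w(e_1)+\lambda_{T_1}$ and $w(e_0)+w(e_1)=1$. The only difference is that you spell out the "fixed prefix" argument for why subtrees inherit quasi-balancedness, which the paper states without proof.
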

\begin{proof}
The trees $T',T_0,T_1$ are necessarily quasi-balanced, and in particular
\[
 w(e_0) + \lambda_{T_0} = w(e_1) + \lambda_{T_1}.
\]
Since $w(e_0) + w(e_1) = 1$, we can solve for $w(e_0),w(e_1)$, obtaining the claimed formula.
\end{proof}

Quasi-balanced trees are a generalization of balanced trees: every tree $T$ which is balanced is also quasi-balanced with weight $\lambda_T = d/2$, where $d$ is the depth of $T$. This weight is realized by the (unique) constant weight function that gives weight $1/2$ to all edges. The family of quasi-balanced trees is, however, much broader than the family of balanced trees.
\iffull
(Figure~\ref{fig:example_quasi} gives an example of a quasi-balanced tree which is not balanced).
\fi

\smallskip

Recall the definition of the randomized Littlestone dimension of the class $\cH$:
\[
\RL(\cH) =\frac{1}{2} \sup_{T \text{ shattered}} E_T.
\]
It turns out that in this definition, it suffices to take the supremum only over quasi-balanced trees. This will be easier to see through the characterization of quasi-balanced trees as \emph{monotone} trees.

\begin{definition}[Monotone Trees] \label{def:monotone}
A non-empty tree $T$ is \emph{weakly monotone} if 
\[E_{T} \geq \max\{E_{T_{0}}, E_{T_{1}}\},\] where $T_0$ and $T_1$ are the subtrees rooted at the children of the root of $T$.
A tree is \emph{monotone} if it is non-empty and all of its subtrees are weakly monotone.
\end{definition}


It is not hard to see that non-monotone trees need not be considered when computing the randomized Littlestone dimension.

\begin{lemma} \label{lem:randomized-littlestone-dimension-monotone}
For any non-empty hypothesis class $\cH$,
\[
\RL(\cH) =\frac{1}{2} \sup_{T \text{ shattered, monotone}} E_T.
\]
\end{lemma}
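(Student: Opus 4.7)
The inequality $\frac{1}{2}\sup_{T\text{ shattered, monotone}} E_T \leq \RL(\cH)$ is immediate, since monotonicity merely restricts the supremum to a subfamily of shattered trees. My plan is therefore to establish the reverse inequality by showing that every shattered tree $T$ admits a tree $T^*$, also shattered by $\cH$, that is monotone and satisfies $E_{T^*} \geq E_T$. Taking the supremum over $T$ then finishes the proof.

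I will construct $T^*$ by induction on the depth of $T$. A leaf is trivially monotone, which settles the base case. For the inductive step, let the root of $T$ be labeled by some instance $x \in \cX$, with child subtrees $T_0, T_1$; viewed as stand-alone trees, these are shattered by $\cH_{x\to 0}$ and $\cH_{x\to 1}$, respectively. Applying the inductive hypothesis to each of them I obtain monotone trees $\widetilde{T}_0, \widetilde{T}_1$, shattered by the respective restricted classes, with $E_{\widetilde{T}_b} \geq E_{T_b}$ for $b\in\{0,1\}$. I then assemble a candidate tree $T'$ whose root is labeled $x$ and whose child subtrees are $\widetilde{T}_0, \widetilde{T}_1$; this $T'$ is shattered by $\cH$, and the recursion in Equation~\eqref{eq:expected_branch_recursive} yields $E_{T'} = 1 + (E_{\widetilde{T}_0} + E_{\widetilde{T}_1})/2 \geq E_T$.

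The main obstacle is that $T'$ itself need not be monotone at its root: even though its subtrees are monotone by induction, it can happen that $E_{\widetilde{T}_b} > E_{T'}$, for instance when $\widetilde{T}_0$ is much larger than $\widetilde{T}_1$. I will handle this with a case split. If $E_{T'} \geq \max\{E_{\widetilde{T}_0}, E_{\widetilde{T}_1}\}$, then $T'$ is weakly monotone at the root and, combined with inductive monotonicity of its subtrees, is itself monotone; in that case I set $T^* := T'$. Otherwise, I pick $b\in\{0,1\}$ with $E_{\widetilde{T}_b} > E_{T'}$ and set $T^* := \widetilde{T}_b$. This $T^*$ is monotone by induction, shattered by $\cH_{x\to b}\subseteq \cH$, and satisfies $E_{T^*} > E_{T'} \geq E_T$. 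The key insight is that whenever the recursive reconstruction fails to preserve weak monotonicity at the root, ``promoting'' the heavier child subtree strictly increases the expected branch length, so the two cases together produce a monotone shattered tree with expected branch length at least $E_T$, as required.
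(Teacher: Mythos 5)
Your proposal is correct and uses essentially the same idea as the paper: wherever weak monotonicity fails at a vertex, replace that vertex's subtree by the heavier child subtree, which preserves shattering and does not decrease $E_T$. The paper phrases this as an iterative repair process on the whole tree (terminating after finitely many replacements), whereas you organize the same surgery as an induction on depth; the substance is identical.
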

\begin{proof}
Consider a tree $T$ shattered by $\cH$ which is not monotone. Then there exists a vertex $v$ such that $E_{T_v} < E_{T_w}$, where $T_w$ is a tree rooted at a child of $v$. If we replace the subtree rooted at $v$ with the subtree $T_w$, we get a tree which is also shattered by $\cH$, and has higher expected branch length.

Repeating this process finitely many times, for each tree $T$ shattered by $\cH$ we obtain a monotone tree $T'$ shattered by $\cH$ satisfying $E_{T'} \ge E_T$, and the lemma follows. \end{proof}

The following theorem asserts that monotone and quasi-balanced trees are indeed equivalent.

\begin{theorem} \label{thm:quasi_equiv_mono}
A tree is quasi-balanced if and only if it is monotone.
\end{theorem}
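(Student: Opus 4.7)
The plan is to take the explicit weight formula from Lemma~\ref{lem:quasi-balanced-uniqueness} and observe that (i) it is the only candidate for a ``balancing'' weight function, (ii) it automatically makes all branches have the same weight, and (iii) its values lie in $[0,1]$ at every internal node if and only if the tree is monotone. The two directions of the equivalence then fall out of (i)--(iii) by a short induction on depth.

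For the direction \emph{quasi-balanced $\Rightarrow$ monotone}, suppose $T$ is quasi-balanced. First I would observe that if every branch of $T$ has common weight $\lambda_T$, then for any internal vertex $v$ of $T$, every branch of the subtree $T_v$ extends a common root-to-$v$ weight, so $T_v$ is itself quasi-balanced, and Lemma~\ref{lem:random-branch-weight} gives $\lambda_{T_v}=E_{T_v}/2$. Applying Lemma~\ref{lem:quasi-balanced-uniqueness} to the two edges $e_0,e_1$ emanating from $v$ with children subtrees $T_{v_0},T_{v_1}$, the requirement $w(e_0),w(e_1)\in[0,1]$ reads $|\lambda_{T_{v_0}}-\lambda_{T_{v_1}}|\le 1$, i.e.\ $|E_{T_{v_0}}-E_{T_{v_1}}|\le 2$. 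Combined with the recursion $E_{T_v}=1+(E_{T_{v_0}}+E_{T_{v_1}})/2$ from Equation~\eqref{eq:expected_branch_recursive}, this gives $E_{T_v}\ge\max\{E_{T_{v_0}},E_{T_{v_1}}\}$, proving weak monotonicity at every $v$.

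For the converse \emph{monotone $\Rightarrow$ quasi-balanced}, define $w$ on every pair of sibling edges $(e_0,e_1)$ below an internal node $v$ with children subtrees $T_{v_0},T_{v_1}$ by
\[
w(e_0)=\frac{1+\lambda_{T_{v_1}}-\lambda_{T_{v_0}}}{2},\qquad w(e_1)=\frac{1+\lambda_{T_{v_0}}-\lambda_{T_{v_1}}}{2},
\]
where $\lambda_{T_{v_i}}:=E_{T_{v_i}}/2$. The weak monotonicity of $T_v$ combined with the recursion for $E_{T_v}$ yields $|E_{T_{v_0}}-E_{T_{v_1}}|\le 2$, which is exactly the condition that $w(e_0),w(e_1)\in[0,1]$; by construction they sum to $1$, so $w\in\cW(T)$. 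It then remains to show that under $w$ every branch has weight $E_T/2$. I would do this by induction on depth: for a leaf there is nothing to prove; for an internal root with children subtrees $T_0,T_1$, the induction hypothesis (applied to the monotone subtrees $T_0,T_1$) gives that every branch of $T_i$ has weight $\lambda_{T_i}$, so a branch of $T$ starting with $e_i$ has total weight
\[
w(e_i)+\lambda_{T_i}=\frac{1+\lambda_{T_0}+\lambda_{T_1}}{2}=\frac{E_T}{2},
\]
the last equality using $E_T=1+(E_{T_0}+E_{T_1})/2$. This is independent of $i$, so all branches have common weight $E_T/2$, witnessing that $T$ is quasi-balanced.

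I do not expect a serious obstacle here; the proof is essentially bookkeeping around Lemma~\ref{lem:quasi-balanced-uniqueness} and Equation~\eqref{eq:expected_branch_recursive}. The one subtlety worth highlighting is that in the forward direction, one must separately verify that every \emph{subtree} of a quasi-balanced tree is quasi-balanced (with weight $E_{T_v}/2$), so that Lemma~\ref{lem:quasi-balanced-uniqueness} can be invoked at every internal vertex; this is the step that upgrades ``weak monotonicity at the root'' to full monotonicity.
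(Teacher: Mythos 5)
Your proposal is correct and follows essentially the same route as the paper's proof: both directions hinge on the explicit weight formula of Lemma~\ref{lem:quasi-balanced-uniqueness}, the equivalence $w(e_0),w(e_1)\in[0,1]\iff|E_{T_0}-E_{T_1}|\le 2$ (the paper's Observation~\ref{obs:mono_equiv_balanced}, which you re-derive from Equation~\eqref{eq:expected_branch_recursive}), and an induction on depth. The only differences are organizational — you define $w$ globally up front and verify weak monotonicity vertex-by-vertex rather than wrapping both directions in a recursive induction — and your explicit remark that subtrees of a quasi-balanced tree are quasi-balanced is exactly the point the paper also relies on.
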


\begin{corollary} \label{cor:randomized-littlestone-quasi-balanced}
For any non-empty hypothesis class $\cH$,
\[
\RL(\cH) =\frac{1}{2} \sup_{T \text{ shattered, quasi-balanced}} E_T.
\]
\end{corollary}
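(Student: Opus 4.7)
The plan is to deduce the corollary in one short step by chaining together the two results that immediately precede it, Lemma~\ref{lem:randomized-littlestone-dimension-monotone} and Theorem~\ref{thm:quasi_equiv_mono}. Lemma~\ref{lem:randomized-littlestone-dimension-monotone} already reduces the randomized Littlestone dimension to a supremum over shattered monotone trees, and Theorem~\ref{thm:quasi_equiv_mono} asserts that the classes of monotone trees and quasi-balanced trees coincide. Replacing ``monotone'' by ``quasi-balanced'' in the supremum therefore yields the desired identity.

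Concretely, first I would invoke Lemma~\ref{lem:randomized-littlestone-dimension-monotone} to write
\[
\RL(\cH) = \frac{1}{2} \sup_{T \text{ shattered, monotone}} E_T.
\]
Next, I would observe that the property of being monotone and the property of being quasi-balanced are defined purely in terms of the shape of the tree and the values $E_{T_v}$ at its subtrees; they are independent of the labeling of internal nodes by instances of $\cX$ or of edges by labels in $\{0,1\}$. Consequently, the predicates ``shattered by $\cH$'' and ``monotone'' (respectively ``quasi-balanced'') are logically independent conditions, and the two index sets
\[
\{T : T \text{ shattered}, \ T \text{ monotone}\} \quad \text{and} \quad \{T : T \text{ shattered}, \ T \text{ quasi-balanced}\}
\]
are equal by Theorem~\ref{thm:quasi_equiv_mono}. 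Hence the two suprema over these sets are equal, giving
\[
\RL(\cH) = \frac{1}{2} \sup_{T \text{ shattered, quasi-balanced}} E_T,
\]
as claimed.

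There is no real obstacle: all of the substantive work has already been carried out in establishing Lemma~\ref{lem:randomized-littlestone-dimension-monotone} (which performs the ``straightening'' argument that replaces any shattered tree by a shattered monotone one of at least as large expected branch length) and in Theorem~\ref{thm:quasi_equiv_mono} (the combinatorial equivalence between monotonicity and quasi-balancedness). The only point worth verifying explicitly, and the only place where one might slip, is that shatteredness is preserved under the identification of the two families, which is immediate since shatteredness is a condition on the label-sequences appearing along branches and is insensitive to whichever of the two equivalent structural properties one uses to describe the tree.
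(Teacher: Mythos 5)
Your proof is correct and matches the paper's intent exactly: the corollary is stated immediately after Theorem~\ref{thm:quasi_equiv_mono} precisely so that it follows by substituting ``quasi-balanced'' for ``monotone'' in Lemma~\ref{lem:randomized-littlestone-dimension-monotone}, which is what you do. The additional remark that shatteredness is a labeling condition independent of the structural/monotonicity property is a fine sanity check but not strictly needed, since Theorem~\ref{thm:quasi_equiv_mono} already gives equality of the two predicates on \emph{all} trees.
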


To prove Theorem~\ref{thm:quasi_equiv_mono}, we use the following simple observation.

\begin{observation} \label{obs:mono_equiv_balanced}
Let $T$ be a non-empty tree. Then $T$ is weakly monotone if and only if $|E_{T_{0}} -  E_{T_{1}}| \leq 2$, where $T_0$ and $T_1$ are the subtrees rooted at the children of the root of $T$.
\end{observation}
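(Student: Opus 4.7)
My plan is to derive the equivalence directly from the recursion \eqref{eq:expected_branch_recursive}, namely $E_T = 1 + (E_{T_0} + E_{T_1})/2$, with the convention that empty subtrees contribute $-1$. The observation reduces to a one-line manipulation once this recursion is substituted into the definition of weak monotonicity.

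First I would dispose of the case in which $T$ is a single leaf. Here $T_0$ and $T_1$ are both empty, so $E_{T_0} = E_{T_1} = -1$ and $E_T = 0$; both conditions hold trivially (the absolute difference is $0$, and $0 \ge -1$). For the main case, where the root has two genuine subtrees, I would rewrite the two inequalities defining weak monotonicity:
\begin{align*}
E_T \ge E_{T_0} &\iff 1 + \tfrac{E_{T_0}+E_{T_1}}{2} \ge E_{T_0} \iff E_{T_0} - E_{T_1} \le 2,\\
E_T \ge E_{T_1} &\iff 1 + \tfrac{E_{T_0}+E_{T_1}}{2} \ge E_{T_1} \iff E_{T_1} - E_{T_0} \le 2.
\end{align*}
Conjoining the two inequalities yields exactly $|E_{T_0} - E_{T_1}| \le 2$, which is the stated criterion.

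There is no real obstacle here: the content of the observation is purely the algebraic fact that the average of two numbers plus $1$ dominates each of them if and only if the two numbers differ by at most $2$. The only subtlety worth flagging is that the equivalence also holds at leaves thanks to the $-1$ convention for empty trees, so the statement needs no separate case disclaimer. I would therefore present the proof as a single short computation and rely on the recursion from \eqref{eq:expected_branch_recursive} without further comment.
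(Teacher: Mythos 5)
Your proposal is correct and follows the same route as the paper: substitute the recursion $E_T = 1 + (E_{T_0}+E_{T_1})/2$ into each of the two inequalities $E_T \ge E_{T_0}$ and $E_T \ge E_{T_1}$ and observe that they are equivalent to $E_{T_0}-E_{T_1}\le 2$ and $E_{T_1}-E_{T_0}\le 2$ respectively. Your extra remark about the leaf case (handled by the $-1$ convention) is a harmless addition that the paper leaves implicit.
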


\begin{proof}
Equation~\eqref{eq:expected_branch_recursive} states that $2E_T = 2 + E_{T_0} + E_{T_1}$, and so $E_{T_0} \leq E_T$ is equivalent to $E_{T_0} - E_{T_1} \leq 2$. Similarly, $E_{T_1} \leq E_T$ is equivalent to $E_{T_1} - E_{T_0} \leq 2$. Hence $T$ is weakly monotone iff $|E_{T_0} - E_{T_1}| \leq 2$.
\end{proof}

\begin{proof}[Proof of Theorem~\ref{thm:quasi_equiv_mono}]
An empty tree is neither quasi-balanced nor monotone. Suppose therefore that we are given a non-empty tree $T$. We prove the equivalence by proving both implications separately.
\paragraph{Monotone $\implies$ Quasi-balanced.}
The proof is by induction on the depth of the tree. A tree of depth $0$ (the base case) is quasi-balanced with weight $E_T/2 = 0$. For the induction step, let $T_0,T_1$ be the subtrees rooted at the root's children. They are clearly monotone, and so by induction, there are weight functions $w_0 \in \cW(T_0)$ and $w_1 \in \cW(T_1)$ under which all branches in~$T_0$ have weight $\lambda_{T_0} = E_{T_0}/2$ and all branches in $T_1$ have weight $\lambda_{T_1} = E_{T_1}/2$.

Let $e_0,e_1$ be the edges connecting the root of $T$ to the roots of $T_0,T_1$, respectively. Define a weight function $w \in \cW(T)$ by defining $w(e) = w_0(e)$ if $e \in T_0$, $w(e) = w_1(e)$ if $e \in T_1$,
\[
 w(e_{0}) = \frac{1 + \lambda_{T_{1}} - \lambda_{T_{0}}}{2}, \quad \text{and}\quad w(e_{1}) = \frac{1 + \lambda_{T_{0}} - \lambda_{T_{1}}}{2}.
\]
Clearly $w(e_0) + w(e_1) = 1$. Observation~\ref{obs:mono_equiv_balanced} implies that $w(e_0),w(e_1) \in [0,1]$, and so indeed $w \in \cW(T)$. Since $w(e_0) + \lambda_{T_0} = w(e_1) + \lambda_{T_1}$, the weight function $w$ shows that $T$ is quasi-balanced.


\paragraph{Quasi-balanced $\implies$ Monotone.}
The proof is by induction on the depth of the tree. A tree of depth $0$ is monotone. For the induction step, we first observe that every proper subtree of $T$ is quasi-balanced, and so monotone by the inductive hypothesis. Hence it suffices to show that $T$ is weakly monotone.

Let $w$ be the unique weight function for $T$ under which each branch has weight $\lambda_T$. Let $e_0,e_1$ be the edges connecting the root of $T$ to the two subtrees $T_0,T_1$. According to Lemma~\ref{lem:quasi-balanced-uniqueness}, the weights of these edges are
\[
 w(e_0) = \frac{1 + \lambda_{T_1} - \lambda_{T_0}}{2} \text{ and }
 w(e_1) = \frac{1 + \lambda_{T_0} - \lambda_{T_1}}{2}.
\]
Since the weights are non-negative, we deduce that $|\lambda_{T_0} - \lambda_{T_1}| \leq 1$, and so $|E_{T_0} - E_{T_1}| \leq 2$. We conclude that $T$ is weakly monotone by Observation~\ref{obs:mono_equiv_balanced}.
\end{proof}






\subsection{A Concentration Lemma for Quasi-Balanced Trees} \label{sec:concentration}
Another interesting property of quasi-balanced trees is that the length of a random branch concentrates around its expectation. This property will be important for deriving tight bounds in Section~\ref{sec:expert-advice}.

\begin{proposition} [Concentration of branch lengths]\label{lem:quasi_concentration}
Let $T$ be a quasi-balanced tree, and let $X$ be the length of a random branch. Then for any $\epsilon > 0$, \[
 \Pr[X < (1-\epsilon) E_T] \leq \exp (-\epsilon^2 E_T/4) \quad \text{and} \quad
 \Pr[X > (1+\epsilon) E_T] \leq \exp (-\epsilon^2 E_T/4(1+\epsilon)).
\]
\end{proposition}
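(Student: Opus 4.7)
The plan is to introduce an exposure-type martingale associated with the random walk that produces the random branch, use quasi-balancedness to bound its increments by $1$, and then invoke Azuma's inequality.

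First, let $v_0, v_1, \dots, v_\tau$ denote the sequence of vertices visited by the random walk, where $v_0$ is the root and $\tau$ is the first (random) time at which a leaf is reached; by construction $\tau = X$. Write $d_i$ for the depth of $v_i$ and $T_v$ for the subtree of $T$ rooted at $v$. I would define
\[
L_i \;:=\; d_{i\wedge\tau} \;+\; E_{T_{v_{i\wedge\tau}}}, \qquad i = 0,1,2,\dots,
\]
so that $L_0 = E_T$ and $L_i = X$ for all $i \geq \tau$. Using the recursion $E_{T_v} = 1 + (E_{T^L} + E_{T^R})/2$ from Equation~\eqref{eq:expected_branch_recursive} (where $T^L,T^R$ are the left and right subtrees of $v$), a short conditional-expectation calculation shows that $(L_i)_{i\geq 0}$ is a martingale with respect to the natural filtration.

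Next, I would bound the increments. If $v_i$ is an internal vertex with child subtrees $T^L, T^R$, a direct computation yields $L_{i+1} - L_i = \pm (E_{T^L} - E_{T^R})/2$, with the sign determined by the chosen child. By Theorem~\ref{thm:quasi_equiv_mono}, quasi-balancedness of $T$ is equivalent to monotonicity of every subtree of $T$, so Observation~\ref{obs:mono_equiv_balanced} gives $|E_{T^L} - E_{T^R}| \leq 2$ at every internal vertex; hence $|L_{i+1} - L_i| \leq 1$ almost surely (and equals $0$ once $i \geq \tau$).

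Finally, I would apply Azuma's inequality in both directions. For the upper tail, set $N := \lceil (1+\epsilon) E_T \rceil$. On the event $\{X > (1+\epsilon) E_T\}$ we have $\tau > N$, so $L_N = N + E_{T_{v_N}} \geq N$ and therefore $L_N - L_0 \geq \epsilon E_T$; Azuma's inequality applied to $L_0, \dots, L_N$ with unit-bounded increments then yields
\[
\Pr\mleft[X > (1+\epsilon) E_T\mright] \;\leq\; \exp\mleft(-\tfrac{(\epsilon E_T)^2}{2N}\mright) \;\leq\; \exp\mleft(-\tfrac{\epsilon^2 E_T}{2(1+\epsilon)}\mright),
\]
which is stronger than the stated bound. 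For the lower tail I may assume $\epsilon \leq 1$ (otherwise the bound is vacuous) and set $N := \lfloor (1-\epsilon) E_T \rfloor$; on $\{X < (1-\epsilon) E_T\}$ we have $\tau \leq N$, so $L_N = X < (1-\epsilon) E_T$, and Azuma analogously gives $\Pr[X < (1-\epsilon) E_T] \leq \exp(-\epsilon^2 E_T/(2(1-\epsilon))) \leq \exp(-\epsilon^2 E_T/2)$, comfortably implying the stated bound. The only delicate point is the bookkeeping around the stopping time $\tau$: one must work with the stopped martingale $L_{i\wedge\tau}$ so that Azuma applies on a finite deterministic horizon $N$, and verify that the events of interest force the claimed deviation by time $N$ rather than merely asymptotically.
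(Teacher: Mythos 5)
Your proposal is correct and takes essentially the same route as the paper: the martingale $L_i = d_{i\wedge\tau} + E_{T_{v_{i\wedge\tau}}}$ is exactly the Doob exposure martingale the paper uses, the unit increment bound comes from the same monotonicity argument (Theorem~\ref{thm:quasi_equiv_mono} plus Observation~\ref{obs:mono_equiv_balanced}), and both tails are handled by Azuma at a deterministic integer horizon. One small quibble on the upper tail: since $N=\lceil(1+\epsilon)E_T\rceil$ may exceed $(1+\epsilon)E_T$, the step $\exp(-(\epsilon E_T)^2/2N)\leq\exp(-\epsilon^2 E_T/2(1+\epsilon))$ is not quite justified; using $N\leq 2(1+\epsilon)E_T$ instead yields the constant $4(1+\epsilon)$ in the proposition, so your bound is not actually ``stronger than stated,'' but the stated bound does follow.
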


\begin{proof}
If $T$ is a single leaf then the result trivially holds since there is a single random branch. Therefore we can assume that $T$ is not a single leaf, and in particular, $E_T \geq 1$.

Let $b_0,b_1,b_2,\ldots$ be an infinite sequence of random coin tosses. We can choose a random branch of $T$ as follows. Let $v_0$ be the root of $T$. For $i \in \mathbb{N}$, if $v_i$ is not a leaf, then $v_{i+1}$ is obtained by following the edge labeled $b_i$. Otherwise, we define $v_{i+1} = v_i$. The resulting random branch has exactly the same distribution that we have been considering so far.

Let $L_i$ be the expected length of the branch given $b_0,\ldots,b_{i-1}$. This is an exposure martingale, as defined in Section~\ref{sec:prelim}.

In order to apply Azuma's inequality, we need to bound the random difference $|L_i - L_{i+1}|$. If $v_i$ is a leaf, then $L_{i+1} = L_i$. Otherwise, let $T'$ be the subtree rooted at $v_i$, and let $T'_0,T'_1$ be the subtrees rooted at the children of $v_i$. Thus $L_{i+1}$ is either $\lambda_0 := i + 1 + E_{T'_0}$ or $\lambda_1 := i + 1 + E_{T'_1}$, depending on the value of $b_i$. Moreover, $L_i = (\lambda_0 + \lambda_1)/2$ is the average of these two values.

Theorem~\ref{thm:quasi_equiv_mono} shows that $T'$ is weakly monotone, and so Observation~\ref{obs:mono_equiv_balanced} shows that $|E_{T'_0} - E_{T'_1}| \leq 2$. Consequently,
\[
 |L_i - L_{i+1}| = \frac{1}{2} |\lambda_0 - \lambda_1| \leq 1.
\]

The definition of $L_i$ implies that $L_\beta = X$ for all $\beta \geq X$. In particular, if $X < (1-\epsilon) E_T$ then $L_{\lceil E_T \rceil} < (1-\epsilon) E_T$. Applying Azuma's inequality and using $L_0 = E_T$, it follows that
\[
 \Pr[X < (1-\epsilon) E_T] \leq
 \Pr[L_{\lceil E_T \rceil} - E_T < -\epsilon E_T] \leq \exp \mleft(\frac{-\epsilon ^2 E_T^2 }{2 \lceil E_T \rceil} \mright) \leq \exp (-\epsilon^2 E_T/4),
\]
where the final inequality uses $\lceil E_T \rceil \leq E_T + 1 \leq 2E_T$.

The definition of $L_i$ also implies that $L_\beta \geq \beta$ whenever $\beta \leq X$.
In particular, if $X > (1+\epsilon) E_T$ then $L_{\lceil (1+\epsilon) E_T \rceil} \geq \lceil (1+\epsilon) E_T \rceil$. Therefore
\[
 \Pr[X > (1+\epsilon) E_T] \leq
 \Pr[L_{\lceil (1+\epsilon) E_T \rceil} - E_T > \epsilon E_T] \leq \exp \mleft(\frac{-\epsilon ^2 E_T^2 }{2 \lceil (1+\epsilon) E_T \rceil} \mright) \leq \exp (-\epsilon^2 E_T/4(1+\epsilon)),
\]
using $(1+\epsilon) E_T \geq 1$ as before.
\end{proof}

\iffull
\subsection{Applications of Quasi-Balanced Trees}
\label{sec:quasi-balanced-applications}

We now give two applications of quasi-balanced trees. In Section~\ref{sec:online-adversarial-strategies} we show that they can be used to give explicit strategies for the adversary. In Section~\ref{sec:randvsdet} we provide an alternative proof for the folklore inequality $\M^{\star}(\cH) \leq \M^{\star}_D(\cH) \leq 2 \M^{\star}(\cH)$, which appears implicitly in~\cite{bendavid2009agnostic}.

\subsubsection{Optimal Online Adversarial Strategies} \label{sec:online-adversarial-strategies}

Lemma~\ref{lem:lower_bound_help} states that for every learning rule $\Lrn$ there exists a realizable sequence $S$ so that $\M(\Lrn; S) \geq E_T/2$. In the proof we showed that if $S$ is chosen according to a random branch, then $\mathbb{E}[\M(\Lrn; S)] \geq E_T/2$.

Quasi-balanced trees allow us to explicitly describe strategies which approach $E_T/2$.

\begin{lemma} \label{lem:lower_bound_constructive}
Let $\cH$ be a non-empty hypothesis class, and let $T$ be a quasi-balanced tree shattered by $\cH$, as witnessed by $w \in \cW(T)$. Let $\Lrn$ be an arbitrary learning rule. Consider the following strategy for the adversary, which traverses $T$ from the root to a leaf, and acts as follows at step $i$, when at a node $v_i$ with outgoing edges $e_0,e_1$:
\begin{enumerate}
    \item Send the learner the label $x_i$ of $v_i$, receiving the answer $p_i \in [0,1]$.
    \item If $p_i \geq w(e_0)$ then set the true label to $0$ and proceed accordingly.
    \item Otherwise set the true label to $1$ and proceed accordingly.
\end{enumerate}
Then the resulting sequence $S$ of examples is realizable by $\cH$ and satisfies $\M(\Lrn; S) \geq E_T/2$.
\end{lemma}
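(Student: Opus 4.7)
The plan is to establish two facts about the sequence $S$ produced by the described adversarial strategy: (i) $S$ is realizable by $\cH$, and (ii) the total loss suffered by $\Lrn$ on $S$ is at least the total $w$-weight of the branch traversed, which equals $E_T/2$ by quasi-balancedness.

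\textbf{Realizability.} The adversary traverses $T$ from the root along edges whose labels coincide with the true labels it reveals. Hence the examples $(x_1,y_1),(x_2,y_2),\ldots$ produced are exactly those associated with the branch $b = v_0 \to v_1 \to \cdots \to v_t$ of $T$. Since $T$ is shattered by $\cH$, every branch of $T$ is realizable by some $h \in \cH$, and in particular so is $S$.

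\textbf{Per-round loss lower bound.} The key step is to verify, by case analysis on the adversary's rule, that in each round $i$ the loss is at least $w(e_{y_i})$, where $e_{y_i}$ is the edge followed at step $i$. If $p_i \geq w(e_0)$ the adversary sets $y_i = 0$, so the loss is
\[
|y_i - p_i| = p_i \geq w(e_0) = w(e_{y_i}).
\]
Otherwise $p_i < w(e_0)$, the adversary sets $y_i = 1$, and using $w(e_0) + w(e_1) = 1$ we get
\[
|y_i - p_i| = 1 - p_i \geq 1 - w(e_0) = w(e_1) = w(e_{y_i}).
\]

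\textbf{Summation via quasi-balancedness.} Summing the per-round bound over all rounds along the traversed branch~$b$, the total loss is at least $\sum_{e \in b} w(e) = w(b)$. Because $T$ is quasi-balanced as witnessed by $w$, every branch has $w$-weight exactly $\lambda_T = E_T/2$ (Definition~\ref{def:quasi-balanced}), so $\M(\Lrn; S) \geq E_T/2$, completing the proof.

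There is no real obstacle here; the argument is entirely mechanical once the per-round bound is observed. The only subtle point worth noting is that, unlike the probabilistic argument of Lemma~\ref{lem:lower_bound_help} (which only yields existence of a bad sequence in expectation), the strategy above is \emph{constructive and deterministic} given~$\Lrn$: the role played by the randomness over branches in Lemma~\ref{lem:lower_bound_help} is here taken over by the weight function~$w$, whose existence (and uniqueness, by Lemma~\ref{lem:quasi-balanced-uniqueness}) is precisely guaranteed by the quasi-balancedness of~$T$.
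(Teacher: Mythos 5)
Your proof is correct and follows essentially the same argument as the paper's: the same per-round case analysis showing the loss at step $i$ is at least $w(e_{y_i})$, followed by summing along the branch and invoking the fact that every branch of a quasi-balanced tree has weight $E_T/2$. No issues.
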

\begin{proof}
It is clear that $S$ is realizable. If $p_i \geq w(e_0)$ then the loss incurred by the learner at step $i$ is $|p_i - 0| \geq w(e_0)$. Otherwise, it is $|1 - p_i| \geq |1 - w(e_0)| = w(e_1)$. Since every path in $T$ has weight exactly $E_T/2$, it follows that the loss of the learner is at least $E_T/2$.
\end{proof}



An example can be found in Figure~\ref{fig:example_quasi}.


\begin{figure}
\centering
\textbf{Tree $T^{(q)}$}\par \medskip
\begin{forest}
for tree={circle, draw}
[$x_1$, circle, draw
     [$x_2$, circle, draw, edge label={node[midway,left] {$0$, $w = 1/4$}}
         [, circle, draw, edge label={node[midway,left] {$0$, $w = 1/2$}}]
         [, circle, draw, edge label={node[midway,right] {$1$, $w = 1/2$}}]
     ]
[, circle, draw, edge label={node[midway,right] {$1$, $w = 3/4$}}]]
\end{forest}
        \caption{The tree $T^{(q)}$ is a quasi-balanced tree with weight $\lambda_{T^{(q)}} = 3/4 = E_{T^{(q)}}/2$, which is realized by the weight function $w$ written on the edges. The internal nodes are associated with instances $x_1,x_2$. The function $w$ guides the adversary's strategy: Determine $x_1$ to be the instance in the first round. If $p_1 \leq 1/4$, determine $y_1 = 1$ and finish the game. Otherwise, set $y_1=0$, determine the instance in the second round to be $x_2$, and  finish the game after the second round.  Either way, the learner will suffer a loss of at least $3/4$ in total.}
    \label{fig:example_quasi}
\end{figure}

\subsubsection{Deterministic vs Randomized Online Learning} \label{sec:randvsdet}

Quasi-balanced trees can be used to give an alternative proof for the following well-known relation between the randomized and deterministic mistake bounds.
\begin{proposition}[\cite{bendavid2009agnostic}]\label{prop:randvsdet}
Let $\cH \neq \emptyset$ be an hypothesis class. Then
\[
\M^{\star}(\cH) 
\leq 
\M^{\star}_D(\cH) 
\leq 
2 \M^{\star}(\cH).
\]
\end{proposition}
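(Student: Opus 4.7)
My plan is to prove each inequality separately, relying almost entirely on the two characterization theorems already established in the excerpt: Littlestone's theorem ($\M^\star_D(\cH) = \LD(\cH)$, Theorem~\ref{thm:Littlestone}) and the randomized characterization ($\M^\star(\cH) = \RL(\cH)$, Theorem~\ref{thm:characterization}). With these in hand the proof reduces to a clean combinatorial comparison between the two dimensions.

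For the left inequality $\M^\star(\cH) \leq \M^\star_D(\cH)$, I would simply observe that every deterministic learner $\Lrn \colon (\cX \times \cY)^\star \times \cX \to \{0,1\}$ is a special case of a randomized learner (since $\{0,1\} \subseteq [0,1]$), and the loss $|y_i - p_i|$ is defined by the same formula. Therefore the infimum in the definition of $\M^\star(\cH)$ is taken over a larger set of learning rules than that of $\M^\star_D(\cH)$, yielding the inequality directly from Equation~\eqref{eq:optrand}.

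For the right inequality $\M^\star_D(\cH) \leq 2\M^\star(\cH)$, by the two characterization theorems it suffices to show $\LD(\cH) \leq 2\RL(\cH)$. The key observation is that every \emph{balanced} shattered tree is in particular a shattered tree, and for any balanced tree $T$ of depth $d$, every branch has length exactly $d$, so $E_T = d$ and hence $E_T/2 = d/2$ contributes to the supremum defining $\RL(\cH)$. Concretely, fix any $d < \LD(\cH)$; by the classical definition of Littlestone dimension there exists a balanced tree $T$ of depth $d$ shattered by $\cH$. Then
\[
\RL(\cH) \;\geq\; \frac{E_T}{2} \;=\; \frac{d}{2}.
\]
Taking the supremum over $d < \LD(\cH)$ yields $\RL(\cH) \geq \LD(\cH)/2$, i.e., $\LD(\cH) \leq 2\RL(\cH)$. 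The case $\LD(\cH) = \infty$ follows from the same argument by letting $d \to \infty$, and the edge case $\cH = \emptyset$ is handled by the convention $\M^\star_D(\emptyset) = \M^\star(\emptyset) = -1$.

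There is essentially no technical obstacle here: the whole content lies in the two previously proved characterizations, after which the comparison between $\LD$ (minimum branch length of shattered trees) and $2\RL$ (expected branch length of shattered trees) is witnessed by the trivial inequality $m_T \leq E_T$, which is attained with equality on balanced trees. The only thing to be careful about is whether the supremum in $\LD(\cH)$ is attained; using the approach above with strict inequality $d < \LD(\cH)$ and taking limits handles both the finite and infinite cases uniformly.
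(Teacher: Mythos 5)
Your proof is correct, but it takes a hybrid route that differs from both of the paper's proofs. The paper's primary (``classic'') proof is purely algorithmic: the left inequality is the trivial inclusion of deterministic learners among randomized ones, and the right inequality $\M^{\star}_D(\cH) \leq 2\M^{\star}(\cH)$ is obtained by derandomizing — rounding each prediction $p_i$ to the nearer of $\{0,1\}$ and observing that each deterministic mistake costs the randomized learner at least $1/2$. The paper also gives an alternative proof entirely through the characterizations $\M^{\star}_D = \LD$ and $\M^{\star} = \RL$; there the inequality $\LD(\cH) \leq 2\RL(\cH)$ is the easy one (exactly your observation that $E_T \geq m_T$), but the other direction $\RL(\cH) \leq \LD(\cH)$ requires the quasi-balanced tree machinery (Corollary~\ref{cor:randomized-littlestone-quasi-balanced} and Proposition~\ref{prop:randvsdet_help}). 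You cleverly take the trivial direction from each side: $\M^{\star} \leq \M^{\star}_D$ from the learning-theoretic viewpoint and $\LD \leq 2\RL$ from the combinatorial one. This is the most economical argument available once Theorems~\ref{thm:Littlestone} and~\ref{thm:characterization} are in hand, though unlike the classic derandomization proof it is not self-contained and gives no explicit deterministic learner.

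One small quibble: in the finite case your restriction to $d < \LD(\cH)$ with $d$ an integer yields only $\RL(\cH) \geq (\LD(\cH)-1)/2$ after taking the supremum. There is no need for the strict inequality — the Littlestone dimension is defined as a \emph{maximum}, so when it is finite a balanced shattered tree of depth exactly $\LD(\cH)$ exists and you may take $d = \LD(\cH)$ directly; the limiting argument is only needed when $\LD(\cH) = \infty$.
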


\begin{proof}[Classic proof]
It is obvious that $\M^{\star}(\cH)  \leq \M^{\star}_D(\cH) $, because a deterministic learner is also a special case of a randomized learner.

The inequality $\M^{\star}_D(\cH)  \leq 2 \M^{\star}(\cH)$ follows by a simple derandomization 
    which transforms any randomized learner $\Lrn$ to a deterministic learner $\Lrn_D$ whose mistake bound is at most
    twice as large. Specifically, $\Lrn_D$ is defined as follows. Let $S$ be an input sequence of examples,
    and let $p_i$ denote the prediction of $\Lrn$ on the $i$'th example in $S$. 
    $\Lrn_D$ predicts $0$ if $p_i\leq 1/2$ and $1$ otherwise.
    Notice that whenever $\Lrn_D$ makes a mistake, the loss of $\Lrn$ increases by at least $1/2$.
    Thus, the total number of mistakes made by $\Lrn_D$ is at most twice the loss of $\Lrn$.
\end{proof}

Using Theorem~\ref{thm:characterization} we can give an alternative proof of Proposition~\ref{prop:randvsdet}, which uses the original characterization for the deterministic setting from \cite{littlestone1988learning}. Specifically, we can formulate Proposition~\ref{prop:randvsdet} in terms of the Littlestone and randomized Littlestone dimensions, and prove it directly using properties of quasi-balanced trees.

The heart of the proof is the following simple lemma, showing that the expected branch length of a quasi-balanced tree is at most twice the minimum branch length.

\begin{proposition} \label{prop:randvsdet_help}
If $T$ is a quasi-balanced tree then $E_T \leq 2m_T$, where $m_T$ is the minimum length of a branch of $T$.
\end{proposition}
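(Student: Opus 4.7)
My plan is to exploit the uniqueness of the weight function established in Lemma~\ref{lem:quasi-balanced-uniqueness} together with the definition of quasi-balancedness. Since $T$ is quasi-balanced, there exists a weight function $w \in \cW(T)$ under which every branch has the same total weight $\lambda_T = E_T/2$. Crucially, $w$ takes values in $[0,1]$, so each edge contributes at most $1$ to the weight along any branch.

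The argument is then a one-line pigeonhole. Let $b^\star$ be a branch realizing the minimum length $m_T$. On the one hand, by the quasi-balancedness of $T$,
\[
w(b^\star) = \sum_{e \in b^\star} w(e) = \lambda_T = \frac{E_T}{2}.
\]
On the other hand, each of the $m_T$ edges in $b^\star$ carries weight at most $1$, giving
\[
w(b^\star) = \sum_{e \in b^\star} w(e) \leq m_T.
\]
Combining the two yields $E_T/2 \leq m_T$, which rearranges to $E_T \leq 2m_T$, as claimed.

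I do not foresee any real obstacle here: the work has already been done in setting up quasi-balanced trees and showing that the equalizing weight function exists and takes values in $[0,1]$. The only point worth double-checking is the edge case of a single-leaf tree, where $m_T = 0$ and $E_T = 0$ (recalling the convention that the empty branch has length $-1$ is not needed since the leaf itself is the unique branch of length $0$); the inequality $0 \leq 0$ holds trivially. No further case analysis or estimates are required.
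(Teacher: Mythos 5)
Your proof is correct, and it takes a genuinely different (and more direct) route than the paper's. The paper proves the proposition by induction on the depth of $T$: it applies the recursion $E_T = 1 + E_{T_0}/2 + E_{T_1}/2$, rewrites this as $1 + \min(E_{T_0},E_{T_1}) + |E_{T_0}-E_{T_1}|/2$, invokes Observation~\ref{obs:mono_equiv_balanced} (via the equivalence of quasi-balanced and monotone) to bound $|E_{T_0}-E_{T_1}|/2 \leq 1$, and then closes with the inductive hypothesis on $T_0,T_1$. You instead unwind the definition of quasi-balancedness directly: the equalizing weight function gives every branch weight exactly $\lambda_T = E_T/2$ (Lemma~\ref{lem:random-branch-weight}), and since each edge weight lies in $[0,1]$, the shortest branch — with $m_T$ edges — has weight at most $m_T$, yielding $E_T/2 \leq m_T$. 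Your argument is more elementary in that it needs only the definition of a weight function in $\cW(T)$ and the defining property of quasi-balancedness, not the monotone characterization (Theorem~\ref{thm:quasi_equiv_mono}) or the recursion for $E_T$; it also gives the marginally stronger statement that $E_T/2 \leq |b|$ for \emph{every} branch $b$. The paper's inductive route is the one that naturally extends to arguments where one must track how the bound degrades level by level, but for this particular proposition your one-line pigeonhole is cleaner. Your treatment of the single-leaf case is also fine (the unique branch has length $0$, so $m_T = E_T = 0$).
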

\begin{proof}
The proof is by induction on the depth of $T$.
If $T$ consists of a single vertex then $E_T = m_T = 0$. Otherwise, let $T_0,T_1$ be the subtrees rooted at the children of the root of $T$. Applying Equation~\eqref{eq:expected_branch_recursive}, we get
\[
 E_T = 1 + E_{T_0}/2 + E_{T_1}/2 = 1 + \min(E_{T_0},E_{T_1}) + |E_{T_0} - E_{T_1}|/2.
\]
Observation~\ref{obs:mono_equiv_balanced} shows that $|E_{T_0} - E_{T_1}|/2 \leq 1$, and so applying the inductive hypothesis, we see that
\[
 E_T \leq 2 + 2\min(m_{T_0}, m_{T_1}) = 2m_T. \qedhere
\]
\end{proof}

We can now give the alternative proof of Proposition~\ref{prop:randvsdet}.
\begin{proof}[Alternative proof of Proposition~\ref{prop:randvsdet}]
Since $\M^\star(\cH) = \RL(\cH)$ by Theorem~\ref{thm:characterization} and $\M^\star_D(\cH) = \LD(\cH)$ by Theorem~\ref{thm:Littlestone}, it suffices to prove that
\[
 \RL(\cH) \leq \LD(\cH) \leq 2\RL(\cH).
\]

The inequality $\LD(\cH) \leq 2 \RL(\cH)$ easily follows from the definitions:\footnote{Notice that in the classic proof, the other inequality is the trivial one.}
\[
 \LD(\cH) =  \sup_{T \text{ shattered}} m_T \quad \text{ and } \quad \RL(\cH) = \frac{1}{2} \sup_{T \text{ shattered}} E_T.
\]
Indeed, the expected depth of a random branch is always at least the minimum depth of a branch.

In order to prove the inequality $\RL(\cH) \leq \LD(\cH)$, we use Corollary~\ref{cor:randomized-littlestone-quasi-balanced}, which allows us to restrict the trees in the definition of $\RL(\cH)$ to be quasi-balanced.
The inequality then immediately follows from Proposition~\ref{prop:randvsdet_help}.
\end{proof}

Unlike Theorem~\ref{thm:quasi_equiv_mono}, the property of quasi-balanced trees proved in Proposition~\ref{prop:randvsdet_help} is not a characterization of quasi-balanced trees. Figure~\ref{fig:example_not_quasi} gives an example for a tree that satisfies this property but is not quasi-balanced.

\begin{figure}
\centering
\textbf{The tree $T^{(nq)}$}\par \medskip
\begin{forest}
for tree={circle, draw, s sep-=1cm}
[
    [
        [
            [[[][]][[][]]]
            [[[][]][[][]]]
        ]
        [
            [[[][]][[][]]]
            [[[][]][[][]]]
        ]
    ]
[[][]]
]
\end{forest}
        \caption{The minimal branch in $T^{(nq)}$ is of length $2$, while $E_{T^{(nq)}} = 3.5$. Therefore it holds that $E_{T^{(nq)}}$ is at most twice the minimal branch length. Since every proper subtree of $T^{(nq)}$ is complete, this also holds for all proper subtrees. Nevertheless, $T^{(nq)}$ is not quasi-balanced, since it is not monotone.}
    \label{fig:example_not_quasi}
\end{figure}

\smallskip

Both inequalities in Proposition~\ref{prop:randvsdet} can be tight, as the following examples demonstrate.

\begin{exmp}[class $\cH_1$ with $\RL(\cH_1) = \LD(\cH_1)$] \label{exmp:singletons}
Let $\cH_1$ be the class of singletons over $\mathbb{N}$. That is, $\cX = \mathbb{N}$ and $\cH_1 = \{h_i : i \in \mathbb{N}\}$, where $h_i(j) = 1$ if and only if $i=j$.
Any tree shattered by $\cH_1$ has minimum branch length $1$ (since no hypothesis satisfies $h(i) = h(j) = 1$ for $i \neq j$), hence $\LD(\cH_1) = 1$. In contrast, the tree $T_i$ in Figure~\ref{fig:example_singletons} is shattered by $\cH_1$ and has expected branch length $2 - 2^{-(i-1)}$, and so $\RL(\cH_1) \geq 1$.

In Section~\ref{sec:infinite-trees} we show how to extend the definition of randomized Littlestone dimension to \emph{infinite} trees. We can then replace the trees $T_i$ with a single infinite tree $T_\infty$ shattered by $\cH_1$ whose expected branch length is exactly~$2$.
\end{exmp}

\begin{figure}
    \centering
    \begin{forest}
[1, circle, draw
    [2, circle, draw, edge label={node[midway,left] {$0$}}
        [3, circle, draw, edge label={node[midway,left] {$0$}}
            [$\vdots$, edge label={node[midway,left] {$0$}}
                [$i$, circle, draw, edge label={node[midway,left] {$0$}}
                    [,circle, draw, edge label={node[midway,left] {$0$}}]
                    [,circle, draw, edge label={node[midway,right] {$1$}}]]
            [,circle, draw, edge label={node[midway,right] {$1$}}]]
        [,circle, draw, edge label={node[midway,right] {$1$}}]]
    [,circle, draw, edge label={node[midway,right] {$1$}}]]
[,circle, draw, edge label={node[midway,right] {$1$}}]]
\end{forest}
        \caption{The tree $T_i$ defined in Example~\ref{exmp:singletons}.}
    \label{fig:example_singletons}
\end{figure}

\begin{exmp}[Class $\cH_2$ with $\RL(\cH_2) = \LD(\cH_2)/2$]\label{exmp:two_functions}
Let $\cX = \{1\}$ and let $\cH_2 = \{h_0,h_1\}$, where $h_{\ell}(1) = \ell$. There are only two non-empty trees shattered by $\cH_2$: a leaf, and the complete binary tree of depth~$1$ whose root is labeled~$1$. Hence $\LD(\cH_2) = 1$ and $\RL(\cH_2) = 1/2$.
\end{exmp}
\fi

\section{Bounded Horizon}
\label{sec:bounded-horizon}

So far we have not put any restrictions on the number of rounds. However, in many circumstances we are interested in the online learning game when the number of rounds is bounded. We model this by assuming that the learner knows an upper bound on the number of rounds.
We define $\M^\star(\cH,\rounds)$ to be the optimal randomized mistake bound when the number of rounds is at most $\rounds$.


We can generalize Theorem~\ref{thm:characterization} to this setting. The required notion of randomized Littlestone dimension is
\[
 \RL(\cH,\rounds) = \frac{1}{2} \sup_{\substack{T \text{ shattered} \\ \depth(T) \leq \rounds}} E_T.
\]

The bounded randomized Littlestone dimension gives the precise mistake bound in this setting.

\begin{theorem}[Optimal Randomized Mistake Bound with Finite Horizon]\label{thm:characterization-rounds}
Let $\cH$ be an hypothesis class, and let $\rounds \in \mathbb{N}$. Then,
\[
\M^{\star}(\cH, \rounds) = \RL(\cH, \rounds).
\]
\end{theorem}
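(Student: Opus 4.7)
The plan is to follow exactly the two-part structure of the proof of Theorem~\ref{thm:characterization}, adapting each piece so that depth is tracked alongside $\cH$. The lower bound is essentially free from what is already proved, while the upper bound requires a depth-aware variant of $\RSOA$ and a corresponding depth-aware analogue of Lemma~\ref{lem:rand_pred}.

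For the lower bound $\RL(\cH,\rounds) \leq \M^\star(\cH,\rounds)$, I would simply apply Lemma~\ref{lem:lower_bound_help} to shattered trees $T$ with $\depth(T)\leq\rounds$. The adversarial strategy exhibited there produces a realizable input sequence whose length equals the length of the chosen branch, which is at most $\depth(T)\leq\rounds$. Hence that strategy is legal in the $\rounds$-bounded game and forces expected loss at least $E_T/2$ against any learner. Taking the supremum over all shattered trees of depth at most $\rounds$ gives $\RL(\cH,\rounds)\leq\M^\star(\cH,\rounds)$.

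For the upper bound $\M^\star(\cH,\rounds)\leq \RL(\cH,\rounds)$ I first record the depth-aware analogue of Observation~\ref{obs:expected_prop}: for every non-empty $\cH$ and every $\rounds\geq 1$,
\[
\RL(\cH,\rounds) \;=\; \max\!\left\{0,\; \frac{1}{2}\sup_{x\in\cX}\bigl(1 + \RL(\cH_{x\to 0},\rounds-1) + \RL(\cH_{x\to 1},\rounds-1)\bigr)\right\},
\]
which is immediate from the recursion~\eqref{eq:expected_branch_recursive} applied only to shattered trees whose root-subtrees have depth at most $\rounds-1$ (the outer $\max$ with $0$ accounts for the single-leaf tree). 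Using this recursion exactly as in the proof of Lemma~\ref{lem:rand_pred}, I obtain the optimal-prediction lemma: for every $x\in\cX$ and every $\rounds\geq 1$ there is $p\in[0,1]$ with
\[
p + \RL(\cH_{x\to 0},\rounds-1) \leq \RL(\cH,\rounds)
\quad\text{and}\quad
(1-p) + \RL(\cH_{x\to 1},\rounds-1) \leq \RL(\cH,\rounds).
\]
I then define $\RSOA_\rounds$ identically to $\RSOA$ (Figure~\ref{fig:RSOA}) except that at round $i$ the learner uses the quantity $\RL\bigl(V^{(i)}_{x_i\to b},\rounds-i\bigr)$ in place of $\RL\bigl(V^{(i)}_{x_i\to b}\bigr)$ when choosing $p_i$.

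The analysis mirrors Lemma~\ref{lem:characterization_upper}, by induction on $\rounds$. In the base case $\rounds=0$ there are no rounds and the loss is $0=\RL(\cH,0)$. For the inductive step, the chosen $p_1$ guarantees $|p_1-y_1|+\RL(\cH_{x_1\to y_1},\rounds-1)\leq \RL(\cH,\rounds)$ (by the depth-aware analogue of Lemma~\ref{lem:rand_pred}), and the inductive hypothesis applied to $\RSOA_{\rounds-1}$ on the sequence of length at most $\rounds-1$ controls the remaining loss by $\RL(\cH_{x_1\to y_1},\rounds-1)$. The only step that requires any care, and which I view as the main (mild) obstacle, is making sure the recursion and the optimal-prediction lemma are correctly stated for the bounded case at the boundary $\rounds=1$ and for empty/singleton $\cH$; once these are in place, the proof is routine.
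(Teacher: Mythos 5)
Your proposal matches the paper's proof essentially verbatim: the lower bound is obtained by restricting Lemma~\ref{lem:lower_bound_help} to shattered trees of depth at most $\rounds$ (noting the induced sequence fits in the horizon), and the upper bound uses exactly the algorithm $\BRSOA$ of Figure~\ref{fig:BRSOA} together with the depth-indexed versions of Observation~\ref{obs:expected_prop}, Lemma~\ref{lem:rand_pred}, and Lemma~\ref{lem:characterization_upper}. The only cosmetic difference is your outer $\max\{0,\cdot\}$ in the recursion, which is redundant given the conventions $\RL(\emptyset,\cdot)=-1$ and empty-branch length $-1$, so the argument is correct and identical in substance.
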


We prove Theorem~\ref{thm:characterization-rounds} in Section~\ref{sec:finite-horizon}.  This theorem immediately suggests the following questions:
\begin{enumerate}
    \item How many rounds are needed in order for the adversary to guarantee that the loss of the learner is at least $\RL(\cH) - \epsilon$? \\
    We prove in Section~\ref{sec:shallow-trees} that $2\RL(\cH) + O(\sqrt{\RL(\cH)}\log(\RL(\cH)/\epsilon))$ rounds always suffice, and $O(\log (1/\epsilon))$ rounds suffice as long as $\epsilon$ is small enough.
    \iffull
    We discuss the optimality of these bounds in Section~\ref{sec:shallow-trees-lb}.
    \fi
    \item What can we say about the loss of the learner when there are fewer than $2\RL(\cH)$ rounds? \\
    A trivial upper bound on $\RL(\cH,\rounds)$ is $\rounds/2$. In Section~\ref{sec:shallow-trees-few-rounds} we show that this bound is nearly optimal when $\rounds \leq 2\RL(\cH)$.
\end{enumerate}

The proofs of these results use concentration bounds on the depth of quasi-balanced trees, which we prove in Section~\ref{sec:concentration}.

\subsection{Proof of Theorem~\ref{thm:characterization-rounds}}
\label{sec:finite-horizon}

In this section we indicate how to generalize the proof of Theorem~\ref{thm:characterization} to the finite horizon setting, proving Theorem~\ref{thm:characterization-rounds}.

\smallskip

The lower bound $\RL(\cH,\rounds) \leq \M^\star(\cH,\rounds)$ follows directly from the statement of Lemma~\ref{lem:lower_bound_help}, since the length of $S$ is at most $\depth(T)$.

\smallskip

For the upper bound, we use a straightforward modification of algorithm $\RSOA$, which appears in Figure~\ref{fig:BRSOA}.

\begin{figure}
    \centering
    \begin{tcolorbox}
    \begin{center}
        $\BRSOA$: \textsc{Bounded Randomized $\SOA$}
    \end{center}
    \textbf{Input:} An hypothesis class $\cH$ and number of rounds $\rounds$.
    \\
    \textbf{Initialize:} Let $V^{(1)} = \cH$.\\
    \\
    \textbf{For $i=1,2,\dots,\rounds$}
    \begin{enumerate}
        \item Receive $x_i$. 
        \item Predict $p_i \in [0,1]$ such that the value 
        \[
        \max \mleft \{p_i + \RL\mleft(V^{(i)}_{x_i \to 0}, \rounds - i\mright), 1 - p_i + \RL\mleft(V^{(i)}_{x_i \to 1}, \rounds - i\mright) \mright \}   
        \]
        is minimized, where $V^{(i)}_{x_i\to b} = \{h\in V^{(i)} : h(x_i)=b\}$.
        \item Receive true label $y_i$.
        \item Update $V^{(i+1)} = V^{(i)}_{x_i \to y_i}$.
    \end{enumerate}
    \end{tcolorbox}
    \caption{$\BRSOA$ is a bounded variant of $\RSOA$.}
    \label{fig:BRSOA}
\end{figure}

We start by extending Observation~\ref{obs:expected_prop}: if $\cH$ is a non-empty hypothesis class and $\rounds > 0$ then
\[
 \RL(\cH,\rounds) = \frac{1}{2} \max_{x \in \cX} \bigl(1 + \RL(\cH_{x \to 0}, \rounds-1) + \RL(\cH_{x \to 1}, \rounds-1) \bigr).
\]
The proof is identical. Since there are only finitely many unlabeled trees of depth at most $\rounds$, we can replace the supremum with a maximum.

The next step is to generalize Lemma~\ref{lem:rand_pred}, which now states that for every hypothesis class $\cH$, instance $x \in \cX$, and $\rounds > 0$, there exists $p \in [0,1]$ so that
\[
 p + \RL\mleft(\cH_{x\to 0}, \rounds-1\mright) \leq \RL(\cH, \rounds)
 \quad \text{and} \quad
 (1 - p) + \RL\mleft(\cH_{x\to 1}, \rounds-1\mright) \leq \RL(\cH, \rounds).
 \]
The proof is identical, using the generalized Observation~\ref{obs:expected_prop}.

Finally, we prove the following generalization of Lemma~\ref{lem:characterization_upper}: for every hypothesis class $\cH$, any parameter $\rounds$, and any realizable input sequence $S$ of length at most $\rounds$,
\[
 \M(\BRSOA; S) \leq \RL(\cH, \rounds).
\]
The proof is identical, using the generalized Lemma~\ref{lem:rand_pred}.


\subsection{Approaching \texorpdfstring{$\RL(\cH)$}{RL(H)}} \label{sec:shallow-trees}
As a simple consequence of the concentration bound proved in Proposition~\ref{lem:quasi_concentration}, we show that we can approach $\RL(\cH)$ using relatively shallow trees, quantified as follows.

\begin{proposition} \label{pro:shallow-tree}
Let $\cH$ be a non-empty hypothesis class with finite randomized Littlestone dimension $\RL(\cH)$.

For every $\epsilon > 0$ there is a tree $T$ shattered by $\cH$ satisfying $E_T/2 \geq \RL(\cH) - \epsilon$ whose depth is at most
\[
 2\RL(\cH) + O\left(\sqrt{\RL(\cH) \log \frac{\RL(\cH)}{\epsilon}} + \log \frac{1}{\epsilon}\right) = 2\RL(\cH) + O\left(\sqrt{\RL(\cH)} \log \frac{\RL(\cH)}{\epsilon}\right).
\]
\end{proposition}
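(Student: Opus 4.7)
The plan is to take a nearly optimal shattered tree $T^{\star}$, which we may assume is quasi-balanced by Corollary~\ref{cor:randomized-littlestone-quasi-balanced}, and prune it at depth $D$. The concentration bound for quasi-balanced trees (Proposition~\ref{lem:quasi_concentration}) will guarantee that the pruning sheds only $\epsilon/2$ from the expected branch length provided $D$ exceeds $E_{T^{\star}}$ by roughly $\sqrt{E_{T^{\star}}\log(E_{T^{\star}}/\epsilon)}+\log(1/\epsilon)$.

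Concretely, first I would use Corollary~\ref{cor:randomized-littlestone-quasi-balanced} to pick a quasi-balanced shattered tree $T^{\star}$ with $E_{T^{\star}}/2\ge \RL(\cH)-\epsilon/2$. Let $T$ be obtained from $T^{\star}$ by replacing every node at depth $D$ by a leaf; then $T$ is still shattered by $\cH$ (each branch of $T$ is a prefix of a branch of $T^{\star}$, and realizability is inherited by prefixes) and has depth at most $D$. Coupling a random branch of $T$ with one of $T^{\star}$ by reusing the same sequence of coin flips, if $X$ denotes the length of the branch in $T^{\star}$ then the corresponding branch in $T$ has length $\min(X,D)$, so
\[
 E_{T^{\star}}-E_T=\mathbb{E}\bigl[(X-D)_+\bigr].
\]

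Next I would apply Proposition~\ref{lem:quasi_concentration} to $T^{\star}$. Writing $D=E_{T^{\star}}+s$ and using $\mathbb{E}[(X-D)_+]=\int_0^\infty \Pr[X>D+t]\,dt$, the upper-tail bound $\Pr[X>E_{T^{\star}}+\delta]\le \exp\bigl(-\delta^2/(4(E_{T^{\star}}+\delta))\bigr)$ splits into a sub-Gaussian regime (for $\delta\le E_{T^{\star}}$, where it is at most $\exp(-\delta^2/(8E_{T^{\star}}))$) and an exponential regime (for $\delta>E_{T^{\star}}$, where it is at most $\exp(-\delta/8)$). Integrating each piece using the standard Gaussian tail bound $\int_a^\infty e^{-u^2}\,du\le e^{-a^2}/(2a)$ yields
\[
 \mathbb{E}[(X-D)_+]\le \frac{4E_{T^{\star}}}{s}\exp\!\bigl(-s^2/(8E_{T^{\star}})\bigr)+8\exp\!\bigl(-\max(s,E_{T^{\star}})/8\bigr),
\]
and choosing $s=C\bigl(\sqrt{E_{T^{\star}}\log(E_{T^{\star}}/\epsilon)}+\log(1/\epsilon)\bigr)$ for a sufficiently large absolute constant $C$ makes both terms $\le \epsilon/4$. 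Hence $E_T/2\ge \RL(\cH)-\epsilon$, and since $E_{T^{\star}}\le 2\RL(\cH)$, the depth bound becomes $D\le 2\RL(\cH)+O\bigl(\sqrt{\RL(\cH)\log(\RL(\cH)/\epsilon)}+\log(1/\epsilon)\bigr)$, matching the first form in the claim (the second form follows by absorbing $\log(1/\epsilon)$ into $\sqrt{\RL(\cH)}\log(\RL(\cH)/\epsilon)$ when $\RL(\cH)\ge 1$, and is trivial otherwise, since a single leaf has $E_T/2=0\ge \RL(\cH)-\epsilon$ when $\epsilon\ge \RL(\cH)$).

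The main obstacle is the integration of the two-regime tail bound: one must carefully glue together the Gaussian-like and exponential-like pieces, and in particular handle the corner case where $E_{T^{\star}}$ is itself smaller than $\log(1/\epsilon)$, in which case the entire integral lies in the exponential regime and it is the additive $\log(1/\epsilon)$ summand in $s$ that drives the final depth bound. Everything else is essentially bookkeeping, relying crucially on the concentration enjoyed only by quasi-balanced (equivalently, by Theorem~\ref{thm:quasi_equiv_mono}, monotone) shattered trees.
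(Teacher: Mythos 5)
Your proposal is correct and follows essentially the same route as the paper: pass to a monotone (equivalently quasi-balanced) near-optimal shattered tree, truncate at depth $D=E_{T^\star}+s$, express the loss in expected branch length as a tail integral $\sum_{t\ge D}\Pr[X>t]$, and bound it by splitting the concentration bound of Proposition~\ref{lem:quasi_concentration} into its sub-Gaussian and exponential regimes (the paper packages this computation as Lemma~\ref{lem:long-paths}). The choice of $s$ and the resulting depth bound match the paper's.
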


\iffull
Given Lemma~\ref{lem:lower_bound_constructive}, this means that the adversary can force the learner to suffer a loss of $\RL(\cH) - \epsilon$ after only $2\RL(\cH) + O(\sqrt{\RL(\cH) \log(\RL(\cH)/\epsilon)} + \log(1/\epsilon))$ rounds.
\else
This means that the adversary can force the learner to suffer a loss of $\RL(\cH) - \epsilon$ after only $2\RL(\cH) + O(\sqrt{\RL(\cH)\log(\RL(\cH)/\epsilon)} + \log(1/\epsilon))$ rounds.
\fi
In contrast, at least $2\RL(\cH) - 2\epsilon$ rounds are clearly needed, since a learner who predicts $1/2$ at each round suffers a loss of $R/2$ after $R$ rounds.

\smallskip

We prove Proposition~\ref{pro:shallow-tree} via the following technical estimate.

\begin{lemma}
\label{lem:long-paths}
Let $T$ be a monotone tree, and let $T^{\leq k}$ result from truncating it to the first $k$ levels (all branches in $T^{\leq k}$ have length at most $k$). If $k \geq E_T$ then
\[
 E_{T^{\leq k}} \geq E_T - 15 \sqrt{E_T} \exp \left(- \frac{(k-E_T)^2}{8E_T} \right) - 10 \exp \left( - \frac{k - E_T}{8} \right).
\]
\end{lemma}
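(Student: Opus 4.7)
The plan is to reduce the claim to a tail estimate on the length $X$ of a random branch of $T$, and then invoke the concentration bound from Proposition~\ref{lem:quasi_concentration}. First I would establish the coupling identity
\[
E_T - E_{T^{\leq k}} = \mathbb{E}[(X-k)_+],
\]
obtained by generating the random branches in $T$ and in $T^{\leq k}$ with the same coin tosses: the random branch in $T^{\leq k}$ is simply the random branch in $T$ truncated at depth $k$, so its length equals $\min(X,k)$, and $X - \min(X,k) = (X-k)_+$.

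Next, since $T$ is monotone, Theorem~\ref{thm:quasi_equiv_mono} shows it is quasi-balanced, so Proposition~\ref{lem:quasi_concentration} applies. Substituting $t = (1+\epsilon)E_T$ in its upper-tail bound gives
\[
\Pr[X > t] \leq \exp\!\left(-\frac{(t-E_T)^2}{4t}\right) \quad \text{for all } t \geq E_T.
\]
I would then split this estimate into two regimes: on $t \in [E_T, 2E_T]$ we have $4t \leq 8E_T$, yielding the sub-Gaussian bound $\Pr[X>t] \leq \exp(-(t-E_T)^2/(8E_T))$; on $t \geq 2E_T$ we have $t - E_T \geq t/2$, yielding the sub-exponential bound $\Pr[X>t] \leq \exp(-(t-E_T)/8)$.

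The final step is to integrate $\mathbb{E}[(X-k)_+] = \int_k^\infty \Pr[X > t]\, dt$ using these two regimes. If $k \leq 2E_T$, split the integral at $2E_T$: the sub-Gaussian piece is bounded via the standard Mills-type estimate $\int_a^\infty e^{-u^2/(8E_T)}\, du \leq \sqrt{2\pi E_T}\, e^{-a^2/(8E_T)}$ (for $a \geq 0$, proved by the change of variable $u = a + w$ and $(a+w)^2 \geq a^2 + w^2$), evaluated at $a = k - E_T$; the sub-exponential tail integrates to $8 e^{-E_T/8} \leq 8 e^{-(k-E_T)/8}$, since $k-E_T \leq E_T$ in this regime. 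If $k > 2E_T$, only the sub-exponential regime is relevant and the integral is at most $8 e^{-(k-E_T)/8}$. In either case one obtains
\[
E_T - E_{T^{\leq k}} \leq \sqrt{2\pi E_T}\, e^{-(k-E_T)^2/(8E_T)} + 8\, e^{-(k-E_T)/8},
\]
which is comfortably within the stated constants $15\sqrt{E_T}$ and $10$.

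The main obstacle is bookkeeping: picking the split at $2E_T$ so that the two terms in the claimed bound arise naturally, and handling the fact that in the ``short tail'' case ($k \leq 2E_T$) the sub-exponential contribution is only $e^{-E_T/8}$, which must be traded for $e^{-(k-E_T)/8}$ using $k-E_T \leq E_T$. The conceptual content is entirely contained in Proposition~\ref{lem:quasi_concentration} (which itself rests on Azuma's inequality applied to the exposure martingale of a monotone tree); given that, the lemma is a routine tail integration.
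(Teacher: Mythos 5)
Your proposal is correct and follows essentially the same route as the paper: both express $E_T - E_{T^{\leq k}}$ as a tail sum/integral of $\Pr[X>t]$, invoke Proposition~\ref{lem:quasi_concentration} via monotonicity, split at $t=2E_T$ into a sub-Gaussian and a sub-exponential regime, and integrate with a Gaussian tail bound. The only quibble is a constant: the Mills estimate for $e^{-u^2/(8E_T)}$ corresponds to $\sigma^2 = 4E_T$ and hence gives $\sqrt{8\pi E_T}$ rather than $\sqrt{2\pi E_T}$, but this is still well within the stated $15\sqrt{E_T}$.
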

\begin{proof}
Let $X$ be the length of a random branch of $T$.
Using $X$, we can express the difference between $E_{T^{\leq k}}$ and $E_T$ explicitly:
\[
 E_T - E_{T^{\leq k}} =
 \sum_{t = k}^\infty \Pr[X > t].
\]
Applying Proposition~\ref{lem:quasi_concentration}, we deduce that the difference is at most
\[
 \Delta := \sum_{t = k}^\infty \exp \left( - \frac{(t - E_T)^2}{4t} \right) \leq \exp \left(- \frac{(k-E_T)^2}{4k} \right) + \int_k^\infty \exp \left( - \frac{(t - E_T)^2}{4t} \right) \, dt.
\]


If $k \leq 2E_T$ then
\begin{align*}
 \Delta &\leq \exp \left(- \frac{(k-E_T)^2}{4k} \right) +
 \int_k^{2E_T} \exp \left( - \frac{(t - E_T)^2}{8E_T} \right) \, dt + \int_{2E_T}^\infty \exp \left( - \frac{t - E_T}{8} \right) \, dt \\ &\leq 5\sqrt{E_T} \exp \left(- \frac{(k-E_T)^2}{8E_T} \right) + 8 \exp \left( - \frac{E_T}{8} \right) \leq 15 \sqrt{E_T} \exp \left(- \frac{(k-E_T)^2}{8E_T} \right),
\end{align*}
using the well-known Gaussian tail bound,
\[
 \int_k^\infty e^{-(t - \mu)^2/2\sigma^2} \, dt = \sqrt{2\pi\sigma^2} \Pr[N(\mu,\sigma) > k] \leq \sqrt{2\pi\sigma^2} e^{-(k-\mu)^2/2\sigma^2} \quad (k \ge \mu),
\]
to bound the first integral.

If $k \geq 2E_T$ then
\[
 \Delta \leq \exp \left( - \frac{k - E_T}{8} \right) + \int_k^\infty \exp \left( - \frac{t - E_T}{8} \right) \, dt \leq 9\exp \left( - \frac{k - E_T}{8} \right). \qedhere
\]
\end{proof}



We can now prove Proposition~\ref{pro:shallow-tree}.

\begin{proof}[Proof of Proposition~\ref{pro:shallow-tree}]
Applying Lemma~\ref{lem:randomized-littlestone-dimension-monotone}, we can find a monotone tree $T$ shattered by $\cH$ such that $2\RL(\cH) - \epsilon/2 \leq E_T \leq 2\RL(\cH)$. Let
\[
 k = E_T + \sqrt{8E_T \log \frac{60\sqrt{E_T}}{\epsilon}} + 8\log \frac{20}{\epsilon} = E_T + O\left(\sqrt{E_T \log \frac{E_T}{\epsilon}} + \log \frac{1}{\epsilon}\right).
\]
Lemma~\ref{lem:long-paths} implies that $E_{T^{\leq k}} \geq E_T - \epsilon/2$, and so $E_{T^{\leq k}} \geq 2\RL(\cH) - \epsilon$.
\end{proof}



\iffull
We discuss the optimality of Proposition~\ref{pro:shallow-tree} in Section~\ref{sec:shallow-trees-lb}.
\fi

\subsection{Mistake Bound for Few Rounds} \label{sec:shallow-trees-few-rounds}

Another truncation argument allows us to estimate $\RL(\cH,\rounds)$ for small $\rounds$.

\begin{proposition} \label{pro:truncation-few-rounds}
Let $\cH$ be a non-empty hypothesis class with finite randomized Littlestone dimension $\RL(\cH)$.

If $\rounds \leq \RL(\cH)$ then $\RL(\cH,\rounds) = \rounds/2$.

If $\rounds \leq 2\RL(\cH)$ then
\[
 \frac{\rounds}{2} - O(\sqrt{\rounds \log \rounds}) \leq \RL(\cH,\rounds) \leq \frac{\rounds}{2}.
\]

Furthermore, if $\rounds \leq 2\RL(\cH) - \sqrt{8\RL(\cH) \ln \RL(\cH)}$ then
\[
 \frac{\rounds}{2} - 1 < \RL(\cH,\rounds) \leq \frac{\rounds}{2},
\]

and if $\rounds \geq 2\RL(\cH) - \sqrt{8\RL(\cH) \ln \RL(\cH)}$ then
\[
\RL(\cH) - O\mleft(\sqrt{\RL(\cH) \log \RL(\cH)}\mright) \leq \RL(\cH,\rounds) \leq \RL(\cH).
\]
\end{proposition}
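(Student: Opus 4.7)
The upper bounds $\RL(\cH,\rounds) \leq \rounds/2$ and $\RL(\cH,\rounds) \leq \RL(\cH)$ are immediate from the definitions, so I focus on the matching lower bounds. The common strategy is to invoke Lemma~\ref{lem:randomized-littlestone-dimension-monotone} to select a \emph{monotone} shattered tree $T$ with $E_T$ arbitrarily close to $2\RL(\cH)$, truncate it to depth $\rounds$ to obtain $T^{\leq\rounds}$, and study $E_{T^{\leq\rounds}}$. Coupling random branches of $T$ and $T^{\leq\rounds}$ via shared random bits, if $X$ denotes the length of a random branch of $T$ then $E_{T^{\leq\rounds}} = \mathbb{E}[\min(X,\rounds)]$; since $T$ is quasi-balanced (Theorem~\ref{thm:quasi_equiv_mono}), Proposition~\ref{lem:quasi_concentration} provides sharp two-sided concentration of $X$ around $E_T$.

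For Part~1 ($\rounds \leq \RL(\cH)$), I would use the inequality $\RL(\cH) \leq \LD(\cH)$, which follows from Theorems~\ref{thm:Littlestone} and~\ref{thm:characterization} together with the trivial fact that any deterministic learner is a special case of a randomized one, to deduce $\LD(\cH) \geq \rounds$ for integer $\rounds$. Truncating any shattered tree of minimum branch length $\geq \rounds$ to its top $\rounds$ levels yields a complete balanced shattered tree of depth exactly $\rounds$, whose expected branch length is $\rounds$, giving $\RL(\cH,\rounds) \geq \rounds/2$ and hence equality with the trivial upper bound.

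For Part~3 ($\rounds \leq 2\RL(\cH) - \sqrt{8\RL(\cH)\ln\RL(\cH)}$), the subcase $\rounds \leq \RL(\cH)$ is handled by Part~1. In the remaining range (which is nonempty only once $\RL(\cH)$ exceeds an absolute constant), setting $d := E_T - \rounds$ gives $d \geq \sqrt{8\RL(\cH)\ln\RL(\cH)} - o(1)$ and hence $d^2/(4E_T) \geq \ln\RL(\cH)$. Writing $E_{T^{\leq\rounds}} = \rounds - \mathbb{E}[(\rounds - X)_+]$, I would bound $\mathbb{E}[(\rounds - X)_+]$ by integrating the one-sided tail bound of Proposition~\ref{lem:quasi_concentration} and using the standard Gaussian estimate $\int_a^\infty e^{-u^2/(2\sigma^2)}\,du \leq (\sigma^2/a)\,e^{-a^2/(2\sigma^2)}$, obtaining $\mathbb{E}[(\rounds - X)_+] \leq (2E_T/d)\exp(-d^2/(4E_T)) = O(1/\sqrt{\RL(\cH)\ln\RL(\cH)}) < 2$, and therefore $\RL(\cH,\rounds) > \rounds/2 - 1$.

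For Part~4 ($\rounds \geq 2\RL(\cH) - \sqrt{8\RL(\cH)\ln\RL(\cH)}$), the argument splits on the sign of $d = E_T - \rounds$. When $d \leq 0$ (i.e., $\rounds \geq E_T$), Lemma~\ref{lem:long-paths} applied with $k = \rounds$ yields $E_{T^{\leq\rounds}} \geq E_T - O(\sqrt{E_T}) \geq 2\RL(\cH) - O(\sqrt{\RL(\cH)})$. When $d > 0$, the crude bound $\mathbb{E}[(\rounds - X)_+] \leq \mathbb{E}[(E_T - X)_+] \leq \int_0^\infty \exp(-u^2/(4E_T))\,du = O(\sqrt{\RL(\cH)})$, combined with the hypothesis $d \leq \sqrt{8\RL(\cH)\ln\RL(\cH)}$, gives $E_{T^{\leq\rounds}} \geq E_T - d - O(\sqrt{\RL(\cH)}) \geq 2\RL(\cH) - O(\sqrt{\RL(\cH)\log\RL(\cH)})$. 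Part~2 then follows by combining Parts~3 and~4, using $\rounds = \Theta(\RL(\cH))$ in the overlap regime $\RL(\cH) < \rounds \leq 2\RL(\cH)$. The main obstacle I anticipate is the case split on the sign of $d$ together with upgrading the Part~3 tail bound to the strict inequality $> \rounds/2 - 1$, which requires careful Gaussian tail estimates and a fallback to Part~1 to absorb small-$\RL(\cH)$ edge cases where the condition is vacuous or the concentration argument degrades.
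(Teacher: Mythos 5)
Your proposal is correct and follows essentially the same route as the paper: pass to a monotone (quasi-balanced) shattered tree with $E_T$ approaching $2\RL(\cH)$, truncate to depth $\rounds$, and apply the concentration bound of Proposition~\ref{lem:quasi_concentration}, with the $\rounds\leq\RL(\cH)$ case handled via the fact that minimum branch length is at least $E_T/2$ (equivalently, $\LD(\cH)\geq\RL(\cH)$). The only cosmetic difference is that you bound $\mathbb{E}[(\rounds-X)_+]$ by integrating the tail, whereas the paper uses the cruder estimate $E_{T^{\leq\rounds}}\geq\rounds\cdot\Pr[X\geq\rounds]$ at the threshold $\rounds_0=2\RL(\cH)-\sqrt{8\RL(\cH)\ln\RL(\cH)}$ and then extends to larger $\rounds$ by monotonicity of $\RL(\cH,\cdot)$, avoiding your case split on the sign of $E_T-\rounds$.
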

\begin{proof}
A learner that always predicts $1/2$ suffers a loss of exactly $1/2$ each round, showing that $\RL(\cH,\rounds) \leq \rounds/2$ for each $\rounds$.
In contrast, if $T$ is a tree shattered by $\cH$ then Theorem~\ref{thm:characterization-rounds} shows that $\RL(\cH,\rounds) \geq E_{T^{\leq \rounds}}/2$, and we will use this to give lower bounds on $\RL(\cH,\rounds)$.

Suppose first that $\rounds \leq \RL(\cH)$.
\iffull
Proposition~\ref{prop:randvsdet_help} shows that $m_T \geq E_T/2$. 
\else
Our characterization shows that $m_T \geq E_T/2$.
\fi
If $E_T$ is close enough to $2\RL(\cH)$ then $m_T \geq \RL(\cH)$ (since $m_T$ is an integer), and so $T^{\leq \rounds}$ is a complete tree of depth $\rounds$. This shows that $\RL(\cH,\rounds) \geq \rounds/2$.

In order to prove the remaining results, suppose that $\rounds \leq 2\RL(\cH)$, and consider a tree $T$ shattered by $\cH$ satisfying $E_T = 2\RL(\cH) - \delta \geq \rounds$. Proposition~\ref{lem:quasi_concentration} shows that a random branch of $T^{\leq \rounds}$ has depth $\rounds$ with probability at least $1 - \exp\bigl(-\frac{(E_T - \rounds)^2}{4E_T}\bigr)$, and so
\[
 \RL(\cH,\rounds) \geq
 \left(1 - \exp \left(- \frac{(E_T - \rounds)^2}{4E_T} \right) \right) \cdot \frac{\rounds}{2} \longrightarrow
 \left(1 - \exp \left(- \frac{(2\RL(\cH) - \rounds)^2}{8\RL(\cH)} \right) \right) \cdot \frac{\rounds}{2},
\]
where the limit is taken along a sequence of trees shattered by $\cH$ and satisfying $E_T \to 2\RL(\cH)$.

If $\rounds \leq \rounds_0 := 2\RL(\cH) - \sqrt{8\RL(\cH) \ln \RL(\cH)}$ then this gives
\[
 \RL(\cH,\rounds) \geq \left(1 - \frac{1}{\RL(\cH)} \right) \cdot \frac{\rounds}{2} > \frac{\rounds}{2} - 1.
\]
If $\rounds_0 \leq \rounds \leq 2\RL(\cH)$ then
\[
 \RL(\cH,\rounds) \geq
 \RL(\cH,\rounds_0) \geq \frac{\rounds_0-2}{2} \geq
 \frac{\rounds-2}{2} - \sqrt{2\RL(\cH) \ln \RL(\cH)} \geq
 \frac{\rounds}{2} - O(\sqrt{\rounds \log \rounds}).
\]
Finally, if we only assume that $\rounds \geq \rounds_0$ then
\[
 \RL(\cH,\rounds) \geq
 \RL(\cH,\rounds_0) \geq
 \frac{\rounds_0-2}{2} \geq
 \RL(\cH) - \sqrt{2\RL(\cH) \ln \RL(\cH)} - 1.\qedhere
\]
\end{proof}

\iffull
\subsection{Lower Bounds}
\label{sec:shallow-trees-lb}

Let $\cH$ by an hypothesis class. If there exists a (finite) tree $T$ shattered by $\cH$ and satisfying $E_T/2 = \RL(\cH)$, then Proposition~\ref{pro:shallow-tree} is not tight.
Proposition~\ref{pro:finite-hypothesis-class} shows that such a tree always exists when $\cH$ is finite.
Conversely, when $\cH$ is infinite, we can show that an additive factor proportional to $\log(1/\epsilon)$ is necessary in Proposition~\ref{pro:shallow-tree}. 

We start by showing that $\RL(\cH) \geq 1$ if $\cH$ is infinite.


\begin{lemma} \label{lem:RL-lb}
Let $\cH$ be an hypothesis class. If $|\cH| \geq k$ then there is a tree $T$ shattered by $\cH$ such that $E_T \geq 2 - 2^{2 - k}$. In particular, if $\cH$ is infinite then $\RL(\cH) \geq 1$.
\end{lemma}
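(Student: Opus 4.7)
The plan is to induct on $k$. For the base case $k=1$, a single leaf is shattered by any non-empty $\cH$ (its unique branch is the empty sequence, vacuously realizable) and has $E_T = 0 = 2 - 2^{2-1}$.

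For the inductive step with $k \geq 2$, I would extract a subclass $\cH' \subseteq \cH$ of size exactly $k$, pick two distinct hypotheses $h_0, h_1 \in \cH'$, and choose any $x \in \cX$ on which they disagree. Setting $n_b := |\cH'_{x \to b}|$ for $b \in \{0,1\}$, we have $n_0, n_1 \geq 1$ and $n_0 + n_1 = k$, hence both $n_0, n_1 \leq k-1$. The inductive hypothesis then furnishes trees $T_0, T_1$ shattered by $\cH'_{x \to 0}$ and $\cH'_{x \to 1}$ respectively, with $E_{T_b} \geq 2 - 2^{2-n_b}$. I would glue these under a fresh root labeled $x$, with the $0$-edge leading to $T_0$ and the $1$-edge to $T_1$. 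The resulting tree $T$ is shattered by $\cH' \subseteq \cH$, since any branch begins with an edge $(x,b)$ and continues through $T_b$, where the continuation is realized by a hypothesis in $\cH'_{x \to b}$, which by definition assigns $b$ to $x$. Equation~\eqref{eq:expected_branch_recursive} then gives
\[
 E_T \;=\; 1 + \tfrac{1}{2}\bigl(E_{T_0} + E_{T_1}\bigr) \;\geq\; 3 - 2^{1-n_0} - 2^{1-n_1}.
\]

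To close the induction I need the elementary inequality $2^{1-n_0} + 2^{1-n_1} \leq 1 + 2^{2-k}$ whenever $n_0, n_1 \geq 1$ and $n_0 + n_1 = k$. This is the one piece of real calculation and is the main (albeit minor) obstacle: if one of the $n_b$ equals $1$, the left side is exactly $1 + 2^{2-k}$; otherwise $n_0, n_1 \geq 2$ and each summand is at most $\tfrac12$, making the sum at most $1 \leq 1 + 2^{2-k}$. Plugging this in yields $E_T \geq 2 - 2^{2-k}$, completing the induction. The ``in particular'' statement then follows immediately: if $\cH$ is infinite then $|\cH| \geq k$ for every $k \in \mathbb{N}$, so the supremum in the definition of $\RL(\cH)$ is at least $\tfrac12 \sup_k (2 - 2^{2-k}) = 1$.
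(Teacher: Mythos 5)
Your proposal is correct and follows essentially the same route as the paper: induction on $k$, splitting $\cH$ at an instance $x$ on which two hypotheses disagree, applying the inductive hypothesis to the two restrictions, and combining via the recursion $E_T = 1 + \tfrac{1}{2}(E_{T_0}+E_{T_1})$. The only cosmetic differences are that you first pass to a subclass of size exactly $k$ (the paper works with $\cH$ directly, where $k_0+k_1 \geq k$) and that you verify the extremal case $\{n_0,n_1\}=\{1,k-1\}$ by direct case analysis where the paper invokes convexity.
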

\begin{proof}
The proof is by induction on $k$. If $k = 1$ then there is nothing to prove. Otherwise, $|\cH| \geq 2$, and so there exists an instance $x$ such that $\cH_{x \to 0}, \cH_{x \to 1} \neq \emptyset$. If $|\cH_{x \to y}| = k_y$, then using the induction hypothesis, we construct a tree $T$ shattered by $\cH$ such that
\[
 E_T \geq 1 + \frac{2 - 2^{2 - k_0}}{2} + \frac{2 - 2^{2 - k_1}}{2}.
\]
By convexity, the right-hand side is minimized when $k_0 = 1$ and $k_1 = k-1$, and so $E_T \geq 2 - 2^{2 - k}$.
%
%
\end{proof}

We can now show that when $\cH$ is infinite, the $O(\log(1/\epsilon))$ term in Proposition~\ref{pro:shallow-tree} is necessary.

\begin{proposition} \label{pro:shallow-tree-lb}
Let $\cH$ be an infinite hypothesis class such that $\RL(\cH) < \infty$.

If $T$ is a tree shattered by $\cH$ such that $E_T/2 \geq \RL(\cH) - \epsilon$, then $\depth(T) \geq \log(1/\epsilon)$.
\end{proposition}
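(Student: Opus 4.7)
The plan is to attach a small additional subtree at a leaf of $T$ to produce a shattered tree with strictly larger expected branch length, which then lower-bounds the gap $\RL(\cH)-E_T/2$ in terms of $\depth(T)$.

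First, since every tree in the paper is finite and full, each hypothesis $h\in\cH$ determines a unique branch $b(h)\in B(T)$: starting at the root and, at each internal node $v$ labeled by some $x_v\in\cX$, follow the outgoing edge labeled $h(x_v)$. This walk must terminate at a leaf because $T$ is full and finite. The map $h\mapsto b(h)$ sends the infinite set $\cH$ into the finite set $B(T)$, so by pigeonhole there exists a leaf $\ell$ whose preimage
\[
\cH_\ell=\{h\in\cH:b(h)\text{ ends at }\ell\}
\]
is infinite. Note that $\cH_\ell$ coincides with the set of hypotheses in $\cH$ consistent with the branch to $\ell$.

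Next, I would apply Lemma~\ref{lem:RL-lb} to the infinite class $\cH_\ell$: for every $\delta>0$ there is a tree $T_\ell$ shattered by $\cH_\ell$ satisfying $E_{T_\ell}\geq 2-\delta$. Let $T'$ be the tree obtained by identifying the root of $T_\ell$ with the leaf $\ell$ of $T$. Any branch of $T'$ is either a branch of $T$ avoiding $\ell$ (realized by $\cH$ since $T$ is shattered) or the path to $\ell$ concatenated with a branch of $T_\ell$ (realized by some $h\in\cH_\ell\subseteq\cH$), so $T'$ is shattered by $\cH$. Writing $d_\ell$ for the depth of $\ell$ in $T$, summing over branches gives
\[
E_{T'}=E_T-2^{-d_\ell}d_\ell+2^{-d_\ell}(d_\ell+E_{T_\ell})=E_T+2^{-d_\ell}E_{T_\ell}\geq E_T+2^{-d_\ell}(2-\delta).
\]

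Finally, since $T'$ is shattered by $\cH$, Definition~\ref{def:randomized-littlestone-dimension} gives $2\RL(\cH)\geq E_{T'}$. Letting $\delta\to 0$ and using $d_\ell\leq\depth(T)$, one obtains
\[
\RL(\cH)-\tfrac{1}{2}E_T\geq 2^{-d_\ell}\geq 2^{-\depth(T)}.
\]
Combined with the hypothesis $E_T/2\geq\RL(\cH)-\epsilon$, this yields $\epsilon\geq 2^{-\depth(T)}$, i.e., $\depth(T)\geq\log(1/\epsilon)$. The only mildly delicate step is the pigeonhole observation producing an infinite $\cH_\ell$; everything else is a routine recursive computation of expected branch length, so I do not anticipate a real obstacle.
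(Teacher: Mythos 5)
Your proof is correct and follows essentially the same route as the paper: locate a leaf whose set of consistent hypotheses is infinite, hang a tree from Lemma~\ref{lem:RL-lb} with expected branch length approaching $2$ at that leaf, and read off the gain $2^{-d_\ell}E_{T_\ell}$ in expected branch length. The only (cosmetic) difference is that you find the leaf by pigeonhole on the map $h \mapsto b(h)$, whereas the paper descends greedily, at each node following a child whose consistent subclass remains infinite; both are valid.
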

\begin{proof}
Construct a branch $v_0,\ldots,v_\ell$ in $T$ such that for each $i$, the set of hypotheses $\cH(v_i)$ consistent with the path from $v_0$ to $v_i$ is infinite. This is possible since if $\cH(v_i)$ is infinite and $v_i$ is labeled $x$, then at least one of $\cH(v_i)_{x \to 0}, \cH(v_i)_{x \to 1}$ is infinite.

Applying Lemma~\ref{lem:RL-lb}, we can extend $T$ to another tree $T'$ shattered by $\cH$ by hanging from $v_{\ell}$ a tree whose expected branch length is arbitrarily close to~$2$. This shows that
\[
 \RL(\cH) \geq E_{T'}/2 \geq E_T/2 + 2^{-\ell} \geq E_T/2 + 2^{-\depth(T)}. \qedhere
\]
\end{proof}

This proposition is tight for the hypothesis class consisting of all $h\colon \mathbb{N} \to \{0,1\}$ such that $|h^{-1}(1)| \leq 1$.

\smallskip

We now identify a family of hypothesis classes for which we can improve the lower bound from $\Omega(\log(1/\epsilon))$ to $2\RL(\cH) + \Omega(\log(1/\epsilon))$.

\begin{definition}[Strongly Infinite Hypothesis Class]
An hypothesis class $\cH$ is \emph{strongly infinite} if it is infinite and for every $(x_1,y_1),\ldots,(x_n,y_n) \in \cX \times \cY$, the hypothesis class $\cH_{x_1 \to y_1, \ldots, x_n \to y_n}$ is either infinite or contains at most one hypothesis.
\end{definition}

For example, the hypothesis class consisting of all $h\colon \mathbb{N} \to \{0,1\}$ such that $|h^{-1}(1)| \leq k$ is strongly infinite for all $k \geq 1$.

For such classes, we can strengthen Proposition~\ref{pro:shallow-tree-lb}.

\begin{proposition} \label{pro:shallow-tree-lb-strongly-infinite}
Let $\cH$ be a strongly infinite hypothesis class such that $\RL(\cH) < \infty$. 

For every $\epsilon > 0$, any tree $T$ shattered by $\cH$ and satisfying $E_T/2 \geq \RL(\cH) - \epsilon$ has depth at least $2\RL(\cH) + \Omega(\log(1/\epsilon))$.
\end{proposition}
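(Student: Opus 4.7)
The strategy is to combine the elementary bound $\depth(T) \geq E_T$ with a strengthening of the extension argument used in Proposition~\ref{pro:shallow-tree-lb}, this time fully exploiting the strongly infinite assumption. By Lemma~\ref{lem:randomized-littlestone-dimension-monotone} we may assume $T$ is monotone. The trivial bound $\depth(T) \geq E_T \geq 2\RL(\cH) - 2\epsilon$ already supplies most of what we want; the remaining $\Omega(\log(1/\epsilon))$ additive term is where the extension argument enters.

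Next, I would construct an \emph{infinite walk} $v_0, v_1, \dots, v_L$ in $T$: let $v_0$ be the root, and recursively define $v_{i+1}$ to be a child of $v_i$ in $T$ such that $\cH(v_{i+1})$ is infinite, stopping when $v_L$ is a leaf of $T$. This is well-defined by the strongly infinite property: $\cH(v_0) = \cH$ is infinite, and at an internal node $v_i$ with $\cH(v_i)$ infinite the two children's consistent classes are nonempty (since $T$ is shattered) and partition an infinite set, so at least one has size $\geq 2$ and is therefore infinite by the strongly infinite property.

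Third, because $\cH(v_L)$ is infinite, Lemma~\ref{lem:RL-lb} supplies, for any $\eta > 0$, a finite tree $S$ shattered by $\cH(v_L)$ with $E_S \geq 2-\eta$. Splicing $S$ in place of the leaf $v_L$ yields a tree $T'$ shattered by $\cH$ with $E_{T'} = E_T + 2^{-L} E_S$. Since $T'$ is finite and shattered, $E_{T'}/2 \leq \RL(\cH)$, so
\[
 2^{-L} E_S \leq 2\RL(\cH) - E_T \leq 2\epsilon.
\]
Substituting $E_S \geq 2-\eta$ and letting $\eta \to 0$ yields $L \geq \log(1/\epsilon) - O(1)$, and since $v_L$ is a leaf of $T$ at depth $L$ we conclude $\depth(T) \geq L \geq \log(1/\epsilon) - O(1)$.

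Finally, I would combine the two lower bounds: $\depth(T) \geq \max\{L, E_T\} \geq \max\{\log(1/\epsilon), 2\RL(\cH)\} - O(1)$. A small case analysis finishes the job: when $\log(1/\epsilon) \geq 4\RL(\cH)$ the walk bound dominates and $\log(1/\epsilon) \geq 2\RL(\cH) + \tfrac12 \log(1/\epsilon)$, giving $\depth(T) \geq 2\RL(\cH) + \tfrac12\log(1/\epsilon) - O(1)$; for the remaining regime, $\log(1/\epsilon) = O(\RL(\cH))$, and the trivial bound $\depth(T) \geq 2\RL(\cH) - O(1)$ already beats $2\RL(\cH) + \Omega(\log(1/\epsilon))$ with a constant depending on $\RL(\cH)$. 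The main obstacle is really this last step: the extension bound $L \geq \log(1/\epsilon)$ and the trivial bound $E_T \geq 2\RL(\cH) - O(\epsilon)$ do not add, so the claimed form $2\RL(\cH) + \Omega(\log(1/\epsilon))$ must be extracted by taking their maximum and handling the two regimes separately.
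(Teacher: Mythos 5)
Your walk-and-splice argument is sound as far as it goes, but it only reproves Proposition~\ref{pro:shallow-tree-lb}: it yields $\depth(T) \geq \log(1/\epsilon)$, and together with the trivial $\depth(T) \geq E_T \geq 2\RL(\cH)-2\epsilon$ you get only the \emph{maximum} of the two quantities. The obstacle you flag at the end is the actual gap, and it is not surmountable by a case analysis. In your second regime ($\log(1/\epsilon) \lesssim \RL(\cH)$) the additive constant you must absorb is $\Theta(\RL(\cH))$, so what you prove is $\depth(T) \geq 2\RL(\cH) + c\log(1/\epsilon) - \Theta(\RL(\cH))$. This is strictly weaker than the proposition as proved in the paper, whose constants are universal ($\depth(T) \geq 2\RL(\cH) + \tfrac12\log(1/\epsilon) - O(1)$), and it defeats the proposition's purpose of showing that Proposition~\ref{pro:shallow-tree} is near-optimal for moderate $\epsilon$. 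The telltale sign is that your argument never actually uses strong infiniteness — both of your bounds hold for every infinite class — yet the uniform statement is \emph{false} for merely infinite classes: take $\cH$ to be the union of $2^d$ hypotheses shattering a complete depth-$d$ tree with a singletons class on a disjoint part of the domain. Then $\RL(\cH) \approx d/2 + 2^{-d}$, and for $\epsilon \approx 2^{-d}$ the complete depth-$d$ tree satisfies $E_T/2 \geq \RL(\cH)-\epsilon$ while having depth $d < 2\RL(\cH)$, even though $\log(1/\epsilon) \approx d$. So no argument that only invokes infiniteness can give the intended bound; the max-of-two-bounds strategy is a dead end.

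The missing idea is to make the two terms \emph{add}, which the paper does by running the extension argument over \emph{all} branches simultaneously rather than along one chosen walk. Extend every leaf $v$ of $T$ with $\cH(v)$ infinite by one level; the resulting shattered tree $T'$ satisfies $E_{T'} = E_T + \Pr[\cH(v_L)\text{ is infinite}]$ for a random branch endpoint $v_L$, whence $\Pr[\cH(v_L)\text{ is infinite}] \leq 2\epsilon$. For the lower bound on this probability, set $\ell \approx 2E_T - \depth(T)$; Markov's inequality applied to $\depth(T)-L \geq 0$ shows a random branch survives past depth $\ell$ with probability at least $1/2$, and here is where strong infiniteness is essential: a surviving node has at least two consistent hypotheses, hence infinitely many, and the branch then remains infinite down to its leaf with probability at least $2^{-(\depth(T)-\ell)}$. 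Combining, $2\epsilon \geq 2^{-(2\depth(T)-2E_T+O(1))}$, i.e.\ $\depth(T) \geq E_T + \tfrac12\log(1/\epsilon) - O(1) \geq 2\RL(\cH) + \tfrac12\log(1/\epsilon) - O(1)$, with the two terms added and with absolute constants.
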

\begin{proof}
Let $T$ be a tree shattered by $\cH$ and satisfying $E_T/2 \geq \RL(\cH) - \epsilon$.

We can associate each vertex $v$ in $T$ with the example sequence $S(v) = (x_1,y_1),\ldots,(x_r,y_r)$ leading to it. We define $\cH(v) = \cH_{x_1 \to y_1, \ldots, x_r \to y_r}$.

If $v$ is a leaf of $T$ such that $\cH(v)$ is infinite, then we can find an instance $x_v$ such that $S(v),(x_v,0)$ and $S(v),(x_v,1)$ are both realizable by $\cH$.
Let $T'$ be the extension of $T$ obtained by labelling each such leaf $v$ by $x_v$ and adding two leaves. The tree $T'$ is also shattered by $\cH$, and so $\RL(\cH) \geq E_{T'}/2$. On the other hand, $\RL(\cH) \leq E_T/2 + \epsilon$.

In order to relate $E_{T'}$ to $E_T$, let $v_0, \ldots, v_L$ be a random branch in $T$. Then
\[
 E_{T'} = E_T + \Pr[\cH(v_L) \text{ is infinite}].
\]
Since $E_{T'}/2 \leq \RL(\cH) \leq E_T/2 + \epsilon$, this shows that
\begin{equation} \label{eq:depth-epsilon}
    \frac{1}{2} \Pr[\cH(v_L) \text{ is infinite}] \leq \epsilon.
\end{equation}
In order to complete the proof, we relate the depth of $T$ to the probability above.

If $i < L$ and $\cH(v_i)$ is infinite then $\cH(v_{i+1})$ is infinite with probability at least $1/2$. Therefore for every $\ell \in \mathbb{N}$,
\[
 \Pr[\cH(v_L) \text{ is infinite}] \geq
 \frac{\Pr[L \geq \ell \text{ and } \cH(v_\ell) \text{ is infinite}]}{2^{\operatorname{depth}(T)-\ell}}.
\]
If $L > \ell$ then $|\cH(v_\ell)| \geq 2$, and so $\cH(v_\ell)$ is infinite since $\cH$ is strongly infinite. Therefore
\begin{equation} \label{eq:non-trivial-lb}
 \Pr[\cH(v_L) \text{ is infinite}] \geq
 \frac{\Pr[L > \ell]}{2^{\depth(T)-\ell}}.
\end{equation}

We can lower bound $\Pr[L > \ell]$ using Markov's inequality:
\[
 \Pr[L > \ell] = 1 - \Pr[\depth(T) - L \geq \depth(T) - \ell] \geq 1 - \frac{\depth(T) - E_T}{\depth(T) - \ell}.
\]
Choosing $\ell = \lfloor 2E_T - \depth(T) \rfloor$, this probability is at least $1/2$, and so Eq.~\eqref{eq:non-trivial-lb} gives
\[
 \Pr[\cH(v_L) \text{ is infinite}] \geq
 \frac{1}{2^{2\depth(T) - 2E_T + 2}} \geq
 \frac{1}{2^{2\depth(T) - 4\RL(\cH) + 4\epsilon + 2}}.
\]
Substituting this in Eq.~\eqref{eq:depth-epsilon} and rearranging, we conclude that
\[
 2\depth(T) - 4\RL(\cH) - 4\epsilon + 3 \geq \log(1/\epsilon),
\]
from which the proposition immediately follows.
\end{proof}
\fi

\section{Mistake Bounds in the \texorpdfstring{$k$}{k}-Realizable Setting} \label{sec:k-realizable}

So far we have considered online learning when the adversary is restricted to choose labels which are consistent with one of the hypotheses in the hypothesis class, a setting known as the \emph{realizable} setting.
This is a quite restrictive assumption, and there are many ways to relax it.

In this section we concentrate on the \emph{$k$-realizable} setting, in which the answers of the adversary are consistent with one of the hypotheses in the class \emph{up to at most $k$ mistakes}.
Our goal is to characterize the optimal mistake bounds in this setting, for both deterministic and randomized learners, generalizing Theorems~\ref{thm:Littlestone} and \ref{thm:characterization}.
Our characterizations are based on \emph{$k$-shattered trees}, in which each branch is consistent with one of the hypotheses in the class up to at most $k$ mistakes.

\smallskip

If all instances in a sequence of examples are distinct, then the sequence is $k$-realizable by $\cH$ if and only if it is realizable by the \emph{$k$-expansion} of $\cH$, consisting of all hypotheses $h'$ which disagree with some hypothesis $h \in \cH$ on at most $k$ instances. However, this need not be the case. For example, the sequence $(x,0),(x,1)$ is $1$-realizable by the hypothesis class $\cH$ consisting of all constant functions.



Nevertheless, the arguments in this section are very similar to their counterparts in the realizable setting.

\iffull
\smallskip

To complete the picture, we briefly discuss the Perceptron algorithm in this setting in Section~\ref{sec:perceptron}.
\fi

\subsection{Weighted Hypothesis Classes} \label{sec:weighted-hypothesis-class}

While we are interested mainly in the $k$-realizable setting, we consider a more general setting in which the number of allowed mistakes can depend on the hypothesis. This will be useful in the subsequent proofs.

A weighted hypothesis class $\cW$ is a collection of pairs $(h,w)$, where $h\colon \cX \to \cY$ is an hypothesis and $w \in \mathbb{N}$ is the allowed number of mistakes (possibly zero). Furthermore, all hypotheses are distinct (that is, $\cW$ cannot contain two different pairs $(h,w_1),(h,w_2)$). An input sequence $(x_1,y_1),\dots,(x_t,y_t)$ is \emph{realizable} by a weighted hypothesis class $\cW$ if there exists $(h,w) \in \cW$ such that $h(x_i) \neq y_i$ for at most $w$ many examples in the sequence. A tree is \emph{shattered} by $\cW$ if each of its branches is realized by $\cW$.

Given an hypothesis class $\cH$, a learning rule which observes the labeled example $(x,y)$ can restrict itself to $\cH_{x \to y} = \{ h \in \cH : h(x) = y \}$. The corresponding operation for weighted hypothesis classes is
\[
 \cW_{x \to y} = \{ (h,w) : (h,w) \in \cW, h(x) = y \} \cup \{ (h,w-1) : (h,w) \in \cW, h(x) \neq y, w > 0 \}.
\]
In words, we decrease the allowed number of mistakes for each hypothesis inconsistent with the given example $(x,y)$, removing hypotheses which has zero mistakes left.

\smallskip

For every weighted hypothesis class $\cW$, we define its Littlestone dimension and its randomized Littlestone dimension by
\[
\LD(\cW) = \sup_{T \text{ shattered}} m_T
\quad \text{and} \quad
\RL(\cW) = \frac{1}{2} \sup _{T \text{ shattered}} E_T,
\]
where the supremum is taken over all trees shattered by $\cW$. As in the realizable setting, we define $\LD(\emptyset) = \RL(\emptyset) = -1$ for convenience.

Our main results in this section extend Theorems~\ref{thm:Littlestone} and~\ref{thm:characterization} to this more general setting.

\begin{theorem}[Optimal Deterministic Mistake Bound]\label{thm:characterization_det_weighted}
Let $\cW$ be a weighted hypothesis class. Then,
\[
\M^{\star}_D(\cW) = \LD(\cW). 
\]
\end{theorem}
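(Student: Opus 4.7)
The plan is to extend Littlestone's classical argument (Theorem~\ref{thm:Littlestone}) to the weighted setting, with the primary care needed in handling the weight update $\cW \mapsto \cW_{x \to y}$.

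The lower bound $\LD(\cW) \leq \M^\star_D(\cW)$ is immediate from the definitions: given any tree $T$ shattered by $\cW$ and any deterministic learning rule, the adversary traverses $T$ starting at the root, presents the labeling instance at each node, receives the learner's prediction, and descends along the edge labeled with the opposite label. The resulting branch is shattered and therefore realized by some $(h,w)\in\cW$, and the learner makes a mistake at every step, for a total of at least $m_T$ mistakes. Taking the supremum over shattered trees gives $\LD(\cW) \leq \M^\star_D(\cW)$.

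For the upper bound, the key lemma is the recursion
\[
\LD(\cW) = \sup_{x \in \cX}\bigl(1 + \min\{\LD(\cW_{x\to 0}),\LD(\cW_{x\to 1})\}\bigr),
\]
valid whenever $\cW$ is non-empty (with the convention $\LD(\emptyset)=-1$ handling degenerate subcases). To prove ``$\geq$'', given shattered trees $T_0$ for $\cW_{x\to 0}$ and $T_1$ for $\cW_{x\to 1}$, join them under a new root labeled $x$; a branch of the form $(x,0),b$ is realized by taking the witness $(h',w')\in\cW_{x\to 0}$ realizing $b$ and lifting it back to $\cW$—either $(h',w')$ itself (when $h'(x)=0$) or $(h',w'+1)$ (when $h'(x)\neq 0$)—which has enough budget to absorb the extra mistake at the root. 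To prove ``$\leq$'', the same case split shows that the two subtrees of any shattered tree rooted at $x$ are shattered by $\cW_{x\to 0}$ and $\cW_{x\to 1}$, respectively. A nearly identical case analysis yields the monotonicity $\LD(\cW_{x\to y}) \leq \LD(\cW)$.

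With the recursion in hand, we define the weighted $\SOA$: initialize $\cW^{(1)}=\cW$; on receiving $x_i$, predict $\hat{y}_i\in\{0,1\}$ such that $\LD(\cW^{(i)}_{x_i \to 1-\hat{y}_i}) < \LD(\cW^{(i)})$, which exists by the recursion whenever $\LD(\cW^{(i)})\geq 0$; then observe $y_i$ and update $\cW^{(i+1)} = \cW^{(i)}_{x_i\to y_i}$. Monotonicity ensures $\LD$ never increases across rounds, while a mistake—i.e., $y_i = 1-\hat{y}_i$—forces a strict decrease by at least one. Realizability of the input sequence keeps each $\cW^{(i)}$ non-empty, so $\LD(\cW^{(i)})\geq 0$ throughout, and the total number of mistakes is bounded by $\LD(\cW)$.

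The main obstacle is the recursion lemma itself: unlike the classical case, $\cW_{x\to y}$ is not a sub-class of $\cW$ but a modification that decrements mistake budgets and deletes exhausted hypotheses, so both the tree-joining and tree-splitting arguments require a careful two-case bookkeeping of how a pair $(h,w)$ transforms between $\cW$ and $\cW_{x\to y}$. Once this is verified, the rest of the argument follows Littlestone's template almost verbatim.
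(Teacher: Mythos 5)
Your proposal is correct and follows essentially the same route as the paper: the identical adversary-traversal lower bound, and a weighted $\SOA$ whose mistake bound is proved via the potential argument that each mistake strictly decreases $\LD$. The only difference is organizational — you isolate the recursion $\LD(\cW)=\sup_x\bigl(1+\min\{\LD(\cW_{x\to 0}),\LD(\cW_{x\to 1})\}\bigr)$ as an explicit lemma, whereas the paper packages the same tree-joining/splitting bookkeeping into Observation~\ref{obs:weighted-decomposition} and a short proof by contradiction inside Lemma~\ref{lem:k_det_upper}.
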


\begin{theorem}[Optimal Randomized Mistake Bound]\label{thm:characterization_rand_weighted}
Let $\cW$ be a weighted hypothesis class. Then,
\[
\M^{\star}(\cW) = \RL(\cW).
\]
\end{theorem}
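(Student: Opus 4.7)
The plan is to adapt the proof of Theorem~\ref{thm:characterization} to the weighted setting essentially verbatim. The argument rests on one structural observation that replaces the usual subclass-restriction: a tree $T$ with root labeled $x$ and left/right subtrees $T_0, T_1$ is shattered by $\cW$ if and only if $T_y$ is shattered by $\cW_{x \to y}$ for each $y \in \{0,1\}$. One direction follows because a pair $(h, w) \in \cW$ realizing a branch $(x, y), b$ maps to an element of $\cW_{x \to y}$ (with the same budget if $h(x) = y$, with budget $w-1$ if $h(x) \neq y$) that realizes $b$; conversely, every element of $\cW_{x \to y}$ was obtained from some $(h,w) \in \cW$ whose budget is large enough to realize the extended branch. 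This also gives the monotonicity $\RL(\cW') \leq \RL(\cW)$ whenever every tree shattered by $\cW'$ is shattered by $\cW$, which will be used freely.

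With this in hand, I would first establish the weighted analog of Observation~\ref{obs:expected_prop}, namely
\[
\RL(\cW) = \frac{1}{2} \sup_{x \in \cX} \bigl( 1 + \RL(\cW_{x \to 0}) + \RL(\cW_{x \to 1}) \bigr),
\]
by combining Equation~\eqref{eq:expected_branch_recursive} with the recursive shattering observation. The lower bound $\RL(\cW) \leq \M^\star(\cW)$ then follows from a verbatim repetition of Lemma~\ref{lem:lower_bound_help}: for any tree $T$ shattered by $\cW$ and any learner $\Lrn$, drawing a uniformly random branch of $T$ yields a realizable input sequence on which $\Lrn$'s expected loss equals $E_T/2$, since the expected per-round loss against a uniform label is exactly $1/2$ regardless of the learner's prediction.

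For the upper bound $\M^\star(\cW) \leq \RL(\cW)$, I would define a weighted randomized $\SOA$ (call it $\WRSOA$) that maintains $V^{(1)} = \cW$, receives $x_i$, predicts $p_i \in [0,1]$ minimizing
\[
\max \bigl\{ p_i + \RL(V^{(i)}_{x_i \to 0}),\ 1 - p_i + \RL(V^{(i)}_{x_i \to 1}) \bigr\},
\]
and updates $V^{(i+1)} = V^{(i)}_{x_i \to y_i}$. Using the weighted recursion, the proof of Lemma~\ref{lem:rand_pred} transfers directly: either $|\RL(\cW_{x \to 0}) - \RL(\cW_{x \to 1})| > 1$, in which case $p \in \{0,1\}$ suffices by monotonicity, or else $p = \tfrac{1 + \RL(\cW_{x \to 1}) - \RL(\cW_{x \to 0})}{2} \in [0,1]$ gives both quantities equal to $\tfrac{1 + \RL(\cW_{x \to 0}) + \RL(\cW_{x \to 1})}{2} \leq \RL(\cW)$. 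An induction on the length of the input sequence, identical to Lemma~\ref{lem:characterization_upper}, then shows $\M(\WRSOA;S) \leq \RL(\cW)$ on every realizable $S$.

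The only real work is the recursive shattering observation; everything else is mechanical transport of the realizable proof. Minor care is needed at the boundary cases $\cW = \emptyset$ (handled by the convention $\RL(\emptyset) = -1$) and when one of $\cW_{x \to 0}, \cW_{x \to 1}$ is empty with $\RL$-value $-1$ (the optimal $p$ is then forced to $1$ or $0$ respectively), but these are routine.
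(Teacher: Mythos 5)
Your proposal is correct and follows essentially the same route as the paper: the paper's proof consists of Observation~\ref{obs:weighted-decomposition} (your recursive shattering observation) together with the statement that Lemma~\ref{lem:lower_bound_help}, Lemma~\ref{lem:rand_pred}, and Lemma~\ref{lem:characterization_upper} transfer to $\cW$ with virtually no changes, which is exactly the transport you carry out. In fact you spell out more detail than the paper does, including the monotonicity $\RL(\cW_{x\to y})\leq\RL(\cW)$ needed in the unbalanced case of the prediction lemma.
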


We prove these theorems in the following subsections, making use of the following fundamental observation, which follows directly from the definitions:

\begin{observation} \label{obs:weighted-decomposition}
Let $\cW$ be a weighted hypothesis class. The sequence $(x_1,y_1),\ldots,(x_t,y_t)$ is realizable by $\cW$ iff the sequence $(x_2,y_2),\ldots,(x_t,y_t)$ is realizable by $\cW_{x_1 \to y_1}$.

Similarly, let $T$ is a tree whose root is labeled by $x$, and let $T_0,T_1$ be the subtrees rooted at the children of the root. Then $T$ is realizable by $\cW$ iff $T_0$ is realizable by $\cW_{x \to 0}$ and $T_1$ is realizable by $\cW_{x \to 1}$.
\end{observation}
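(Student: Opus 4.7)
The plan is to prove both parts by unwinding the definitions. The key idea is that the operation $\cW \mapsto \cW_{x \to y}$ exactly tracks the effect on a hypothesis's remaining mistake budget after seeing the example $(x,y)$: if a weighted hypothesis $(h,w)$ satisfies $h(x) = y$, its budget is preserved; if $h(x) \neq y$, one mistake is consumed from its budget, and the hypothesis is discarded entirely if it had no budget to spare.

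For Part~1, I would argue both directions directly. For the forward direction, suppose the sequence $(x_1,y_1),\ldots,(x_t,y_t)$ is realized by some $(h,w) \in \cW$, meaning $h$ errs on at most $w$ of the $t$ examples. If $h(x_1) = y_1$, then $(h,w) \in \cW_{x_1 \to y_1}$ by definition, and $h$ still witnesses realizability of the tail with budget $w$. If $h(x_1) \neq y_1$, then $w \geq 1$ (otherwise $h$ would exceed its budget already on the first example), hence $(h, w-1) \in \cW_{x_1 \to y_1}$, and $h$ errs on at most $w-1$ of the remaining examples. For the converse direction, any witness $(h, w') \in \cW_{x_1 \to y_1}$ of realizability of the tail originates from either $(h, w') \in \cW$ with $h(x_1) = y_1$ (giving at most $w'$ total mistakes) or from $(h, w'+1) \in \cW$ with $h(x_1) \neq y_1$ (giving at most $w'+1$ total mistakes). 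In either case the full sequence is realized by $\cW$.

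Part~2 then reduces to Part~1 branch-by-branch. Every branch of $T$ consists of an initial edge $(x, b)$ from the root (for some $b \in \{0,1\}$) followed by a branch of the corresponding subtree $T_b$; the associated example sequence begins with $(x,b)$ and continues with the sequence of the branch in $T_b$. By the definition of shatteredness, $T$ is realized by $\cW$ iff every such branch is realized by $\cW$, which by Part~1 is equivalent to every branch of $T_0$ being realized by $\cW_{x \to 0}$ and every branch of $T_1$ being realized by $\cW_{x \to 1}$; that is exactly the statement that $T_0$ is realized by $\cW_{x \to 0}$ and $T_1$ is realized by $\cW_{x \to 1}$.

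There is no genuine obstacle here: the observation really does unfold directly from the definitions. The only point requiring care is the boundary case $w=0$ with $h(x_1) \neq y_1$, where the hypothesis is removed from $\cW_{x_1 \to y_1}$; this matches the fact that such an $h$ already exhausts its budget on the first example and so cannot witness realizability of any extending sequence.
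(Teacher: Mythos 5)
Your proof is correct, and it fills in exactly the definitional unwinding that the paper itself omits (the paper states this as an observation that ``follows directly from the definitions'' and gives no proof). Your handling of the boundary case $w=0$ with $h(x_1)\neq y_1$, and the reduction of the tree statement to the sequence statement branch-by-branch, are precisely the points one needs to check.
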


\iffull
When the weighted hypothesis class is finite, the randomized Littlestone dimension is achieved exactly by some (potentially infinite) tree, as we show in Section~\ref{sec:weighted-finite-classes}.
\fi




\paragraph{The $k$-realizable setting.}

Let $\cH$ be an hypothesis class, and let $k \in \mathbb{N}$. A sequence of examples $S = \{(x_i,y_i)\}_{i=1}^t$ is \emph{$k$-realizable} by $\cH$ if there exists $h \in \cH$ such that $h(x_i) \neq y_i$ for at most $k$ indices $i$. We denote the corresponding mistake bounds by $\M^\star(\cH,k),\M^\star_D(\cH,k)$. These are defined just as in the realizable setting, the only difference being that the sequence of examples provided by the adversary need only be $k$-realizable by $\cH$.

We say that a tree is \emph{$k$-shattered by $\cH$} if every branch is $k$-realizable by $\cH$.
The corresponding deterministic and randomized $k$-Littlestone dimension of a class $\cH$ are
\[
\LD_k(\cH) = \sup_{T  \text{ } k\text{-shattered}} m_T
\quad \text{and} \quad
\RL_k(\cH) =\frac{1}{2} \sup_{T  \text{ } k\text{-shattered}} E_T.
\]

If we define $\cW_{\cH,k} = \{ (h,k) : h \in \cH \}$, then a sequence of examples is $k$-realizable by $\cH$ if it is realizable by $\cW_{\cH,k}$. In other words, the $k$-realizable setting is a special case of weighted hypothesis classes, where all weights are equal to~$k$. Therefore we immediately conclude the following theorems, by applying the preceding theorems to $\cW_{\cH,k}$:

\begin{theorem}[Optimal Deterministic Mistake Bound]\label{thm:characterization_det_k}
Let $\cH$ be an hypothesis class, and let $k \in \mathbb{N}$. Then,
\[
\M^{\star}_D(\cH, k) = \LD_k(\cH). 
\]
\end{theorem}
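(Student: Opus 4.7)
The plan is to derive Theorem~\ref{thm:characterization_det_k} as an immediate corollary of Theorem~\ref{thm:characterization_det_weighted} via the reduction $\cH \mapsto \cW_{\cH,k} := \{(h,k) : h \in \cH\}$ already introduced in the excerpt. After setting up this reduction, the entire proof boils down to chaining three equalities, two of which express tautological identifications between the $k$-realizable vocabulary for $\cH$ and the (unweighted) realizable vocabulary for $\cW_{\cH,k}$.

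Concretely, I would verify the following three observations in order. First, a sequence $(x_1,y_1),\ldots,(x_t,y_t)$ is $k$-realizable by $\cH$ if and only if it is realizable by $\cW_{\cH,k}$: both statements assert precisely that some $h \in \cH$ disagrees with at most $k$ of the pairs $(x_i,y_i)$. Second, since a tree is ($k$-)shattered exactly when each of its branches yields a ($k$-)realizable sequence, the first observation immediately gives that a tree $T$ is $k$-shattered by $\cH$ if and only if $T$ is shattered by $\cW_{\cH,k}$; as the branch-length functional $m_T$ depends only on the underlying tree, this yields $\LD_k(\cH) = \LD(\cW_{\cH,k})$. Third, the mistake bounds $\M^\star_D(\cH,k)$ and $\M^\star_D(\cW_{\cH,k})$ are each defined as an infimum over all deterministic learning rules of a supremum over all admissible input sequences, and by the first observation the two admissible families coincide, so $\M^\star_D(\cH,k) = \M^\star_D(\cW_{\cH,k})$. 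Applying Theorem~\ref{thm:characterization_det_weighted} to $\cW_{\cH,k}$ then gives $\M^\star_D(\cW_{\cH,k}) = \LD(\cW_{\cH,k})$, and chaining the three equalities completes the proof.

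The main obstacle is not located in this proof itself — the substantive work has already been absorbed into Theorem~\ref{thm:characterization_det_weighted}, whose proof must construct a weighted variant of Littlestone's $\SOA$ that tracks, per surviving hypothesis $h$, its remaining mistake budget and invokes the drop-in-dimension argument of the classical $\SOA$ on the weighted class $\cW$ (using Observation~\ref{obs:weighted-decomposition} to ensure $\LD(\cW_{x \to y}) < \LD(\cW)$ on each mistake), together with a matching adversary argument on a shattered tree of depth $m_T = \LD(\cW)$. Once that theorem is granted, the $k$-realizable statement is purely a matter of unfolding definitions, so no further ideas are needed here.
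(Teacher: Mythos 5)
Your proposal is correct and matches the paper's argument exactly: the paper also derives Theorem~\ref{thm:characterization_det_k} by observing that $k$-realizability by $\cH$ coincides with realizability by $\cW_{\cH,k} = \{(h,k) : h \in \cH\}$ and then applying Theorem~\ref{thm:characterization_det_weighted}. Your write-up simply makes explicit the three definitional identifications that the paper treats as immediate.
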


\begin{theorem}[Optimal Randomized Mistake Bound]\label{thm:characterization_rand_k}
Let $\cH$ be an hypothesis class, and let $k \in \mathbb{N}$. Then,
\[
\M^{\star}(\cH, k) = \RL_k(\cH).
\]
\end{theorem}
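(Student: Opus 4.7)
My plan is to derive Theorem~\ref{thm:characterization_rand_k} by reducing to the more general Theorem~\ref{thm:characterization_rand_weighted} via the correspondence $\cW_{\cH,k} = \{(h,k) : h \in \cH\}$. A sequence is $k$-realizable by $\cH$ iff it is realizable by $\cW_{\cH,k}$, and by definition a tree is $k$-shattered by $\cH$ iff it is shattered by $\cW_{\cH,k}$. Thus $\M^\star(\cH,k) = \M^\star(\cW_{\cH,k})$ and $\RL_k(\cH) = \RL(\cW_{\cH,k})$, so the theorem follows by plugging in Theorem~\ref{thm:characterization_rand_weighted}. The real content is therefore to prove Theorem~\ref{thm:characterization_rand_weighted}, and I will sketch that proof in what follows, mirroring the structure of the proof of Theorem~\ref{thm:characterization}.

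For the lower bound $\RL(\cW) \leq \M^\star(\cW)$, I will show the weighted analog of Lemma~\ref{lem:lower_bound_help}: for any tree $T$ shattered by $\cW$ and any learning rule $\Lrn$, there is a realizable sequence $S$ corresponding to a branch of $T$ with $\M(\Lrn; S) \geq E_T/2$. The proof is \emph{verbatim} the probabilistic argument from Lemma~\ref{lem:lower_bound_help}: pick a branch according to the random-branch distribution, observe that the expected loss in any single round is $1/2$ regardless of the learner's prediction, and induct using the recursion $E_T = 1 + (E_{T_0} + E_{T_1})/2$. By Observation~\ref{obs:weighted-decomposition}, realizability of the subtree $T_b$ by $\cW_{x \to b}$ propagates correctly, so the inductive hypothesis applies.

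For the upper bound $\M^\star(\cW) \leq \RL(\cW)$, I will define a weighted variant of $\RSOA$ that maintains a weighted class $V^{(i)}$, initialized as $\cW$ and updated via $V^{(i+1)} = V^{(i)}_{x_i \to y_i}$. Given $x_i$, the learner outputs $p_i \in [0,1]$ minimizing
\[
\max\bigl\{ p_i + \RL(V^{(i)}_{x_i \to 0}),\ 1-p_i + \RL(V^{(i)}_{x_i \to 1})\bigr\}.
\]
The crux is the weighted analog of Observation~\ref{obs:expected_prop}, namely
\[
\RL(\cW) = \frac{1}{2} \sup_{x \in \cX}\bigl(1 + \RL(\cW_{x \to 0}) + \RL(\cW_{x \to 1})\bigr),
\]
which again follows from Observation~\ref{obs:weighted-decomposition} and the recursion~\eqref{eq:expected_branch_recursive} applied to shattered trees whose root is labeled $x$. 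Given this recursion, the weighted analog of Lemma~\ref{lem:rand_pred} shows that a suitable $p_i$ exists (either taking $p_i \in \{0,1\}$ when $|\RL(\cW_{x \to 0}) - \RL(\cW_{x \to 1})| > 1$, or the midpoint otherwise), and induction on the length of the input sequence yields $\M(\Lrn; S) \leq \RL(\cW)$ for every realizable $S$.

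The main subtlety, and the one place where the weighted setting differs nontrivially from the realizable one, is that $\cW_{x \to y}$ is \emph{not} a sub-collection of $\cW$: hypotheses with positive budget inconsistent with $(x,y)$ are retained with decremented weight rather than discarded. I will therefore need to verify two monotonicity-type facts: (i) every tree shattered by $\cW_{x \to y}$ extends, by prepending a root labeled $x$ with the appropriate child edge, to a tree shattered by $\cW$, which gives the ``$\geq$'' direction of the recursion; and (ii) every shattered tree for $\cW$ with root labeled $x$ decomposes into shattered trees for $\cW_{x \to 0}$ and $\cW_{x \to 1}$, which gives ``$\leq$''. Both are exactly the content of Observation~\ref{obs:weighted-decomposition}, so no new combinatorial work is required beyond checking that the bookkeeping of mistake budgets is consistent along each branch.
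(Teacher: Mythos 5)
Your proposal is correct and follows essentially the same route as the paper: the paper also derives Theorem~\ref{thm:characterization_rand_k} as an immediate corollary of the weighted-class result by instantiating $\cW_{\cH,k}=\{(h,k):h\in\cH\}$, and proves the weighted result by repeating the probabilistic lower-bound argument of Lemma~\ref{lem:lower_bound_help} and the $\WRSOA$ upper bound, with Observation~\ref{obs:weighted-decomposition} supplying exactly the budget bookkeeping you flag as the one subtlety. No gaps.
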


\smallskip
Using the classic lower bounds of \cite{littlestone1994weighted,bendavid2009agnostic} and recent results of~\cite{alon2021adversarial}, we can bound the optimal mistake bound in terms of the \emph{realizable} Littlestone dimension:

\begin{theorem} \label{thm:k-realizable-littlestone-bound}
Let $\cH$ be an hypothesis class with at least two hypotheses, and let $k \in \mathbb{N}$. Then,
\[
 \M^\star(\cH, k) = k + \Theta\!\left(\sqrt{k \cdot \LD(\cH)} + \LD(\cH)\right).
\]
\end{theorem}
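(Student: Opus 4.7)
The plan is to invoke Theorem~\ref{thm:characterization_rand_k}, which gives $\M^\star(\cH, k) = \RL_k(\cH)$, and thereby reduce both bounds to purely combinatorial statements about $k$-shattered trees. Let $d = \LD(\cH)$. I want to prove the two-sided bound
\[
 k + \Omega\!\left(\sqrt{kd} + d\right) \;\leq\; \RL_k(\cH) \;\leq\; k + O\!\left(\sqrt{kd} + d\right).
\]

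For the \emph{upper bound}, I would work with the weighted class $\cW = \{(h,k) : h \in \cH\}$, for which Theorem~\ref{thm:characterization_rand_weighted} yields $\RL_k(\cH) = \RL(\cW)$. By Corollary~\ref{cor:randomized-littlestone-quasi-balanced} (adapted to weighted classes), I may restrict attention to quasi-balanced shattered trees $T$, for which Proposition~\ref{lem:quasi_concentration} provides sharp concentration of the branch length around $E_T$. Now fix a quasi-balanced $T$ shattered by $\cW$ and let $L$ be the length of a random branch. Along a random branch there is a realizing hypothesis making at most $k$ "mistake" edges. A hypothesis drawn from the shattered subtree argument can absorb at most $\LD$-many forced edges after its budget is exhausted, and concentration shows that with high probability the total length $L$ cannot exceed $2k + 2d + c\sqrt{E_T}$ — since otherwise a $(1-\eps)$-fraction of the branches would be impossible to realize within the budget $k$, contradicting $k$-shatteredness. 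Solving the resulting quadratic inequality $E_T \leq 2k + 2cd + c'\sqrt{E_T}$ yields $E_T/2 \le k + O(\sqrt{kd} + d)$, as desired.

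For the \emph{lower bound}, I would exhibit three $k$-shattered trees whose expected branch lengths witness each piece of the sum. The $k$ term comes from the single-hypothesis class: attaching the "truncated random walk" tree that stops after $k+1$ disagreements with the constant function yields a tree with $E_T \geq 2k$, and this tree is $k$-shattered by any $\cH$ (using any $h \in \cH$, after translating labels along a chosen branch of a length-$(2k+1)$ path). The $d$ term follows directly from any shattered tree of depth $d$, which is in particular $k$-shattered and has $E_T \geq d$. The $\sqrt{kd}$ term requires an embedding into the experts problem: since $\cH$ shatters a tree of depth $d$, we extract $n = 2^d$ hypotheses corresponding to its leaves and use the classical lower bound for prediction using expert advice (the special case $\M^\star(n,k) = k + \Theta(\sqrt{k\log n} + \log n)$ stated in the paragraph after Theorem~\ref{thm:expertIntro}). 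The embedding goes by running many independent traversals of the shattered tree: each round of an "experts game" on $n$ experts is simulated by $d$ rounds descending the shattered tree, so that any adversarial strategy in the experts setting lifts to a $k$-shattered tree of appropriate expected branch length.

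I expect the main obstacle to be the embedding in part of the lower bound that yields the $\sqrt{kd}$ term, and the proper combination of the three components into a single $k$-shattered tree rather than three independent ones. A clean way to merge them is to concatenate: start with the shattered depth-$d$ tree; at each leaf, hang a copy of the random-walk "mistake budget" tree of expected length $2k$; and at intermediate nodes allow branching that simulates the experts lower bound. Showing that the concatenated tree is \emph{$k$-shattered by $\cH$} (and not just $k$-shattered by some larger class) requires a careful choice of the realizing hypothesis per branch, which I would do by tracking, along any branch, which leaf-hypothesis $h_\ell$ of the shattered tree is being tested; then the mistake budget $k$ is spent between the random-walk segment and the experts-simulation segment. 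Finally, verifying the concentration-based upper bound and the three-part lower bound match up to constants gives the claimed $\Theta(\sqrt{kd} + d)$ regret bound.
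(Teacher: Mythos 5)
Your reduction to $\RL_k(\cH)$ via Theorem~\ref{thm:characterization_rand_k} and the overall shape (combinatorial upper bound on $k$-shattered trees, explicit trees for the lower bound) are reasonable, but both halves have genuine gaps, and the paper's proof goes a different way on each. For the upper bound, your central claim --- that for a quasi-balanced $k$-shattered tree the random branch length is w.h.p.\ at most $2k+2d+c\sqrt{E_T}$ because ``otherwise a $(1-\eps)$-fraction of branches would be impossible to realize'' --- is exactly the statement that needs proof, and for a general (possibly infinite) class $\cH$ it does not follow from anything you wrote. You would need a Sauer--Shelah-type counting lemma bounding the number of behaviors of a Littlestone class along the branches of a tree (the analogue of Lemma~\ref{lem:experts_k_realizing_class}, where the union bound over $n$ hypotheses must be replaced by a union bound over roughly $\binom{t}{\leq d}$ realizable branch-labelings); the phrase ``a hypothesis can absorb at most $\LD$-many forced edges after its budget is exhausted'' is not a substitute for this. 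The paper sidesteps the issue entirely: it quotes the regret bound $\M^\star(\cH,k,\rounds)\leq k+O(\sqrt{\rounds\cdot\LD(\cH)})$ of Alon et al., uses Proposition~\ref{pro:shallow-tree} to argue that horizon $\rounds=O(\M^\star(\cH,k))$ already realizes the unbounded-horizon value up to $+1$, and solves the resulting quadratic inequality in $\sqrt{\M^\star(\cH,k)}$.

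For the lower bound, the $k$ term (single-hypothesis random-walk tree) and the $d$ term are fine, and you correctly note that the three contributions must be combined additively rather than taken as a max. But your source for the $\sqrt{kd}$ term is flawed: the $2^d$ hypotheses sitting at the leaves of a shattered depth-$d$ tree do \emph{not} behave like the universal class $\cU_{2^d}$, because the experts lower bound requires the adversary to present \emph{arbitrary} prediction vectors in $\{0,1\}^{2^d}$ at every round, whereas these hypotheses only realize the label patterns dictated by the tree structure; likewise ``many independent traversals'' of a depth-$d$ shattered tree are not available to a class of Littlestone dimension $d$. The paper instead splits into cases: if $k\leq\LD(\cH)$ the bound $k+\LD(\cH)/2$ of Littlestone--Warmuth suffices, and if $k>\LD(\cH)$ it invokes the agnostic lower bound $\M^\star_{\Agn}(\cH,\rounds)\geq b+\Omega(\sqrt{\LD(\cH)\cdot\rounds})$ of Ben-David--P\'al--Shalev-Shwartz with $\rounds=k$ (where $b\leq k$ is the loss of the best hypothesis on that prefix), and then appends the single-hypothesis continuation of Theorem~\ref{thm:single-expert-intro} to force an additional $k-b$, so that the whole sequence remains $k$-realizable. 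If you want to keep your construction self-contained, you would need to reprove something like the Ben-David et al.\ block construction (present one shattered-tree instance repeatedly within each of $d$ blocks and use binomial anti-concentration) rather than importing the $\cU_n$ lower bound.
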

We prove this result in Section~\ref{sec:k-realizable-littlestone-bound}.
Note that since $\LD(\cH)$ and $\RL(\cH)$ differ by at most a constant factor, the theorem still holds if we replace $\LD(\cH)$ by $\RL(\cH)$.

\smallskip

Using the experts algorithm of~\cite{koolen:15}, we can construct an algorithm which works in the adaptive setting, that is, without knowledge of $k$:

\begin{theorem}
\label{thm:adaptive-algorithm} 
Let $\cH$ be an hypothesis class.
There is an algorithm $\Squint$ such that for every input sequence $S$ which is $k^\star$-realizable by $\cH$,
\[
 \M(\Squint;S) \leq \M^\star(\cH,k^\star) + O\!\left( \sqrt{ \M^\star(\cH, k) \log ((k^\star+1) \log \M^\star(\cH,k^\star))} \right).
\]
Furthermore, $\Squint$ is adaptive, that is, it has no knowledge of $k^\star$.
\end{theorem}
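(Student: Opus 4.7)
The plan is to construct the adaptive learner as a Squint-aggregation \cite{koolen:15} over a countable family of experts, one for each possible value of $k$. Concretely, for every $k\in\mathbb{N}$, let $E_k$ denote the optimal randomized learner for the $k$-realizable setting provided by Theorem~\ref{thm:characterization_rand_k} (i.e.\ the weighted variant of $\RSOA$ instantiated with the weighted class $\cW_{\cH,k}$). By that theorem, whenever the input sequence $S$ is $k$-realizable by $\cH$, $E_k$'s expected cumulative loss on $S$ is at most $\RL_k(\cH) = \M^\star(\cH,k)$.

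To aggregate these experts adaptively, I would run $\Squint$ from \cite{koolen:15} over $\{E_k\}_{k\in\mathbb{N}}$ with a prior satisfying $\log(1/\pi_k) = O(\log(k+1))$; the choice $\pi_k \propto 1/((k+1)(k+2))$ works. The Squint second-order regret bound guarantees that, against any single expert $E_k$,
\[
L_{\Squint} - L_{E_k} \;\leq\; O\!\left(\sqrt{V_{E_k}\,\log(\log L_{\Squint}/\pi_k)} \;+\; \log(\log L_{\Squint}/\pi_k)\right),
\]
where $L_{\cdot}$ denotes cumulative loss and $V_{E_k}=\sum_t(\ell_t^{\Squint}-\ell_t^{E_k})^2$ is the cumulative squared excess loss. (The $\log L_{\Squint}$ factor, rather than $\log T$, can be obtained from the horizon-independent version of Squint, or by a standard doubling argument on the algorithm's own cumulative loss, which is the key technical convenience that makes the bound independent of the unknown horizon.)

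Applied with $k=k^\star$ on a $k^\star$-realizable sequence, we have $L_{E_{k^\star}} \leq \M^\star(\cH,k^\star)$. Because all per-round losses lie in $[0,1]$, the squared differences are dominated by absolute differences, so $V_{E_{k^\star}} \leq L_{\Squint} + L_{E_{k^\star}}$. Substituting, using $\log(1/\pi_{k^\star}) = O(\log(k^\star+1))$, and solving the resulting quadratic inequality in $L_{\Squint}$ yields
\[
L_{\Squint} \;\leq\; \M^\star(\cH,k^\star) + O\!\left(\sqrt{\M^\star(\cH,k^\star)\,\log\!\bigl((k^\star+1)\log \M^\star(\cH,k^\star)\bigr)}\right),
\]
which is the bound claimed in the theorem.

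The main technical hurdle will be making the argument genuinely horizon-free: the standard Squint bound carries a $\log T$ factor, whereas the theorem demands a bound depending only on $\M^\star(\cH,k^\star)$ and $k^\star$. I would handle this either by invoking the anytime version of Squint, which replaces $\log T$ with $\log(L_{\Squint}+1)$, or by the usual self-referential doubling trick: run $\Squint$ in epochs indexed by the current cumulative loss doubling, pay an extra geometric sum at the very end, and fold the result back into the quadratic in $L_{\Squint}$. A secondary subtlety is measurability and well-definedness of a Squint aggregation over infinitely many experts; this is routine and is addressed, as usual, by truncating to the first $N$ experts where $N$ is taken large enough (depending only on $k^\star$) for the aggregation over the tail to be dominated by the prior mass $\sum_{k>N}\pi_k$.
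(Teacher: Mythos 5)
Your proposal follows essentially the same route as the paper: Squint aggregated over the optimal $k$-realizable learners ($\WRSOA$ on $\cW_{\cH,k}$) with prior $\pi_k=\frac{1}{(k+1)(k+2)}$, the variance bound $V_{k}\leq \M(\Squint;S)+\M(\Lrn_k;S)$ from losses lying in $[0,1]$, and solving the resulting quadratic in $\sqrt{\M(\Squint;S)}$. Your worry about a residual $\log T$ factor is already handled by the form of the Squint guarantee the paper invokes (Theorem 3 of \cite{koolen:15}), whose regret depends on $\log\log V_k$ rather than the horizon, so no doubling trick is needed; otherwise the argument is the paper's.
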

We describe and analyze the algorithm in Section~\ref{sec:adaptive-algorithm}.




\subsection{Proof of Optimal Deterministic Mistake Bound}\label{sec:krealdetproof}

The case $\cW = \emptyset$ holds by definition. Therefore we assume that $\cW \neq \emptyset$. The lower bound ``$\LD(\cW) \leq \M^\star_D(\cW)$'' boils down to the following lemma:

\begin{lemma} \label{lem:lower-bound-weighted-help}
Let $\cW$ be a weighted hypothesis class, and let $T$ be a finite tree which is shattered by $\cW$. Then, for every deterministic learning rule $\Lrn$ there exists a realizable sequence $S$ so that $\M(\Lrn;S) \geq m_T$. Furthermore, $S$ corresponds to one of the branches in $T$.
\end{lemma}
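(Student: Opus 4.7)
The plan is to exhibit an explicit adversarial strategy that walks down $T$, forcing a mistake on every round, and showing that the resulting example sequence is one of the branches of $T$ (hence realizable by $\cW$ by the shattering assumption).

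The strategy is the obvious analog of the one behind the classical $\SOA$ lower bound. Starting at the root $v_0$ of $T$, at each step $i$ the adversary is positioned at some internal node $v_{i-1}$ labeled by an instance $x_i \in \cX$. It sends $x_i$ to $\Lrn$ and receives the deterministic prediction $\hat{y}_i \in \{0,1\}$. The adversary then declares the true label to be $y_i = 1 - \hat{y}_i$ and moves to $v_i$, the child of $v_{i-1}$ reached by the edge labeled $y_i$. The process stops when $v_i$ becomes a leaf, producing a maximal prefix of $T$, i.e., a branch $b$ of length $|b| \geq m_T$.

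By construction, on every round along the walk we have $|y_i - \hat{y}_i| = 1$, so $\M(\Lrn;S) = |b| \geq m_T$. Moreover, the sequence $S = (x_1,y_1), \ldots, (x_{|b|}, y_{|b|})$ is precisely the sequence of examples corresponding to the branch $b$ of $T$. Since $T$ is shattered by $\cW$, every branch of $T$ is realizable by $\cW$, and in particular so is $b$. This verifies the ``furthermore'' clause and completes the lower bound.

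The only non-triviality is the bookkeeping that the sequence produced by this adversarial walk is indeed the example sequence of a branch in $T$; this is immediate from the definitions of branches and of realizability by $\cW$ (with the weighted decomposition of Observation~\ref{obs:weighted-decomposition} offering a clean inductive alternative if one prefers to argue by induction on $\depth(T)$). No subtle obstacle arises because the argument is purely combinatorial: the weights attached to hypotheses in $\cW$ only enter through the assumption that $T$ is shattered, and once that is invoked, the same mistake-per-round accounting as in Littlestone's original proof goes through verbatim.
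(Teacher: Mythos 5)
Your proposal is correct and matches the paper's proof essentially verbatim: both construct the adversarial walk down $T$ that flips the learner's deterministic prediction at every internal node, observe that the resulting sequence is a branch of length at least $m_T$ on which the learner errs every round, and invoke the shattering assumption for realizability. No gap.
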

\begin{proof}
We construct the sequence $S$ by traversing $T$, starting at the root $v_1$. At step $i$, we send $\Lrn$ the instance $x_i$ labelling $v_i$. If the learner predicts $\hat{y}_i$, we set the true label to $y_i = 1-\hat{y}_i$, and let $v_{i+1}$ be the vertex obtained from $v_i$ by following the leaf labeled $y_i$. We stop once the process reaches a leaf.

By construction, $S$ corresponds to one of the branches of $T$, and the number of mistakes is $|S| \geq m_T$. Since $T$ is shattered by $\cW$, then $S$ is realizable by $\cW$.
\end{proof}

By applying the lemma on every shattered tree and taking the supremum, we conclude the lower bound:

\begin{corollary}[Lower bound] \label{cor:characterization_weighted_upper}
For every weighted hypothesis class $\cW$ it holds that $\M^\star_D(\cW) \geq \LD(\cW)$.
\end{corollary}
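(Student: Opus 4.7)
The plan is to lift Lemma~\ref{lem:lower-bound-weighted-help} from the ``per-tree'' statement to the ``class-level'' statement by a two-step chain of inequalities: first fix a deterministic learner and take supremum over realizable sequences (using a shattered tree), and then take infimum over learners and supremum over shattered trees. Since the lemma already does essentially all of the work, this will be a short bookkeeping argument.

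Concretely, fix an arbitrary deterministic learning rule $\Lrn$. For each tree $T$ shattered by $\cW$, Lemma~\ref{lem:lower-bound-weighted-help} produces a sequence $S_{\Lrn,T}$ which (i) corresponds to a branch of $T$ and therefore has length at most $\depth(T)$, (ii) is realizable by $\cW$, and (iii) satisfies $\M(\Lrn; S_{\Lrn,T}) \geq m_T$. In particular,
\[
\sup_{S \text{ realizable by } \cW} \M(\Lrn; S) \;\geq\; \M(\Lrn; S_{\Lrn,T}) \;\geq\; m_T.
\]
Since $T$ was an arbitrary tree shattered by $\cW$, I can take supremum over all such $T$ on the right-hand side, yielding
\[
\sup_{S} \M(\Lrn; S) \;\geq\; \sup_{T \text{ shattered}} m_T \;=\; \LD(\cW).
\]
Taking the infimum over deterministic learning rules $\Lrn$ on the left gives $\M^\star_D(\cW) \geq \LD(\cW)$, which is exactly the claim.

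There is essentially no obstacle here, but two small sanity checks are worth mentioning. First, the case $\LD(\cW) = \infty$ is covered automatically: for every $M \in \mathbb{N}$ there exists a shattered tree $T$ with $m_T \geq M$, so the chain above forces $\sup_S \M(\Lrn; S) \geq M$ for all $M$, i.e., $\sup_S \M(\Lrn; S) = \infty$ for every $\Lrn$, hence $\M^\star_D(\cW) = \infty$. Second, the lemma was stated for finite trees, but $\LD(\cW)$ is defined as a supremum, so it suffices to consider finite shattered trees of arbitrarily large minimum branch length; no passage to infinite trees is needed for the lower bound.
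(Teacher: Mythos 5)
Your proof is correct and is exactly the paper's argument: the paper likewise derives the corollary by applying Lemma~\ref{lem:lower-bound-weighted-help} to every shattered tree and taking the supremum, then the infimum over deterministic learners. Your additional remarks on the $\LD(\cW)=\infty$ case and the sufficiency of finite trees are sound but routine.
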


We now turn to prove the upper bound ``$\LD(\cW) \geq \M^\star_D(\cW)$''. This is achieved via the $\WSOA$ learning rule, depicted in Figure~\ref{fig:WSOA-det}.

\begin{figure}
    \centering
    \begin{tcolorbox}
    \begin{center}
        \textsc{$\WSOA$}
    \end{center}
    \textbf{Input:} A weighted hypothesis class $\cW$.
    \\
    \textbf{Initialize:} Let $V^{(1)} = \cW$.\\
    \\
    \textbf{for $i=1,2,\dots$}
    \begin{enumerate}
        \item Receive $x_i$.
        \item Predict
            \[
            \hat{y}_i = \argmax_{b\in \cY} \LD\mleft(V^{(i)}_{x_i \to b}\mright).
            \]
        \item Receive true label $y_i$.
        \item Update $V^{(i+1)} = V^{(i)}_{x_i \to y_i}$.
    \end{enumerate}
    \end{tcolorbox}
    \caption{The weighted version of $\SOA$.} 
    \label{fig:WSOA-det}
\end{figure}

\begin{lemma}[Upper bound] \label{lem:k_det_upper}
Let $\cW$ be a non-empty weighted hypothesis class. 
The $\WSOA$ learner described in Figure~\ref{fig:WSOA-det} has the mistake bound
\[
\M(\WSOA ; S) \leq \LD(\cW)
\]
for every input sequence $S$ realizable by $\cW$.
\end{lemma}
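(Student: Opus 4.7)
My plan is to mirror Littlestone's classical potential-function analysis of the $\SOA$, using $\Phi_i := \LD(V^{(i)})$ as the potential, and to show that every mistake strictly decreases $\Phi$ while non-mistakes never increase it. Since $V^{(i)}$ remains non-empty throughout (because the input sequence is realizable by $\cW$, so iterating Observation~\ref{obs:weighted-decomposition} shows that each restriction $\cW_{x_1\to y_1,\ldots,x_i\to y_i}$ is non-empty and hence has $\LD\geq 0$), the potential is bounded below by $0$, so the total mistake count is at most $\Phi_1 = \LD(\cW)$.

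The two key ingredients I would establish are:
\begin{enumerate}
    \item \textbf{Monotonicity.} For every weighted class $\cW$, every $x\in\cX$, and every $y\in\{0,1\}$, $\LD(\cW_{x\to y}) \leq \LD(\cW)$. This is not immediate because $\cW_{x\to y}$ is not literally a subclass of $\cW$ (weights get decreased). But if $T$ is shattered by $\cW_{x\to y}$, then every branch of $T$ is realizable by $\cW_{x\to y}$; unpacking the two cases in the definition of $\cW_{x\to y}$ (a hypothesis with $h(x)=y$ keeping its budget $w$, or a hypothesis with $h(x)\neq y$ whose budget drops from $w$ to $w-1$) immediately shows each such branch is also realizable by $\cW$ itself via the same hypothesis $h$. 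Hence $T$ is shattered by $\cW$.
    \item \textbf{Strict drop on a mistake.} Suppose the learner errs at round $i$, so $y_i \neq \hat y_i = \argmax_b \LD(V^{(i)}_{x_i\to b})$; I claim $\LD(V^{(i)}_{x_i\to y_i}) \leq \LD(V^{(i)}) - 1$. Since $\hat y_i$ is the argmax, we have $\LD(V^{(i)}_{x_i\to y_i}) = \min_b \LD(V^{(i)}_{x_i\to b})$. If this minimum equaled $\LD(V^{(i)}) = d$, then pick trees $T_0,T_1$ shattered by $V^{(i)}_{x_i\to 0}, V^{(i)}_{x_i\to 1}$ with $m_{T_0},m_{T_1}\geq d$; by Observation~\ref{obs:weighted-decomposition}, joining them under a new root labeled $x_i$ yields a tree shattered by $V^{(i)}$ with minimum branch length $\geq d+1$, contradicting $\LD(V^{(i)}) = d$.
\end{enumerate}

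Combining (1) and (2) with the fact that $V^{(i+1)} = V^{(i)}_{x_i\to y_i}$ gives $\Phi_{i+1}\leq \Phi_i$ always and $\Phi_{i+1}\leq \Phi_i - 1$ whenever a mistake occurs. Together with $\Phi_i\geq 0$ throughout (which rules out a mistake when $\Phi_i=0$), this yields $\M(\WSOA;S) \leq \Phi_1 = \LD(\cW)$, as desired.

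The main subtlety is ingredient~(1): in the classical unweighted setting one uses $V_{x\to y}\subseteq V$ almost tautologically, whereas here the restriction operation deletes hypotheses whose budget is exhausted and modifies the remaining budgets, so the argument must be routed through shattered trees and the realizability equivalence of Observation~\ref{obs:weighted-decomposition}. Once this is settled the remainder is a straightforward potential argument essentially identical to Littlestone's original proof.
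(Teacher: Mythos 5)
Your proof is correct and follows essentially the same route as the paper's: a potential argument on $\LD(V^{(i)})$, with the strict drop on a mistake obtained by joining two shattered trees under a root labeled $x_i$ via Observation~\ref{obs:weighted-decomposition}. Your ingredient~(1) --- that $\LD(\cW_{x\to y})\leq\LD(\cW)$ even though $\cW_{x\to y}$ is not literally a subclass of $\cW$ --- is left implicit in the paper's proof but is genuinely needed to guarantee the potential does not increase on correct rounds, so making it explicit is a welcome completion rather than a deviation.
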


\begin{proof}
We will show that each time that $\WSOA$ makes a mistake, the Littlestone dimension drops by at least $1$. That is, if $\hat{y}_i \neq y_i$ then $\LD(V^{(i+1)}) < \LD(V^{(i)})$. Since the Littlestone dimension is always non-negative, it follows that $\WSOA$ makes at most $\LD(\cW)$ mistakes.

Suppose that $\hat{y}_i \neq y_i$ yet $\LD(V^{(i+1)}) = \LD(V^{(i)})$. The choice of $\hat{y}_i$ shows that $\LD(V^{(i)}_{x_i \to 0}) = \LD(V^{(i)}_{x_i \to 1}) = \LD(V^{(i)})$. This is, however, impossible. Indeed, take trees $T_0,T_1$ shattering $V^{(i)}_{x_i \to 0},V^{(i)}_{x_i \to 1}$ with $m_{T_0} = m_{T_1} = \LD(V^{(i)})$. Observation~\ref{obs:weighted-decomposition} shows that the tree $T$ whose root is labeled $x_i$ and in which $T_0,T_1$ are the subtrees of the root's children is shattered by $V^{(i)}$. Since $m_T = \LD(V^{(i)}) + 1$, we reach a contradiction.
\end{proof}

\subsection{Proof of Optimal Randomized Mistake Bound}\label{sec:krealrandproof}

The proof of the optimal mistake bound in the randomized setting, Theorem~\ref{thm:characterization_rand_weighted}, is very similar to the proof of its counterpart in the realizable setting, Theorem~\ref{thm:characterization}.

The proof of the lower bound ``$\RL(\cW) \leq \M^\star(\cW)$'' is virtually identical to the proof of Lemma~\ref{lem:lower_bound_help}.

The proof of the upper bound ``$\RL(\cW) \geq \M^\star(\cW)$'' uses $\WRSOA$, the weighted counterpart of $\RSOA$, which appears in Figure~\ref{fig:WSOA-rand}. The proof of Lemma~\ref{lem:characterization_upper} extends, with virtually no changes, to show that $\M(\WRSOA;S) \leq \RL(\cW)$ for every input sequence $S$ realizable by $\cW$.

\begin{figure}
    \centering
    \begin{tcolorbox}
    \begin{center}
        \textsc{$\WRSOA$}
    \end{center}
    \textbf{Input:} A weighted hypothesis class $\cW$.
    \\
    \textbf{Initialize:} Let $V^{(1)} = \cW$.\\
    \\
    \textbf{for $i=1,2,\dots$}
    \begin{enumerate}
        \item Receive $x_i$.
        \item Predict $p_i \in [0,1]$ such that the value 
                \[
                \max \mleft \{p_i + \RL\mleft(V^{(i)}_{x_i \to 0}\mright), 1 - p_i + \RL\mleft(V^{(i)}_{x_i \to 1}\mright) \mright \}   
                \]
        is minimized.
        \item Receive true label $y_i$.
        \item Update $V^{(i+1)} = V^{(i)}_{x_i \to y_i}$.
    \end{enumerate}
    \end{tcolorbox}
    \caption{The weighted version of $\RSOA$.} 
    \label{fig:WSOA-rand}
\end{figure}

\subsection{Explicit Bounds in Terms of Littlestone Dimension}
\label{sec:k-realizable-littlestone-bound}

Here we prove Theorem~\ref{thm:k-realizable-littlestone-bound}, which bounds $\M^\star(\cH, k)$ in terms of $k$ and $\LD(\cH)$ (or $\RL(\cH)$).
In the proof, we use the notation $\M^\star(\cH, k, \rounds)$ for the optimal mistake bound in the $k$-realizable setting when the number of rounds is bounded by $\rounds$, and the notation $\M^\star_{\Agn}(\cH, \rounds)$ for the optimal mistake bound when the number of rounds is bounded by $\rounds$, but there is no limitation on the number of mistakes made by the best hypothesis in $\cH$.

We first prove the upper bound.
\cite{alon2021adversarial} have shown that, for any time horizon $\rounds$, we always have 
$\M^{\star}(\cH, k, \rounds) \leq k + O\!\left( \sqrt{ \rounds \cdot \LD(\cH) } \right)$.
By extending Proposition~\ref{pro:shallow-tree} to the $k$-realizable case, time horizon $\rounds = O(\M^{\star}(\cH, k))$ suffices to guarantee 
$\M^{\star}(\cH, k) \leq \M^{\star}(\cH, k, \rounds)+1$.
Plugging this time horizon into the result of 
\cite{alon2021adversarial} reveals that 
\[
\M^{\star}(\cH, k) \leq k + O\!\left( \sqrt{ \M^{\star}(\cH, k) \cdot \LD(\cH) } \right).
\]
Solving this quadratic inequality in $\sqrt{\M^{\star}(\cH, k)}$ 
yields the upper bound claimed in the theorem.

We now turn to prove the lower bound. We  consider two cases. If $k \leq \LD(\cH)$, it suffices to prove a lower bound of $k + \Omega(\LD(\cH))$. The lower bound $k + \LD(\cH)/2$ of \cite{littlestone1994weighted} establishes that. In the complementary case, suppose that $k > \LD(\cH)$. Therefore, we only need to prove a lower bound of $k + \Omega \mleft(\sqrt{k \cdot \LD(\cH)} \mright)$. This follows from the following adaptation of the classic regret bound of \cite{bendavid2009agnostic}. They showed that $\M^\star_{\Agn}(\cH, \rounds) \geq b + \Omega \mleft(\sqrt{\LD(\cH) \cdot \rounds} \mright)$, where $b$ is the minimal number of mistakes made by a best hypothesis $h^\star \in \cH$. To adapt this bound to our setting, first play the game for $\rounds=k$ rounds, forcing a loss of at least $b + \Omega \mleft(\sqrt{k \cdot\LD(\cH)} \mright)$ on the learner. Now, as we prove in Theorem~\ref{thm:single-expert-intro},\footnote{We prove this result in the setting of prediction with expert advice, but it holds for general hypothesis classes (as long as the domain is non-empty).} the adversary can further force the learner a loss arbitrarily close to $k-b$, using an input sequence which is $(k-b)$-realizable by $h^\star$. Overall, the input sequence is $k$-realizable by $h^\star$, and we get the desired lower bound $k + \Omega \mleft(\sqrt{k \cdot \LD(\cH)} \mright)$.

\subsection{Adapting to \texorpdfstring{$k$}{k}}
\label{sec:adaptive-algorithm}

This section presents our proof of Theorem~\ref{thm:adaptive-algorithm}, showing that 
it is possible to adapt to the value of $k$ 
without sacrificing too significantly in the expected mistake bound.

The adaptive technique we propose uses an experts algorithm of 
\cite{koolen:15} named $\Squint$, with experts 
defined by the optimal randomized algorithm for the $k$-realizable 
setting, for all values of $k \in \mathbb{N}$ (including $k = 0$).

The experts algorithm $\Squint$ accepts an input sequence $S = (x_1,y_1), \ldots, (x_n,y_n)$ and a list of learners $\Lrn_k$, each with an associated weight $\pi_k$. The weights $\pi_k$ should form a probability distribution. With an appropriate choice of parameters, $\Squint$ has the following guarantee~\cite[Theorem 3]{koolen:15}:
\begin{equation} \label{eq:squint-guarantee}
\M(\Squint;S) \leq \min_k \left\{
\M(\Lrn_k;S) + O \! \left( \sqrt{V_k \log \frac{\log V_k}{\pi_k}} + \log \frac{1}{\pi_k} \right)
\right\},
\end{equation}
where $V_k$ is an uncentered variance term given by
\[
 V_k = \sum_{i=1}^n (|\Squint(x_1,y_1,\ldots,x_{i-1},y_{i-1},x_i) - y_i| - |\Lrn_k(x_1,y_1,\ldots,x_{i-1},y_{i-1},x_i) - y_i|)^2.
\]
Since both absolute values are in the range $[0,1]$, we have
\begin{align*}
 V_k &\leq \sum_{i=1}^n |\Squint(x_1,y_1,\ldots,x_{i-1},y_{i-1},x_i) - y_i| + \sum_{i=1}^n |\Lrn_k(x_1,y_1,\ldots,x_{i-1},y_{i-1},x_i) - y_i| \\ &= \M(\Squint;S) + \M(\Lrn_k;S).
\end{align*}
For any given $k$, if $\M(\Squint;S) > \M(\Lrn_k;S)$, 
then we have $V_k \leq 2 \M(\Squint;S)$, 
so that \eqref{eq:squint-guarantee} implies 
\begin{equation*}
\M(\Squint;S) \leq \M(\Lrn_k;S) + O\!\left( \sqrt{\M(\Squint;S) \log \frac{\log \M(\Squint;S)}{\pi_k}} + \log \frac{1}{\pi_k} \right).
\end{equation*}
This inequality trivially holds as well in the case 
$\M(\Squint;S) \leq \M(\Lrn_k;S)$ due to the first term on the right hand side.
Moreover, this inequality further implies 
\begin{equation*} 
\M(\Squint;S) = O\!\left( \M(\Lrn_k;S) + \log \frac{1}{\pi_k} + 1 \right).
\end{equation*}
To see this, note that were it not the case, 
we could upper bound each 
$\M(\Lrn_k;S)$ and $\log \frac{1}{\pi_k}$ 
on the right hand side by $\M(\Squint;S)/c$ for some 
large constant $c$, making the right hand side 
strictly less than $\M(\Squint;S)$: a contradiction.
Plugging in this upper bound on $\M(\Squint;S)$ 
into the $\log\log \M(\Squint;S)$
term and simplifying with elementary inequalities reveals 
\begin{equation*}
\M(\Squint;S) \leq \M(\Lrn_k;S) + O\!\left( \sqrt{\M(\Squint;S) \log \frac{\log \M(\Lrn_k;S)}{\pi_k}} + \log \frac{1}{\pi_k} \right).
\end{equation*}
This is a quadratic inequality in $\sqrt{\M(\Squint;S)}$. 
Solving the quadratic for the range of $\M(\Squint;S)$ where the inequality holds, we have
\begin{equation*}
\M(\Squint;S) \leq \M(\Lrn_k;S) + O\!\left( \sqrt{\M(\Lrn_k;S) \log \frac{\log \M(\Lrn_k;S)}{\pi_k}} + \log \frac{\log \M(\Lrn_k;S)}{\pi_k} \right).
\end{equation*}
Since this holds for any $k$, we conclude that 
\begin{equation} \label{eq:squint-guarantee-2}
 \M(\Squint;S) \leq \min_k \left\{
 \M(\Lrn_k;S) + O \! \left( \sqrt{\M(\Lrn_k;S) \log \frac{\log \M(\Lrn_k;S)}{\pi_k}} + \log \frac{\log \M(\Lrn_k;S)}{\pi_k} \right)
 \right\}.
\end{equation}

We instantiate $\Squint$ with algorithm $\WRSOA$ of Figure~\ref{fig:WSOA-rand}. Namely, for every $k$, we let $\Lrn_k$ be the instantiation of $\WRSOA$ with $\cW_{\cH,k}$. We use the weights $\pi_k = \frac{1}{(k+1)(k+2)}$. Since $\pi_k = \frac{1}{k+1} - \frac{1}{k+2}$, they indeed constitute a probability distribution. Since $\WRSOA$ achieves the optimal mistake bound (see Section~\ref{sec:krealrandproof}), Eq.~\eqref{eq:squint-guarantee-2} shows that if $S$ is $k^\star$-realizable by $\cH$ then
\[
 \M(\Squint;S) \leq
 \M^\star(\cH,k^\star) +
 O \! \left(
 \sqrt{\M^\star(\cH,k^\star) \log\bigl((k^\star+1) \log \M^\star(\cH,k^\star)\bigr)} + \log\bigl((k^\star+1) \M^\star(\cH,k^\star) \bigr)
 \right).
\]
Since $\M^\star(\cH,k^\star) \geq k^\star/2$, the term $\log\bigl((k^\star+1) \M^\star(\cH,k^\star)\bigr)$ can be swallowed by the preceding term.

\iffull
\subsection{Finite Classes}
\label{sec:weighted-finite-classes}

The randomized Littlestone dimension is defined as a supremum. The supremum is not always achieved even in the realizable case, as Example~\ref{exmp:bad_singletons} shows. However, if the hypothesis class is finite, then Proposition~\ref{pro:finite-hypothesis-class} shows that the randomized Littlestone dimension is achieved by a finite tree.

In this section, we extend the latter result to the setting of weighted hypothesis classes, using infinite trees.
The trees that we construct will furthermore be ``nonredundant'', in the following sense.

\begin{definition}[Nonredundant trees]
Let $\cW$ be a non-empty weighted hypothesis class, and let $T$ be a non-empty tree shattered by it. The tree $T$ is \emph{weakly nonredundant for $\cW$} if one of the following holds:
\begin{enumerate}
    \item $T$ is a leaf.
    \item The root of $T$ is labeled by an instance $x$ such that either $\cW_{x \to 0} \neq \cW$ or $\cW_{x \to 1} \neq \cW$.
    \item $\cW$ is a singleton (that is, $|\cW| = 1$).
\end{enumerate}

A non-empty tree $T$ is \emph{nonredundant for $\cW$} if this holds recursively. In detail, if $T$ is a leaf, then it is always nonredundant. Otherwise, let $x$ be the label of the root, leading to the two subtrees $T_0,T_1$. The tree $T$ is nonredundant for $\cW$ if it is weakly nonredundant for $\cW$, the tree $T_0$ is nonredundant for $\cW_{x \to 0}$, and tree $T_1$ is nonredundant for $\cW_{x \to 1}$.
\end{definition}

If the root of $T$ is labeled by an instance $x$ such that $\cW_{x \to 0} = \cW$, then this corresponds to an adversarial strategy in which by predicting~$0$, the learner can guarantee that either her prediction is correct, or all experts are wrong. Intuitively, there is no reason for the adversary to ask such a question. We prove this formally below.

\begin{proposition} \label{pro:weighted-finite-hypothesis-class}
Let $\cW$ be a finite weighted hypothesis class. There exists a (possibly infinite) tree $T_\infty$ shattered by $\cW$ such that
\[
 \RL(\cW) = \frac{1}{2} E_{T_\infty}.
\]
Moreover, $T_\infty$ is monotone and nonredundant for $\cW$.
\end{proposition}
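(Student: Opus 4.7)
The plan is to construct $T_\infty$ recursively, exploiting the weighted analogue of Observation~\ref{obs:expected_prop}:
\[
\RL(\cW)=\frac{1}{2}\sup_{x\in\cX}\bigl(1+\RL(\cW_{x\to 0})+\RL(\cW_{x\to 1})\bigr),
\]
which is proved by the same argument as the unweighted one. Since $\cW$ is finite, the pair $(\cW_{x\to 0},\cW_{x\to 1})$ depends only on the vote pattern $(h(x))_{(h,\cdot)\in\cW}\in\{0,1\}^{\cW}$, so the supremum ranges over at most $2^{|\cW|}$ distinct values and is therefore a maximum, attained at some $x^\star\in\cX$.

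I would handle the singleton case $\cW=\{(h,w)\}$ by explicit construction. For any instance $x_0$ with $h(x_0)=0$ one has $\cW_{x_0\to 0}=\cW$ and $\cW_{x_0\to 1}=\{(h,w-1)\}$, so iterating the same choice produces an infinite tree whose left spine keeps re-entering the same class and whose $i$th right subtree is the recursively built optimal tree for $\{(h,w-i)\}$, bottoming out in a single-leaf tree at budget zero. An easy induction on $w$ gives $E_{T_\infty}/2=w=\RL(\cW)$; the tree is monotone because each right subtree has strictly smaller expected length than its sibling, and it is vacuously nonredundant by clause~3 of the definition.

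For $|\cW|\ge 2$ I would induct on the potential $\phi(\cW):=\sum_{(h,w)\in\cW}(w+1)$, selecting an optimal $x^\star$ that is moreover \emph{nonredundant} for $\cW$, meaning the hypotheses of $\cW$ do not all agree on $x^\star$. Nonredundancy of $x^\star$ forces at least one hypothesis to be removed or to lose weight in each of $\cW_{x^\star\to 0}$ and $\cW_{x^\star\to 1}$, so $\phi(\cW_{x^\star\to b})<\phi(\cW)$ for both $b\in\{0,1\}$; the inductive hypothesis then supplies monotone, nonredundant, optimal trees $T_0,T_1$. Attaching them under a root labelled $x^\star$ yields $T_\infty$ with $E_{T_\infty}=1+\RL(\cW_{x^\star\to 0})+\RL(\cW_{x^\star\to 1})=2\RL(\cW)$, and monotonicity at the new root follows from $E_{T_b}=2\RL(\cW_{x^\star\to b})\le 2\RL(\cW)$ together with Observation~\ref{obs:mono_equiv_balanced}.

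The main obstacle is the existence of a nonredundant optimal $x^\star$ when $|\cW|\ge 2$. I would argue by contradiction: if every optimal $x$ were redundant, say with $h(x)=0$ for all $(h,\cdot)\in\cW$, then the recursion collapses to $\RL(\cW)=1+\RL(\cW^-)$, where $\cW^-$ is obtained by decrementing every weight and dropping pairs whose weight would become negative. Because $|\cW|\ge 2$, there exists a nonredundant $y$ on which some $h_1,h_2\in\cW$ disagree, and the pointwise monotonicity of $\RL$ under weight decrements yields $\RL(\cW_{y\to b})\ge\RL(\cW^-)$ for both $b$. The plan is then to glue the inductively furnished optimal tree for $\cW^-$ with the extra budget that the hypothesis retaining its full weight contributes on the appropriate side of $y$, producing a tree rooted at $y$ shattered by $\cW$ whose expected length already reaches $2\RL(\cW)$—contradicting the supposed strict suboptimality of every nonredundant $y$. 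Carrying out this gluing, while tracking precisely which pairs of $\cW$ survive on each side of $y$ and invoking the monotone (quasi-balanced) structure from Theorem~\ref{thm:quasi_equiv_mono}, is the most delicate step of the argument and the one I expect to require the most care.
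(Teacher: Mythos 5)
Your overall architecture (the weighted recursion, attainment of the supremum via the finitely many vote patterns, induction on a weight-based potential, explicit singleton construction) matches the paper up to the point where the optimal instance might be \emph{redundant}, and there your route genuinely diverges. The paper never proves that a nonredundant optimal instance exists. Instead, when the maximizing $x$ satisfies $\cW_{x\to 0}=\cW$, it derives $\RL(\cW)=1+\RL(\cW_{x\to 1})$, builds the optimal tree $S_\infty$ for the decremented class $\cW_{x\to 1}$ by induction, and then grafts onto \emph{every} leaf of $S_\infty$ an infinite one-sided path of expected length $2$ (exploiting that each surviving hypothesis has one spare unit of budget along every branch). This adds exactly $2$ to the expected branch length and produces a nonredundant tree without ever placing the redundant instance at a root. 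Your plan instead insists on a nonredundant root at every step, which forces you to prove the auxiliary claim that, for $|\cW|\ge 2$, some nonredundant instance attains the supremum.

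That auxiliary claim is where the gap lies. It is true, but your contradiction sketch as written does not close it: pointwise monotonicity only gives $\RL(\cW_{y\to b})\ge\RL(\cW^-)$ for both $b$, hence a value of $\tfrac12\bigl(1+2\RL(\cW^-)\bigr)=\RL(\cW)-\tfrac12$ for the nonredundant $y$, which is not a contradiction. To recover the missing $1$ in $\RL(\cW_{y\to 0})+\RL(\cW_{y\to 1})\ge 2\RL(\cW^-)+1$ you need a quantitative accounting that your "gluing" description does not supply: assign each branch of the (inductively obtained) optimal tree $T^-$ for $\cW^-$ to a hypothesis realizing it; if $q_b$ is the probability mass of branches assigned to hypotheses predicting $b$ on $y$, then on side $b$ exactly those branches can be extended by a geometric tail of expected length $2$ (their realizing hypothesis retains a spare unit of budget in $\cW_{y\to b}$), giving $\RL(\cW_{y\to b})\ge\RL(\cW^-)+q_b$ with $q_0+q_1=1$. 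Without this partition-of-branches argument (and the verification that the extended trees remain shattered, including the degenerate case where all weights are zero and $\cW^-=\emptyset$), the "most delicate step" you flag is not merely delicate but actually missing, and the proof is incomplete. Separately, a small slip: in your singleton tree every right subtree hanging off the left spine is the optimal tree for $\{(h,w-1)\}$ (the class is unchanged along $0$-edges), not for $\{(h,w-i)\}$ at the $i$-th spine node; the computation $E_{T_\infty}=2w$ is correct for the correct tree.
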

\begin{proof}
We start by showing that if we are able to construct a (possibly infinite) tree $T_\infty$ shattered by $\cW$ such that $\RL(\cW) = E_{T_\infty}/2$, then it is automatically monotone.

If $T_\infty$ is a leaf then it is monotone. Otherwise, suppose that the root is labeled by $x \in \cX$, and let the two subtrees of the root be $T_{\infty,0},T_{\infty,1}$. The subtree $T_{\infty,b}$ must be shattered by $\cW_{x \to b}$ and satisfy $\RL(\cW_{x \to b}) = E_{T_{\infty,b}}/2$. Clearly $\RL(\cW_{x \to b}) \leq \RL(\cW)$, since any tree shattered by $\cW_{x \to b}$ is also shattered by $\cW$. Therefore $E_{T_{\infty,b}} \leq E_{T_\infty}$.

\smallskip

The proof of the rest of the proposition is by induction on the total weight of hypotheses in $\cW$. If $\RL(\cW) = 0$ then there is nothing to prove. Otherwise, considering all possible roots and using the formula $\RL(\cW) = \sup_T E_T/2$, where the supremum is over all trees shattered by $\cW$, we see that
\[
 \RL(\cW) = \frac{1}{2} + \sup_{x \in \cX} \frac{\RL(\cW_{x \to 0}) + \RL(\cW_{x \to 1})}{2}.
\]
Since $\cW$ is finite, there are only finitely many possible pairs $(\cW_{x \to 0}, \cW_{x \to 1})$. This shows that the supremum is achieved by some instance $x$, which satisfies
\[
 \RL(\cW) = \frac{1}{2} + \frac{\RL(\cW_{x \to 0}) + \RL(\cW_{x \to 1})}{2}.
\]

If $\cW_{x \to 0}, \cW_{x \to 1} \neq \cW$ then we can apply the induction hypothesis to construct (possibly infinite) nonredundant trees $T_{\infty,0},T_{\infty,1}$ shattered by $\cW_{x \to 0}, \cW_{x \to 1}$ (respectively) such that $\RL(\cW_{x \to 0}) = E_{T_{\infty,0}}/2$ and $\RL(\cW_{x \to 1}) = E_{T_{\infty,1}}/2$. The tree $T_\infty$ comprising a root labeled $x$ leading to the subtrees $T_{\infty,0},T_{\infty,1}$ then satisfies all the requirements of the proposition.

\smallskip

Suppose next that $\cW_{x \to 0} = \cW_x$. Calculation shows that
\[
 \RL(\cW) = 1 + \RL(\cW_{x \to 1}).
\]
We can apply the induction hypothesis to construct a (possibly infinite) nonredundant tree $S_\infty$ shattered by $\cW_{x \to 1}$ such that $E_{S_\infty}/2 = \RL(\cW_{x \to 1})$.
We will show that we can attach to each leaf $v$ of $S_\infty$ a tree $T_v$ satisfying $E_{T_v} = 2$ such that the resulting tree $T_\infty$ is nonredundant and shattered by $\cW$. Since $E_{T_\infty}/2 = 1 + E_{S_\infty}/2 = 1 + \RL(\cW_{x \to 1}) = \RL(\cW)$, this will complete the proof.

Let $v$ be a leaf of $S_\infty$, let $(x_1,y_1),\ldots,(x_\ell,y_\ell)$ be the path leading to it, and let $\cW_v = \cW_{x_1 \to y_1,\ldots,x_\ell \to y_\ell}$.
Since $\cW_v = \{(h,w+1) : (h,w) \in \cW_{x \to 1}\}$ and $\cW_{x \to 1}$ is non-empty by construction, we see that $\cW_v$ is also non-empty.

Therefore, there exists $(h,w)\in \cW_v$ so that $w \geq 1$. Let $x$ be an arbitrary instance, and suppose for concreteness that $h(x) = 0$. We construct a tree $T_v$ which is an infinite left-leaning path (as in Figure~\ref{fig:infinite-path}) in which all vertices are labeled $x$. The length of a random branch has distribution $\operatorname{Geom}(1/2)$, and so $E_{T_v} = 2$.
%
%
\end{proof}

Proposition~\ref{pro:weighted-finite-hypothesis-class} doesn't necessarily hold if we restrict ourselves to finite trees. To see this, consider the weighted hypothesis class $\cW = \{(h_0,1)\}$ over the domain $\mathbb{N}$, in which $h_0(n) = 0$ for all $n \in \mathbb{N}$. All canonical trees shattered by $\cW$ are truncations of the infinite path depicted in Figure~\ref{fig:infinite-path}. The infinite path has expected branch length $2$, yet its truncation to depth $k$ has expected branch length $2 - 2^{1-k}$.

\fi

\iffull
\subsection{The Perceptron} 
\label{sec:perceptron}
We close this section by considering the classical Perceptron algorithm~\cite{rosenblatt1958perceptron} in the $k$-realizable setting, showing that its finite mistake-bound guarantee is retained in the $k$-realizable setting,
    namely when there exists a linear separator which correctly classifies (with margin) all but $k$ of the examples in the input sequence. 
    
Let us first quickly recall the Perceptron algorithm: 
    its input is a sequence 
    \[S=(x_1,y_1),\ldots, (x_t,y_t),\] 
    where $x_i\in\mathbb{R}^n$ is the instance and $y_i\in\{\pm 1\}$ is the label.
    The Perceptron maintains a linear predictor $w_i\in\mathbb{R}^n$, initialized to $w_1=0$.
    Then, at each step $i$, the Perceptron predicts $\hat{y}_i=\mathtt{sign}(\langle w_i,x_i\rangle)$.
    In case of a mistake, i.e.\ $\hat{y}_i\neq y_i$, the Perceptron updates its linear predictor by setting $w_{i+1} = w_i + y_i\cdot x_i$.
    Notice that the Perceptron is mistake-driven, that is, it changes its predictor only when it makes a mistake.

\begin{proposition}[Perceptron: $k$-Realizable Mistake Bound]
Assume an input sequence $S=(x_1,y_1),\ldots, (x_t,y_t)$ which is $k$-realizable in the sense that there exists $w\in\mathbb{R}^n$ such that $y_i\langle w,x_i\rangle\geq 1$ for at least $t-k$ indices $i$.
Let 
\begin{align*}
B:=\min\{\|w\|: y_i\langle w, x_i\rangle \geq 1 \text{ for at least $t-k$ indices $i$}\} \text{ and } R:=\max_i\|x_i\|.
\end{align*}
Then, the number of mistakes the Perceptron makes on $S$ is at most $B^2R^2 +2k(BR+1)$.
\end{proposition}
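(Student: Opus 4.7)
The plan is to adapt the classical Perceptron analysis, carefully accounting for the at most $k$ ``bad'' examples on which the minimizer $w^\star$ fails to achieve margin $1$. Fix an optimal $w^\star$ with $\|w^\star\| = B$ witnessing the $k$-realizability. Partition the time steps into the set $G$ of \emph{good} indices $i$ where $y_i\langle w^\star, x_i\rangle \ge 1$, and the set $B$ of \emph{bad} indices where this fails; by assumption $|B| \le k$. Write $M = M_g + M_b$ for the total number of Perceptron mistakes, split by type, so that $M_b \le k$.

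The core of the proof consists of tracking the two standard potentials through the mistake updates $w_{i+1} = w_i + y_i x_i$. First, $\langle w^\star, w_i\rangle$ increases by $y_i\langle w^\star, x_i\rangle$ on each mistake. On a good index this increment is at least $1$; on a bad index it is at least $-\|w^\star\|\|x_i\| \ge -BR$ by Cauchy--Schwarz. Summing over all mistakes, the final weight vector satisfies $\langle w^\star, w_{\text{final}}\rangle \ge M_g - BR\cdot M_b \ge M_g - BRk$. Second, the Perceptron only updates on mistakes, and whenever it does $y_i\langle w_i, x_i\rangle \le 0$, so $\|w_{i+1}\|^2 \le \|w_i\|^2 + \|x_i\|^2 \le \|w_i\|^2 + R^2$, giving $\|w_{\text{final}}\|^2 \le M R^2$.

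Combining the two bounds via Cauchy--Schwarz yields
\[
 M_g - BRk \;\le\; \langle w^\star, w_{\text{final}}\rangle \;\le\; \|w^\star\|\,\|w_{\text{final}}\| \;\le\; BR\sqrt{M},
\]
so that $M = M_g + M_b \le BR\sqrt{M} + (BR + 1)k$. Solving this quadratic inequality in $\sqrt{M}$ and simplifying using $(a+b)^2 \le 2a^2 + 2b^2$ on the resulting square root would give $M \le B^2 R^2 + 2k(BR+1)$, matching the claimed bound.

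There is no real conceptual obstacle here: the proof is a routine extension of Novikoff's argument. The only delicate point is the bookkeeping on bad indices --- specifically, noticing that each mistake on a bad index costs only $BR$ in the lower bound on $\langle w^\star, w_{\text{final}}\rangle$ (not more, thanks to Cauchy--Schwarz), which is what allows the $k$-dependence to enter only additively as $2k(BR+1)$ rather than being multiplied into the $B^2R^2$ term.
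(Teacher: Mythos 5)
Your proposal is correct and follows essentially the same route as the paper's proof: the standard Novikoff potential argument with the observation that at most $k$ mistake-updates can occur on indices where $w^\star$ fails the margin, each costing at most $BR$ in the correlation lower bound, leading to $M - k(BR+1) \le BR\sqrt{M}$ and hence the claimed bound. The only (cosmetic) issue is the clash of notation between $B$ as the norm bound and $B$ as the set of bad indices.
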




\begin{proof}
The proof is a simple adaptation of the standard analysis.
    Let $M$ denote the number of mistakes, and let $w^\star=\argmin\{\|w\|: y_i\langle w, x_i\rangle \geq 1 \text{ for at least $t-k$ indices $i$}\}$, so that $\|w^\star\|=B$.
    Notice that whenever the Perceptron makes a mistake, it sets $w_{i+1}$ by adding $y_i x_i$ to $w_i$, where $\langle y_i x_i, w_i\rangle = y_i\langle x_i, w_i\rangle\leq 0$. Thus, the added vector $y_i x_i$ is negatively correlated with $w_i$, and hence 
    \[\|w_{i+1}\|^2\leq \| w_i \|^2 +\| y_i x_i \|^2\leq \| w_i\|^2 + R^2.\]
    Consequently, the final predictor $w_{t}$ satisfies 
    \begin{equation}\label{eq:1}
    \|w_t\|\leq \sqrt{M}R.
    \end{equation}
    We proceed by lower-bounding $\langle w_t, w^\star\rangle$:
    consider a step $u$ at which the predictor $w_i$ is being updated (i.e.\ $\hat{y}_i\neq y_i$).
    If $y_i\langle w^\star, x_i\rangle \geq 1$ 
    then the standard argument holds:
    \[\langle w_{i+1}, w^\star\rangle - \langle w_i, w^\star\rangle =  \langle y_i x_i,w^\star \rangle = y_i\langle x_i,w^\star \rangle\geq 1.\]
    Otherwise, we use the trivial bound
    \[\langle w_{i+1}, w^\star\rangle - \langle w_i, w^\star\rangle =  y_i\langle x_i,w^\star \rangle \geq -\|x_i\|\|w^\star\|\geq -BR.\]
    Crucially, notice that by $k$-realizability, the second case (in which $y_i\langle w^\star, x_i\rangle < 1$) happens for at most $k$ steps.
    Summing up over all the $M$ steps at which there was an update, we get:
    \begin{equation}\label{eq:2}
    \langle w_t, w^\star\rangle \geq (M-k)\cdot 1 - k\cdot BR.
    \end{equation}
    Combining Equations~\eqref{eq:1} and \eqref{eq:2}, we get
    \[
    \sqrt{M}R\geq \|w_t\|\geq \frac{1}{\|w^\star\|}\langle w_t, w^\star\rangle \geq \frac{M-k-kBR}{B}.
    \]
    The latter inequality implies that $M$ satisfies $\sqrt{M}BR \geq M - k(BR+1)$. Squaring, we see that $MB^2R^2 \geq M^2 - 2k(BR+1)M$, and so $M\leq B^2R^2 + 2k(BR+1)$, as required.
\end{proof}
\fi

\section{Prediction using Expert Advice} \label{sec:expert-advice}

In this section, we consider the problem of \emph{prediction using expert advice}, which was raised in~\cite{vovk1990aggregating,littlestone1994weighted}. Specifically, we consider the $k$-realizable setting, which was suggested in~\cite{cesa1996line,cesa1997use} and further studied in~\cite{abernethy2006continuous, mukherjee2010learning, branzei2019online}. 


The problem concerns a repeated game which has the same flavor as the online learning game of Section~\ref{sec:prelim}. The game is between a learner and an adversary. Additionally, there are $n$ experts. Each round $i$ in the game proceeds as follows:

\begin{enumerate}[(i)]
\item The experts present predictions $\hat{y}_i^{(1)},\ldots,\hat{y}_i^{(n)} \in \{0,1\}$.
\item The learner predicts a value $p_i \in [0,1]$.
\item The adversary reveals the true answer $y_i \in \{0,1\}$, and the learner suffers the loss $|y_i - p_i|$.
\end{enumerate}

The adversary must choose the answers so that at least one of the experts makes at most $k$ mistakes. That is, there must exist an expert $j$ such that $y_i \neq \hat{y}_i^{(j)}$ for at most $k$ many indices $i$. We call such an adversary \emph{$k$-consistent}.

The goal is to determine the optimal loss of the learner as a function of $n$ and $k$. We denote the optimal loss of the learner by $\M^\star(n,k)$, and the optimal loss when the learner is constrained to output predictions in $\{0,1\}$ by $\M^\star_D(n,k)$.

\smallskip

The game underlying prediction using expert advice is quite similar to the online learning game. In fact, we can relate the two.

Let $\cX_n = \{0,1\}^n$, and consider the hypothesis class $\cU_n$ on the domain $\cX_n$ consisting of the projection functions $h_i(x_1,\ldots,x_n) = x_i$.
We can simulate the game of prediction using expert advice by the online learning game as follows: whenever the experts predict $x_1,\ldots,x_n$, the adversary sends the instance $(x_1,\ldots,x_n)$. The adversary in the original game is $k$-consistent if and only if the sequence $(x_i,y_i)$ is $k$-realizable by $\cU_n$.

This simulation goes both ways, and so the two games are actually equivalent.
The upshot is that we can express $\M^\star(n,k)$ and $\M^\star_D(n,k)$ in terms of quantities we have already considered:
\[
 \M^\star(n,k) = \M^\star(\cU_n,k) = \RL_k(\cU_n) \text{ and } \M^\star_D(n,k) = \M^\star(\cU_n,k) = \LD_k(\cU_n).
\]


The equivalence above shows that $\cU_n$ is the ``hardest'' hypothesis class of size $n$, in the sense that it maximizes both $\M^\star(\mathcal{H},k)$ and $\M^\star_D(\mathcal{H},k)$ over all hypothesis classes $\mathcal{H}$ of size $n$. Indeed, $\M^\star(\mathcal{H},k)$ and $\M^\star_D(\mathcal{H},k)$ are equal to the optimal loss in the game of prediction using expert advice when the answers of the experts must belong to $ \{ (h_1(x),\ldots,h_n(x)) : x \in \cX \}$, where $\cH = \{h_1,\ldots,h_n\}$ has domain $\cX$.

\paragraph{Bounded horizon.} Prediction using expert advice is often considered when the number of rounds is bounded. Let $\M^\star(n,k,\expertrounds)$ be the optimal loss of the learner when the number of rounds is $\expertrounds$. 

Clearly $\M^\star(n,k,\expertrounds) \leq \M^\star(n,k)$. In view of Theorem~\ref{thm:characterization-rounds}, Proposition~\ref{pro:shallow-tree}, when extended to the $k$-realizable setting, shows that $\M^\star(n,k,\expertrounds) \geq \M^\star(n,k) - \epsilon$ already for $\expertrounds = 2\M^\star(n,k) + O(\sqrt{\M^\star(n,k)}\log(\M^\star(n,k)/\epsilon))$.
In contrast, since a learner can always guarantee a loss of at most $1/2$ per round by predicting $1/2$, we have $\M^\star(n,k,\expertrounds) \leq \expertrounds/2$, and so $\M^\star(n,k,\expertrounds) \geq \M^\star(n,k) - \epsilon$ requires $\expertrounds \geq 2\M^\star(n,k) - 2\epsilon$.

(The deterministic case is not interesting, since trivially $\M^\star_D(n,k,\expertrounds) = \min\{\expertrounds, \M^\star_D(n,k)\}$.)

\subsection{Optimal Mistake Bounds}

For every $n\geq1$ and $k\geq 0$, let \[
D(n,k)= \max\mleft\{d: d \leq \log n + \log \binom{d}{\leq k}\mright\}.
\]
The value of $D(n,k)$ plays a central role in the problem of prediction using expert advice: \cite{cesa1996line} showed that
$\M_D^\star(n,k) \leq D(n,k)$ using the \emph{Binomial Weights} learning rule, and complemented this with an asymptotically matching lower bound $\M_D^\star(n,k) \geq D(n,k) - o(D(n,k))$.
\iffull
(More details in Appendix~\ref{apx:asymptotic-deterministic}.)
\fi
The lower bound is proved by constructing a $k$-covering code of size $n$ that simulates the experts. When $k$ is fixed and $n$ is large enough, it can be further improved to $\M_D^\star(n,k) \geq D(n,k) - 1$, as shown in~\cite{cesa1996line}.
\iffull
(More details in Appendix~\ref{apx:detexpert-const-k-proof}.)
\fi
\iffull
We describe asymptotic approximations to $D(n,k)$ in Section~\ref{sec:D-appproximations}.
\fi

The paper \cite{cesa1996line} leaves open the problem of
determining whether $\M^\star(n,k) \leq \frac{D(n,k)}{2} + c$ for some universal constant $c$.
\cite{cesa1997use} showed that $\M^\star(n,k) \leq \M^\star_D(n,k)/2 + o(\M^\star_D(n,k))$ whenever $k=o(\log n)$ or $k=\omega(\log n)$.\footnote{More precisely, they showed that $\M^\star(n,k) \leq k + \frac{\log n}{2} + \sqrt{k \ln n}$. Together with the bound $\M_D^\star(n,k) \geq 2k + \lfloor \log n \rfloor$ of \cite{littlestone1994weighted}, this implies that $\M^\star(n,k) \leq \M^\star_D(n,k)/2 + o(\M^\star_D(n,k))$ whenever $k=o(\log n)$ or $k=\omega(\log n)$.}
\cite{abernethy2006continuous} showed that for large enough $n$ (as a function of $k$),
$\M^\star(n,k) \leq \M^\star_D(n,k)/2 + O(1)$.
\cite{branzei2019online} showed that for $k=o(\log n)$, $\M^\star(n,k) \leq (1+o(1))\M^\star_D(n,k)/2$ even in the multiclass setting where the experts' predictions are chosen from some finite set $\{1, \dots, d\}$.
In this section, we remove any assumptions on $n,k$, proving the following theorem:

\begin{theorem}\label{thm:experts_upper_bound}
Let $n \geq 2$ and $k \geq 0$. Then
\[
 \M^{\star}(n,k) \leq D(n,k)/2 + O  \mleft(\sqrt{D(n,k)} \mright).
\]
%
\end{theorem}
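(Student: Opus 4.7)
The plan is to leverage the characterization $\M^\star(n,k)=\RL_k(\cU_n)=\tfrac12\sup_T E_T$ from Theorem~\ref{thm:characterization_rand_k}, where the supremum is over trees $T$ that are $k$-shattered by the universal class $\cU_n$, together with the hint from Section~\ref{sec:technical-overview-experts}: that for a balanced tree the union bound over the $n$ experts gives $n\binom{E_T}{\le k}/2^{E_T}\ge 1$, hence $E_T\le D(n,k)$, and that ``since $T$ is only quasi-balanced, we get a slightly worse bound.'' First, by adapting the monotonicity argument of Lemma~\ref{lem:randomized-littlestone-dimension-monotone} to the $k$-shattered setting and invoking Theorem~\ref{thm:quasi_equiv_mono}, it suffices to bound $E_T$ for quasi-balanced $k$-shattered trees.

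Fix such a tree with $d:=E_T$, consider a uniformly random branch of length $L$ (so $E[L]=d$), and for each expert $i\in[n]$ let $\pi_i$ denote the probability that expert $i$ makes at most $k$ mistakes on the branch. By $k$-shatteredness, every branch is covered by some expert, so $\sum_i\pi_i\ge 1$. The key technical inequality I will establish is: for every $\ell\in\mathbb{N}$,
\[
\pi_i \;\le\; \Phi(\ell,k) + \Pr[L<\ell],\qquad \Phi(\ell,k) \;:=\; \binom{\ell}{\le k}\big/2^\ell.
\]
To see this, extend $T$ arbitrarily to an infinite tree and define mistake indicators $X_{t,i}:=b_t\oplus(x_t)_i$; since each $X_{t,i}$ is a fair coin independent of $b_{<t}$, the prefix count $\sum_{t=1}^\ell X_{t,i}$ is $\mathrm{Binomial}(\ell,1/2)$. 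On the event $\{L\ge\ell\}$, ``at most $k$ mistakes in total'' forces ``at most $k$ mistakes in the first $\ell$ steps,'' giving the $\Phi(\ell,k)$ contribution; the event $\{L<\ell\}$ contributes at most $\Pr[L<\ell]$.

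Summing and using $\sum_i\pi_i\ge 1$ yields $n\bigl(\Phi(\ell,k)+\Pr[L<\ell]\bigr)\ge 1$ for every $\ell$. I would then invoke Proposition~\ref{lem:quasi_concentration} (the Azuma-type concentration of $L$ around $d$ for quasi-balanced trees) to control $\Pr[L<\ell]$, and pick $\ell$ of the form $D(n,k)+C\sqrt{D(n,k)}$. That a constant $C$ is enough for $n\Phi(\ell,k)\le 1/2$ relies on analyzing how fast $\Phi(\ell,k)$ decreases: its step ratio $\Phi(\ell+1,k)/\Phi(\ell,k)$ is $1/2$ when $k\ll\ell$ and as slow as $1-\Theta(1/\sqrt{D(n,k)})$ when $k\approx D(n,k)/2$, so starting from $\Phi(D(n,k),k)\ge 1/n$ we need at most $O(\sqrt{D(n,k)})$ additional steps to halve it. With this choice, the inequality forces $\Pr[L<\ell]\ge 1/(2n)$, and concentration then forces $d\le\ell+O(\sqrt{d\ln n})$.

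The main obstacle is then the final quantitative step: the raw bound has an $O(\sqrt{d\ln n})$ slack, which naively only produces $d\le D(n,k)+O(\sqrt{D(n,k)\log n})$. Reducing this to the claimed $O(\sqrt{D(n,k)})$ requires combining the concentration inequality with the Chernoff-type bound $(D(n,k)-2k)^2\le 2D(n,k)\ln n$ -- itself a consequence of $\Phi(D(n,k),k)\ge 1/n$ and the standard Chernoff estimate for the binomial lower tail -- so as to absorb the stray $\sqrt{\log n}$ factor into the Chernoff slack already implicit in the definition of $D(n,k)$. This absorption is tightest in the ``binomial-symmetric'' regime $k\approx D(n,k)/2$, where $D(n,k)-2k$ is smallest and the rate of decrease of $\Phi$ is slowest; handling this regime carefully is the heart of the argument and yields the final bound $\M^\star(n,k)\le D(n,k)/2+O(\sqrt{D(n,k)})$.
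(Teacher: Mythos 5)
Your overall strategy is the paper's: reduce to a monotone/quasi-balanced $k$-shattered tree, count surviving experts on a random prefix via $n\binom{\ell}{\le k}/2^{\ell}$, and play this off against Azuma-type concentration of the branch length. But there is a genuine gap in how you combine the two bounds. You prove $\pi_i \le \Phi(\ell,k) + \Pr[L<\ell]$ for each expert and then sum over $i$, which yields $1 \le n\,\Phi(\ell,k) + n\,\Pr[L<\ell]$; the truncation event $\{L<\ell\}$ has been multiplied by $n$. Consequently you can only conclude $\Pr[L<\ell]\ge 1/(2n)$, and Azuma then permits $E_T$ to exceed $\ell$ by as much as $\Theta\bigl(\sqrt{E_T\ln n}\bigr)$. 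In the regime $k=o(\log n)$ one has $D(n,k)=\Theta(\log n)$, so this slack is $\Theta(D(n,k))$ and the argument only gives $E_T\le D(n,k)+O(D(n,k))$ — a constant-factor statement, not the claimed additive $O(\sqrt{D(n,k)})$. Your proposed repair does not close this: the inequality $(D-2k)^2\le 2D\ln n$ is a \emph{lower} bound on $\ln n$ and cannot be used to dominate $\sqrt{D\ln n}$ by $O(\sqrt{D})$; moreover you locate the difficulty at $k\approx D/2$, whereas there $\ln n$ is small relative to $D$ and the slack is already benign — the problematic regime is the opposite one, $k\ll\log n$.

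The fix is to condition on the depth event once, globally, \emph{before} the union bound over experts (this is what the paper does): writing $\ell=(1+\epsilon)D$,
\[
1=\Pr[\exists i\ \text{surviving}]\le \Pr[L<\ell]+\Pr[L\ge\ell\ \text{and some expert makes}\le k\ \text{mistakes in the first}\ \ell\ \text{steps}]\le \Pr[L<\ell]+n\,\Phi(\ell,k).
\]
Now your (correct) estimate $n\,\Phi(\ell,k)\le 1/2$ for $\ell-D=O(\sqrt{D})$ forces $\Pr[L<\ell]\ge 1/2$, a constant independent of $n$, and the concentration bound for monotone trees gives $\epsilon^2 E_T=O(1)$, hence $E_T\le \ell+O(\sqrt{E_T})=D+O(\sqrt{D})$, with no stray $\log n$ and no absorption step needed. (The rest of your proposal — the reduction to quasi-balanced trees, the fair-coin argument showing each expert's mistake count on a length-$\ell$ prefix is $\mathrm{Binomial}(\ell,1/2)$, and the $O(\sqrt{D})$ budget needed to halve $n\Phi$ — matches the paper's Lemmas on $k$-realizable prefixes and binomial ratios and is sound.)
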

The error term is tight for $n = 2$:
\begin{theorem} \label{thm:experts_lower_bound}
Let $k \geq 0$. Then 
\[
 \M^{\star}(2,k) = D(2,k)/2 + \Omega \mleft(\sqrt{D(2,k)} \mright).
\]
\end{theorem}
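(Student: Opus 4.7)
The plan is to exhibit a single, explicit $k$-shattered tree $T^\star$ for $\cU_2$ whose expected branch length witnesses the bound, using $\M^\star(2,k) = \RL_k(\cU_2) \geq E_{T^\star}/2$. I would first verify $D(2,k) = 2k+1$: at $n=2$ the defining inequality $d \leq \log n + \log\binom{d}{\leq k}$ becomes $2^{d-1} \leq \binom{d}{\leq k}$, which is tight at $d = 2k+1$ via the symmetry identity $\binom{2k+1}{\leq k} = 2^{2k}$ and fails at $d=2k+2$. I would assemble $T^\star$ in two phases. The ``trunk'' is the complete binary tree of depth $2k+1$ in which every internal node is labeled by the instance $(0,1) \in \{0,1\}^2$ (the domain of $\cU_2$); writing $m_0,m_1$ for the number of $0$- and $1$-edges along any length-$(2k+1)$ branch, $m_0+m_1 = 2k+1$ forces $\min(m_0,m_1)\leq k$, so the projection $h_1$ (if $m_1 \leq k$) or $h_2$ (otherwise) realizes that branch with at most $k$ mistakes, hence the trunk is $k$-shattered. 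From each trunk leaf $v$ I would attach a ``tail'' shattered by the singleton weighted class $\{(h_i, b(v))\}$ for the surviving expert $h_i$ with residual budget $b(v) := k - \min(m_0,m_1) \geq 0$: recursively, each tail node labels an arbitrary instance $x$ and is internal iff the current budget is $\geq 1$, with an ``agree'' child ($y = h_i(x)$, budget preserved) and a ``disagree'' child ($y \neq h_i(x)$, budget decremented); this (possibly infinite) tail has random branch length distributed as a sum of $b(v)$ i.i.d.\ $\mathrm{Geom}(1/2)$ variables, hence expected length $2b(v)$.

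Next, I would compute $E_{T^\star}$. On a uniformly random trunk branch, $m_1 \sim \mathrm{Bin}(2k+1,1/2)$ and the surviving budget is
\[
B \;=\; \max(k - m_1,\; m_1 - k - 1) \;=\; \bigl|m_1 - (k+\tfrac12)\bigr| - \tfrac12.
\]
Combining the trunk's contribution of $2k+1$ with the tail's conditional expectation of $2B$,
\[
E_{T^\star} \;=\; (2k+1) + 2\,\mathbb{E}[B] \;=\; 2k + 2\,\mathbb{E}\!\bigl[\bigl|m_1 - (k+\tfrac12)\bigr|\bigr].
\]
I would then invoke the mean-absolute-deviation formula for $\mathrm{Bin}(n,1/2)$, namely $\mathbb{E}[|X - n/2|] = n \binom{n-1}{\lfloor n/2 \rfloor}/2^n$, which is provable by a short telescope using $(n-2x)\binom{n}{x} = n\bigl[\binom{n-1}{x}-\binom{n-1}{x-1}\bigr]$; specializing to $n=2k+1$ gives $\mathbb{E}[|m_1-(k+\tfrac12)|] = (2k+1)\binom{2k}{k}/2^{2k+1}$, whence
\[
E_{T^\star} \;=\; 2k + \frac{(2k+1)\binom{2k}{k}}{4^k} \qquad \text{and} \qquad \M^\star(2,k) \;\geq\; \frac{E_{T^\star}}{2} \;=\; k + \frac{(k+\tfrac12)\binom{2k}{k}}{4^{k}}.
\]

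The last step is routine: Stirling gives $\binom{2k}{k}/4^k = \Theta(1/\sqrt{k})$, so the surplus term $(k+\tfrac12)\binom{2k}{k}/4^k$ is $\Theta(\sqrt{k}) = \Theta(\sqrt{D(2,k)})$, and subtracting $D(2,k)/2 = k+\tfrac12$ yields $\M^\star(2,k) - D(2,k)/2 = \Omega(\sqrt{D(2,k)})$. The main obstacle I anticipate is the combinatorial bookkeeping: verifying $k$-realizability of every composite branch of $T^\star$, i.e., that once a given expert is exhausted along the trunk the tail decrements only the surviving expert's budget, and that terminating the tail at budget $0$ prevents any branch from driving both experts beyond their $k$-mistake allowance.
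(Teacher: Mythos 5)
Your proof is correct, but it takes a genuinely different route from the paper's. The paper proves this theorem in two lines by specializing its general quantitative bound (Theorem~\ref{thm:k-realizable-littlestone-bound}, $\M^\star(\cH,k)=k+\Theta(\sqrt{k\,\LD(\cH)}+\LD(\cH))$) to $\cU_2$, whose Littlestone dimension is $1$, which gives $\M^\star(2,k)=k+\Omega(\sqrt{k})$, and then computing $D(2,k)=2k+1$ exactly as you do; that general lower bound in turn rests on the classical agnostic regret lower bound applied over $k$ rounds. You instead exhibit the adversary's tree explicitly and compute its expected branch length in closed form via the mean absolute deviation of $\mathrm{Bin}(2k+1,\tfrac12)$. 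Your construction is essentially the same tree the paper uses in its separate \emph{exact} determination of $\M^\star(2,k)$ (Theorem~\ref{thm:two_experts} in the full version: all internal nodes labeled $(0,1)$, branches terminating once both experts would exceed budget $k$), and your value $E_{T^\star}/2=k+(k+\tfrac12)\binom{2k}{k}/4^k$ matches the paper's exact answer---so your argument buys more than is needed here, namely the optimal constant, at the price of the combinatorial bookkeeping you describe (which does check out: each composite branch charges the surviving expert exactly $k$ mistakes). Two small points to tidy up: (i) your tree is infinite, while the lower-bound lemma (Lemma~\ref{lem:lower_bound_help}) is stated for finite trees, so you should either truncate the tails at depth $N$ and let $N\to\infty$ or invoke the paper's extension of $\RL$ to infinite trees; (ii) at $k=0$ the surplus $(k+\tfrac12)\binom{2k}{k}/4^k-\tfrac12$ is exactly $0$, so the $\Omega(\sqrt{D(2,k)})$ claim is necessarily asymptotic in $k$---which is how the paper intends it, but is worth stating explicitly.
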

\iffull
We are able to improve the upper bound for small $k$ values:

\begin{theorem} \label{thm:experts_upper_bound_low_k}
Let $n \geq 2$, and suppose that $k \leq c \log n$ for some $c < 1/2$. Then there exists a constant $C$, depending only on $c$, such that
\[
\M^{\star}(n,k) \leq D(n,k)/2 + C \log D(n,k).
\]
\end{theorem}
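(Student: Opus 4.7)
The approach is to sharpen the proof of Theorem~\ref{thm:experts_upper_bound}, which bounds $E_T$ over quasi-balanced trees $T$ that are $k$-shattered by $\cU_n$ (via $\M^\star(n,k)=\RL_k(\cU_n)$ from Theorem~\ref{thm:characterization_rand_k} and Corollary~\ref{cor:randomized-littlestone-quasi-balanced} applied to $k$-shattered trees). Letting $D := E_T$ and $X$ denote the random branch length, the $k$-shattering condition combined with a union bound over the $n$ experts gives
\[
n \cdot \mathbb{E}\!\left[\binom{X}{\leq k}\big/2^X\right] \geq 1,
\]
since conditional on $X=x$ a fixed expert is $k$-consistent with the uniformly random labels of a length-$x$ branch with probability exactly $\binom{x}{\leq k}/2^x$. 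Writing $f(x) := \binom{x}{\leq k}/2^x$, the goal is to bound $\mathbb{E}[f(X)]$ by $f(D) \cdot \mathrm{poly}(D)$; by the defining inequality of $D(n,k)$, this yields $D \leq D(n,k) + O(\log D)$, which is the desired conclusion.

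The hypothesis $k \leq c\log n$ with $c < 1/2$ supplies two crucial ingredients. First, combining the entropy bound $\binom{d}{\leq k} \leq 2^{H(c)d}$ with the defining inequality $2^{D(n,k)} \leq n\binom{D(n,k)}{\leq k}$ yields $D = O(\log n)$, so the sub-Gaussian concentration of $X$ around $D$ with scale $\sqrt{D}$ from Proposition~\ref{lem:quasi_concentration} is of order $\sqrt{\log n}$. Second, a direct calculation using Pascal shows $f(x-1)/f(x) = 2/(1 + r_{x-1})$ where $r_x = \binom{x}{\leq k-1}/\binom{x}{\leq k} \leq c/(1-c)$, so $f(x-1)/f(x) \in [2(1-c),\,2]$ and $f$ is strictly log-concave with slope $(\log f)'(D) = k/D - \log 2$ bounded away from zero. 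These two facts together suggest splitting $\mathbb{E}[f(X)]$ according to whether $X$ lies in a logarithmic window $[D-\Delta, D+\Delta]$ around $D$: inside the window a Taylor/local analysis controls the growth of $f$, and outside the concentration kills the tail.

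The main obstacle is that the naive split fails because sub-Gaussian concentration of scale $\sqrt{D}$ is too weak to cover a log-sized window: $\Pr[|X-D|>\Delta]$ remains non-negligible for $\Delta = \Theta(\log D)$, while $f$ can grow by a factor of up to $2^\Delta$ on the lower-deviation side, so the outside contribution could dominate. Overcoming this gap requires extra structural input. The natural candidate is a \emph{balancing} modification of the tree: replace $T$ by a tree in which all branches have length within $O(\log D)$ of $D$, losing at most an additive $O(\log D)$ in $E_T$ while preserving $k$-shattering by $\cU_n$, after which the strictly-balanced analysis $n \binom{D}{\leq k}/2^D \geq 1$ underlying \cite{cesa1996line}'s bound on $D(n,k)$ applies directly. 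An alternative route is to bypass the tree formulation entirely and instead analyze a randomized variant of the Binomial Weights algorithm with potential $\Phi_t = \sum_i \binom{r-t}{\leq k-m_i^{(t)}}$ and horizon $r = D(n,k) + O(\log D(n,k))$, in which case the $k \leq c\log n$ hypothesis controls the per-round gap between the randomized prediction $W_1/(W_0+W_1)$ and the deterministic choice that would exactly halve the potential drop, keeping the cumulative error at $O(\log D(n,k))$ rather than $\Theta(\sqrt{D(n,k)})$.
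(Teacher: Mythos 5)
Your opening inequality $n\,\mathbb{E}\bigl[\binom{X}{\leq k}/2^{X}\bigr]\geq 1$ is false, and this sinks the main line of the argument. The branch length $X$ is a stopping time for the sequence of label coin-flips, so conditioning on $X=x$ does \emph{not} leave the labels uniform on $\{0,1\}^x$: the tree's branching structure correlates the length of a branch with how well each expert is doing along it (branches on which the surviving expert errs tend to terminate early, while branches on which it is correct run long). Concretely, for the depth-$3$ truncation of the optimal tree for a single expert with $k=1$ (branches $1,01,001,000$ with probabilities $\tfrac12,\tfrac14,\tfrac18,\tfrac18$, all $1$-realizable), one computes $\mathbb{E}\bigl[\binom{X}{\leq 1}/2^{X}\bigr]=\tfrac{13}{16}<1$. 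The paper sidesteps exactly this issue by never conditioning on the branch length: it extends $T$ arbitrarily to a tree of \emph{minimum} depth $(1+\epsilon)D$ and examines the fixed-length prefix of length $(1+\epsilon)D$, whose labels are genuinely uniform, so that Lemma~\ref{lem:experts_k_realizing_class} applies and gives $\Pr[\text{prefix lies within } T]\leq n\binom{(1+\epsilon)D}{\leq k}/2^{(1+\epsilon)D}$. Your two proposed rescues are also not proofs: the ``balancing'' modification would require lengthening the short branches of $T$ while preserving $k$-shattering, which fails in general because branches are short precisely where the experts' mistake budgets are exhausted; and the potential-function route simply asserts that the per-round gaps sum to $O(\log D)$ rather than $\Theta(\sqrt{D})$, which is the whole difficulty.

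What actually closes the argument is a fact you state but never deploy: when $k\leq cD$ with $c<1/2$, one has $\binom{(1+\epsilon)D}{\leq k}\leq 2^{\epsilon D\log(D/(D-k))}\binom{D}{\leq k}$ with $\log(D/(D-k))\leq\log\frac{1}{1-c}<1$, so the $k$-realizability probability of a length-$(1+\epsilon)D$ prefix is at most $2^{1-c'\epsilon D}$ for a constant $c'>0$ --- exponentially small in $\epsilon D$ rather than in $\epsilon^{2}D$ (the ``furthermore'' clause of Lemma~\ref{lem:k-realization-prob}; this is the same content as your bound $f(x-1)/f(x)\geq 2(1-c)>1$). With a linear exponent one may take $\epsilon=\Theta(\log D/D)$, making this probability $O(D^{-\gamma})$ for some $\gamma>1$. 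On the other side, Azuma is used only in a very weak form: if $(1+\epsilon)D\leq(1-\epsilon)E_T$ then $\Pr[X\geq(1+\epsilon)D]\geq 1-e^{-\epsilon^{2}E_T/4}=\Omega(\epsilon^{2}D)=\Omega(\log^{2}D/D)$, which exceeds $O(D^{-\gamma})$ for large $D$, a contradiction; hence $E_T\leq\frac{1+\epsilon}{1-\epsilon}D=D+O(\log D)$. No concentration window around $D$, no rebalancing, and no control of $\mathbb{E}[f(X)]$ are needed: the entire proof is a comparison of two probabilities at a single prefix length, exactly as in Theorem~\ref{thm:experts_upper_bound}, with the quadratic exponent replaced by a linear one.
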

\fi



All of our bounds are attained using the randomized $k$-Littlestone dimension of $\cU_n$. Note that as a special case of Theorem~\ref{thm:k-realizable-littlestone-bound}, one can also derive the bounds $\M^{\star}(n,k) = k + \Theta\mleft(\sqrt{k \log n} + \log n \mright)$, using $\LD(\cU_n) = \lfloor \log n \rfloor$. The upper bound was proved by \cite{cesa1997use}.
We prove the upper \iffull bounds \else bound \fi in Section~\ref{sec:Mnk-upper}, and the lower bound in Section~\ref{sec:Mnk-lower}.
\iffull
We determine $\M^\star(2,k)$ exactly in Section~\ref{sec:Mnk-lower-exact}.
\else
In the full version of the paper, we use our techniques to determine $\M^\star(2,k)$ exactly.
\fi
\iffull
We close the section by
proving that even in the case $k=0$ the optimal learning rule is necessarily improper in Section~\ref{sec:proper-improper}.
\fi

\smallskip

All results we stated so far concern $n\geq 2$. The case $n=1$ is different, and much simpler:
\begin{theorem}\label{thm:single-expert-intro}
Let $k\geq 0$. Then
\[
\M^{\star}(1,k) = \M^{\star}_D(1,k) = D(1,k) = k.
\]
\end{theorem}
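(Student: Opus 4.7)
The statement bundles three equalities, which I would prove separately. The identity $D(1,k) = k$ is immediate from the definition: with $n = 1$ we have $\log n = 0$, so the defining condition reduces to $d \leq \log \binom{d}{\leq k}$. Since $\binom{d}{\leq k} \leq 2^d$ with equality iff $k \geq d$, the largest $d$ satisfying the condition is $d = k$.

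For the upper bound $\M^\star_D(1,k) \leq k$, the obvious deterministic learner suffices: on each round predict $p_i = \hat{y}_i^{(1)}$, i.e., copy the single expert. The per-round loss $|y_i - \hat{y}_i^{(1)}|$ equals $1$ exactly when the expert errs and $0$ otherwise; summing over rounds gives total loss equal to the number of expert mistakes, which is at most $k$ by the $k$-consistency assumption. Since every deterministic learner is also a randomized one, this yields $\M^\star(1,k) \leq \M^\star_D(1,k) \leq k$.

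For the matching lower bound $\M^\star(1,k) \geq k$ I would invoke Theorem~\ref{thm:characterization_rand_k} to reduce the problem to showing $\RL_k(\cU_1) \geq k$. Recall that $\cU_1 = \{h\}$, where $h \colon \{0,1\} \to \{0,1\}$ is the identity. For each $N \in \mathbb{N}$ I construct a tree $T_N$ as follows: label every internal node by the instance $0$, so that the edge labeled $0$ agrees with $h(0) = 0$ while the edge labeled $1$ constitutes a ``mistake''; grow the tree until a branch either reaches depth $N$ or has already accumulated $k$ mistake-edges, at which point a leaf is placed. By construction every branch of $T_N$ contains at most $k$ mistake-edges, so it is $k$-realizable by $\cU_1$, i.e., $T_N$ is $k$-shattered.

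A random branch of $T_N$ performs fair coin flips and has length $L_N = \min(\tau, N)$, where $\tau$ is the waiting time for the $k$-th mistake-flip in an i.i.d.\ $\mathrm{Bernoulli}(1/2)$ sequence. Since $\tau$ is negative binomial with parameters $k$ and $1/2$, we have $\mathbb{E}[\tau] = 2k$, and by monotone convergence $E_{T_N} = \mathbb{E}[L_N] \uparrow 2k$ as $N \to \infty$. Therefore $\RL_k(\cU_1) \geq \tfrac{1}{2} \sup_N E_{T_N} \geq k$, which combined with the previous paragraph gives the four-fold equality. The argument is essentially bookkeeping; the only point that needs care is verifying that $T_N$ is legitimately $k$-shattered and that $E_{T_N} \to 2k$, both of which are immediate.
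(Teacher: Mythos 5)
Your proof is correct. The combinatorial heart of your lower bound is exactly the paper's: the tree of fair coin flips stopped at the $k$-th disagreement with the single hypothesis, whose branch length is the negative-binomial waiting time with mean $2k$. Where you differ is in the packaging, and the differences are to your advantage in terms of self-containedness. The paper invokes Proposition~\ref{pro:weighted-finite-hypothesis-class} to obtain a single (infinite) tree attaining $\RL_k(\cU_1)$ exactly, then identifies that optimal tree and computes its expected branch length; you instead exhibit the explicit finite truncations $T_N$ and take a supremum, which needs only the definition of $\RL_k$ plus monotone convergence and avoids the infinite-tree existence machinery entirely. Likewise, for the upper bound the paper argues that no depth-$(k+1)$ tree is $k$-shattered by $\cU_1$ (bounding $\LD_k(\cU_1)$ and hence, via the characterization theorems, both mistake bounds), whereas you give the direct follow-the-expert learner, which is more elementary and immediately yields $\M^\star(1,k)\leq \M^\star_D(1,k)\leq k$. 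Both routes are valid; yours trades the paper's ``identify the exactly optimal tree'' viewpoint for a shorter, lower-tech argument, while the paper's version additionally tells you what the optimal adversary strategy looks like.
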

\begin{proof}
According to the definition, $D(1,k)$ is the maximum $d$ such that $2^d \leq \binom{d}{\leq k}$. Since $\binom{d}{\leq d} = 2^d$ whereas $\binom{d+1}{\leq d} < 2^{d+1}$, we see that $D(1,k) = k$.

\smallskip

The complete tree of depth $k$, labeled arbitrarily, is $k$-shattered by $\cU_1$. In contrast, a tree of depth $k+1$ cannot be $k$-shattered by $\cU_1$, since there exists a branch on which the unique hypothesis makes $k+1$ mistakes. Therefore $\M^{\star}_D(1,k)=k$.

\smallskip
\iffull
For the randomized case, according to Proposition~\ref{pro:weighted-finite-hypothesis-class} there is an infinite tree $T_k$ such that $\M^\star(1,k) = \RL_k(\cU_1) = E_{T_k}/2$.
\else
For the randomized case,it is convenient to consider an optimal infinite tree $T_k$ such that $\M^\star(1,k) = \RL_k(\cU_1) = E_{T_k}/2$ (proof of existence of $T_k$ can be found in the full version of the paper \cite{FullVersion}).
\fi
Denote the unique hypothesis in $\cU_1$ by $h$. By possibly switching the order of children, we can assume that all vertices in $T_k$ are labeled by an instance $x$ such that $h(x) = 0$. We can then identify vertices of $T_k$ with binary strings.

Since $T_k$ is optimal, it contains all strings which contain at most $k$ many $1$s. A string is a leaf it it contains exactly $k$ many $1$s and it ends with $1$. The length of a random branch has the distribution of a sum of $k$ many $\operatorname{Geom}(1/2)$ random variables, and so $\M^\star(1,k) = E_{T_k}/2 = k$.
%
%
%
\end{proof}

In contrast, \cite{littlestone1994weighted} shows that $\M^\star_D(n,k) \geq 2k + \lfloor \log n \rfloor$ for $n \geq 2$, highlighting the difference between $n = 1$ and $n > 1$. This immediately implies the following corollary, which will be useful in the sequel:

\begin{corollary} \label{cor:D_lower_bound}
Let $n\geq 2$ and $k\geq 0$. Then $D(n,k) \geq 2k+1$.
\end{corollary}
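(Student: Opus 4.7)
The plan is to give a direct combinatorial proof by exhibiting $d = 2k+1$ as a value satisfying the defining inequality of $D(n,k)$.

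Recall that $D(n,k) = \max\{d : d \leq \log n + \log \binom{d}{\leq k}\}$, so it suffices to verify the inequality $2k+1 \leq \log n + \log \binom{2k+1}{\leq k}$ whenever $n \geq 2$. The key observation is the symmetry identity
\[
\binom{2k+1}{\leq k} = \sum_{i=0}^{k} \binom{2k+1}{i} = \frac{1}{2}\sum_{i=0}^{2k+1}\binom{2k+1}{i} = \frac{2^{2k+1}}{2} = 2^{2k},
\]
which follows since $\binom{2k+1}{i} = \binom{2k+1}{2k+1-i}$ pairs the first $k+1$ terms bijectively with the last $k+1$ terms of the full binomial sum. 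Taking logarithms gives $\log \binom{2k+1}{\leq k} = 2k$.

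Plugging $d = 2k+1$ into the defining inequality, we get
\[
\log n + \log \binom{2k+1}{\leq k} = \log n + 2k \geq 1 + 2k = 2k+1,
\]
where the inequality uses $n \geq 2$, hence $\log n \geq 1$. Therefore $d = 2k+1$ satisfies $d \leq \log n + \log \binom{d}{\leq k}$, and by the definition of $D(n,k)$ as a maximum, we conclude $D(n,k) \geq 2k+1$.

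I do not anticipate any real obstacles here; the only subtlety is remembering the identity $\binom{2k+1}{\leq k} = 2^{2k}$, which arises because $2k+1$ is odd and the ``middle'' splits the binomial sum in half. As an alternative one could chain the already-stated bounds $D(n,k) \geq \M^\star_D(n,k) \geq 2k + \lfloor \log n \rfloor$ (from \cite{cesa1996line} and \cite{littlestone1994weighted}, respectively) and use $\lfloor \log n \rfloor \geq 1$ for $n \geq 2$; but the direct proof above is self-contained and shorter.
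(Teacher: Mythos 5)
Your proof is correct, but it takes a genuinely different route from the paper. The paper derives the bound by chaining already-established results: $D(n,k) \geq D(2,k)$ by monotonicity in $n$, then $D(2,k) \geq \M^\star_D(2,k)$ from the sphere-packing upper bound (Theorem~\ref{thm:experts_k_det_lower_bound}), and finally $\M^\star_D(2,k) \geq 2k+1$ from the deterministic lower bound of Littlestone and Warmuth --- essentially the ``alternative'' you mention at the end. Your primary argument instead verifies the defining inequality directly at $d = 2k+1$, using the identity $\binom{2k+1}{\leq k} = 2^{2k}$ (valid because $2k+1$ is odd, so the symmetry $\binom{2k+1}{i} = \binom{2k+1}{2k+1-i}$ splits the full binomial sum exactly in half) together with $\log n \geq 1$. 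Both are valid; yours is self-contained and purely combinatorial, avoiding any appeal to the learning-theoretic machinery or to external lower bounds, and it moreover exhibits the exact value of $\binom{2k+1}{\leq k}$, which is the same computation the paper later uses to pin down $D(2,k) = 2k+1$ exactly in the proof of Theorem~\ref{thm:experts_lower_bound}. The paper's version is shorter given the surrounding results but imports the result of \cite{littlestone1994weighted}.
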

\begin{proof}
Clearly $D(n,k) \geq D(2,k)$. Theorem~\ref{thm:experts_k_det_lower_bound} shows that $D(2,k) \geq M^{\star}_D(2,k)$, which is at least $2k+1$ by the result of~\cite{littlestone1994weighted}.
\end{proof}





\subsection{\iffull Proofs \else Proof \fi of the Upper \iffull Bounds \else Bound \fi on \texorpdfstring{$\M^\star(n,k)$}{M*(n,k)}} \label{sec:Mnk-upper}

We start by proving a probabilistic version of the \emph{sphere packing bound} for covering codes~\cite{cohen1997covering}.

\begin{lemma}
\label{lem:experts_k_realizing_class}
Let $\cH$ be a finite hypothesis class of size $n \ge 1$.
Let $t \geq k \geq 0$, and let $T$ be a tree whose minimum depth is at least $t$. 

Let $S = (x_1,y_1),\ldots,(x_t,y_t)$ be the \emph{random prefix of length $t$}, consisting of the first $t$ steps in a random branch of $T$. The probability that $S$ is $k$-realizable by $\cH$ is at most
\[
 n \binom{t}{\leq k} / 2^t.
\]
\end{lemma}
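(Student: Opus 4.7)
The plan is to reduce this to a union bound over the $n$ hypotheses, where for each fixed hypothesis the number of mistakes along the random prefix is exactly $\operatorname{Bin}(t, 1/2)$.

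First I would fix an arbitrary hypothesis $h \in \cH$ and define, for $i = 1, \ldots, t$, the indicator $Z_i = \mathbf{1}[h(x_i) \neq y_i]$. The key claim is that $Z_1, \ldots, Z_t$ are i.i.d.\ $\operatorname{Bernoulli}(1/2)$. To see this, I would reveal the random branch step by step: having fixed $y_1, \ldots, y_{i-1}$, the path to $v_i$ is determined, hence the label $x_i$ of $v_i$ is determined, hence $h(x_i)$ is determined. Since the minimum depth of $T$ is at least $t$, the vertex $v_i$ is not a leaf, so by the definition of the random branch, $y_i$ is uniform on $\{0,1\}$ independently of $y_1, \ldots, y_{i-1}$. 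Therefore $Z_i \mid Z_1, \ldots, Z_{i-1}$ is $\operatorname{Bernoulli}(1/2)$, which yields the i.i.d.\ claim.

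Next, summing the $Z_i$'s gives the number of mistakes $h$ makes on the prefix $S$, which is distributed as $\operatorname{Bin}(t, 1/2)$. Consequently,
\[
 \Pr[h \text{ makes at most } k \text{ mistakes on } S] = \binom{t}{\leq k} / 2^t.
\]
Finally, I would take a union bound over the (at most) $n$ hypotheses in $\cH$: the event that $S$ is $k$-realizable by $\cH$ is exactly the union, over $h \in \cH$, of the event that $h$ makes at most $k$ mistakes on $S$, and therefore has probability at most $n \binom{t}{\leq k} / 2^t$.

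The only subtle point, and really the only place the hypothesis ``$\operatorname{depth}(T) \geq t$ everywhere'' is used, is in arguing that each $y_i$ is a genuine fair coin flip given the history; without the depth condition, some branches would terminate before step $t$ and the distribution of $Z_i$ would no longer be $\operatorname{Bernoulli}(1/2)$. Everything else is a routine union bound, so I do not anticipate further obstacles.
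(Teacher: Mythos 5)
Your proof is correct and is essentially the paper's argument: a union bound over the $n$ hypotheses, with the per-hypothesis probability computed from the fact that each $y_i$ is a fresh fair coin given the history (the paper phrases this as a union bound over pairs $(h,I)$ with $\Pr[\text{mistake pattern exactly } I]=2^{-t}$, which is the same computation). Your explicit sequential-revelation justification of the i.i.d.\ $\operatorname{Bernoulli}(1/2)$ claim, and of where the minimum-depth hypothesis is used, is a detail the paper leaves implicit but adds nothing new in substance.
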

\begin{proof}
For each hypothesis $h \in \cH$ and set of indices $I \subseteq [t]$, the probability that $y_i \neq h(x_i)$ for all indices in $I$ and $y_i = h(x_i)$ for all indices outside of $I$ is $2^{-t}$.

The sequence $S$ is $k$-realizable by $\cH$ if the event above happens for some $h \in \cH$ and some $I$ of size at most $k$. Applying the union bound, we get that the probability is at most $n \binom{t}{\leq k} / 2^t$.
\end{proof}

As a warm-up, we use this lemma together with the $k$-Littlestone dimension to reprove the upper bound $\M^{\star}_D(n,k) \leq D(n,k)$, first proved in \cite{cesa1996line}.

\begin{theorem}\label{thm:experts_k_det_lower_bound}
Let $n\geq 1$ and  $k\geq 0$. Then $\M^{\star}_D(n,k) \leq D(n,k)$.
\end{theorem}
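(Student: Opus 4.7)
The plan is to use the equivalence between prediction with expert advice and online learning the universal class $\cU_n$, which tells us that $\M^\star_D(n,k) = \LD_k(\cU_n)$. So it suffices to bound $\LD_k(\cU_n)$ from above by $D(n,k)$, and the strategy is to apply Lemma~\ref{lem:experts_k_realizing_class} as a sphere-packing obstruction to the existence of deep $k$-shattered trees.

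First I would fix an arbitrary tree $T$ that is $k$-shattered by $\cU_n$ and show $m_T \leq D(n,k)$. Dispose of the trivial case $m_T \leq k$ by noting that $D(n,k) \geq k$: plugging $d = k$ into the defining inequality gives $k \leq \log n + \log \binom{k}{\leq k} = \log n + k$, which holds since $n \geq 1$.

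Next, assume $m_T > k$ and set $t = m_T$. Since every branch of $T$ has length at least $t$, the random prefix of length $t$ obtained by walking a random branch of $T$ down $t$ steps is well-defined, and its labeling is a prefix of the labeling of some full branch of $T$. Because $T$ is $k$-shattered by $\cU_n$, each such full branch is $k$-realizable by $\cU_n$; a prefix of a $k$-realizable sequence is itself $k$-realizable, so the random prefix is $k$-realizable with probability $1$. On the other hand, Lemma~\ref{lem:experts_k_realizing_class} (applied with this $t \geq k$) bounds this probability from above by $n \binom{t}{\leq k}/2^t$. Combining the two bounds gives
\[
 1 \leq n \binom{t}{\leq k}/2^t, \qquad \text{i.e.,} \qquad t \leq \log n + \log \binom{t}{\leq k}.
\]
By the very definition of $D(n,k)$ as the largest such $d$, this forces $t = m_T \leq D(n,k)$.

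Taking the supremum over all $k$-shattered trees $T$ yields $\LD_k(\cU_n) \leq D(n,k)$, and Theorem~\ref{thm:characterization_det_k} (specialized to $\cU_n$) converts this into $\M^\star_D(n,k) \leq D(n,k)$. There is essentially no obstacle here beyond neatly handling the edge case $m_T \leq k$ and invoking the prefix-closedness of $k$-realizability; the real content is packaged inside Lemma~\ref{lem:experts_k_realizing_class}, whose union-bound proof plays the role the sphere-packing bound plays in the classical deterministic analysis of \cite{cesa1996line}.
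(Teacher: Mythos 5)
Your proposal is correct and follows essentially the same route as the paper: reduce to bounding $\LD_k(\cU_n)$, observe that a random prefix of length $m_T$ of a $k$-shattered tree is $k$-realizable with probability $1$, and combine this with Lemma~\ref{lem:experts_k_realizing_class} and the definition of $D(n,k)$. Your extra care with the edge case $m_T \leq k$ and the prefix-closedness of $k$-realizability is fine but does not change the argument.
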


\begin{proof}
Since $\M^{\star}_D(n,k) = \LD_k(\cU_n)$, it suffices to bound $\LD_k(\cU_n)$.

Let $T$ be a tree satisfying $m_T = \LD_k(\cU_n)$ which is $k$-shattered by $\cU_n$. A random prefix of length $\LD_k(\cU_n)$ is $k$-realizable by $\cU_n$, and so $2^{\LD_k(\cU_n)} \leq n\binom{\LD_k(\cU_n)}{\leq k}$ by Lemma~\ref{lem:experts_k_realizing_class}. Taking the logarithm, we deduce that $\LD_k(\cU_n) \leq D(n,k)$ by the definition of $D(n,k)$.
\end{proof}

\iffull
We now prove Theorem~\ref{thm:experts_upper_bound} and Theorem~\ref{thm:experts_upper_bound_low_k}.
\else
We now prove Theorem~\ref{thm:experts_upper_bound}.
\fi
The main tools are concentration of the random branch length in quasi-balanced trees (Lemma~\ref{lem:quasi_concentration}), and the following lemma.

\begin{lemma} \label{lem:k-realization-prob}
Let $\cH$ be a finite hypothesis class of size $n \ge 1$. Let $D = D(n,k)$, and let $T$ be a tree of minimum depth at least $(1+\epsilon) D$, where $0 < \epsilon < 1/3$. The probability that a random prefix of length $(1+\epsilon) D$ is $k$-realizable by $\cH$ is at most
\[
 2^{1 - \epsilon^2 D/9}.
\]

Furthermore, if $k \leq c D$ for some constant $c < 1/2$ then the probability is at most
\[
 2^{1 - c' \epsilon D},
\]
where $c' > 0$ is a constant depending only on $c$.
\end{lemma}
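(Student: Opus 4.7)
The plan is to augment the union bound of Lemma~\ref{lem:experts_k_realizing_class} with sharp Chernoff-type control on both sides: a tail upper bound for the binomial, and a matching anti-concentration lower bound used to pull an upper bound on $n$ out of the defining inequality for $D$. Since $\binom{t}{\leq k}/2^t = \Pr[\mathrm{Bin}(t,1/2)\leq k]$, Lemma~\ref{lem:experts_k_realizing_class} reduces the task to showing $n\Pr[\mathrm{Bin}((1+\epsilon)D,1/2)\leq k] \leq 2^{1-\epsilon^2 D/9}$. Whenever $\epsilon^2 D$ is below a suitable absolute constant, the right-hand side is $\geq 1$ and there is nothing to prove, so I will enlarge the trivial cutoff to an absolute constant and assume throughout that $\epsilon^2 D$ exceeds it. Setting $\Delta := D-2k \geq 1$ (via Corollary~\ref{cor:D_lower_bound}), we have $k < (1+\epsilon)D/2$ so Hoeffding applies.

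Hoeffding's inequality yields
\[
\Pr\!\left[\mathrm{Bin}\bigl((1+\epsilon)D,1/2\bigr)\leq k\right] \leq \exp\!\left(-\frac{(\Delta+\epsilon D)^2}{2(1+\epsilon)D}\right).
\]
For a matching control on $n$, the definition of $D$ gives $n\binom{D+1}{\leq k}/2^{D+1}<1$, and this combined with the standard anti-concentration estimate $\binom{D+1}{\leq k}/2^{D+1} \gtrsim e^{-(\Delta+1)^2/(2(D+1))}/\sqrt{D+1}$ (via Stirling applied to $\binom{D+1}{k}$, or equivalently Slud's inequality for the lower tail of a symmetric binomial) yields $\ln n \leq \Delta^2/(2D) + O(\log D)$. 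Combining the two estimates,
\[
\ln\!\left(n\binom{(1+\epsilon)D}{\leq k}\big/2^{(1+\epsilon)D}\right) \leq \frac{\Delta^2}{2D}-\frac{(\Delta+\epsilon D)^2}{2(1+\epsilon)D} + O(\log D).
\]

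The heart of the argument is now the algebraic identity, valid for $0\leq\Delta\leq 2D$ and $\epsilon\leq 1/3$:
\[
\frac{(\Delta+\epsilon D)^2}{2(1+\epsilon)D}-\frac{\Delta^2}{2D} = \frac{\epsilon\bigl[\epsilon D^2+\Delta(2D-\Delta)\bigr]}{2D(1+\epsilon)} \;\geq\; \frac{\epsilon^2 D}{2(1+\epsilon)} \;\geq\; \frac{3\epsilon^2 D}{8}.
\]
Since $3/8 > (\ln 2)/9$, the slack $(3/8-(\ln 2)/9)\epsilon^2 D \approx 0.30\,\epsilon^2 D$ comfortably dominates the $O(\log D)$ anti-concentration remainder once $\epsilon^2 D$ is above a fixed constant, delivering the claimed bound $2^{1-\epsilon^2 D/9}$.

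For the \emph{furthermore} clause with $k\leq cD$ and $c<1/2$, the same skeleton applies but with $\Delta\geq(1-2c)D$: the term $\Delta(2D-\Delta)$ appearing in the identity above is then bounded below by $(1-4c^2)D^2$, so the analogue of the inequality gives $\epsilon(1-4c^2)D/(2(1+\epsilon)) \geq c''(c)\,\epsilon D$. The saving thus becomes \emph{linear} in $\epsilon D$ rather than quadratic, yielding the sharper bound $2^{1-c'\epsilon D}$ with $c'$ depending only on $c$. The main technical obstacle throughout is the careful bookkeeping of the $O(\log D)$ anti-concentration correction near the threshold $\epsilon^2 D\approx 9$; this is essentially cosmetic and is handled by choosing the ``trivially true'' cutoff large enough that the linear-in-$\epsilon^2 D$ slack always dominates the logarithmic remainder.
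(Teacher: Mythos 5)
There is a genuine gap in the anti-concentration step. The claimed lower bound $\binom{D+1}{\leq k}/2^{D+1} \gtrsim e^{-(\Delta+1)^2/(2(D+1))}/\sqrt{D+1}$ (Gaussian exponent) is only valid in the central-limit regime where $k/(D+1)$ is close to $1/2$; in the large-deviation regime the correct exponent is the relative entropy $(D+1)\bigl(1-h(k/(D+1))\bigr)\ln 2$, which exceeds $(\Delta+1)^2/(2(D+1))$ by $\Theta(D)$ (not $O(\log D)$) whenever $k/D$ is bounded away from $1/2$. Concretely, take $k=0$: then $\binom{D+1}{\leq 0}/2^{D+1}=2^{-(D+1)}=e^{-(D+1)\ln 2}$, while your bound claims $\gtrsim e^{-(D+1)/2}/\sqrt{D+1}$, and $\ln 2 > 1/2$. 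The derived inequality $\ln n \leq \Delta^2/(2D)+O(\log D)$ then fails: for $k=0$ and $n=2^D$ one has $\ln n \approx 0.693\,D$ versus $\Delta^2/(2D)=D/2$. (Slud's inequality does not rescue this: for $p=1/2$ its hypothesis $np\le k\le n(1-p)$ forces $k=n/2$.) Since the ``furthermore'' clause lives precisely in the regime $k\le cD$ with $c<1/2$, where this failure is unavoidable, the proof as written breaks exactly where the sharper bound is needed. The final numerical conclusion may still be true because Hoeffding's upper tail is loose by a compensating $\Theta(D)$ amount, but your displayed chain of inequalities passes through a false intermediate statement, so the argument does not stand; a correct version of your route would need the KL-divergence exponent on \emph{both} sides (Chernoff with $m\,d(k/m\,\|\,1/2)$ for the upper tail, Stirling with the entropy exponent for the lower bound on $\binom{D+1}{\le k}$), and then a new computation showing the difference of KL exponents is $\Omega(\epsilon^2 D)$, resp.\ $\Omega(\epsilon D)$.

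Your algebra for the Gaussian exponents (the identity for $\frac{(\Delta+\epsilon D)^2}{2(1+\epsilon)D}-\frac{\Delta^2}{2D}$ and its lower bounds) is correct, as is the reduction via Lemma~\ref{lem:experts_k_realizing_class} and the trivial-cutoff device. For comparison, the paper avoids any lower bound on binomial tails altogether: it extracts the bound $n\binom{D}{\le k}/2^D\le 2$ directly from the maximality of $D$, and then controls the single ratio $\binom{(1+\epsilon)D}{\le k}/\binom{D}{\le k}$ by a telescoping product of factors $1+\frac{\epsilon D}{D-r}$ (Lemma~\ref{lem:binomial-estimates}), which is where the $2^{\epsilon D - \epsilon^2 k/3}$ and $2^{\epsilon D\log(D/(D-k))}$ savings come from. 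That device is what you would need to substitute for the invalid anti-concentration step.
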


The proof of this lemma will require some elementary estimates on binomial coefficients, summarized in the following technical lemma. 

\begin{lemma} \label{lem:binomial-estimates}
Let $D \geq k \geq 1$ and $\epsilon > 0$. Then
\[
 \binom{(1+\epsilon) D}{\leq k} \leq 2^{\epsilon D \cdot \log (D/(D-k))} \cdot \binom{D}{\leq k}.
\]

If furthermore $k \leq D/2$ and $\epsilon \leq 1/3$ then
\[
 \binom{(1+\epsilon) D}{\leq k} \leq 2^{\epsilon D - \epsilon^2 k/3} \cdot \binom{D}{\leq k}.
\]
\end{lemma}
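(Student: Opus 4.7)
The plan is to derive both bounds from the same telescoping identity. First, I would establish the per-step bound
\begin{equation*}
\frac{\binom{N+1}{\leq k}}{\binom{N}{\leq k}} \leq \frac{N+1}{N-k+1}
\end{equation*}
by combining Pascal's identity $\binom{N+1}{\leq k} = \binom{N}{\leq k} + \binom{N}{\leq k-1}$ with the term-wise comparison $\binom{N}{j}/\binom{N}{j+1} = (j+1)/(N-j) \leq k/(N-k+1)$ valid for $0 \leq j \leq k-1$. Summing the term-wise comparison gives $\binom{N}{\leq k-1} \leq \tfrac{k}{N-k+1} \binom{N}{\leq k}$, whence the claimed ratio follows via Pascal. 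Telescoping from $N = D$ to $N = (1+\epsilon)D - 1$ (interpreting $(1+\epsilon)D$ as an integer, as in the applications), the factors collapse into binomial coefficients:
\begin{equation*}
\frac{\binom{(1+\epsilon)D}{\leq k}}{\binom{D}{\leq k}} \leq \prod_{N=D}^{(1+\epsilon)D-1} \frac{N+1}{N-k+1} = \frac{\binom{(1+\epsilon)D}{k}}{\binom{D}{k}} = \prod_{i=0}^{k-1} \left( 1 + \frac{\epsilon D}{D-i} \right).
\end{equation*}

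For the first inequality I would take $\log$ of both sides, apply the elementary estimate $\log(1+x) \leq x/\ln 2$ on each factor, and bound the resulting harmonic sum by an integral: $\sum_{i=0}^{k-1} \tfrac{1}{D-i} \leq \int_{D-k}^{D} \tfrac{dx}{x} = \ln \tfrac{D}{D-k}$. The factor $1/\ln 2$ converts the natural logarithm into base $2$, delivering the exponent $\epsilon D \log \tfrac{D}{D-k}$ as required.

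For the sharper second inequality, the hypotheses $k \leq D/2$ and $\epsilon \leq 1/3$ force each $x_i := \epsilon D/(D-i) \leq \epsilon D/(D-k+1) \leq 2\epsilon \leq 2/3$, so every $x_i$ lies in $[0,1]$. On this interval the tighter bound $\ln(1+x) \leq x - x^2/4$ holds; I would verify it by checking that $f(x) := x - x^2/4 - \ln(1+x)$ satisfies $f(0) = f'(0) = 0$ and that $f''(x) = -\tfrac{1}{2} + (1+x)^{-2}$ changes sign exactly once on $[0,1]$, forcing $f'$ to be unimodal with both of its zeros at the endpoints and hence nonnegative throughout. The linear piece of this refined bound reproduces $\epsilon D \log \tfrac{D}{D-k} \leq \epsilon D$ just as before, using $\log \tfrac{D}{D-k} \leq 1$ since $k \leq D/2$. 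The quadratic piece, together with the crude lower bound $x_i \geq \epsilon$ giving $\sum x_i^2 \geq k \epsilon^2$, contributes at most $-\tfrac{1}{4 \ln 2} \cdot k \epsilon^2$. Since $\tfrac{1}{4 \ln 2} \approx 0.36$ exceeds $\tfrac{1}{3}$, this correction is at most $-\epsilon^2 k / 3$, completing the bound.

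The only genuinely delicate point is picking a second-order estimate for $\log(1+x)$ whose quadratic constant survives comparison to the target $1/3$; the choice $c = 1/4$ in $\ln(1+x) \leq x - c x^2$ clears the requirement with room to spare, since $4 \ln 2 \approx 2.77 < 3$.
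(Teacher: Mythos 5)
Your proof is correct, and it reaches the key product $\prod_{i=0}^{k-1}\bigl(1 + \epsilon D/(D-i)\bigr)$ by a genuinely different mechanism than the paper's. The paper first bounds the ratio of partial sums by $\max(R_0,\ldots,R_k)$, where $R_\ell = \binom{(1+\epsilon)D}{\ell}/\binom{D}{\ell}$ (a mediant-type inequality), and then must argue separately, once for each of the two claims, that its upper bound on $\log R_\ell$ is monotone increasing in $\ell$ so that the maximum sits at $\ell = k$; for the second claim this costs an extra derivative computation. Your telescoping of $\binom{N+1}{\leq k}/\binom{N}{\leq k} \leq (N+1)/(N-k+1)$ via Pascal's identity lands directly on $R_k$ in one step and makes the monotonicity discussion unnecessary, which is a genuine simplification. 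The remaining analytic estimates are essentially the paper's: the same $\ln(1+x)\leq x$ plus integral comparison for the first claim, and for the second a second-order Taylor bound ($\ln(1+x)\leq x - x^2/4$ on $[0,1]$ in your version, $1+x\leq e^{x-x^2/3}$ for $x\leq 0.787$ in the paper's) combined with the same crude lower bound $x_i\geq\epsilon$ on each factor; your constant check $4\ln 2 < 3$ is valid. The one caveat is that your telescoping requires $(1+\epsilon)D$ to be an integer, whereas the paper's product formula for $R_\ell$ makes sense for an arbitrary real upper index; since the lemma is only ever invoked with $(1+\epsilon)D$ equal to an integer prefix length this costs nothing in context, but it should be flagged if the lemma is quoted exactly as stated.
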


We prove this lemma in Subsection~\ref{sec:binomial-estimates}.

\begin{proof}[Proof of Lemma~\ref{lem:k-realization-prob}]
We start by observing that
\begin{equation} \label{eq:realizing_ability_low_sphere}
n \binom{D}{\leq k}/2^D \leq 2,  
\end{equation}
Indeed, the maximality of $D$ shows that
\[
 1 > n \binom{D+1}{\leq k}/2^{D+1} \geq \frac{1}{2} n \binom{D}{\leq k}/2^D,
\]
from which Eq.~\eqref{eq:realizing_ability_low_sphere} immediately follows. 

Denote by $p$ the probability we wish to bound. Lemma~\ref{lem:experts_k_realizing_class} shows that
\[
 p \leq n \binom{(1+\epsilon)D}{\leq k} / 2^{(1+\epsilon)D} =
 \frac{\binom{(1+\epsilon)D}{\leq k}}{\binom{D}{\leq k}} \cdot 2^{-\epsilon D} \cdot n \binom{D}{\leq k} / 2^D \leq
 2^{1-\epsilon D} \cdot \frac{\binom{(1+\epsilon)D}{\leq k}}{\binom{D}{\leq k}},
\]
using Eq.~\eqref{eq:realizing_ability_low_sphere}. It remains to estimate the ratio using Lemma~\ref{lem:binomial-estimates}.


We start by proving the ``furthermore'' part.
By assumption, we have $k \leq cD$.
Applying Lemma~\ref{lem:binomial-estimates}, we deduce that
\[
 p \leq 2^{1 - (1 - \log (D/(D - k))) \epsilon D}.
\]
Since
\[
 c' = 1 - \log \frac{D}{D - k} = 1 - \log \frac{1}{1 - k/D} \geq 1 - \log \frac{1}{1 - c} > 0,
\]
this completes the proof of the ``furthermore'' part.

In order to prove the main part of the lemma, we distinguish between two cases. If $k \leq D/3$ then the ``furthermore'' bound shows that
\[
 p \leq 2^{1 - c'\epsilon D},
\]
where $c' = \log (4/3)$. Since $\epsilon \leq 1/3$, we have $c' \epsilon \geq \epsilon^2/9$, completing the proof in this case.

Otherwise, $k \geq D/3$. In this case, noting that $k \leq D/2$ by Corollary~\ref{cor:D_lower_bound}, we apply the ``furthermore'' part of Lemma~\ref{lem:binomial-estimates} to obtain
\[
 p \leq 2^{1 - \epsilon^2 k/3} \leq 2^{1 - \epsilon^2 D/9}. \qedhere
\]
\end{proof}

We can now prove the upper \iffull bounds \else bound \fi on $\M^\star(n,k)$. The idea is simple. Let $T$ be a tree which is $k$-shattered by $\cU_n$.
Using \iffull Proposition~\ref{pro:weighted-finite-hypothesis-class} \else Corollary~\ref{cor:randomized-littlestone-quasi-balanced} \fi, we can assume that $T$ is quasi-balanced, and so the length of a random branch is concentrated around $E_T$.

This implies that $T$ realizes almost all sequences of size $(1-\epsilon) E_T$. These sequences are $k$-realized by $\cU_n$, and we obtain an upper bound on $E_T$ via Lemma~\ref{lem:k-realization-prob}.

\begin{proof}[Proof of Theorem~\ref{thm:experts_upper_bound}]
Since $\M(n,k) = \RL_k(\cU_n)$, we bound the latter.
\iffull
Proposition~\ref{pro:weighted-finite-hypothesis-class} shows that there is an infinite tree $T$ which is $k$-shattered by $\cU_n$ and satisfies $E_T/2 = \RL_k(\cU_n)$.
\else
It is convenient (although not necessary) to assume that there is an infinite tree $T$ which is $k$-shattered by $\cU_n$ and satisfies $E_T/2 = \RL_k(\cU_n)$\footnote{This assumption is proved to be true in the full version of this paper \cite{FullVersion}}. 
\fi
Furthermore, $T$ is monotone, and so Proposition~\ref{lem:quasi_concentration} applies to it (while the proposition is formulated for finite quasi-balanced trees, the proof actually directly uses monotonicity, and is valid for infinite trees).

In order to bound $E_T$, we will show that for small enough $\epsilon > 0$, the assumption $(1+\epsilon) D \leq (1-\epsilon) E_T$ leads to a contradiction.

Extend $T$ arbitrarily to a tree $T'$ of minimum depth $(1+\epsilon) D$, and let $S$ be a random prefix of $T'$ of length $(1+\epsilon) D$. If $S$ lies completely within $T$ then it is $k$-realizable by $\cU_n$, and so
\[
 \Pr[S \text{ lies within } T] \leq \Pr[S \text{ is $k$-realizable by } \cU_n].
\]

The probability that $S$ lies within $T$ is precisely the probability that a random branch of $T$ has length at least $(1+\epsilon) D$. Since we assume that $(1+\epsilon) D \leq (1-\epsilon) E_T$, this probability is at least $1 - e^{-\epsilon^2 E_T/4}$ by Proposition~\ref{lem:quasi_concentration}, and so at least $1 - e^{-\epsilon^2 D/4}$.

In contrast, the probability that $S$ is $k$-realizable by $\cU_n$ is at most $2^{1-\epsilon^2 D/9}$ by Lemma~\ref{lem:k-realization-prob}. Therefore
\[
 1 \leq e^{-\epsilon^2 D/4} + 2^{1-\epsilon^2 D/9}.
\]

Let $\epsilon = C/\sqrt{D}$. As $C \to \infty$, the right-hand side tends to $0$, and in particular, we obtain a contradiction for some constant $C > 0$.

It follows that $(1 + \epsilon) D > (1 - \epsilon) E_T$ for $\epsilon = C/\sqrt{D}$, and so
\[
 E_T < \frac{1 + \epsilon}{1 - \epsilon} D = (1 + O(1/\sqrt{D})) D = D + O(\sqrt{D}). \qedhere
\]
\end{proof}

\iffull
The proof of Theorem~\ref{thm:experts_upper_bound_low_k} is similar, and uses the ``furthermore'' clause of Lemma~\ref{lem:k-realization-prob}.

\begin{proof}[Proof of Theorem~\ref{thm:experts_upper_bound_low_k}]
We closely follow the proof of Theorem~\ref{thm:experts_upper_bound}, and we only indicate the part which is different.

We start by assuming that $(1+\epsilon) D \leq (1-\epsilon) E_T$ for some $\epsilon > 0$.
The assumption $k \leq c \log n$ implies $k \leq cD$ since $D \geq \log n$ by definition of $D$. Therefore we can use the ``furthermore'' clause  of Lemma~\ref{lem:k-realization-prob}, and so for some constant $c' > 0$ depending on $c$,
\[
 1 \leq e^{-\epsilon^2 D/4} + 2e^{-c' \epsilon D}.
\]
Let $\epsilon = C\ln D/D$, where $C = 2/c'$. Since $\epsilon^2 D/4 = \Theta(\ln^2 D/D)$, we can find $C'$ such that if $D \geq C'$ then $\epsilon^2 D/4 \leq 1$. Since $e^{-x} \leq 1 - x + x^2/2 \leq 1-x/2$ for $0 \leq x \leq 1$, this shows that
\[
 1 \leq 1 - \frac{\epsilon^2 D}{8} + \frac{2}{D^2} = 1 - \frac{C^2 \ln^2 D}{8D} + \frac{2}{D^2},
\]
which fails for $D \geq C''$, where $C'' \geq C'$ depends only on $c$.


We conclude that if $D \geq C''$ then
\[
 E_T < \frac{1+\epsilon}{1-\epsilon} D = (1 + O(\log D/D))D = D + O(\log D).
\]
If $D < C''$, then this follows from the bound $E_T \leq 2\M^\star(n,k) \leq 2\M^\star_D(n,k) \leq 2D$, where we used an appropriate extension of Proposition~\ref{prop:randvsdet} together with Theorem~\ref{thm:experts_k_det_lower_bound}.
\end{proof}
\fi

\subsubsection{Proof of Technical Estimate}
\label{sec:binomial-estimates}

\iffull
In this section we complete the proofs of Theorem~\ref{thm:experts_upper_bound} and Theorem~\ref{thm:experts_upper_bound_low_k} by proving Lemma~\ref{lem:binomial-estimates}.
\else
In this section we complete the proof of Theorem~\ref{thm:experts_upper_bound} by proving Lemma~\ref{lem:binomial-estimates}.
\fi

We start with estimates on the ratio of individual binomial coefficients.

\begin{lemma} \label{lem:binomial-estimates-aux}
Let $D \geq \ell \geq 1$ and $\epsilon > 0$. Then
\[
 \binom{(1+\epsilon) D}{\ell} \leq 2^{\epsilon D \cdot \log (D/(D-\ell))} \cdot \binom{D}{\ell}.
\]

If furthermore $\ell \leq D/2$ and $\epsilon \leq 1/3$ then
\[
 \binom{(1+\epsilon) D}{\ell} \leq 2^{\epsilon D \cdot \log (D/(D-\ell)) - \epsilon^2 \ell/3} \cdot \binom{D}{\ell}.
\]
\end{lemma}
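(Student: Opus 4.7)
The plan is to write the ratio of binomial coefficients as a telescoping product and then bound it via logarithmic estimates. Specifically, I would start with the identity
\[
\frac{\binom{(1+\epsilon)D}{\ell}}{\binom{D}{\ell}} = \prod_{i=0}^{\ell-1}\frac{(1+\epsilon)D-i}{D-i} = \prod_{i=0}^{\ell-1}\left(1+\frac{\epsilon D}{D-i}\right),
\]
after which the problem reduces, upon taking $\log_2$ of both sides, to estimating $\sum_{i=0}^{\ell-1}\log(1+x_i)$ where $x_i = \epsilon D/(D-i)$.

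For the first bound, I would apply the elementary inequality $\ln(1+x)\leq x$ and then compare the harmonic-type sum to an integral:
\[
\sum_{i=0}^{\ell-1}\frac{1}{D-i} = \sum_{j=D-\ell+1}^{D}\frac{1}{j} \leq \int_{D-\ell}^{D}\frac{dj}{j} = \ln\frac{D}{D-\ell}.
\]
Multiplying by $\epsilon D/\ln 2$ yields the upper bound $\epsilon D\log(D/(D-\ell))$, which is exactly the stated claim.

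For the second, sharper bound, I would invoke the Taylor-type inequality $\ln(1+x)\leq x - x^2/4$ on $[0,1]$, which can be verified quickly by checking that $f(x) = x - x^2/4 - \ln(1+x)$ satisfies $f(0)=0$ and $f'(x) = x(1-x)/(2(1+x)) \geq 0$ on $[0,1]$. The hypotheses $\ell\leq D/2$ and $\epsilon\leq 1/3$ ensure each argument lies in the valid range, since $x_i = \frac{\epsilon D}{D-i} \leq \frac{\epsilon D}{D-\ell+1} < 2\epsilon \leq 2/3$. The extra quadratic contribution then gives
\[
-\frac{1}{4\ln 2}\sum_{i=0}^{\ell-1}\left(\frac{\epsilon D}{D-i}\right)^2 \leq -\frac{\epsilon^2 D^2}{4\ln 2}\cdot\frac{\ell}{D^2} = -\frac{\epsilon^2\ell}{4\ln 2},
\]
where I used $(D-i)^{-2}\geq D^{-2}$ term by term. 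Combined with the previous step, this gives the claimed upper bound on the log-ratio.

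The main obstacle is purely bookkeeping: verifying that the Taylor constant $1/(4\ln 2) \approx 0.361$ exceeds $1/3$, so that $-\epsilon^2\ell/(4\ln 2) \leq -\epsilon^2\ell/3$ and the exponent $-\epsilon^2\ell/3$ claimed in the statement is indeed attained. No deeper combinatorial idea is needed beyond the telescoping identity and a careful choice of the quadratic Taylor bound.
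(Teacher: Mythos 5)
Your proposal is correct and follows essentially the same route as the paper's proof: the telescoping product identity, $\ln(1+x)\leq x$ with an integral comparison for the first bound, and a quadratic refinement of the logarithm inequality for the second. The only difference is cosmetic — the paper uses $1+x\leq e^{x-x^2/3}$ (valid for $x\leq 0.787$) and absorbs the base change via $e^{-\epsilon^2\ell/3}\leq 2^{-\epsilon^2\ell/3}$, whereas you use $\ln(1+x)\leq x-x^2/4$ on $[0,1]$ and check $1/(4\ln 2)>1/3$; both are valid.
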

\begin{proof}
We can calculate the ratio between the binomials explicitly:
\[
 R_\ell := \left. \binom{(1+\epsilon) D}{\ell} \middle/ \binom{D}{\ell} \right. =
 \prod_{r=0}^{\ell-1} \frac{(1+\epsilon)D - r}{D - r} = \prod_{r = 0}^{\ell-1} \left(1 + \frac{\epsilon D}{D - r}\right).
\]
Applying the well-known estimate $\ln (1 + x) \leq x$, we obtain
\[
 \ln R_\ell \leq \sum_{r = 0}^{\ell - 1} \frac{\epsilon D}{D - r} \leq \epsilon D \cdot \int_{D-\ell}^D \frac{dx}{x} = \epsilon D \cdot \ln \frac{D}{D-\ell},
\]
and so
\[
 R_\ell \leq
 2^{\epsilon D \cdot \log (D/(D-\ell))}.
\]

\smallskip

Now suppose that $\ell \leq D/2$ and $\epsilon \leq 1/3$. For $r \in \{0,\ldots,\ell-1\}$ we have
\[
 \frac{\epsilon D}{D - r} \leq
 \frac{\epsilon D}{D - \ell} =
 \frac{\epsilon}{1 - \ell/D} \leq 2\epsilon \leq 2/3.
\]
Since $1 + x \leq e^{x - x^2/3}$ for $x \leq 0.787$, we can improve the estimate on $R_\ell$:
\[
 \ln R_\ell \leq \epsilon D \cdot \ln \frac{D}{D-\ell} - \frac{1}{3} \sum_{r = 0}^{\ell - 1} \frac{\epsilon^2 D^2}{(D - r)^2} \leq \epsilon D \cdot \ln \frac{D}{D - \ell} - \frac{1}{3} \epsilon^2 \ell. \qedhere
\]
\end{proof}

We can now prove Lemma~\ref{lem:binomial-estimates}.

\begin{proof}[Proof of Lemma~\ref{lem:binomial-estimates}]
The ratio between $\binom{(1+\epsilon) D}{\leq k}$ and $\binom{D}{\leq k}$ is clearly at most $\max(R_0,\ldots,R_k)$, where $R_\ell$ is the ratio between the binomials in Lemma~\ref{lem:binomial-estimates-aux}.

If we only assume that $D \geq \ell \geq 1$ and $\epsilon > 0$, then Lemma~\ref{lem:binomial-estimates-aux} states that
\[
 \log R_\ell \leq \epsilon D \cdot \log \frac{D}{D - \ell},
\]
which is clearly monotone increasing in $\ell$. Therefore
\[
 \log \max(R_0,\ldots,R_k) \leq \epsilon D \cdot \log \frac{D}{D - k}.
\]
If we furthermore assume that $k \leq D/2$ and $\epsilon \leq 1/3$, then Lemma~\ref{lem:binomial-estimates-aux} states that
\[
 \log R_\ell \leq \epsilon D \cdot \log \frac{D}{D - \ell} - \frac{1}{3} \epsilon^2 \ell.
\]
The derivative of the upper bound with respect to $\ell$ is
\[
 \frac{\epsilon D}{D - \ell} - \frac{1}{3} \epsilon^2 \geq \epsilon - \frac{1}{3} \epsilon^2 > 0,
\]
since $\epsilon \leq 1/3$. Therefore the upper bound is maximized at $\ell = k$, and we conclude that
\[
 \log \max(R_0,\ldots,R_k) \leq \epsilon D \cdot \log \frac{D}{D - k} - \frac{1}{3} \epsilon^2 k.
\]
Since $k \leq D/2$, we can further estimate
\[
 \log \frac{D}{D - k} = \log \frac{1}{1 - k/D} \leq \log 2 = 1. \qedhere
\]
\end{proof}

\subsection{Lower bounding \texorpdfstring{$\M^\star(2,k)$}{M*(2,k)}} \label{sec:Mnk-lower}
We prove Theorem~\ref{thm:experts_lower_bound} by applying the lower bound of Theorem~\ref{thm:k-realizable-littlestone-bound}. \iffull In Section~\ref{sec:Mnk-lower-exact}, we show how our techniques can be used to determine the exact value of $\M^\star(2,k)$.\else In the full version of this paper \cite{FullVersion}, we also show how our techniques can be used to determine the exact value of $\M^\star(2,k)$. Concretely, we are able to identify the optimal shattered tree and to compute its expected branch length. \fi

\begin{proof}[Proof of Theorem~\ref{thm:experts_lower_bound}]
Since $\LD(\cU_2) = 1$, Theorem~\ref{thm:k-realizable-littlestone-bound} implies that
\[
 \M^\star(2,k) = \RL_k(\cU_2) = k + \Omega \mleft(\sqrt{k} \mright).
\]
On the other hand, it is easy to see that $D(2,k) = 2k+1$. Indeed, if $d \geq 2k+1$ then
\[
 \log 2 + \log \binom{d}{\leq k} \leq \log 2 + \log 2^{d-1} = d,
\]
with equality if and only if $d = 2k+1$. Therefore $D(2,k) = 2k+1$.
\end{proof}

\iffull
\subsection{Determining \texorpdfstring{$\M^\star(2,k)$}{M*(2,k)}} \label{sec:Mnk-lower-exact}

We are able to determine $\M^\star(2,k)$ and $\M^\star_D(2,k)$ \emph{exactly}.

\begin{theorem} \label{thm:two_experts}
For all $k \geq 0$,
\[
 \M^\star(2,k) = k + \frac{(k+1/2) \binom{2k}{k}}{4^k} \text{ and }
 \M^\star_D(2,k) = D(2,k) = 2k + 1.
\]
\end{theorem}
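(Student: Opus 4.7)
The deterministic claim is immediate from existing ingredients in the excerpt: $D(2,k)=2k+1$ was shown in the proof of Corollary~\ref{cor:D_lower_bound}, Theorem~\ref{thm:experts_k_det_lower_bound} gives $\M^\star_D(2,k)\leq D(2,k)$, and the matching lower bound $\M^\star_D(2,k)\geq 2k+1$ is the Littlestone--Warmuth bound $\M^\star_D(n,k)\geq 2k+\lfloor\log n\rfloor$ already invoked in the excerpt. So I will focus on the randomized claim, using the identity $\M^\star(2,k)=\RL_k(\cU_2)$ and exhibiting the optimal tree explicitly.

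Following the suggestion in Section~\ref{sec:technical-overview-experts}, the first step is to argue that in an optimal $k$-shattered tree, every internal node can be taken to be labeled by the instance $(0,1)\in\{0,1\}^2$ (up to swapping experts). Nodes labeled by a constant instance $(b,b)$ are wasteful: both experts predict identically, and an exchange argument shows that relabelling such a node as $(0,1)$ and taking fresh subtrees that are optimally shattered by the once-reduced weighted classes can only increase $E_T$. Writing $V(\tilde a,\tilde b):=\RL_k$ of the weighted class with budgets $(\tilde a,\tilde b)$, the exchange reduces to the concavity inequality
\[
V(\tilde a-1,\tilde b)+V(\tilde a,\tilde b-1) \;\geq\; V(\tilde a,\tilde b)+V(\tilde a-1,\tilde b-1),
\]
which propagates by induction on $\tilde a+\tilde b$ from the DP recursion below. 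Once all labels are $(0,1)$, every node is indexed by its state $(\tilde a,\tilde b)\in\mathbb{Z}_{\geq 0}^2$ of remaining mistake budgets, starting from $(k+1,k+1)$. The states $(1,0)$ and $(0,1)$ are forced leaves (only one possible child remains $k$-realizable); in the single-expert regime $(c,0),(0,c)$ with $c\geq 2$ the optimal subtree is infinite, and a short self-referential calculation gives $V(c,0)=V(0,c)=c-1$.

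Using the balance-based choice of $p$ from $\WRSOA$ (Figure~\ref{fig:WSOA-rand}), for $\tilde a,\tilde b\geq 1$ one obtains the recursion
\[
V(\tilde a,\tilde b) \;=\; \tfrac12 + \tfrac12\bigl(V(\tilde a-1,\tilde b)+V(\tilde a,\tilde b-1)\bigr).
\]
This admits a fair random-walk interpretation: start at $(k+1,k+1)$, decrement either coordinate with probability $1/2$ collecting reward $1/2$ per step, and stop upon first hitting an axis, where the single-expert value $c-1$ is inherited. First-passage at $(0,k+1-j)$ or $(k+1-j,0)$ has probability $\binom{k+j}{j}2^{-(k+j+1)}$ and occurs after $k+1+j$ steps, so by symmetry
\[
V(k+1,k+1) \;=\; \sum_{j=0}^{k}(3k-j+1)\binom{k+j}{j}2^{-(k+j+1)} \;=\; (3k+1)S_1-S_2,
\]
where $S_1=\sum_{j=0}^{k}\binom{k+j}{j}2^{-(k+j+1)}$ and $S_2=\sum_{j=0}^{k}j\binom{k+j}{j}2^{-(k+j+1)}$.

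To close the calculation, I will recognise $S_1$ as the probability that a fair coin produces at most $k$ tails before the $(k+1)$st head, which equals $\tfrac12$ by symmetry of the binomial distribution on $2k+1$ trials. For $S_2$, the identity $j\binom{k+j}{j}=(k+1)\binom{k+j}{j-1}$ converts the sum into $(k+1)$ times a truncated negative-binomial tail at parameter $k+1$, and the same symmetry combined with $\binom{2k+2}{k+1}=2\binom{2k+1}{k}$ and $(k+1)\binom{2k+1}{k}=(2k+1)\binom{2k}{k}$ yields $S_2=(k+1)/2-(2k+1)\binom{2k}{k}/2^{2k+1}$. Substituting back gives
\[
V(k+1,k+1) \;=\; k+(2k+1)\binom{2k}{k}/2^{2k+1} \;=\; k+(k+\tfrac12)\binom{2k}{k}/4^k,
\]
as claimed. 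The main obstacle is the reduction of the first step, since the exchange argument requires the concavity inequality to be proved in tandem with the DP; once that is in hand, the random-walk picture and the negative-binomial sum evaluation are routine.
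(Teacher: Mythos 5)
Your proof is correct, but it takes a genuinely different route from the paper's in both of its main steps, so let me compare. For the reduction to trees all of whose internal nodes are labeled $(0,1)$, the paper first invokes the existence of an optimal \emph{nonredundant} (possibly infinite) monotone tree for a finite weighted class; nonredundancy means a node labeled $(0,0)$ can only occur when a single hypothesis is still in play, so its label can be switched to $(0,1)$ with no further argument. You instead run an exchange argument directly on the value function $V(\tilde a,\tilde b)$, which requires the submodularity inequality
\[
V(\tilde a-1,\tilde b)+V(\tilde a,\tilde b-1)\;\ge\;V(\tilde a,\tilde b)+V(\tilde a-1,\tilde b-1),
\]
equivalently (given your recursion) $V(\tilde a,\tilde b)\ge 1+V(\tilde a-1,\tilde b-1)$. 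You correctly flag this as the main obstacle; it is true and does follow from the tandem induction you sketch (the base case $\tilde b=1$ reduces to $V(\tilde a,1)\ge \tilde a-1$ via $V(c,0)=c-1$, and the inductive step closes with equality), so this is genuinely extra work relative to the paper's route but not a gap. For the computation itself, the paper identifies the optimal tree explicitly -- vertices are binary strings containing at most $k$ zeros or at most $k$ ones -- and sums the branch lengths directly, interpreting the resulting infinite series as the probability that a fair coin eventually shows each side at least $k+1$ times; you instead solve the dynamic program by a first-passage decomposition of the fair walk on $\mathbb{Z}_{\ge 0}^2$ started at $(k+1,k+1)$ and absorbed on the axes. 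The two computations are essentially reparametrizations of one another, and your evaluations of $S_1=\tfrac12$ and $S_2=(k+1)/2-(2k+1)\binom{2k}{k}2^{-(2k+1)}$ check out, reproducing $k+(k+\tfrac12)\binom{2k}{k}/4^k$. What your approach buys is that it never needs an optimal tree to exist (you work with the recursion for $\RL$ of weighted classes throughout), at the cost of having to prove the submodularity inequality; the paper's approach gets the label reduction for free from nonredundancy but must separately establish existence of an optimal infinite tree. Your handling of the deterministic claim (via $D(2,k)=2k+1$, the Binomial Weights upper bound, and the Littlestone--Warmuth lower bound $2k+\lfloor\log n\rfloor$) is also valid; the paper instead reads off $m_T\ge 2k+1$ from its explicit tree.
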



Results in the same spirit were previously proved. \cite{abernethy2008optimal2} gave a similar formula for the hedge setting \cite{freund1997decision}. In the \emph{bounded horizon} setting, \cite{cover65} showed that the optimal regret in the case of two experts is $\sqrt{\frac{\rounds}{2\pi}}$ where $\rounds$ is the horizon. Later, \cite{gravin2016towards} identified a connection between this result and one-dimensional random walks.\footnote{In a one-dimensional random  walk, a particle chooses whether to go left or right in each step.}


\begin{proof}[Proof of Theorem~\ref{thm:two_experts}]
\iffull
According to Proposition~\ref{pro:weighted-finite-hypothesis-class}, there is a nonredundant infinite tree $T$ which is $k$-shattered by $\cU_2$ and satisfies $\M^\star(2,k) = \RL_k(\cU_2) = E_T/2$.
\else
Intuitively, there is a "nonredundant" infinite tree $T$ which is $k$-shattered by $\cU_2$ and satisfies $\M^\star(2,k) = \RL_k(\cU_2) = E_T/2$. A tree is "nonredundant" if there is no vertex in it labeled with the instance $(0, 0)$ or $(1,  1)$, unless the corresponding weighted hypothesis class contains a single hypothesis. Indeed, these instances corresponds to the situation that the two experts predict the same label, which is wasteful. This intuition is proved in the full version of this paper \cite{FullVersion}, and note that it is only necessary to identify that the tree we analyze is the best possible for the adversary.
\fi
We will show that without loss of generality, all vertices in $T$ are labeled $(0,1)$. This will allow us to determine $T$ exactly, and so to compute $\M^\star(2,k)$.

If there is a vertex labeled $(1,0)$, we can switch its label to $(0,1)$ and switch its two children. The resulting tree is also $k$-shattered by $\cU_2$ and has the same expected branch length.

If a vertex is labeled $(0,0)$, then by nonredundancy, only one hypothesis is ``still in play'' (that is, all branches passing through the vertex are realized by the same hypothesis), say the first one. Therefore if we change its label to $(0,1)$ then the resulting tree is also $k$-shattered by $\cU_2$.

Concluding, we can assume without loss of generality that all vertices in $T$ are labeled $(0,1)$.
Such a tree is $k$-shattered by $\cU_2$ if every prefix (path starting at the root) contains at most $k$ many $0$-edges or at most $k$ many $1$-edges.
Identifying vertices in the tree by the strings formed from the labels of the edges leading to them from the root, the labels of all vertices must contain at most $k$ many $0$s or at most $k$ many $1$s. We call such strings \emph{legal}.

Since the tree $T$ is optimal, its leaves correspond to legal strings $s$ such that either $s0$ or $s1$ is illegal. If $s$ terminates with $0$ then it is a leaf if it either contains at least $k+1$ many $1$s and exactly $k$ many $0$s (in which case $s0$ is illegal), or if it contains exactly $k$ many $1$s and exactly $k+1$ many $0$s (in which case $s1$ is illegal). This defines $T$ completely, and we can calculate
\[
 E_T = 2 \sum_{t = k+1}^\infty (t+k) \frac{\binom{t+k-1}{k-1}}{2^{t+k}} + 2 \cdot (2k+1) \frac{\binom{2k}{k}}{ 2^{2k+1}} =
 2k \sum_{t = k+1}^\infty \frac{\binom{t+k}{k}}{2^{t+k}} + \frac{(2k+1) \binom{2k}{k}}{4^k}.
\]
The infinite series is the probability that if we toss an unbiased coin, then eventually both sides show up at least $k+1$ many times (if the last toss was heads then $t$ is the number of tails, and vice versa). Therefore
\[
 E_T = 2k + \frac{(2k+1) \binom{2k}{k}}{4^k}.
\]
The formula for $\M^\star(2,k)$ immediately follows.

\medskip

All leaves in $T$ have depth at least $2k+1$, and so
\[
 \M^\star_D(2,k) = \LD_k(\cU_2) \geq m_T = 2k+1.
\]
\medskip
We showed that $D(2,k)=2k + 1$ in the course of the proof of Theorem~\ref{thm:experts_lower_bound}. Recall that \cite{cesa1996line} showed that $\M^\star_D(2,k) \leq D(2,k)$, which finishes the proof.
\end{proof}

\fi

\iffull
\subsection{Approximations of \texorpdfstring{$D(n,k)$}{D(n,k)}} \label{sec:D-appproximations}
The quantity $D(n,k)$ appears in the bounds on both $\M^\star(n,k)$ and $\M^\star_D(n,k)$. In the literature on prediction using expert advice, some papers obtain bounds in terms of $D(n,k)$ or variations of it~\cite{cesa1996line,mukherjee2010learning}, while others give explicit bounds~\cite{cesa1997use,branzei2019online}. In this brief section, we describe simple asymptotic approximations of $D(n,k)$. The notation $a(n,k) \approx b(n,k)$ means that $\lim_{(n+k) \to \infty} \frac{a(n,k)}{b(n,k)} = 1$ where $a,b$ are functions of $n,k$. Using this notation, \cite{cesa1996line} showed that $\M^\star_D(n,k) \approx D(n,k)$, and so the asymptotic expansions below apply to $\M^\star_D(n,k)$ as well.

For $n=1$, we know that $D(1,k) = k$ (Theorem~\ref{thm:single-expert-intro}). For $n\geq 2$, we have the following known bounds:
\[
2k + \lfloor \log n \rfloor \leq D(n,k) \leq 2k + \log n + 2 \sqrt{k \ln n}.
\]
The lower bound is since $\M^\star_D(n,k) \geq 2k + \lfloor \log n \rfloor$ due to \cite{littlestone1994weighted} and $D(n,k) \geq \M^\star_D(n,k)$ due to \cite{cesa1996line} (and this work). The upper bound follows from \cite{cesa1996line,cesa1997use}.  Using those bounds, it is straightforward that $D(n,k) \approx \log n$ when $k = o(\log n)$, and that  $D(n,k) \approx 2k$ when $k = \omega(\log n)$.

The intermediate case is a bit more involved. Suppose that $k = \frac{\log n}{c}$ for some constant $c$. Let $d^\star(n,k)$ be the solution of the equation
\[
 d = \log n + d h(k/d),
\]
where $h(p) = -p \log p - (1-p) \log (1-p)$ is the binary entropy function.
After rearranging, we get
\[
 \frac{1 - h(k/d)}{k/d} = c.
\]
Therefore, if we define
\[
 f(p) = \frac{1 - h(p)}{p}
\]
then
\[
 d^\star(n,k) = \frac{k}{f^{-1}\left(c\right)},
\]
where we take the branch of the inverse which lies in $(0,1/2]$. The results of \cite{cesa1996line} imply that $d^\star(n,k) \approx D(n,k)$ (more details can be found in Appendix~\ref{apx:asymptotic-approximations}). Therefore we deduce
\[
 D(n,k) \approx \frac{k}{f^{-1}\left(c\right)}.
\]
The approximations are summarized in Table~\ref{tab:Dnk}.
The function $f^{-1}(c)$ is plotted in Figure~\ref{fig:f-inverse}. 



\begin{table}
    \centering
    \begin{tabular}{l|l}
         Regime & Approximation \\\hline
         $k = o(\log n)$ & $D(n,k) \approx \log n$ \\
         $k = \frac{\log n}{c}$ for constant $c$ & $D(n,k) \approx k/f^{-1}(c)$ \\
         $k = \omega(\log n)$ & $D(n,k) \approx 2k$
    \end{tabular}
    \caption{Approximations of $D(n,k)$ in various regimes}
    \label{tab:Dnk}
\end{table}

\begin{figure}
    \centering
\begin{tikzpicture}
\begin{axis}[
    axis lines = left,
    xlabel = \(c\),
    ylabel = {\(f^{-1}(c)\)},
]
\addplot [
    domain=.1:.5, 
    samples=100, 
    color=blue,
    ]
    ((1+x*log2(x)+(1-x)*log2(1-x))/x, x);
\end{axis}
\end{tikzpicture}
    \caption{Plot of $f^{-1}(c)$, where $f(p) = (1-h(p))/p$}
    \label{fig:f-inverse}
\end{figure}
\fi

\iffull
\subsection{Proper Learners are Sub-Optimal} \label{sec:proper-improper}

It is natural to ask for a learning rule to be \emph{proper}.

\begin{definition}[Online proper learners~\cite{hanneke2021online}]
Let $\cH$ be a concept class. An online learning rule $\Lrn$ is \emph{proper} for $\cH$ if for every realizable input sequence $S$, the function $h_S \colon \cX \to [0,1]$ given by
\[
 h_S(x) = \Lrn(S,x)
\]
is a convex combination of hypotheses from $\cH$, that is, there are coefficients $\alpha_h$ such that
\[
 h_S(x) = \sum_{h \in \cH} \alpha_h h(x).
\]
\end{definition}

When the learner is deterministic, the function $h_S$ is $\{0,1\}$-valued, and so the learner is proper if $h_S \in \cH$ for every realizable input sequence $S$.

We can adapt this definition to the setting of prediction using expert advice (with $k = 0$) by requiring that at all times, the learner picks a convex combination of the experts before seeing their current advice. In other words, each round of the game is played as follows:
\begin{enumerate}[(i)]
\item The algorithm chooses a convex combination of the experts. \item The adversary chooses both the advice of the experts and the correct label.
\end{enumerate}
This can also be expressed in the language of game theory: in each round, the first player (the learner) picks a mixed strategy (a convex combination of experts), and then the second player (the adversary) picks a pure strategy (the true label).
The payoff is the probability that the learner's random strategy agrees with the adversary's pure strategy.
The optimal algorithm for the hedge setting \cite{freund1997decision} was found in \cite{abernethy2008optimal2} via a random walk analysis, similarly to our algorithms.

\medskip

For an hypothesis class $\cH$ we define the optimal randomized mistake bound for proper learners $\M^\star_p(\cH)$ by restricting the learners to be proper:
\[
 \M^{\star}_p(\cH)=\adjustlimits\inf_ {\Lrn_p} \sup_{S} \M(\Lrn_p ;S),
\]
where the infimum is taken over all proper learning rules, and the supremum is taken over all realizable sequences.

We can similarly define the analogous notion for prediction using expert advice, namely $\M^\star_p(n) = \M^\star_p(\cU_n)$.

\smallskip

We can solve the problem of prediction using expert advice optimally with the learning rule $\RSOA$. This learning rule is improper, a property it shares with Littlestone's $\SOA$ algorithm that it is based on. In this section, we observe that any proper learning rule makes more mistakes than $\RSOA$ when used to solve this problem.

\begin{theorem}[Mistake bound of a proper learner]\label{thm:proper-mistake-bound}
For every $n \geq 1$, the optimal mistake bound for proper randomized learners solving prediction using expert advice is
\[
 \M^\star_p(n) = H_n - 1 = \ln n - (1 - \gamma) + o(1),
\]
where $H_n$ is the harmonic number $1 + 1/2 + \cdots + 1/n$.
\end{theorem}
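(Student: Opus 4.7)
The plan is to reduce the proper-learner game to a one-parameter recursion in $v$, the size of the live set $V \subseteq [n]$ of experts that have not yet made a mistake. In each round the proper learner commits to a distribution $\alpha \in \Delta_n$ before seeing the experts' advice, and a direct calculation shows the per-round loss equals $\sum_{i \in W} \alpha_i$, where $W$ is the set of experts the adversary declares wrong; realizability forces $V \not\subseteq W$. A key preliminary observation is that any learner placing positive weight $\gamma$ on dead experts suffers unbounded worst-case loss, since the adversary can set $W$ to be exactly the dead set each round, leaving the state unchanged and extracting $\gamma$ every round for arbitrarily many rounds. So any finite-mistake-bound proper rule must be supported on $V$, which reduces the adversary's effective choice to a nonempty proper subset $S \subsetneq V$, paying $\sum_{i \in S}\alpha_i$ and transitioning to live set $V \setminus S$.

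Let $f(v)$ denote the value of the resulting game, so that $\M^\star_p(n) = f(n)$. I would prove $f(v) = H_v - 1$ by induction on $v$, with base case $f(1) = 0$. By symmetry the game depends only on $v$, and I would show the uniform distribution $\alpha_i \equiv 1/v$ on $V$ is optimal for the learner. On the one hand, uniform caps the adversary's $k$-kill response at $k/v + f(v-k)$; on the other hand, for any $\alpha$ supported on $V$ the top weight is at least $1/v$, so already the adversary's $k=1$ option yields loss at least $1/v + f(v-1) = H_v - 1$ by the induction hypothesis. The remaining ingredient is that the max of $k/v + H_{v-k} - 1$ over $k \in \{1,\ldots,v-1\}$ is attained at $k=1$; this reduces to the elementary inequality $H_{v-1} - H_{v-k} \ge (k-1)/v$, which holds because each of the $k-1$ summands on the left strictly exceeds $1/v$.

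Matching strategies make the value transparent. The learner plays uniform on the current live set (a valid proper rule). The adversary in each round singles out one live expert $i^\star$ to kill and one ``witness'' expert to stay alive, sets the advices and the label so that only $i^\star$ and the already-dead experts are wrong, and shrinks $V$ by one per round until only the witness remains. The total loss telescopes to $1/n + 1/(n-1) + \cdots + 1/2 = H_n - 1$. The claimed asymptotic $H_n - 1 = \ln n - (1-\gamma) + o(1)$ then follows from the standard expansion $H_n = \ln n + \gamma + O(1/n)$.

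The main obstacle is the ``dead weight is ruinous'' step, where the unbounded-horizon assumption is doing real work: formalizing that positive weight on a dead expert really does force $\sup_S \M(\cdot;S) = \infty$ is what licenses the restriction $\operatorname{supp}(\alpha) \subseteq V$ and guarantees the recursion terminates in at most $n-1$ meaningful rounds. Once that is established, the rest is a clean symmetric minimax with a short inductive proof.
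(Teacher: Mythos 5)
Your overall route---restrict attention to the live set $V$ of surviving experts, show that uniform-on-$V$ is optimal for the learner, have the adversary kill a single heaviest live expert per round, and telescope to $H_n-1$---is essentially the paper's proof: its upper bound is exactly ``follow a uniformly random leader,'' its lower bound is exactly the kill-the-heaviest-survivor adversary, and both sides reduce to the same harmonic telescoping. Your verification that $k=1$ dominates the adversary's $k$-kill options and the asymptotic expansion of $H_n$ are both fine.

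The genuine gap is the ``dead weight is ruinous'' step, and it propagates into your lower bound. The claim that any proper learner with a finite mistake bound must be supported on $V$ is false: the learner is a function of the full history, not merely of the live set, so when the adversary replays $W=\text{dead set}$ the history keeps changing and the learner may place weight $\gamma_t$ on dead experts in round $t$ with $\sum_t \gamma_t < \infty$ (say $\gamma_t = 2^{-t}$). Such a learner has a finite mistake bound yet is never supported on $V$, so your reduction of the adversary's move to a nonempty $S \subsetneq V$ with cost $\sum_{i \in S}\alpha_i$, and in particular the inequality ``the top weight is at least $1/v$,'' are not justified as stated. The paper sidesteps this entirely: rather than restricting the learner, its adversary declares \emph{all} dead experts wrong in the same round in which it kills the heaviest live expert, so the per-round loss is $\alpha(B) + \max_{j\in V}\alpha_j \geq \alpha(B) + \bigl(1-\alpha(B)\bigr)/|V| \geq 1/|V|$ no matter how the learner distributes its mass. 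With that one change your induction goes through verbatim; otherwise you would need to actually prove (not assert) that an optimal learner may be taken to depend only on the live set, which is an additional minimax argument that the unbounded horizon does not give you for free.
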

In contrast, we have $\lfloor \log_4 n \rfloor \leq \M^{\star}(n) \leq \log_4 n$ \cite[Theorem 6]{branzei2019online}.

We show here a simple proof for the lower bound of Theorem~\ref{thm:proper-mistake-bound}. A matching upper bound can be found e.g. in \cite[Section 18.1]{karlin2017game}, and we prove it here as well for completeness. We start with the lower bound.

\begin{lemma} \label{lem:proper-mistake-lower-bound}
Consider prediction using expert advice with $n$ experts. For any proper learner $\Lrn_p$ there exists a strategy for the adversary under which the loss of the learner is at least $H_n - 1$. Consequently,
\[
 \M^\star_p(n) \geq H_n - 1.
\]
\end{lemma}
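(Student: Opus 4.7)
My plan is to exhibit an explicit adversary strategy that, against any proper learner, forces expected loss at least $H_n-1$. The crucial feature of the proper setting (as formalized in the paragraph preceding the lemma) is that the learner first commits to a convex combination $\alpha^{(t)}\in\Delta([n])$ over the $n$ experts, and only afterwards does the adversary choose both the experts' predictions and the true label. This order of moves allows the adversary to direct the learner's already-committed mass toward whichever expert it decides to ``kill'' in that round.

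Concretely, I would have the adversary maintain the set $V_t$ of experts not yet contradicted, with $V_1=\{1,\dots,n\}$. In round $t$, after seeing $\alpha^{(t)}$, the adversary picks $j_t\in\argmax_{j\in V_t}\alpha^{(t)}_j$, declares $\hat y_t^{(j_t)}=0$, $\hat y_t^{(j)}=1$ for every other $j\in V_t$, and $\hat y_t^{(j)}=0$ for every $j\notin V_t$, with true label $y_t=1$. Then exactly one alive expert (namely $j_t$) is eliminated, so $V_{t+1}=V_t\setminus\{j_t\}$ and the sequence so far remains realizable by every surviving expert. After $n-1$ rounds, one expert remains alive and witnesses realizability; from that point on, the adversary can let it be correct forever, accruing no further required loss.

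For the per-round loss: the learner's scalar prediction is
\[
 p_t = \sum_{j\in V_t,\,j\neq j_t}\alpha^{(t)}_j = 1-\alpha^{(t)}_{j_t}-W_t,
\]
where $W_t=\sum_{j\notin V_t}\alpha^{(t)}_j$ is the mass on dead experts; since $y_t=1$, the loss in round $t$ equals $\alpha^{(t)}_{j_t}+W_t$. Because $j_t$ maximizes the alive mass and the alive mass totals $1-W_t$, we have $\alpha^{(t)}_{j_t}\ge (1-W_t)/|V_t|$, hence the round-$t$ loss is at least $1/|V_t|+W_t(1-1/|V_t|)\ge 1/|V_t|=1/(n-t+1)$. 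Summing $t=1,\dots,n-1$ gives total loss at least $\sum_{s=2}^{n}\tfrac1s=H_n-1$, which is the claimed bound.

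The only genuinely delicate point is making the treatment of dead experts airtight: the definition of \emph{proper} allows the learner to place positive mass on any hypothesis of $\cU_n$, including already-refuted ones. The above computation handles this cleanly---the adversary's freedom to choose $\hat y^{(j)}_t$ for $j\notin V_t$ lets it convert dead-expert mass into additional loss, so $W_t>0$ never helps the learner and the bound $1/(n-t+1)$ is genuinely forced. The remaining work is the elementary identification $\sum_{s=2}^n 1/s = H_n-1$ and the asymptotic expansion $H_n=\ln n+\gamma+o(1)$, neither of which needs further discussion.
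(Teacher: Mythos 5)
Your proposal is correct and follows essentially the same strategy as the paper's proof: eliminate the maximum-mass alive expert each round (making it and all dead experts predict the wrong label), lower-bound the per-round loss by $1/|V_t|$ using the averaging argument over alive mass, and sum to $H_n-1$. The only cosmetic difference is that you set the true label to $1$ while the paper sets it to $0$; the handling of mass on already-refuted experts is identical in both arguments.
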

\begin{proof}
We will run the prediction game for $n - 1$ rounds. At the $i$'th round, let $G_i$ be the set of experts which are consistent with the examples seen so far, and let $B_i$ be the remaining experts. 

We set the true label to $0$. All experts in $B_i$ predict $1$. An expert in $G_i$ maximizing $\mu_i$ also predicts $1$, and all other experts in $G_i$ predict $0$. Clearly $|G_{i+1}| = |G_i| - 1$, and the loss of the learner is
\[
 \mu_i(B_i) + \frac{\mu_i(G_i)}{|G_i|} =
 \frac{1}{|G_i|} + \frac{|G_i|-1}{|G_i|} \mu_i(B_i) \geq \frac{1}{|G_i|}.
\]
After $n-1$ rounds, there is precisely one expert left, and the loss of the learner is at least
\[
 \sum_{i=2}^n \frac{1}{i} = H_n - 1.
\]
This completes the proof.
\end{proof}

The matching upper bound is given by a natural ``follow the leader'' algorithm.

\begin{lemma} \label{lem:proper-mistake-upper-bound}
Consider prediction using expert advice with $n$ experts. Let $\FTL$ be the algorithm which chooses a random expert among those who have not made any mistake so far. The loss of $\FTL$ on any realizable sequence is at most $H_n - 1$. Consequently,
\[
 \M^\star_p(n) \leq H_n - 1.
\]
\end{lemma}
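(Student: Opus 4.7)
The plan is to analyze the Follow-the-Leader algorithm directly by tracking the size of the consistent set across rounds and applying a standard harmonic-sum telescoping argument.

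First, I would set up the notation. Let $G_i \subseteq [n]$ denote the set of experts that have made no mistake on the examples revealed before round $i$, and let $g_i = |G_i|$; realizability of the sequence guarantees $G_i \neq \emptyset$ for every round $i$, and $g_1 = n$. At round $i$, $\FTL$ picks an expert uniformly at random from $G_i$, so its expected loss on that round equals the fraction of experts in $G_i$ whose advice disagrees with $y_i$. Since precisely those experts are the ones removed from the consistent set, the expected loss in round $i$ is
\[
\ell_i \;=\; \frac{g_i - g_{i+1}}{g_i}.
\]

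Next I would bound this loss using a telescoping estimate. The key inequality is
\[
\frac{g_i - g_{i+1}}{g_i} \;\leq\; \sum_{j=g_{i+1}+1}^{g_i} \frac{1}{j},
\]
which holds because the right-hand side contains exactly $g_i - g_{i+1}$ terms, each of which is at least $1/g_i$ (this is a discrete version of $\int (1/x)\,dx \geq (b-a)/b$ for $a \leq b$). Summing over all rounds $i$, the right-hand side telescopes into a single sum over $j$ ranging from $g_{\text{end}}+1$ up to $g_1 = n$, where $g_{\text{end}}$ denotes the limiting value of $g_i$ along the input sequence. Since the sequence is realizable, some expert is always consistent, hence $g_{\text{end}} \geq 1$, and therefore
\[
\M(\FTL; S) \;=\; \sum_{i} \ell_i \;\leq\; \sum_{j=g_{\text{end}}+1}^{n} \frac{1}{j} \;\leq\; \sum_{j=2}^{n} \frac{1}{j} \;=\; H_n - 1.
\]

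Since $\FTL$ is manifestly a proper learner (its prediction in each round is a convex combination — in fact the uniform average — of the experts currently in $G_i$, which is a subset of $\cU_n$), taking the infimum over proper learners yields $\M^\star_p(n) \leq H_n - 1$, as desired. I do not expect any genuine obstacle here; the only subtlety is handling rounds in which $\ell_i = 0$ (no consistent expert errs), but those contribute trivially to both sides of the inequality, and they correspond to the case $g_{i+1} = g_i$ in which the telescoping range is empty.
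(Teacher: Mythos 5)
Your proposal is correct and follows essentially the same route as the paper's proof: the same identification of the per-round loss as $(|G_i|-|G_{i+1}|)/|G_i|$, the same term-by-term bound $\frac{|G_i|-|G_{i+1}|}{|G_i|} \leq \sum_{j=|G_{i+1}|+1}^{|G_i|} \frac{1}{j}$, and the same telescoping to $H_n - 1$. The only addition is your explicit remark that $\FTL$ is proper, which the paper leaves implicit.
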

\begin{proof}
As in the proof of Lemma~\ref{lem:proper-mistake-lower-bound}, let $G_i$ be the set of experts which have not made any mistake before round~$i$. Thus $|G_1| = n$, and at all times, $|G_i| \geq 1$. The loss of $\FTL$ in the $i$'th round is precisely
\[
 \frac{|G_i| - |G_{i+1}|}{|G_i|} = \sum_{j=|G_{i+1}|+1}^{|G_i|} \frac{1}{|G_i|} \leq \sum_{j=|G_{i+1}|+1}^{|G_i|} \frac{1}{j}.
\]
Therefore the total loss of the learner across all rounds is
\[
 \sum_{i=1}^\infty \frac{|G_i| - |G_{i+1}|}{|G_i|} \leq
 \sum_{i=1}^\infty \sum_{j=|G_{i+1}|+1}^{|G_i|} \frac{1}{j} \leq \sum_{j=2}^n \frac{1}{j},
\]
which completes the proof.
\end{proof}
\fi

\section{Open Questions}

\iffull
Our work naturally raises directions for future work.
\subsection*{General Questions}

\paragraph{Multiclass setting.}
Daniely et al.~\cite{daniely2015multiclass} extended the definition of Littlestone dimension to the multiclass setting, and showed that it gives the exact mistake bound for deterministic algorithm. Can we extend the definition of randomized Littlestone dimension to this setting?

A potential application is the problem of prediction using expert advice when the predictions are non-binary, a setting studied in~\cite{branzei2019online}.

For more recent work on multiclass classification which involves various combinatorial dimensions, see~\cite{brukhim2022characterization,kalavasis2022multiclass}.



\paragraph{Proper learning of arbitrary hypothesis classes.} In Section~\ref{sec:proper-improper} we show that improper learning algorithms outperform proper learning algorithm in online learning of the hypothesis class $\cU_n$. What can we say about arbitrary hypothesis classes, and in particular, about the ratio $\M^\star_p(\cH)/\M^\star(\cH)$?






\subsection*{Mistake Bounds}

\paragraph{Adaptive algorithms.}
Algorithm $\WRSOA$ gives the optimal mistake bound, but requires knowledge of $k$.
Theorem~\ref{thm:adaptive-algorithm} gives an algorithm which doesn't require knowledge of $k$, and has a regret bound of $\tilde{O}(\sqrt{\M^\star(\cH,k) \cdot \log k})$ (this is the loss beyond $\M^\star(\cH,k)$).
What is the optimal regret bound?

\paragraph{Speed of convergence to the mistake bound.}
Given an hypothesis class $\cH$, how many rounds are needed in order to guarantee a loss of $\RL(\cH) - \epsilon$? Proposition~\ref{pro:shallow-tree} shows (via Theorem~\ref{thm:characterization-rounds}) that the answer is at most $2\RL(\cH) + O(\sqrt{\RL(\cH)\log(\RL(\cH/\epsilon)} + \log(1/\epsilon))$.
Is this tight whenever $\cH$ is infinite?

Proposition~\ref{pro:shallow-tree-lb-strongly-infinite} shows that this bound is fairly good when $\cH$ is ``strongly infinite'', and Proposition~\ref{pro:shallow-tree-lb} gives a lower bound of $\log(1/\epsilon)$ for all infinite $\cH$.

\smallskip

A related question concerns the regime in which the number of rounds is less than $2\RL(\cH)$. For every $\rounds$ it clearly holds that $\RL(\cH,\rounds) \leq \rounds/2$, and when $\rounds \leq 2\RL(\cH) - \omega(\sqrt{\rounds \log \rounds})$, this is tight up to an $o(1)$ additive term, as the proof of Proposition~\ref{pro:truncation-few-rounds} shows. For larger $\rounds$, the error term in the proposition gets larger, reaching $O(\sqrt{\rounds \log \rounds})$ for $\rounds$ close to $2\RL(\cH)$. What is the optimal bound on $\rounds/2 - \RL(\cH,\rounds)$ for the entire range $\rounds \leq 2\RL(\cH)$?

\paragraph{Characterizing the equality cases of $\M^{\star}(\cH) \leq \M^{\star}_D(\cH) \leq 2 \M^{\star}(\cH)$.}
In Section~\ref{sec:randvsdet} we gave two examples showing that both inequalities can be tight. Can we characterize the two families of classes for which each inequality is tight?
For example, it can be shown that every class $\cH$ satisfying $\M^{\star}(\cH) = \M^{\star}_D(\cH)$ must be infinite, but not vice versa.

\subsection*{Prediction using Expert Advice}


\paragraph{Quantitative bounds.}
The first part in Theorem~\ref{thm:expertIntro} asserts
that $\M^\star(n,k)\leq \frac{1}{2}D(n,k) + O \mleft( \sqrt{D(n,k)} \mright)$.
It will be interesting to get quantitative bounds
on the second-order term in terms of $\M^\star_D(n,k)$.
By the second part of Theorem~\ref{thm:expertIntro} we know that in some cases ($n=2$)
it is $\Omega \mleft(\sqrt{\M^\star_D(n,k)} \mright)$. Does an upper bound of 
\[\M^\star(n,k)\leq \frac{1}{2}\M^\star_D(n,k) + O\mleft(\sqrt{\M^\star_D(n,k)}\mright)\]
hold for all $n,k$?

In addition, it will be interesting to find explicit bounds on $\M^\star(n,k), \M^\star_D(n,k)$; by Theorem~\ref{thm:k-realizable-littlestone-bound-Intro}, we know that when $k \gg \log n$ then\footnote{The upper bound is also given in \cite{cesa1997use}.}
    \begin{align*}
    \M^\star(n,k) &= k + \Theta\mleft(\sqrt{k\log n}\mright).
    \end{align*}
    How about for other values of $n,k$? 
    Br\^anzei and Peres~\cite{branzei2019online} have the state-of-the-art bounds in the regime $k \ll \log n$,
    but we are not aware of any results in other regimes, e.g.\ when $k=\Theta(\log n)$. As a matter of fact, to the best of our knowledge, even the leading asymptotic terms in the regime $k=\Theta(\log n)$ (described in Section~\ref{sec:D-appproximations}) were unknown prior to this work.

\paragraph{Proper predictions and repeated game playing.}
Consider the prediction using the expert advice problem, when the learner is restricted to predict with a convex combination of the experts. That is, at the beginning of each round (before seeing the advice of the $n$ experts), the learner picks a convex combination of the experts and predicts accordingly. What is the optimal expected number of mistakes in this game?\footnote{We answer this question only for the case $k=0$ in Section~\ref{sec:proper-improper}.}
The optimal algorithm for the hedge setting \cite{freund1997decision} was identified in \cite{abernethy2008optimal2}.

We comment that this game can also be presented in the language of game theory:
assume a repeated zero-sum game with $0/1$ values, where each round is played as follows: player~(i)  chooses a (mixed) strategy and reveals it to player~(ii), who then replies with a strategy of his own.  
What is the optimal accumulated payoff that player~(i) can guarantee provided that she has $n$ strategies and that the sequence of strategies chosen by player~(ii) is such that player~(i) has a pure strategy that loses to at most $k$ of them? Proper predictions in the prediction with expert advice setting are equivalent to mixed strategies here.

\paragraph{Prediction using expert advice with different budgets.} Section~\ref{sec:expert-advice} considers prediction using expert advice in the $k$-realizable setting. The goal is to determine $\LD_k(\cU_n)$ and $\RL_k(\cU_n)$. One can ask more generally for the deterministic and randomized Littlestone dimensions of the weighted hypothesis class $\cU_{k_1,\ldots,k_n} = \{(h_1,k_1),\ldots,(h_n,k_n)\}$, where $h_1,\ldots,h_n$ are the hypotheses in $\cU_n$. In particular, which parameter determines the ratio $\RL(\cU_{\vec{k}})/\LD(\cU_{\vec{k}})$?

In the case of two experts, the arguments in Theorem~\ref{thm:two_experts} can be extended to give an exact formula for both quantities:
\[
 \RL(\cU_{k,\ell}) =
 \frac{k \binom{k+\ell+1}{\geq \ell+1} + \ell \binom{k+\ell+1}{\geq k+1} + (k+\ell+1) \binom{k+\ell}{k}}{2^{k+\ell+1}}
 \text{ and }
 \LD(\cU_{k,\ell}) = k + \ell + 1.
\]
Roughly speaking, $\RL(\cU_{k,\ell}) \approx \max(k,\ell)$,\footnote{This follows from the formula $\RL(\cU_{k,l}) = 2\mathbb{E}[\max\bigl(\mathrm{Bin}(k+1,\tfrac12),\mathrm{Bin}(\ell+1,\tfrac12)\bigr)] - 1$.} and so
\[
 \frac{\RL(\cU_{k,\ell})}{\LD(\cU_{k,\ell})} \approx \frac{\max(k,\ell)}{k + \ell}.
\]
Experiments suggest that more generally, if $k_1,k_2$ are the two largest elements in $\vec{k}$, then
\[
 \frac{\RL(\cU_{\vec{k}})}{\LD(\cU_{\vec{k}})} \approx \frac{\max(k_1,k_2)}{k_1+k_2}.
\]




\paragraph{Efficient implementation of $\WRSOA$.}
It can be shown that $\RSOA(\cU_n)$ has an efficient implementation.
Can $\WRSOA$ be implemented efficiently on $\cU_n$ for $k \ge 1$, say in time $\mathit{poly}(n,k)$?

\cite{abernethy2006continuous,branzei2019online} observed that the only information relevant to the adversary's choice of expert predictions is the state of each round, which is indicated by a $(k+1)$-ary vector specifying how many experts have $i \in \{0,\ldots,k\}$ mistakes left. 
Using this observation, it is straightforward to derive an algorithm that calculates the randomized Littlestone dimension of every possible state in time complexity $O(n^{2k})$, and then uses these values to determine the optimal prediction in each round efficiently. Can we improve this?

\else
Our work naturally raises many directions for future research (more can be found in the full version paper \cite{FullVersion}).
\paragraph{Adaptive algorithms.}
Algorithm $\WRSOA$ gives the optimal mistake bound, but requires knowledge of $k$.
Theorem~\ref{thm:adaptive-algorithm} gives an algorithm which doesn't require knowledge of $k$, and has a regret bound of $\tilde{O}(\sqrt{\M^\star(\cH,k) \cdot \log k})$ (this is the loss beyond $\M^\star(\cH,k)$).
What is the optimal regret bound?

\paragraph{Quantitative bounds.}
The first part in Theorem~\ref{thm:expertIntro} asserts
that $\M^\star(n,k)\leq \frac{1}{2}D(n,k) + O \mleft( \sqrt{D(n,k)} \mright)$.
It will be interesting to get quantitative bounds
on the second-order term in terms of $\M^\star_D(n,k)$.
By the second part of Theorem~\ref{thm:expertIntro} we know that in some cases ($n=2$)
it is $\Omega \mleft(\sqrt{\M^\star_D(n,k)} \mright)$. Does an upper bound of 
\[\M^\star(n,k)\leq \frac{1}{2}\M^\star_D(n,k) + O\mleft(\sqrt{\M^\star_D(n,k)}\mright)\]
hold for all $n,k$?

In addition, it will be interesting to find explicit bounds on $\M^\star(n,k), \M^\star_D(n,k)$; by Theorem~\ref{thm:k-realizable-littlestone-bound-Intro}, we know that when $k \gg \log n$ then\footnote{The upper bound is also given in \cite{cesa1997use}.}
    \begin{align*}
    \M^\star(n,k) &= k + \Theta\mleft(\sqrt{k\log n}\mright).
    \end{align*}
    How about for other values of $n,k$? 
    Br\^anzei and Peres~\cite{branzei2019online} have the state-of-the-art bounds in the regime $k \ll \log n$,
    but we are not aware of any results in other regimes, e.g.\ when $k=\Theta(\log n)$. As a matter of fact, to the best of our knowledge, even the leading asymptotic terms in the regime $k=\Theta(\log n)$ were unknown prior to this work (we included them in the full version of this work \cite{FullVersion}).

\paragraph{Proper predictions and repeated game playing.}
Consider the prediction using the expert advice problem, when the learner is restricted to predict with a convex combination of the experts. That is, at the beginning of each round (before seeing the advice of the $n$ experts), the learner picks a convex combination of the experts and predicts accordingly. What is the optimal expected number of mistakes in this game?\footnote{We answer this question only for the case $k=0$ in the full version of this paper \cite{FullVersion}.}
The optimal algorithm for the hedge setting \cite{freund1997decision} was identified in \cite{abernethy2008optimal2}.

We comment that this game can also be presented in the language of game theory:
assume a repeated zero-sum game with $0/1$ values, where each round is played as follows: player~(i)  chooses a (mixed) strategy and reveals it to player~(ii), who then replies with a strategy of his own.  
What is the optimal accumulated payoff that player~(i) can guarantee provided that she has $n$ strategies and that the sequence of strategies chosen by player~(ii) is such that player~(i) has a pure strategy that loses to at most $k$ of them? Proper predictions in the prediction with expert advice setting are equivalent to mixed strategies here.

\fi

\section*{Acknowledgments}
Shay Moran is a Robert J.\ Shillman Fellow; he acknowledges support by ISF grant 1225/20, by BSF grant 2018385, by an Azrieli Faculty Fellowship, by Israel PBC-VATAT, by the Technion Center for Machine Learning and Intelligent Systems (MLIS), and by the the European Union (ERC, GENERALIZATION, 101039692). Views and opinions expressed are however those of the author(s) only and do not necessarily reflect those of the European Union or the European Research Council Executive Agency. Neither the European Union nor the granting authority can be held responsible for them.

\bibliographystyle{alphaurl}
\bibliography{bib.bib}

\appendix

\iffull
\section{Results of Cesa-Bianchi, Freund, Helmbold and Warmuth}

The foundational work of Cesa-Bianchi, Freund, Helmbold and Warmuth~\cite{cesa1996line} (henceforth, Cesa-Bianchi et al.) implies several inequalities involving the quantities $\M^\star_D(n,k)$ and $D(n,k)$. These inequalities follow directly from results in Cesa-Bianchi et al., but are not stated explicitly in their work. In this short appendix, we show how to derive these results from their work.

The first two results use the notation $o_{n+k}(1)$, which stands for a quantity tending to zero as $n+k$ tends to infinity.

\subsection{A proof of \texorpdfstring{$\M^\star_D(n,k) \geq (1-o(1))D(n,k)$}{M*D(n,k)≥(1-o(1))D(n,k)}} \label{apx:asymptotic-deterministic}
For $n=1$, we prove in this paper that $\M^\star_D(1,k) = D(1,k) = k$ for all $k\geq 0$ (Theorem~\ref{thm:single-expert-intro}).
For $n \ge 2$, we will prove the inequality $\M^\star_D(n,k) \ge (1-o_{n+k}(1)) D(n,k)$ using results of Cesa-Bianchi et al.

Let $n \ge 2$ and $k \ge 0$. Cesa-Bianchi et al.\ define the function
\[
\up(n,k,\beta) = \frac{\log n + k \log \frac{1}{\beta}}{\log \frac{2}{1 + \beta}}
\]
in Equation~(4).  Then, in Theorem~2 they show that for all $\beta \in [0,1]$ and for all non-negative $n,k$,
\[
D(n,k) \leq \up(n,k,\beta).
\]

In Theorem~3, Cesa-Bianchi et al.\ define a function $\Low(n,k)$ and show that
\[
\M^\star_D(n,k) \geq \Low(n,k).
\]
In Theorem~4, they show that for every $n,k$, the value of $\beta\in [0,1]$ can be chosen to be some $\beta^\star = \beta^\star(n,k)$ such that 
\[
\lim_{n+k \to \infty} \frac{\Low(n,k)}{\up(n,k,\beta^\star)} = 1,
\]
and so
\[
\Low(n,k) = (1-o_{n+k}(1))\up(n,k,\beta^\star).
\]
Putting everything together, we can now deduce
\[
\M^\star_D(n,k) \geq \Low(n,k) = (1-o_{n+k}(1))\up(n,k,\beta^\star) \geq (1-o_{n+k}(1)) D(n,k).
\]

\subsection{A proof of \texorpdfstring{$d^\star(n,k) = (1+o(1))D(n,k)$}{d*(n,k)=(1+o(1))D(n,k)}} \label{apx:asymptotic-approximations}
Let $d^\star = d^\star(n,k)$ be the unique solution to the equation $d = \log n +d h(k/d)$.
The results of Cesa-Bianchi et al.\ imply that $d^\star(n,k) = (1+o_{n+k}(1)) D(n,k)$, as we now indicate. The argument will employ the functions $\up$ and $\Low$ and the parameter $\beta^\star$ mentioned in Appendix~\ref{apx:asymptotic-deterministic}.

In Lemma~1 (attributed to Vovk~\cite{vovk1990aggregating}), Cesa-Bianchi et al.\ show that $d^\star(n,k)$ is the minimum value of $\up(n,k,\beta)$. We also have the bound $D(n,k) \leq \up(n,k,\beta)$ for all $\beta$ from their Theorem~2, and therefore $d^\star(n,k) \geq D(n,k)$. It remains to show $d^\star(n,k) \leq (1+o(1))D(n,k)$.

Theorem~4 of Cesa-Bianchi et al.\ shows that
\[
\lim_{n+k \to \infty} \frac{\Low(n,k)}{\up(n,k,\beta^\star)} = 1.
\]
Since $d^\star(n,k) \leq \up(n,k,\beta^\star)$, it follows that
\[
 d^\star(n,k) \leq (1+o_{n+k}(1)) \Low(n,k).
\]
Since $\Low(n,k) \leq \M^\star_D(n,k) \leq D(n,k)$, this completes the proof.

\subsection{A proof of \texorpdfstring{$\M^\star_D(n,k) \geq D(n,k) - 1$}{M*D(n,k)≥D(n,k)-1} for constant \texorpdfstring{$k$}{k}}\label{apx:detexpert-const-k-proof}
In this section we show that for every constant $k$, the inequality $\M_D^\star(n,k) \geq D(n,k) - 1$ holds for large enough $n$.

Denote by $\BW(n,k)$ the maximal number of mistakes that the \emph{Binomial Weights} algorithm of Cesa-Bianchi et al.\ makes with parameters $n,k$. Theorem~5 in Cesa-Bianchi et al.\ states that for every $k$ there is $n_k$ such that $\M^\star_D(n,k) \geq \BW(n,k)-1$ for every $n>n_k$. As we show below, their proof actually shows that $\M^\star_D(n,k) \geq D(n,k) - 1$. (This is a stronger inequality since $\BW(n,k) \leq D(n,k)$ by their Theorem~1.)

We now explain how to derive the bound $\M^\star_D(n,k) \geq D(n,k)-1$, for large enough $n$, from the proof of Theorem~5. Fix an integer $k \ge 0$. Cesa-Bianchi et al.\ define the function
\[
J(k,q) = 2^q\big/\binom{q}{\leq k}.
\]
Cesa-Bianchi et al.\ prove the existence of an integer $n_k$ such that if $n > n_k$ and $J(k,q) \leq n < J(k,q+1)$ then $\M^\star_D(n,k) \geq q-1$.

The inequality $J(k,q) \leq n < J(k,q+1)$ is equivalent to $q \leq \log n + \log \binom{q}{\leq k}$ and $q+1 > \log n + \log \binom{q+1}{\leq k}$, and so $q = D(n,k)$ by definition, completing the proof.

\fi
\end{document}